\newcommand{\MDP}{\mathcal{M}}
\newcommand{\statespace}{\mathcal{S}}
\newcommand{\actionspace}{\mathcal{A}}
\newcommand{\transition}{\mathbb{P}}
\newcommand{\reward}{r}
\newcommand{\horizon}{H}
\newcommand{\target}{\mathrm{Tg}}
\newcommand{\meta}{\mathrm{hl}}
\newcommand{\history}{\mathcal{H}}
\newcommand{\cluster}{Z}
\newcommand{\hier}{\mathrm{hier}}
\newcommand{\entry}[1]{\mathrm{Ent}(#1)}
\newcommand{\exit}[1]{\mathrm{Ext}(#1)}
\newcommand{\avexit}[1]{\mathrm{AvExt}(#1)}
\newcommand{\interior}[1]{#1^\circ}
\newcommand{\dataset}{\mathcal{D}}
\newcommand{\thresh}{\mathrm{thresh}}
\newcommand{\rewardfree}{\mathrm{RF}}
\newcommand{\taskspecific}{\mathrm{TS}}
\newcommand{\exitdetection}{\mathrm{ED}}
\newcommand{\exitlearning}{\mathrm{EL}}
\newcommand{\UCBVI}{\textsc{UCBVI}}
\newcommand{\Euler}{\textsc{Euler}}
\newcommand{\imagined}{\mathcal{I}}
\newcommand{\imaginedMDPs}{\mathbb{M}}
\newcommand{\effective}{\mathrm{eff}}
\newcommand{\terminate}{\ominus}
\newcommand{\troot}{\mathrm{root}}
\newcommand{\gate}{\mathrm{gate}}
\newcommand{\trap}{\mathrm{trap}}
\title{Provable Hierarchy-Based \newline Meta-Reinforcement Learning}
\author{Kurtland Chua, Qi Lei \& Jason D. Lee \\
	Princeton University\\
	Princeton, NJ 08544, USA \\
	\texttt{\{kchua,qilei,jasonlee\}@princeton.edu} \\
}
\begin{document}

	\maketitle

	\begin{abstract}
		Hierarchical reinforcement learning (HRL) has seen widespread interest as an approach to tractable learning of complex modular behaviors. However, existing work either assume access to expert-constructed hierarchies, or use hierarchy-learning heuristics with no provable guarantees. To address this gap, we analyze HRL in the meta-RL setting, where a learner learns latent hierarchical structure during meta-training for use in a downstream task. We consider a tabular setting where natural hierarchical structure is embedded in the transition dynamics. Analogous to supervised meta-learning theory, we provide “diversity conditions” which, together with a tractable optimism-based algorithm, guarantee sample-efficient recovery of this natural hierarchy. Furthermore, we provide regret bounds on a learner using the recovered hierarchy to solve a meta-test task. Our bounds incorporate common notions in HRL literature such as temporal and state/action abstractions, suggesting that our setting and analysis capture important features of HRL in practice.
	\end{abstract}

	\section{Introduction}
		Reinforcement learning (RL) has demonstrated tremendous successes in many domains \citep{schulman2015trust,vinyals2019grandmaster,schrittwieser2020mastering}, learning near-optimal policies despite limited supervision.
Nevertheless, RL remains difficult to apply to problems requiring temporally extended planning and/or exploration \citep{ecoffet2021first}.
A promising approach to this problem is hierarchical reinforcement learning (HRL), which has seen continued interest due to its appealing biological basis.
In its most basic form, HRL seeks to solve tasks using a collection of primitive skills, each of which is easier to learn individually than the full task.
By restricting the agent to using learned skills, the search space over policies can be greatly reduced.
Furthermore, learned skills can induce simpler state and/or action spaces, simplifying the learning problem.
Finally, learned skills with useful semantic behavior can be reused across tasks, enabling transfer learning.

Naturally, a hierarchy-based learner is limited by the quality of skills that are made available and/or learned.
Accordingly, many empirical works have proposed algorithms for online skill learning in the context of a single RL task \citep{nachum2019data,nachum2018near}.
These approaches have been experimentally demonstrated to be effective in finding useful and interpretable skills.
Other approaches consider the skill learning problem in the context of meta-RL \citep{frans2018meta}, or in the reward-free setting \citep{eysenbach2018diversity}.
Nevertheless, the heuristics and algorithms proposed in these empirical works do not provide any provable guarantees on the quality of learned skills.

On the other hand, theoretical analyses have mostly focused on how learners benefit from having access to skills.
For example, \citet{fruit2017exploration} provide a regret bound on learning with skills in the infinite-horizon average reward case.
Meanwhile, in the meta-RL setting, \citet{brunskill2014pac} consider the problem of finding and using skills in a continual learning setting and provides a sample complexity analysis.
However, these analyses either sidestep the question of how the skills are obtained, or do not address the problem in a computationally tractable manner.

In this work, we provide settings where there exists provable guarantees for hierarchy learning through tractable algorithms.
We focus on the meta-RL setting, in which a learner extracts skills from a set of provided tasks which are then used in a downstream task.
We work in the tabular case, assuming the transition dynamics of the given tasks share latent hierarchical structure induced by predetermined clustering and bottlenecks.

Our contributions are as follows:

\begin{enumerate}
	\item \textbf{``Diversity conditions'' ensuring hierarchy recovery.}
	We develop natural optimism-based coverage conditions which ensure that bottlenecks embedded in the transition dynamics are detectable by solving provided meta-training tasks.

	\item \textbf{A tractable hierarchy-learning algorithm.}
	We provide an algorithm that provably learns the latent hierarchy from interactions, assuming the coverage conditions above.
	Our method has sample complexity scaling as $O(TKS)$ in the leading term compared to $O(TS^2A)$ for a brute-force method, where $T$ are the number of tasks, $S$ is the number of states, $A$ is the number of actions, and $K \ll SA$ is the number of skills to learn.

	\item \textbf{Regret bounds on downstream tasks.}
	We provide regret bounds for learners that apply the extracted hierarchy from meta-training on downstream tasks.
	Furthermore, we show an exponential regret separation between hierarchy-based and hierarchy-oblivious learners for a family of task distributions, corroborating prevailing intuitions regarding when/why HRL helps.
	In particular, hierarchy-based learners incur regret bounded by $O(\sqrt{H^2N})$ while hierarchy-oblivious learners incur worst-case regret of at least $O(2^{H/2}\sqrt{H^2N})$.
\end{enumerate}

	\section{Related Work}
		Hierarchical reinforcement learning has been studied extensively \citep{sutton1999between, parr1998reinforcement, dietterich1998maxq, vezhnevets2017feudal}.
An early approach to formalizing the use of hierarchies in RL is the options framework \citep{sutton1999between}, which fixes a finite set of skills that are available to the learner.
Since then, a large body of work has focused on designing methods for learning and adapting these options throughout the learning process \citep{mcgovern2001automatic,menache2002q,csimcsek2004using,mann2014time}.
Of particular note is the work of \citet{frans2018meta}, which proposes to learn a finite set of neural network sub-skills in the meta-RL setting.
On the other hand, Laplacian-based option discovery as explored in \citet{machado2017laplacian,machado2018eigenoption} studies connections between options and proto-value functions \citep{mahadevan2005proto}, which naturally capture bottlenecks in the state space.
In more theoretical directions, \citet{fruit2017exploration,brunskill2014pac} provide regret and sample complexity bounds, respectively, for learning with options.
Additionally, \citet{mann2014scaling} demonstrate that learning with options can improve the convergence rate of approximate value iteration.

More recent empirical work has considered the problem of hierarchy learning beyond the options framework in a wide variety of settings.
\citet{nachum2019data,levy2018learning} provide algorithms for learning hierarchies based on goal-conditioned policies, reducing the learning problem to choosing subgoals.
\citet{nachum2018near} considers a more general case when learned representations are used to map states to goals.
Other works such as \citet{co2018self,eysenbach2018diversity,sharma2019dynamics} provide intrinsic objectives for learning hierarchies without rewards.

Closely related to our work is that of \citet{wen2020efficiency}, which also decomposes the state space into clusters with exits.
However, they focus on the reduction in regret when the learner already knows the decomposition, as well as cluster equivalences (in terms of dynamics and reward).
In contrast, a major focus of our work is discovering the decomposition itself from interactions.

	\section{Notation}
		We now introduce notation which we will use throughout the paper.
We write $[K] \defas \set{1, \dots, K}$.
Furthermore, we use the standard notations $O, \Theta, \Omega$ to denote orders of growth, and $\tilde{O}, \tilde\Theta, \tilde\Omega$ to indicate suppressed logarithmic factors.
We use $\delta(x)$ to denote the Dirac delta measure on $x$.

We work with finite-horizon Markov decision processes (MDPs), defined as a tuple $\MDP = (\statespace, \actionspace, \transition, r, H)$, where $\statespace$ is the set of states, $\actionspace$ is the set of actions, $\transition: \statespace \times \actionspace \times \statespace \to [0, 1]$ are the transition dynamics, $r: \statespace \times \actionspace \to [0, 1]$ is the reward function, and $H$ is the horizon.
We assume stationary dynamics unless otherwise noted, in which case $\transition^{(h)}$ is the dynamics at time step $h$.
For constants relating to horizons, we will define $[H] \defas \set{0, \dots, H - 1}$.
Given a policy $\pi: [H] \times \statespace \to \actionspace$, we define the value functions
\[
V^{\pi}_h(s) \defas \expt{\sum_{k = h}^{H - 1}r(s_k, a_k) \suchthat s_h = s} \quad \text{and} \quad Q^{\pi}_h(s, a) \defas \expt{\sum_{k = h}^{H - 1}r(s_k, a_k) \suchthat (s_h, a_h) = (s, a)}
\]
where $s_{k + 1} \sim \transition(\cdot \conditionedon s_k, a_k)$ and $a_k = \pi_k(s_k)$.
Furthermore, we write $V^\ast$ and $Q^\ast$ to denote optimal value functions obtained by maximizing over $\pi$ (and are attained by the optimal policy $\pi^\ast$).
When a learner plays $\pi_1, \dots, \pi_N$ in $\MDP$, we define its regret as
\[
\mathrm{Regret}_N(\MDP) \defas \sum_{t = 1}^{N}V^\ast(s_0) - V^{\pi_t}(s_0).
\]

We use $\terminate$ to denote a terminal state.
We let $\tau_\pi$ denote the (random) trajectory generated by $\pi$.
For a state $s$ and length-$H$ trajectory $\tau$, we write $s \in \tau_\pi$ if $s_h = s$ for some $h \in [H]$.
We define $(s, a) \in \tau_\pi$ similarly.
Finally, given an MDP $\MDP$, $\MDP(2H)$ denotes a copy of $\MDP$ with a doubled horizon.

	\section{Setting}
		\begin{wrapfigure}[24]{r}{0.3\linewidth}
	\centering
	\vspace{-2ex}
	\includegraphics[width=\linewidth]{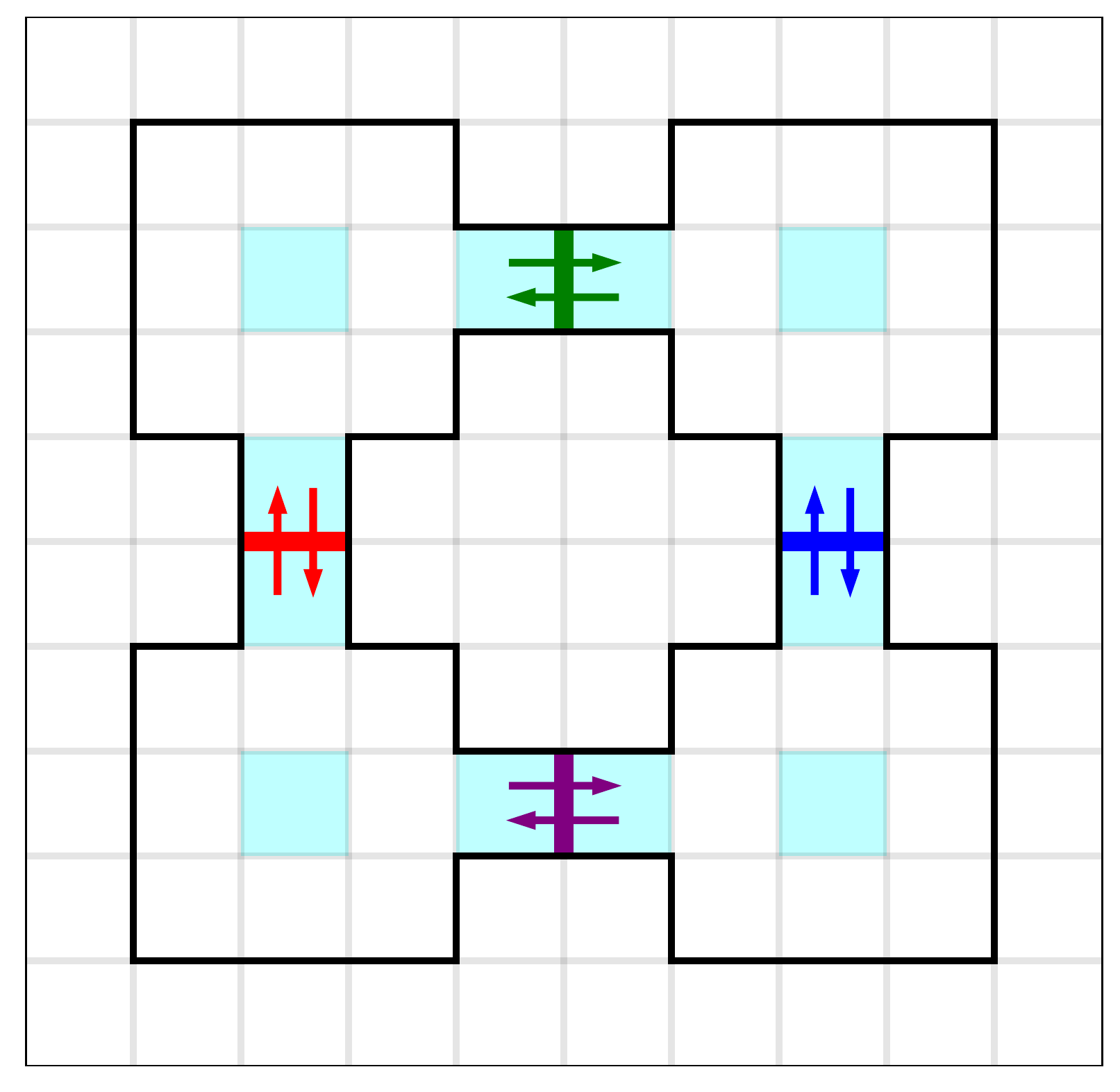}
	\includegraphics[width=\linewidth]{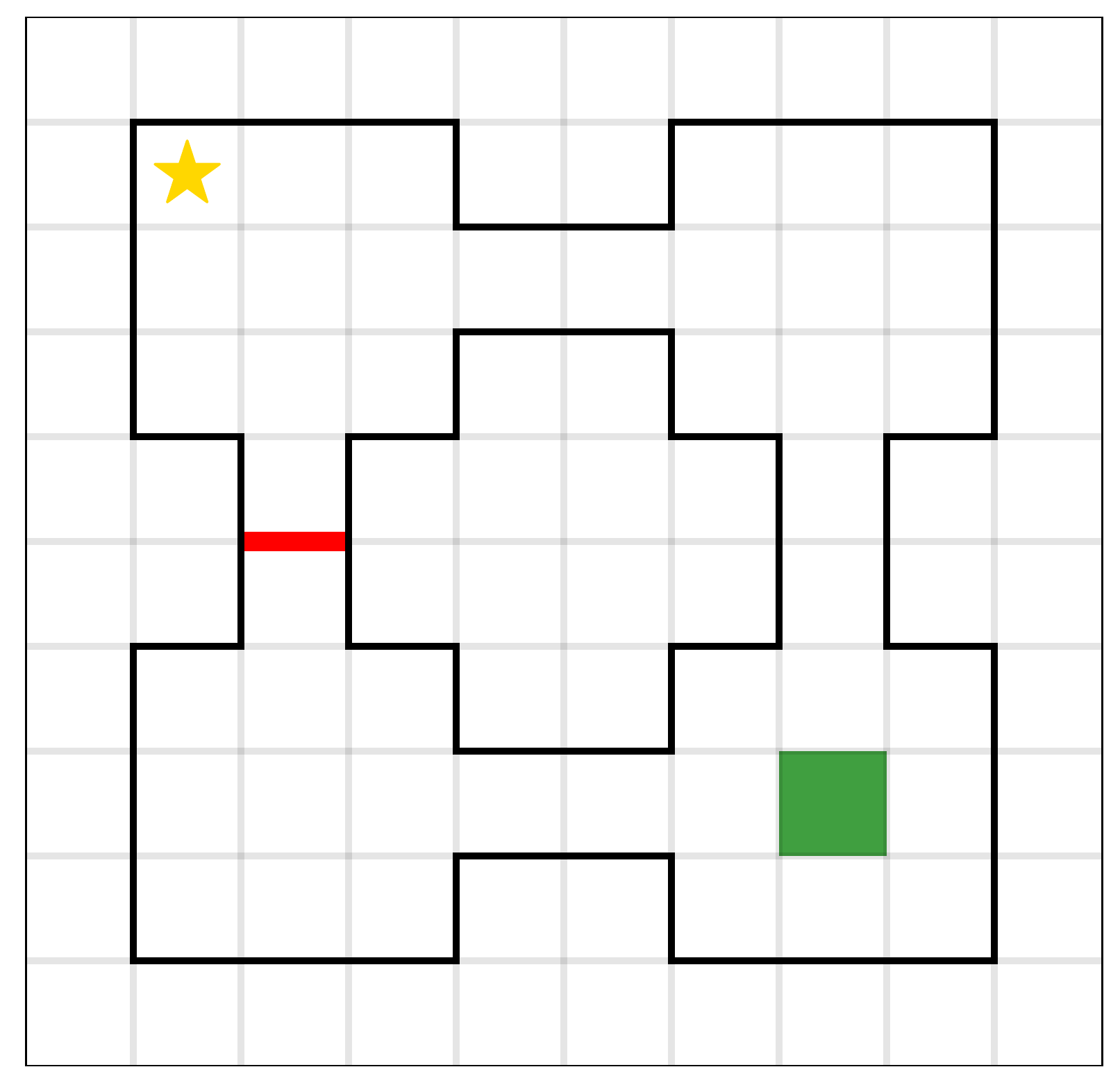}
	\caption{The gated four-room environment, with a sample task. The agent has to navigate from the green square to the star.}
	\label{fig:four-room-env}
\end{wrapfigure}

We work in the tabular meta-RL setting.
The learner has access to $T$ meta-training MDPs $\set{(\statespace, \actionspace, \transition_t, \reward_t, \horizon)}_{t \in [T]}$.
Note that the transition dynamics and reward function both vary across tasks.
We set $S \defas \card{\statespace}$ and $A \defas \card{\actionspace}$.
After interacting with these tasks, the learner is presented with a meta-test MDP $(\statespace, \actionspace, \transition_\target, \reward_\target, \horizon)$, where the learner seeks to minimize its regret.
We assume, without loss of generality, that the MDPs have a shared starting state $s_0$.

For meta-learning to succeed, there needs to be a shared structure among the MDPs above.
We focus on studying shared hierarchical structure, which we now formally define:

\begin{definition}[Latent Hierarchy]
	\label{definition:latent-hierarchy}
	Let $\set{\cluster_c}$ be a partition of the state space $\statespace$ into clusters.
	We associate with each cluster $\cluster$ a set of entrances $\entry{\cluster} \subseteq \cluster$ and a set of exits $\exit{\cluster} \subseteq \cluster \times \actionspace$.
	We say that the tasks have a \term{latent hierarchy} with respect to $(\set{\cluster_c}, \entry{\cdot}, \exit{\cdot})$ if for any $Z_c$:

	\begin{enumerate}[label=(\alph*)]
		\item For any $(s, a) \in (\cluster_c \times \actionspace) \setminus \exit{\cluster_c}$, $\transition_t(\cdot \conditionedon s, a)$ is constant over $t$ and supported on $Z_c$. \qedhere

		\item For any $(s, a) \in \exit{\cluster_c}$, there exists $t, t'$ with $t \neq t'$ such that $\transition_t(\cdot \conditionedon s, a) \neq \transition_{t'}(\cdot \conditionedon s, a)$.
		Furthermore, $\transition_t(\cdot \conditionedon s, a)$ is supported on $\Union_c\entry{\cluster_c}$ for any $t \in [T]$.
	\end{enumerate}
\end{definition}

The latent hierarchy partitions MDPs into clusters such that (1) non-exit $(s, a)$ dynamics do not change between the MDPs and (2) exits are bottlenecks between clusters.
To illustrate \defref{definition:latent-hierarchy}, we begin with a standard example.

\begin{example}[Gated Four-Room]
	\label{example:gated-four-room}
	Consider the gated four-room environment in \figref{fig:four-room-env}, as well as the example task provided.
	The environment has a latent hierarchy with respect to the four rooms outlined by the colored gates (which can be open/closed depending on the task).
	The entrances are colored aqua, while the exits are indicated by arrows.

	Although we assume a single fixed state, we can incorporate task-dependent initial states by appending a dummy state $s_0$.
	A dummy action $a_0$ then takes the agent to the starting state for the task.
	Observe that $(s_0, a_0)$ is an exit, and therefore must transition to an entrance.
\end{example}

To see how \defref{definition:latent-hierarchy} captures intuitive notions of hierarchy in practical settings, we provide an example of a continuous setting roughly fitting into our framework:

\begin{wrapfigure}[12]{r}{0.4\linewidth}
	\centering
  \vspace{-1.5ex}
	\includegraphics[width=\linewidth]{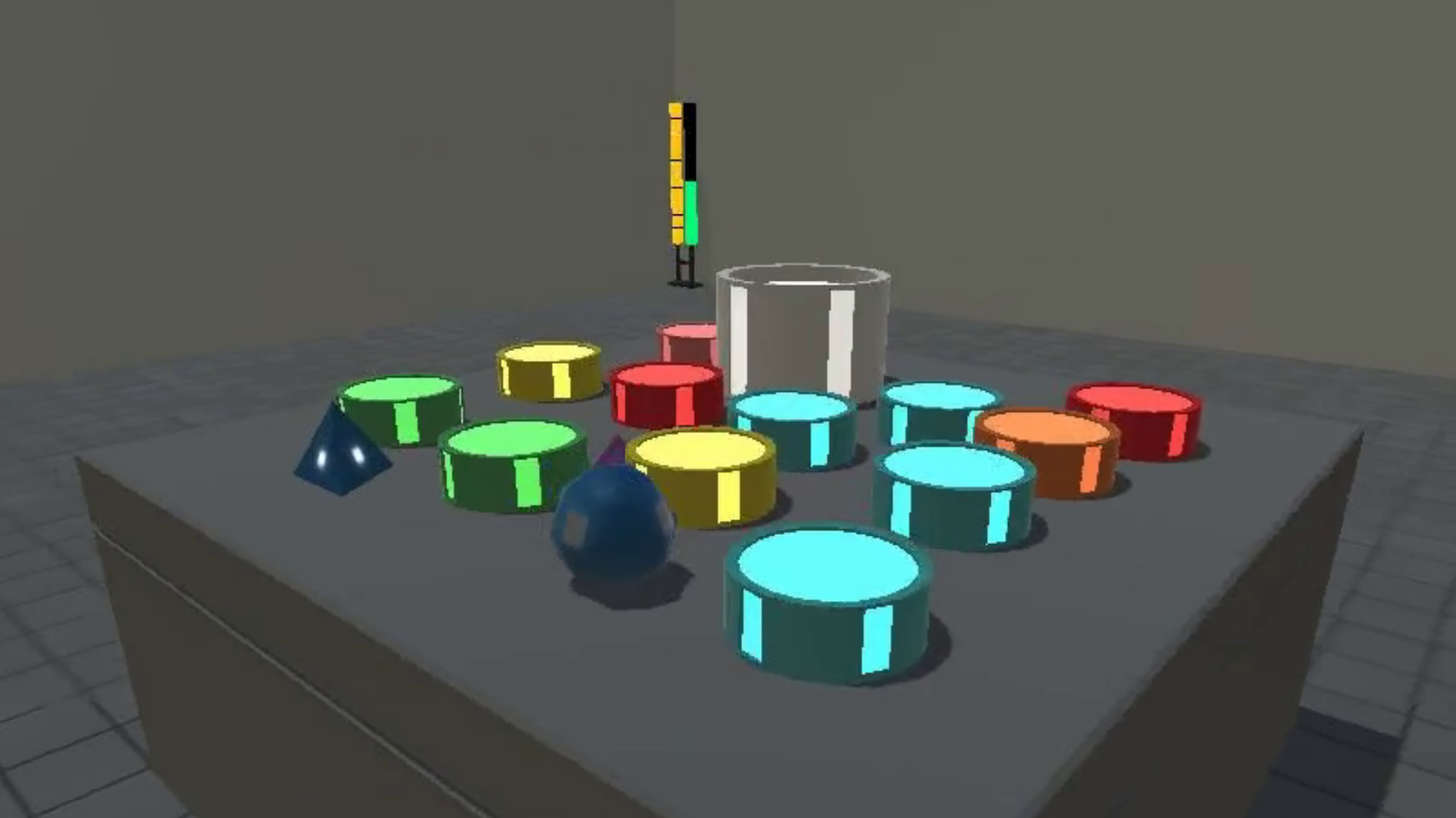}
	\caption{The Alchemy benchmark. Placing stones in potions moves the agent through a latent graph of object properties.\protect\footnotemark}
	\label{fig:alchemy-env}
\end{wrapfigure}

\begin{example}[The Alchemy benchmark]
	Alchemy \citep{wang2021alchemy} is a recently proposed empirical benchmark for meta-RL, where the agent needs to place a stone in a series of potions to obtain some desired appearance, as illustrated in \figref{fig:alchemy-env}.
	Dipping a stone into a potion traverses an edge (determined by the potion) in a graph where nodes are possible stone appearances.
	We focus on task distributions that randomize the edges of this graph (i.e., potion positions and feasible stone appearances are fixed).
	Then, the set of obtainable MDPs has a latent hierarchy where dipping the stone into any of the potions is an exit.
	Indeed, other than dipping the stone into a potion, all other actions (e.g., moving the stone around the room) have the same dynamics in all tasks.
\end{example}

\footnotetext{Image from \citet{wang2021alchemy}, extracted from a larger figure with no other modifications (\href{https://creativecommons.org/licenses/by/4.0/}{License}).}

For convenience, we will define several relevant notions.
First, for any cluster $\cluster$, we define its \emph{interior}, denoted $\interior{\cluster}$, as $\interior{\cluster} \defas (\cluster \times \actionspace) \setminus \exit\cluster$.
Furthermore, we let $\entry\statespace \defas \Union_c\entry{Z_c}$ denote the set of all entrances and $\exit\statespace \defas \Union_c\exit{Z_c}$ the set of all exits.
Finally, we define the quantities
\begin{align*}
  K &\defas \card{\exit\statespace} \\
  L &\defas \card{\entry\statespace} \\
  M &\defas \sup_{c}\card{\exit{Z_c}}
\end{align*}
so that $K$ and $L$ are the total number of exits and entrances, respectively, while $M$ is the maximal number of exits from any cluster.

\paragraph{Connections to the Options Framework.} The options framework \citep{sutton1999between} seeks to formalize hierarchical reasoning in RL.
Central to the framework is the notion of an \emph{option}, a tuple $(\pi, \mathcal{I}, \beta)$ where $\pi$ is a policy, $\mathcal{I} \subseteq \statespace$ is the \emph{initiation set} from where the option can be invoked, and $\beta: \statespace \to [0, 1]$ is the \emph{termination condition} (defined as a termination probability at every state).
An option thus encodes a temporally extended behavior available to an agent.

Observe that the latent hierarchy in \defref{definition:latent-hierarchy} induces a natural set of options.
In particular, for any cluster $\cluster$ and $(g, a) \in \exit{\cluster}$, we can let $\pi$ be the optimal $g$-reaching policy, $\mathcal{I} \defas \entry{\cluster}$, and $\beta(s) \defas \ind{s = g}$.
As we will see, this set of options is useful for tasks requiring navigation through a sequence of exits to reach a goal.

\paragraph{Query Model.} We work in the online setting, where the agent interacts with the tasks by playing policies from the initial state $s_0$.
During meta-training, we allow the agent to interact with the environments using an unbounded number of timesteps for each trajectory before resetting.
We then compute query complexity in terms of the total number of timesteps spent in all tasks in total.

	\section{Meta-Training Analysis}
		In this section, we provide an algorithm for uncovering information about the latent structure that can be used for downstream tasks.
Recall that a defining feature of exits is that they can change dynamics between tasks.
Thus, to ensure exit detection, we make the following assumption:

\begin{assumption}[$\beta$-dynamics separation]
	\label{assumption:dynamics-separation}
	There exists $\beta > 0$ such that for any $t, t' \in [T]$ and $(s, a) \in \exit\statespace$, $\transition_t(\cdot \conditionedon s, a) \neq \transition_{t'}(\cdot \conditionedon s, a) \implies \TV{\transition_t(\cdot \conditionedon s, a) - \transition_{t'}(\cdot \conditionedon s, a)} \geq \beta$.
\end{assumption}

\assumpref{assumption:dynamics-separation} is necessary to ensure that exits can be detected with finite samples.
In particular, estimators for $\transition_t(\cdot \conditionedon s, a)$ and $\transition_{t'}(\cdot \conditionedon s, a)$ with at least $O(S/\beta^2)$ samples will be separated by $\Omega(\beta)$ in total variation distance with high probability.

Note that there is a brute-force approach to learning the underlying structure.
In particular, one can learn $\transition_t(\cdot \conditionedon s, a)$ for all $(s, a)$ and $t \in [T]$, and iterate over $(s, a)$ to check for changing dynamics.
This can be done with query complexity $\tilde{O}(TS^2A/\beta^2)$ time steps.
However, under reasonable ``coverage'' assumptions outlined in the next section, this query cost can be lowered to $\tilde{O}(TKS/\beta^2)$.

\subsection{Defining a Notion of Coverage}

In supervised meta-learning, ``diversity conditions'' ensure that the meta-training tasks reveal the underlying latent structure \citep{tripuraneni2020theory, du2020few}.
We provide analogous conditions ensuring that $\MDP_1, \dots, \MDP_T$ ``cover'' the latent hierarchy.
Since solving $\max_\pi V^\pi(s_0)$ requires fewer samples than learning $\transition$, we expect such conditions to provide sample complexity gains.

\paragraph{Visitation Probabilities and $\alpha$-Importance.} Minimally, exits should be visited by optimal policies of the meta-training tasks for coverage. We thus define the following notion:

\begin{wrapfigure}[17]{r}{0.3\linewidth}
	\centering
	\vspace{-1ex}
	\captionsetup{width=0.95\linewidth}
	\includegraphics[width=\linewidth]{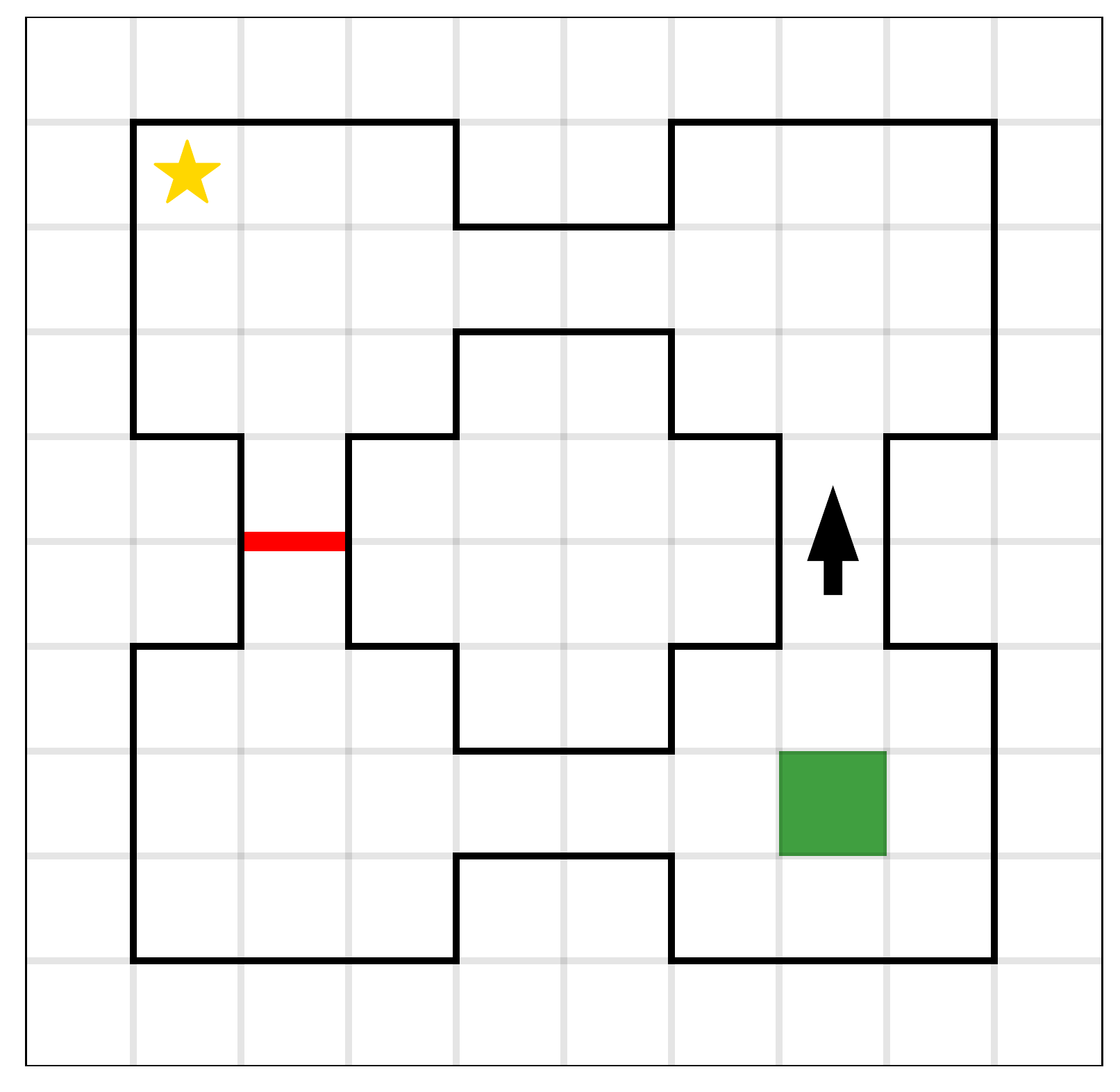}
	\caption{Illustrating $\alpha$-importance. Since the black arrow is the only path to the goal, it is $V_0^\ast(s_0)$-important.}
	\label{fig:alpha-importance-illustration}
\end{wrapfigure}

\begin{definition}
	Fix an MDP $\MDP = (\statespace, \actionspace, \transition, r, H)$, and let $(s, a) \in \statespace \times \actionspace$.
	Construct a modified MDP $\MDP \setminus (s, a)$, where $(s, a)$ brings the agent to a terminal state with no reward.
	Then, we say that $(s, a)$ is \term{$\alpha$-important for $\MDP$} if $V^{\MDP \setminus (s, a), \ast}(s_0) < V^{\MDP, \ast} - \alpha$.
\end{definition}

The $\alpha$-importance condition quantifies the value gap between policies that can use $(s, a)$ and those that cannot.
For example, consider the task in \figref{fig:alpha-importance-illustration}.
Any $\pi$ with $V^\pi(s_0) > 0$ must visit the marked $(s, a)$ pair with some probability.
Therefore, $(s, a)$ is $V^\ast_0(s_0)$-important.
This example suggests that high $\alpha$-importance implies high visitation probability by near-optimal policies.
The following result, proven in \secref{sec:meta-train-proofs}, formalizes this connection:

\begin{lemma}
	\label{lemma:importance-implies-visitation-mt}
	Assume that $(s, a)$ is $\alpha$-important for $\MDP$.
	Then, for any policy $\pi$ with $V^\ast(s_0) - V^\pi(s_0) < \epsilon$ for $\epsilon \in [0, \alpha)$,
	\[
	\prob{(s, a) \in \tau_\pi} > \frac{1}{H}(\alpha - \epsilon).
	\]
\end{lemma}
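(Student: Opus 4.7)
The plan is to bound $V^\pi(s_0)$ above in terms of the visitation probability $p \defas \prob{(s,a) \in \tau_\pi}$ by coupling $\pi$ on $\MDP$ with a policy on the truncated MDP $\MDP \setminus (s,a)$, then invoke $\alpha$-importance together with near-optimality of $\pi$ to obtain a lower bound on $p$.

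First I would construct the coupling. Let $\pi'$ be the policy on $\MDP \setminus (s,a)$ that takes the same action as $\pi$ at every state. Since $\MDP$ and $\MDP \setminus (s,a)$ share dynamics and reward on every $(s',a') \neq (s,a)$, I can couple the trajectories $\tau_\pi$ on $\MDP$ and $\tau_{\pi'}$ on $\MDP \setminus (s,a)$ so that they coincide up until the first time (if any) that $\pi$ visits $(s,a)$; after that point $\tau_{\pi'}$ transitions into the terminal state $\terminate$ and accumulates zero reward, while $\tau_\pi$ continues for at most $H$ more steps. The cumulative reward is in $[0,H]$ and both trajectories accumulate identical reward before the split.

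Next I would turn the coupling into a value comparison. On the event $\set{(s,a) \notin \tau_\pi}$ the two trajectories are identical, so the rewards agree; on the complementary event (probability $p$) the difference of total rewards is at most $H$. Taking expectations,
\[
V^{\MDP,\pi}(s_0) - V^{\MDP \setminus (s,a), \pi'}(s_0) \leq pH.
\]
By $\alpha$-importance of $(s,a)$, $V^{\MDP \setminus (s,a), \pi'}(s_0) \leq V^{\MDP \setminus (s,a), \ast}(s_0) < V^{\MDP,\ast}(s_0) - \alpha$, so
\[
V^\pi(s_0) < V^\ast(s_0) - \alpha + pH.
\]

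Finally, the assumption $V^\ast(s_0) - V^\pi(s_0) < \epsilon$ yields $V^\ast(s_0) - \epsilon < V^\ast(s_0) - \alpha + pH$, hence $p > (\alpha - \epsilon)/H$, which is the desired conclusion. The only subtle step is the coupling argument, and the main care required is ensuring that identifying trajectories in $\MDP$ and $\MDP \setminus (s,a)$ up to the first visit to $(s,a)$ is legitimate; this follows because both MDPs share transitions and rewards on every $(s',a') \neq (s,a)$ and $\pi'$ mimics $\pi$ before the visit. Everything else is a short arithmetic rearrangement.
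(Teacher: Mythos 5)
Your proposal is correct and follows essentially the same route as the paper: the paper proves this via an auxiliary result (Lemma~\ref{lemma:value-gap-implies-visitation}) whose proof is precisely the coupling argument you give — decompose the value by whether the trajectory ever touches the set of modified state-action pairs, observe that the two MDPs' dynamics coincide until that first visit, and bound the residual by $H$ — then combines it with $\alpha$-importance and near-optimality of $\pi$ exactly as you do. You simply inline that lemma's coupling argument rather than stating it separately, but the decomposition and the final arithmetic are identical.
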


\paragraph{A Preliminary Coverage Assumption?} \lemmaref{lemma:importance-implies-visitation-mt} suggests a simple coverage condition: for any $(s, a) \in \exit\statespace$, assume that there exists $t, t' \in [T]$ so that $(s, a)$ is $\alpha$-important for $t$ and $t'$, and $\transition_t(\cdot \conditionedon s, a) \neq \transition_{t'}(\cdot \conditionedon s, a)$.
However, as the following example shows, this condition excludes natural settings:

\begin{figure*}[b!]
	\centering
	\begin{minipage}{0.65\linewidth}
		\centering
		\includegraphics[width=0.49\linewidth]{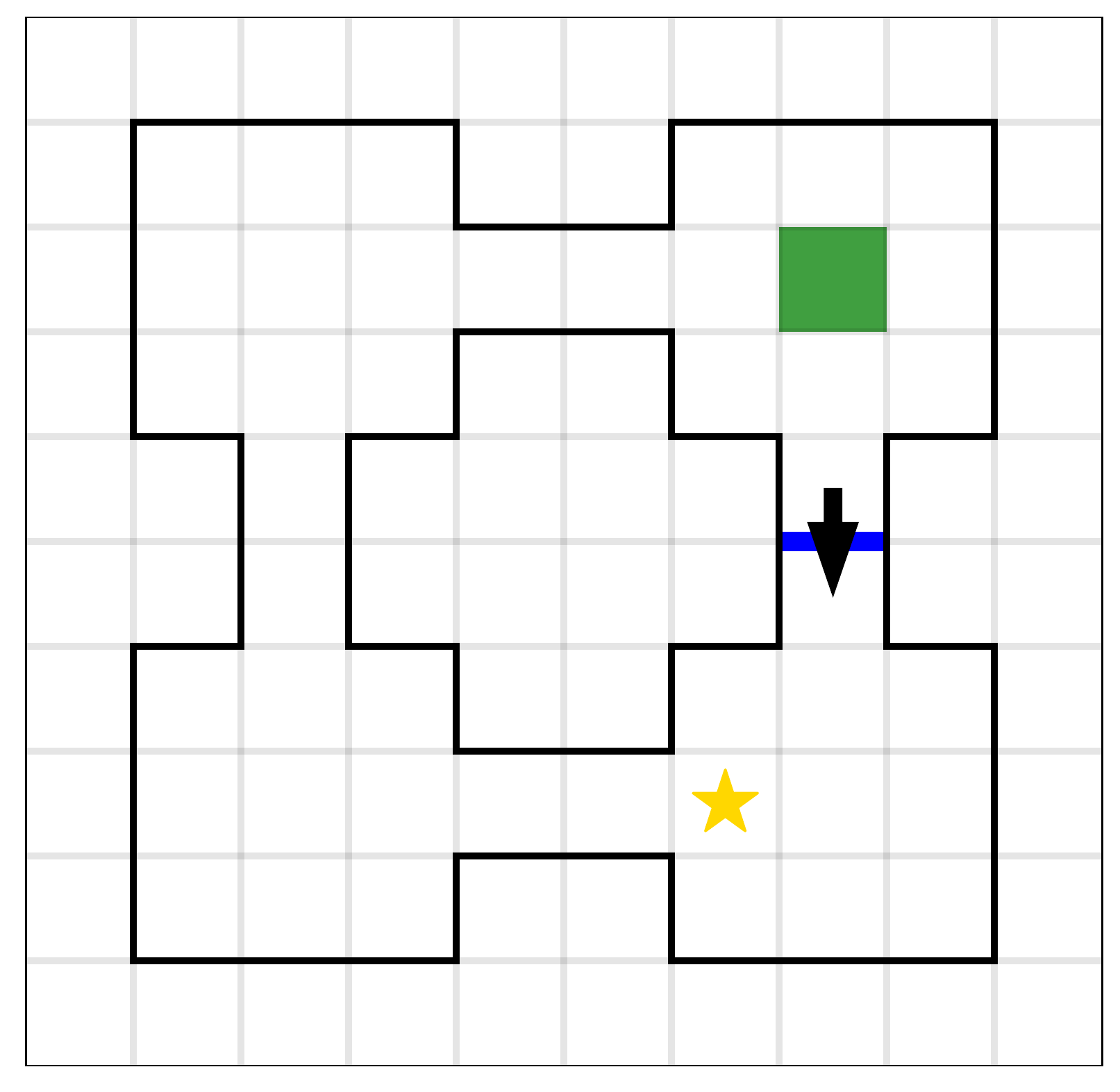}
		\includegraphics[width=0.49\linewidth]{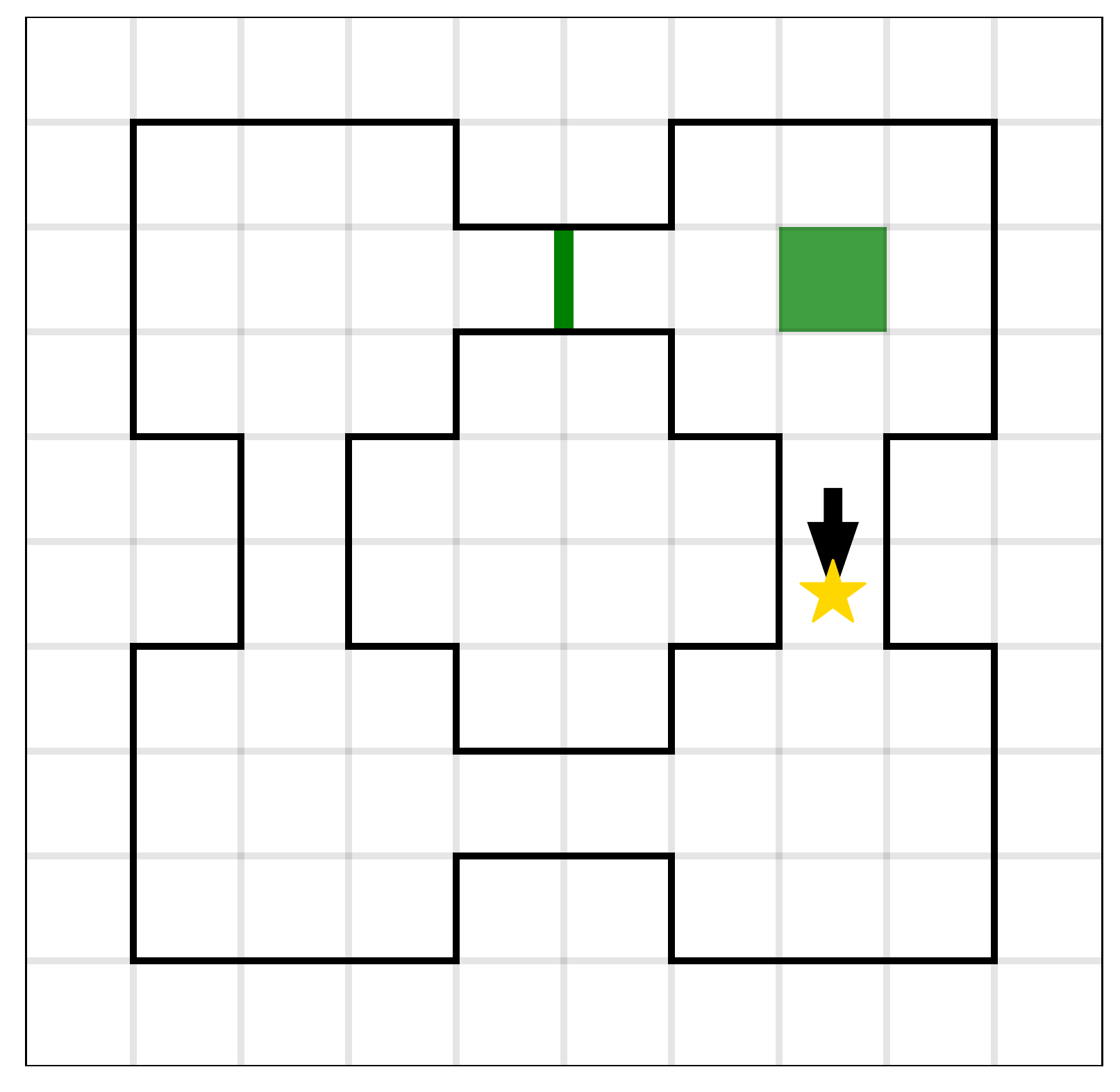}
		\caption{The black arrow is not $\alpha$-significant for one of the tasks, but is nevertheless ``covered'' by optimistic imagination.}
		\label{fig:coverage-base-tasks}
	\end{minipage}
	\hfill
	\begin{minipage}{0.33\linewidth}
		\centering
		\includegraphics[width=0.96\linewidth]{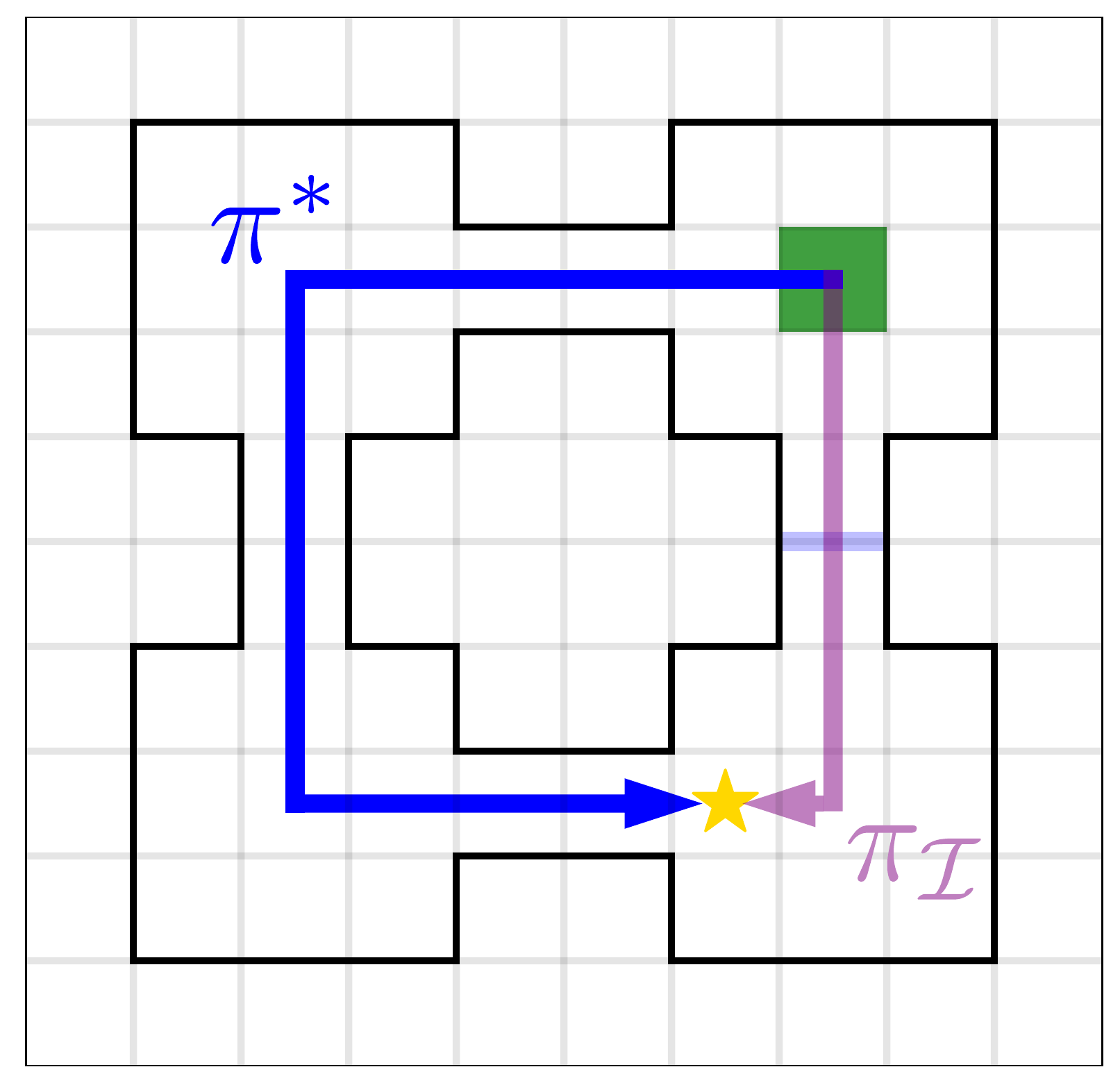}
		\caption{Using optimistic imagination for exit detection.}
		\label{fig:optimistic-imagination-illustration}
	\end{minipage}
\end{figure*}

\begin{example}
	In \exref{example:gated-four-room}, gates are either open or closed.
	For closed gates, the associated $(s, a)$ pairs are unimportant for any goal.
	Thus, if $(s, a)$ is $\alpha$-important for some $t, t' \in [T]$ with $\alpha > 0$, then $\transition_t(\cdot \conditionedon s, a) = \transition_{t'}(\cdot \conditionedon s, a)$, and no change in dynamics can be detected using $\pi_t^\ast$ and $\pi_{t'}^\ast$.

	\figref{fig:coverage-base-tasks} illustrates how the proposed condition fails in covering an exit marked with a black arrow.
	While $(s, a)$ is unused by optimal policies in the first task, it is $V^\ast_0(s_0)$-important for the second.
\end{example}

\paragraph{Optimistic Imagination as a Coverage Mechanism.}
The proposed assumption fails because there are cases where exits are only $\alpha$-important in certain configurations (e.g., only open corridors are $\alpha$-important in \exref{example:gated-four-room}).
In such cases, near-optimal policies for the tasks only ever see one configuration of the dynamics for such exits.

As an alternative, consider the following hypothetical scenario in the context of \figref{fig:coverage-base-tasks}: an agent has solved both tasks, achieving optimal values $V_1^\ast$ and $V_2^\ast$.
Additionally, in the process of learning the second task, the agent has learned $\transition_2(\cdot \conditionedon \Downarrow)$.
If the agent then relearns the first task while setting $\hat\transition_1(\cdot \conditionedon \Downarrow) \gets \transition_2(\cdot \conditionedon \Downarrow)$, it would obtain a new value $\hat{V}_1 \gg V_1^\ast$.
Thus, it can reasonably conclude that the black arrow must have been an exit.
We illustrate this process in \figref{fig:optimistic-imagination-illustration}, where $\pi_{\mathcal{I}}$ is the optimal policy after ``borrowing dynamics.''
The learner could then run $\pi_{\mathcal{I}}$ for exit detection.

We refer to the counterfactual reasoning about the dynamics used above as \emph{optimistic imagination}.
Note that optimistic imagination replaces exit importance in a second task in the preliminary condition with a borrowing-induced value gap condition, allowing for coverage when the preliminary condition fails.
With the above intuition in mind, we now present the main coverage assumption.\footnote{\assumpref{assumption:exit-coverage} does not incorporate the preliminary condition. However, our algorithm can be trivially modified for this more general assumption, and thus we focus on optimistic imagination-based exit detection.}

\begin{assumption}[$(\alpha, \zeta)$-coverage]
	\label{assumption:exit-coverage}
	Assume $(\MDP_t)_{t \in [T]}$ have a latent hierarchy with respect to $(\set{\cluster_k}, \entry{\cdot}, \exit{\cdot})$.
	There exists $\alpha, \zeta > 0$ such that for any $\set{(s_1, a_1), \dots, (s_n, a_n)} \subseteq \exit\statespace$,
	\begin{enumerate}[label=(\alph*)]
		\item For any $i \in [n]$, $(s_i, a_i)$ is $\alpha$-important for some meta-training MDP $\MDP_i$.

		\item For some $\MDP_t$ with $t \in [T]$, if we construct a new MDP $\bar\MDP_t = (\statespace, \actionspace, \bar\transition, r_t, H)$ via
		\[
		\bar\transition(\cdot \conditionedon s, a) =
		\begin{cases}
		\transition^{\MDP_i}(\cdot \conditionedon s, a) & (s, a) = (s_i, a_i) \\
		\transition^{\MDP_t}(\cdot \conditionedon s, a) & \text{otherwise}
		\end{cases},
		\]
		i.e. we replace $(s_i, a_i)$ dynamics with those from $\MDP_i$, then $V^{\bar\MDP_t, \ast}(s_0) > V^{\MDP_t, \ast}(s_0) + \zeta$.
	\end{enumerate}
\end{assumption}

Informally, $(\alpha, \zeta)$-coverage states that if not all exits have been found, then optimistic imagination can borrow dynamics for the remaining exits from other tasks to find a better optimal policy.

\subsection{Algorithm Outline}

\label{sec:src-alg-outline}

\begin{wrapfigure}[24]{r}{0.3\linewidth}
	\centering
  \vspace{-8.7ex}
	\begin{minipage}{\linewidth}
		\captionsetup{width=0.95\linewidth}
		\includegraphics[width=\linewidth]{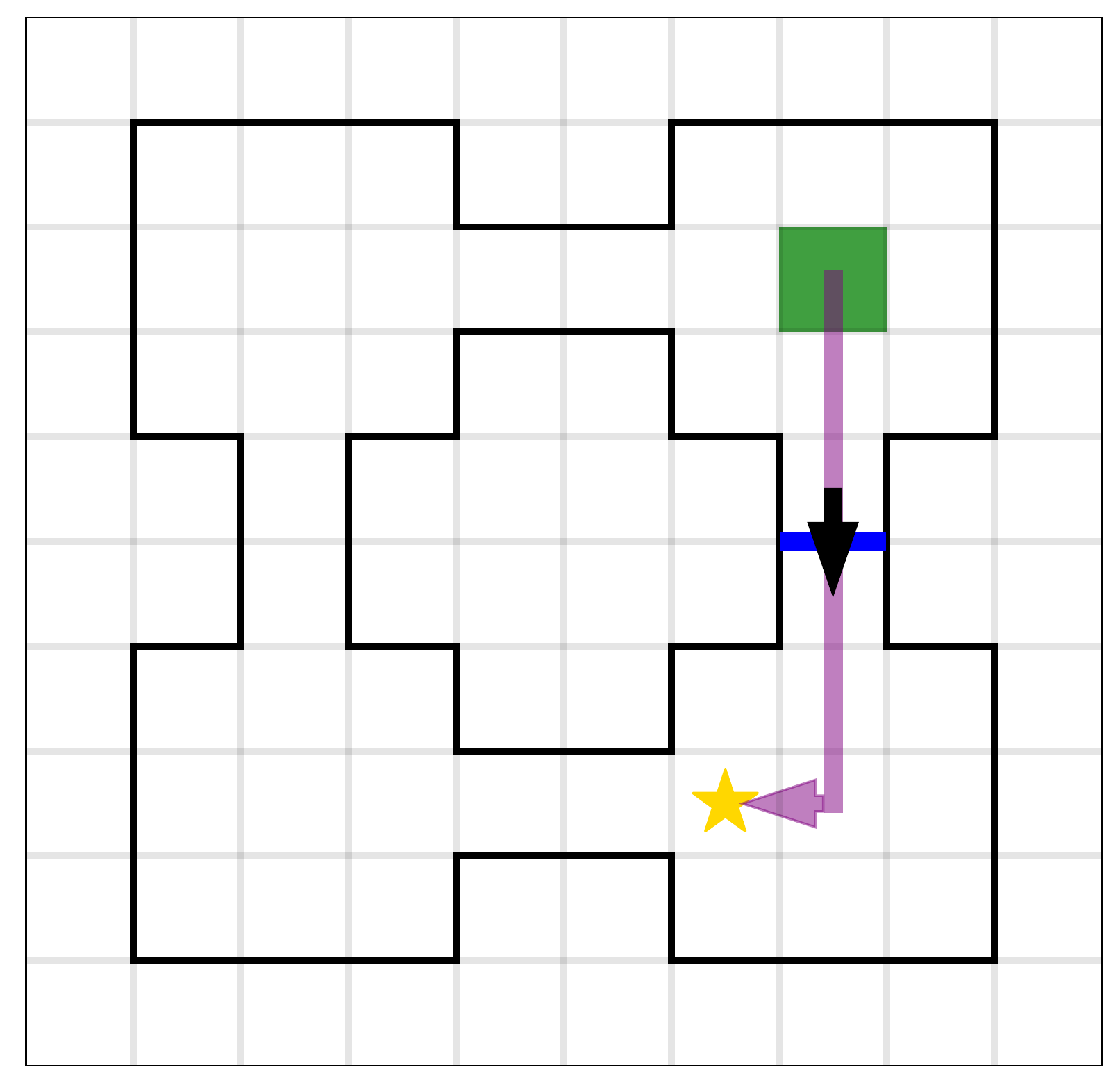}
		\caption{Phase I contribution to learning $\pi_{\mathcal{I}}$ in \figref{fig:optimistic-imagination-illustration}, marked with an arrow.}
		\label{fig:phase-i-contribution}
	\end{minipage}
	\begin{minipage}{\linewidth}
		\captionsetup{width=0.95\linewidth}
		\includegraphics[width=\linewidth]{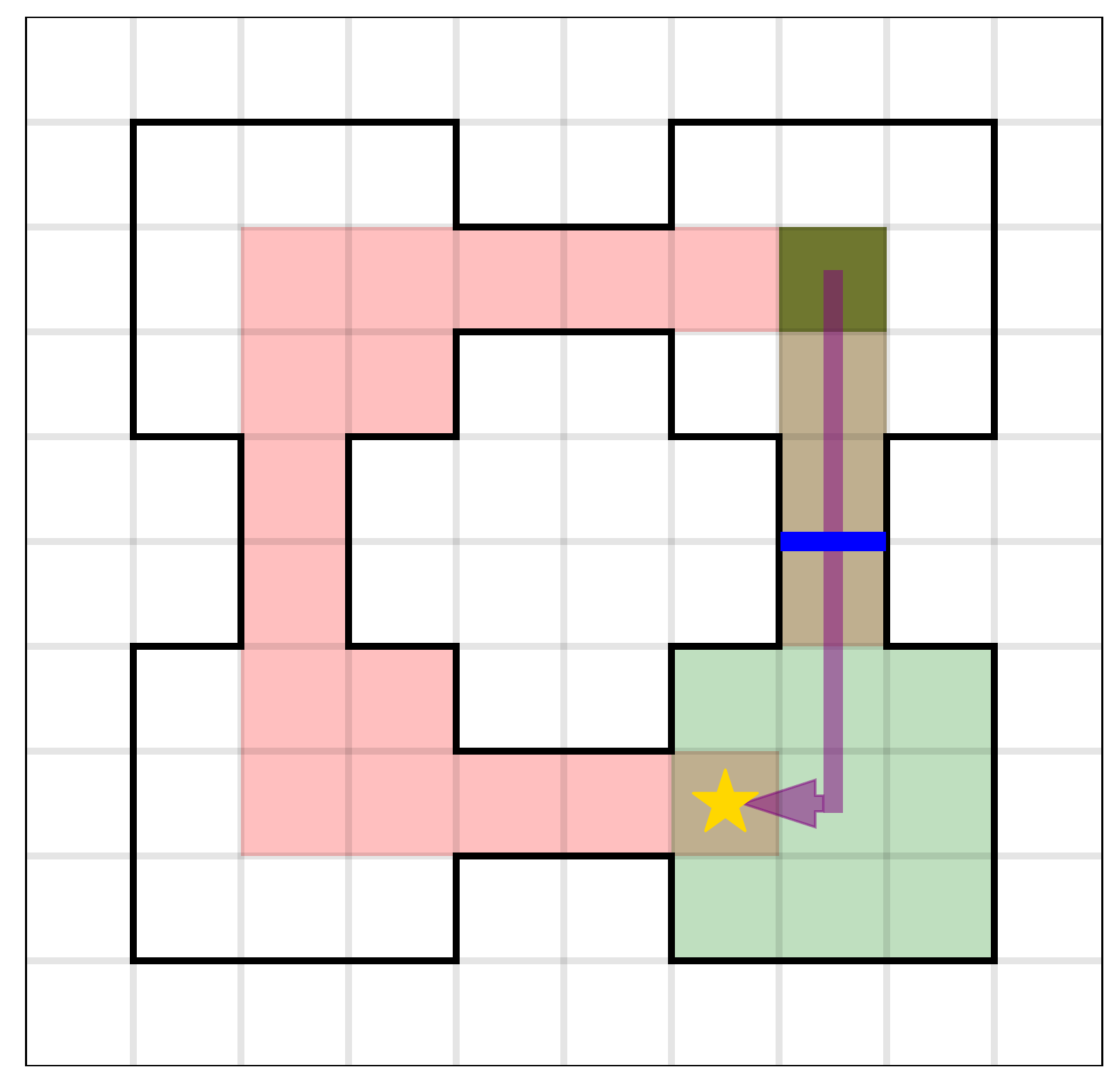}
		\caption{Phase II contribution. Learning $\pi_{\mathcal{I}}$ requires the green region, and thus optimal policy state coverage (red) is insufficient.}
		\label{fig:phase-ii-contribution}
	\end{minipage}
\end{wrapfigure}

In this section, we outline the algorithm that we use to detect exits.
Our procedure can be naturally divided into three phases: a task-solving phase, a reward-free phase, and an exit detection phase.
Throughout, we illustrate our steps in \figref{fig:optimistic-imagination-illustration}, showing how Phases I and II allow the learner to find the imagined policy $\pi_{\mathcal{I}}$.
The full details of the algorithm are provided in \secref{sec:src-alg-formal}.

\subsubsection{Phase I: Task-Specific Dynamics Learning}

First, we solve $\MDP_1, \dots, \MDP_T$ with UCBVI.
Using UCBVI regret bounds from \citet{azar2017minimax} together with \lemmaref{lemma:importance-implies-visitation-mt}, we can guarantee that all $\alpha$-important exits are sufficiently visited.
Thus, during optimistic imagination, the learner would be able to borrow high-quality estimates of exit dynamics from other tasks.
For example, a learner that has solved both tasks in \figref{fig:coverage-base-tasks} can borrow open blue gate dynamics for use in the first task during optimistic imagination, as shown in \figref{fig:phase-i-contribution}.

\subsubsection{Phase II: Reward-Free RL}

In order to perform optimistic imagination, the learner also needs to simulate non-borrowed $(s, a)$ dynamics.
This is done by fully learning the dynamics of one of the tasks, proving a template $\hat\transition_0$.
Learning $\hat\transition_0$ is achieved using reward-free RL \citep{jin2020reward}.

To understand the necessity of Phase II, note that in \figref{fig:phase-ii-contribution}, near-optimal policies (in red) have no coverage over states past the blue gate.
Therefore, dynamics estimates from Phase I are insufficient for optimistic imagination.
On the other hand, fully learning the dynamics in one of the tasks (which includes the green region) allows the learner to simulate the dynamics if the blue gate were open and successfully recover $\pi_{\mathcal{I}}$.

\subsubsection{Phase III: Exit Detection}

Having completed the previous two phases, the learner can use optimistic imagination to detect exits.
In particular, we consider a modified value iteration method where the learner optimistically chooses dynamics estimates from Phases I and II to perform Bellman backups.
This implicitly defines an MDP whose optimal value is at least as large as $\bar\MDP_t$ in \assumpref{assumption:dynamics-separation}, as $\bar\MDP_t$ would have been feasible for this process.
Analogously with $\alpha$-importance, this value gap implies that the corresponding optimal policy $\pi_{\mathcal{I}}$ for this new MDP must visit an $(s, a)$ pair whose dynamics are borrowed.
Therefore, by playing $\pi_{\mathcal{I}}$, the learner can determine a new exit.

\subsection{Meta-Training Guarantee}

We now outline our main result for the algorithm in \secref{sec:src-alg-outline}.
We first define a ``hierarchy oracle'' that will be used in downstream tasks:

\begin{definition}[Hierarchy oracle]
	\label{definition:hierarchy-oracle}
	Let $\terminate_S$ and $\terminate_F$ denote successful and failed termination, respectively.
	Consider any tuple $(x, f, r, \tilde{H})$ such that $x \in \entry\statespace$, $f: \exit\statespace \to \set{\terminate_S, \terminate_F}$, $r$ is a reward function, and $\tilde{H} \leq H$.
	Every such tuple induces an MDP $\MDP(x, f, r, \tilde{H}) = (\statespace, \actionspace, \transition_f, r, \tilde{H})$ whose starting state is $x$ and whose transition dynamics is given by
	\[
	\transition_f(\cdot \conditionedon s, a) =
	\begin{cases}
	\delta(f(s, a)) & (s, a) \in \exit\statespace \\
	\delta(s) & s \in \set{\terminate_S, \terminate_F} \\
	\transition_t(\cdot \conditionedon s, a) & \text{otherwise, for any $t \in [T]$.}
	\end{cases}
	\]
	An \term{$\epsilon$-suboptimal hierarchy oracle}, when queried with any valid $(x, f, r, \tilde{H})$, returns an $\epsilon$-suboptimal policy for $\MDP(x, f, r, \tilde{H})$.
\end{definition}

Informally, a hierarchy oracle can perform any task in an MDP where the clusters are disconnected.
This includes performing an exit as quickly as possible, or optimally collecting rewards within a cluster.
Our meta-training guarantee ensures that such an oracle is implementable:

\begin{theorem}[Meta-training guarantee, informal]
	\label{thm:meta-train-guarantee-informal}
	Under \assumpref{assumption:dynamics-separation}, \assumpref{assumption:exit-coverage} and other assumptions in \secref{sec:other-src-assumptions}, the data obtained from the algorithm in \secref{sec:src-alg-outline} allows for:
	\begin{enumerate}[label=(\alph*)]
		\item implementing an $\epsilon$-suboptimal hierarchy oracle, and

		\item determining, for every $s \in \entry\statespace$, the available exits in the cluster containing $s$,
	\end{enumerate}
	simultaneously with probability at least $1 - p$.
	Furthermore, this is achieved with query complexity:
	\[
	\tilde{O}\left[T\left(\frac{KL}{\alpha\min(\zeta, \beta)^2} + \frac{KS}{\alpha\zeta^2} + \frac{SA}{\min(\alpha, \zeta)^2} + \frac{KS^2A}{\alpha}\right) + \frac{S^4A}{\min(\epsilon, \zeta)} + \frac{S^2A}{\min(\epsilon, \zeta)^2}\right]\mathrm{poly}(H).
	\]
\end{theorem}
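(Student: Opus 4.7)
My plan is to verify that each of the three phases succeeds on its own high-probability event and then combine the events by a union bound; the phases contribute additively and disjointly to the stated query-complexity bound, and the hierarchy oracle together with the cluster/exit assignment are constructed from the artifacts they produce.

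For \textbf{Phase I}, I invoke the UCBVI regret bound of \citet{azar2017minimax} on each $\MDP_t$ for enough episodes that a constant fraction of the deployed policies are $\alpha/2$-suboptimal. By \lemmaref{lemma:importance-implies-visitation-mt}, each such episode visits every $\alpha$-important $(s,a)$ with probability at least $\alpha/(2H)$, so $\tilde O(HS/(\alpha\beta^2))$ episodes per task yield $\beta$-accurate TV estimates of $\transition_t(\cdot\mid s,a)$ at every such exit. Multiplying by $H$ timesteps per episode and summing across $T$ tasks produces the $T\cdot SA/\min(\alpha,\zeta)^2$ leading term and the $T\cdot KS^2A/\alpha$ lower-order term. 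For \textbf{Phase II}, I invoke an off-the-shelf reward-free RL guarantee (e.g.\ \citet{jin2020reward}) to obtain a template model $\hat\transition_0$ that is uniformly accurate enough that planning under it returns a $\min(\epsilon,\zeta)/2$-suboptimal policy for any reward bounded in $[0,1]$ and any horizon $\le H$; this supplies the $S^2A/\min(\epsilon,\zeta)^2$ and $S^4A/\min(\epsilon,\zeta)$ contributions.

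\textbf{Phase III} proceeds by induction on the set $E$ of currently detected exits. At each step I form an optimistically imagined MDP whose non-exit dynamics come from $\hat\transition_0$ and whose dynamics at states in $E$ are chosen, together with a borrowing task index, to maximize $V^\ast(s_0)$; I then play its optimal policy $\pi_\imagined$ in the true environment. Whenever $E$ misses some exit, \assumpref{assumption:exit-coverage}(b) produces a task with a $\zeta$-gap for \emph{some} borrowing pattern, and Phases I and II are calibrated tightly enough that this gap persists up to $\zeta/2$ under the estimated model. A variant of \lemmaref{lemma:importance-implies-visitation-mt} applied to the imagined MDP then forces $\pi_\imagined$ to traverse some undetected exit with probability $\Omega(\zeta/H)$ in the true environment, at which the imagined and true dynamics disagree by $\Omega(\beta)$; $\tilde O(S/\min(\zeta,\beta)^2)$ confirmatory rollouts suffice for detection. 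Combined with the $\tilde O(L/\alpha)$ navigational cost of probing candidate entrances to resolve cluster membership, this yields the $T\cdot KL/(\alpha\min(\zeta,\beta)^2)$ and $T\cdot KS/(\alpha\zeta^2)$ terms, and since each round adds at least one exit the loop terminates after at most $K$ iterations. Finally, to assemble the hierarchy oracle of \defref{definition:hierarchy-oracle}, I plan inside $\MDP(x,f,r,\tilde H)$ using $\hat\transition_0$ for intra-cluster transitions and $f$ for exit transitions; the simulation lemma converts the Phase II accuracy into $\epsilon$-suboptimality, and determining the available exits in the cluster of each entrance reduces to pure reachability under $\hat\transition_0$ truncated at the now-known exit set.

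The hardest step will be the Phase III induction: I must show that statistical error accumulated in Phases I and II cannot erode the $\zeta$-gap of \assumpref{assumption:exit-coverage}(b) below the detectable threshold, so that $\pi_\imagined$ genuinely traverses an undetected exit with the claimed probability. This requires synchronizing the accuracy parameters across phases so that the empirical imagined optimum stays within $\zeta/4$ of its population counterpart, and invoking the visitation lemma on the imagined MDP while carefully accounting for the fact that rollouts actually occur under the true dynamics, which differ from the imagined ones at precisely the undetected exits I am trying to discover.
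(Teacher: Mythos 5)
Your proposal matches the paper's overall structure: a three-phase algorithm (UCBVI task-solving, reward-free template learning, optimistic-imagination-based exit detection), an inductive argument on the discovered exit set, a simulation-lemma argument for the hierarchy oracle, and reachability queries for cluster membership. However, there is a concrete error in how you describe the Phase III construction, and it matters because the entire discovery mechanism hinges on it.

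You write that you form an imagined MDP ``whose non-exit dynamics come from $\hat\transition_0$ and whose dynamics at states in $E$ are chosen, together with a borrowing task index, to maximize $V^\ast(s_0)$.'' This has the roles of $E$ and its complement reversed. In the paper's \textsc{BOAT-VI}, the dynamics at pairs already in $E$ are \emph{fixed} to the current task's estimate $\hat\transition_t(\cdot\mid s,a)$ (there is nothing left to learn there), while the maximization over $\{\hat\transition_0,\hat\transition_1,\dots,\hat\transition_T\}$ happens precisely at pairs \emph{not} in $E$. This is forced by \assumpref{assumption:exit-coverage}(b), which posits a $\zeta$-gap when dynamics at \emph{undiscovered} exits $\{(s_1,a_1),\dots,(s_n,a_n)\}\subseteq\exit\statespace\setminus E$ are replaced, and by the downstream argument that the imagined optimal policy must traverse an \emph{undetected} exit, which in turn requires that the imagined MDP's value advantage be attributable to a borrowed pair outside $E$. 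Under your stated construction (borrowing only at $E$, freezing everything else to $\hat\transition_0$), the imagined MDP would never differ from the true task's MDP at any undiscovered exit, so \lemmaref{lemma:value-gap-implies-visitation} would give nothing, no new exit would be forced to appear in $\tau_{\pi_\imagined}$, and the induction would stall. Your later prose (``forces $\pi_\imagined$ to traverse some undetected exit'') only makes sense with the borrowing directions swapped, so it is likely a slip rather than a misconception, but the construction as stated would not prove the theorem.

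Two smaller discrepancies: the visitation probability you quote, $\Omega(\zeta/H)$, is off by a factor of $K$; in the paper the value-gap argument bounds the probability of visiting \emph{some} pair in the diff set $\Delta$ by $\zeta/6H$, and a union bound over the at-most-$K$ candidate exits gives $\zeta/6KH$ for an individual pair, which propagates into $N_\exitdetection$. And the ``$\tilde O(L/\alpha)$ navigational cost of probing candidate entrances'' does not correspond to anything in the proof; the $L$ dependence in the bound comes from exit \emph{dynamics learning} (exit dynamics are supported on $\entry\statespace$, so $N_\thresh^\exitlearning$ scales with $L$ rather than $S$), and cluster membership is resolved via the already-implemented hierarchy oracle at no extra query cost.
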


As a point of comparison, we have the following guarantee on brute-force hierarchy learning:

\begin{theorem}
	The brute-force approach outlined in \secref{sec:brute-force-hierarchy-learning}, under \assumpref{assumption:dynamics-separation} and \assumpref{assumption:exit-coverage}(a), determines the set of exits with high probability, incurring query complexity
	\[
	\tilde{O}\left[T\left(\frac{S^2A}{\alpha\beta^2} + \frac{SA}{\alpha^2} +  \frac{S^4A}{\alpha}\right)\right]\mathrm{poly}(H).
	\]
\end{theorem}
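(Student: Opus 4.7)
The plan is to analyze the three stages of the brute-force procedure outlined in \secref{sec:brute-force-hierarchy-learning} and bound the total query complexity by summing the per-stage costs.

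\textbf{Stage 1 (near-optimal policies).} For each task $\MDP_t$ I would run UCBVI for enough episodes that a constant fraction of the executed policies are $\alpha/2$-suboptimal; combining the regret bound of \citet{azar2017minimax} with Markov's inequality yields a budget per task whose leading and lower-order components, summed over all $T$ tasks, produce the $TSA/\alpha^2$ and $TS^4A/\alpha$ contributions of the claimed bound.

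\textbf{Stage 2 (exit sampling).} By \assumpref{assumption:exit-coverage}(a) every true exit $(s,a)$ is $\alpha$-important in some task $t$, and by \lemmaref{lemma:importance-implies-visitation-mt} any $\alpha/2$-suboptimal policy for that task visits $(s, a)$ with probability at least $\alpha/(2H)$ per episode. Using the resulting ``reaching policies'' obtained in Stage 1, I would collect $\tilde{O}(S/\beta^2)$ samples of $\transition_{t'}(\cdot \conditionedon s, a)$ in each task $t'$ for every candidate pair in $\statespace \times \actionspace$; at $\tilde{O}(SH^2/(\alpha\beta^2))$ timesteps per $(s, a, t')$ triple this accounts for the $TS^2A/(\alpha\beta^2)$ term.

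\textbf{Stage 3 (identification).} Declare $(s, a)$ an exit iff $\TV{\hat\transition_t(\cdot \conditionedon s, a) - \hat\transition_{t'}(\cdot \conditionedon s, a)} > \beta/2$ for some pair $t, t'$. With $\tilde{O}(S/\beta^2)$ samples per $(s, a, t)$, standard $\ell_1$-concentration for empirical categoricals delivers $\beta/4$ accuracy in total variation with high probability; by \assumpref{assumption:dynamics-separation} the test is then complete (true exits satisfy $\TV{\transition_t(\cdot \conditionedon s, a) - \transition_{t'}(\cdot \conditionedon s, a)} \geq \beta$), and by \defref{definition:latent-hierarchy}(a) it is sound (interior $(s,a)$ have constant dynamics across tasks). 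A union bound over all candidate $(s,a)$ pairs, task pairs, and concentration events produces the stated high-probability guarantee.

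The main technical obstacle lies in Stage 2. When $(s, a)$ is $\alpha$-important only in some task $t$, one must show that playing the reaching policy $\pi_t$ inside a different task $t'$ still visits $(s, a)$ with probability $\Omega(\alpha/H)$. The key observation is that by \defref{definition:latent-hierarchy}(a) the trajectory law under any fixed policy coincides in $\MDP_t$ and $\MDP_{t'}$ up to the first exit encountered, so if $\pi_t$ reaches $(s, a)$ before traversing any other exit with non-negligible probability in $\MDP_t$, it does so in $\MDP_{t'}$ as well; exits whose natural reaching paths pass through other exits must be handled by a layered scheme that discovers shallow exits first and bootstraps to deeper ones, ensuring the per-layer sample budget telescopes into the final bound.
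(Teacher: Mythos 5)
Your proposal diverges from the paper's approach in its very first step, and the divergence creates a gap you acknowledge but do not close. The paper's brute-force algorithm (\algoref{alg:brute-force-learning}) does \emph{not} solve the original meta-training tasks and reuse their near-optimal policies; it instead runs \Euler{} on a separate goal-reaching task $\MDP_t^s$ with reward $\ind{s' = s}$ for \emph{every} pair $(t, s) \in [T] \times \statespace$, obtaining a dedicated state-reaching policy class $\Psi_t^s$ for each task and each state, and then perturbs these policies to also play every action $a$ at $s$. This systematic state-reaching exploration is what lets the paper uniformly lower-bound the visitation count $N_t(s, a)$ for every $t$. You instead solve each task once and hope that near-optimal task-$t$ policies visit $(s, a)$ in the other tasks $t'$, which requires a transfer-of-visitation argument that the paper never needs.

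That transfer argument is where the gap lies. Your observation that trajectory laws under a fixed policy coincide in $\MDP_t$ and $\MDP_{t'}$ up to the first encountered exit is correct, but it only yields visitation in $\MDP_{t'}$ when the task-$t$ policy reaches $(s, a)$ before traversing any other exit. The ``layered scheme'' you sketch to handle the remaining case is neither specified nor analyzed: after discovering shallow exits, you would need to \emph{re-plan} in $\MDP_{t'}$ using the newly learned (and task-specific) dynamics at those exits to reach $(s, a)$, which is its own RL problem with no clear sample budget, and there is no reason for the per-layer cost to telescope. More fundamentally, $\alpha$-importance of $(s, a)$ in task $t$ gives no guarantee that $(s, a)$ is reachable \emph{at all} in task $t'$ (e.g., a gate open in $t$ but closed in $t'$ could block the only path). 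The paper rules this out via \assumpref{assumption:non-limiting-exits} (non-limiting exit configurations), which states that $\max_\pi P_\MDP(s \in \tau_\pi) \leq C \max_\pi P_{\MDP_t}(s \in \tau_\pi)$ uniformly over exit configurations, and the appendix proof invokes this explicitly (``Therefore, $s$ is $\alpha/CH$-significant for any task $t \in [T]$, by \assumpref{assumption:non-limiting-exits}''). The main-text theorem statement omits this assumption, which may have led you to attempt a proof without it, but the claim is false without some such cross-task reachability condition, and your coupling-plus-layering idea does not substitute for it. Your complexity accounting also does not cleanly produce the $S^4A/\alpha$ term from UCBVI's lower-order regret; in the paper this term arises from running \Euler{}'s $S^2AH^2$ burn-in over all $TS$ goal-reaching subproblems.
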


When $\alpha$, $\beta$, and $\zeta$ are of the same order, we see that the proposed method incurs a smaller query complexity compared to a brute force learner that has only learned the exits.
We provide proofs of both results in \secref{sec:meta-train-proofs}, along with all other necessary assumptions and full algorithm details.

	\section{Meta-Test Analysis}
		In this section, we provide regret bounds on learning an MDP $\MDP_\target$ using the hierarchy oracle.
We first characterize a family of tasks for which one can achieve improved regret bounds.
Furthermore, we provide sufficient conditions ensuring that using the hierarchy incurs low suboptimality.

\subsection{Assumptions}

In this section, we outline the assumptions that we make to prove a regret bound on the meta-test task.
Let $\MDP_\target = (\statespace, \actionspace, \transition_\target, r_\target, H)$ be the meta-test MDP.
We assume that $\transition_\target(\cdot \conditionedon s, a) = \transition_t(\cdot \conditionedon s, a)$ for any source task $t$ and $(s, a) \not\in \exit\statespace$.
Our first assumption restricts the set of tasks to those which are compatible with the hierarchical structure:

\begin{assumption}[Task Compatibility]
	\label{assumption:meta-test-compatibility}
	There exists a cluster $\cluster^\ast$ such that $r_\target$ is supported on $\interior{(Z^\ast)} \union \exit\statespace$.
	Furthermore, there exists an optimal policy $\pi^\ast$ satisfying
	\begin{enumerate}[label=\textup{(\alph*)}]
		\item Conditioned on $s_h \in \cluster^\ast$, we have that $(s_{h'}, a_{h'}) \not\in \exit{\cluster^\ast}$ for $h' \geq h$ almost surely.

		\item The number of exits encountered by $\pi^\ast$ is bounded by $\horizon_\effective$ with probability $\zeta$.
	\end{enumerate}
\end{assumption}

\paragraph{Hierarchical compatibility.}
Intuitively, the assumption on the reward function structure and condition (a) suggests that the task can be decomposed into a $(\cluster^\ast)$-searching phase and a within-$(\cluster^\ast)$ phase.
We expect the hierarchy oracle to reduce the complexity of exploration in both phases.
Thus, these conditions ensure compatibility with the learned hierarchy.

\paragraph{Temporal Abstraction.}
Since a hierarchical learner only needs to make decisions upon entering a new cluster (to decide which exit to use/whether to stay), one can expect a reduction in the planning horizon.
Condition (b) serves to quantify this reduction.
Note that in most practical settings, the failure probability can be expected to be small, even with modest values of $\horizon_\effective$.

\paragraph{Hierarchical Suboptimality.}
By restricting the learner to executing oracle-provided policies, we have reduced the feasible set of policies.
While this reduction leads to improved regret bounds, this also incurs approximation error, as the optimal policy may not lie in this restricted class.
We refer to this error as \emph{hierarchical suboptimality}.
We will show that hierarchical suboptimality is controllable with appropriate conditions on $\transition_T$, which require the following notions of reaching times:

\begin{definition}[Reaching times]
	Fix a cluster $\cluster$, starting and goal states $s, g \in \cluster$, and planning horizon $\tilde{H} \leq H$.
	For any policy $\pi$, let $(s_0, s_1, \dots, s_{\tilde{H}})$ be the states visited by $\pi$ from $s$, where $s_0 = s$.
	Then, we define
	\begin{equation*}
	\begin{aligned}
	T^\pi_{\tilde{H}}(s, g) &\defas \min \set{h \in \set{0, \dots, \tilde{H}} \suchthat \text{$s_h = g$ and $s_{h'} \in Z$ for $h' < h$}} \union \set{L} \\
	T^\ast_{\tilde{H}}(s, g) &\defas \inf_\pi\expt{T^\pi_{\tilde{H}}(s, g)} \\
	T^{\min}(s, g) &\defas \inf_{\pi}\min\set{h \in \naturals \suchthat \prob{T^\pi(s, g) = h} > 0}.
	\end{aligned} \qedhere
	\end{equation*}
\end{definition}

In words, $T_{\tilde{H}}^\pi(s, g)$ is the time $\pi$ takes to reach a state $g$ from $s$ while remaining within the same cluster.
By minimizing this quantity in expectation over all policies, we obtain $T_{\tilde{H}}^\ast(s, g)$.
Finally, $T^{\min}(s, g)$ is the minimum time for which it is possible to reach $g$ from $s$.

\begin{assumption}[Regular and low-variance dynamics]
	\label{assumption:env-low-var-and-regular}
	There exists $\alpha, \beta, \gamma > 0$ such that for any cluster $\cluster$, states $s, g \in \cluster$, and horizon $\tilde{H} < H$,
	\begin{enumerate}[label=(\alph*)]
		\item ($(\alpha, \beta)$-unreliability)
		For any deterministic policy $\pi$ with $\mathbb{E}[T_{\tilde{H}}^\pi(s, g)] - T^\ast_{\tilde{H}}(s, g) < \alpha$, $T_{\tilde{H}}^\pi(s, g)$ has a sub-Gaussian upper tail with variance proxy $\beta^2\mathbb{E}[T_{\tilde{H}}^\pi(s, g)]^2$.

		\item ($\gamma$-goal-reaching suboptimality) $T_{\tilde{H}}^\ast(s, g) \leq (1 + \gamma)T^{\min}(s, g)$.
	\end{enumerate}
\end{assumption}

To understand $(\alpha, \beta)$-unreliability, note that the condition only considers near-optimal deterministic policies.
Therefore, the condition controls the randomness in $T^\pi(s, g)$ derived from the transition dynamics.
On the other hand, (b) is a regularity condition, ensuring that near-optimal goal-reaching policies reach their goals as quickly as possible.
Deterministic environments satisfy these conditions with $\alpha = \infty$ and $\beta = \gamma = 0$.
We provide an extended discussion of these assumptions, including failure cases without them, in \secref{sec:low-var-discussion}.
Note that the guarantees of \assumpref{assumption:env-low-var-and-regular} scales with the cluster width, and thus we have the following final assumption:

\begin{assumption}
	\label{assumption:skill-horizon}
	For any cluster $\cluster$ and $s, g \in Z$ with $s \neq g$, $T^{\min}(s, g) \leq W$.
	Furthermore, $\horizon_\effective W \ll H$.
\end{assumption}

\assumpref{assumption:skill-horizon} limits the length of the subtasks within each cluster.
This is consistent with hierarchy-based methods in practice, with skills only being executed for a limited amount of time.
This width bound, together with \assumpref{assumption:meta-test-compatibility}, suggests that $\pi^\ast$ requires $O(\horizon_\effective W)$ timesteps with high probability.
Therefore, the condition $\horizon_\effective W \ll H$ means that the task horizon is much longer than the minimum time required to complete the task, which often holds in practice.

\subsection{Meta-test Regret Guarantee}

Under the assumptions in the previous section, we have the following meta-test guarantee:

\begin{theorem}
  We work under Assumptions \ref{assumption:meta-test-compatibility}, \ref{assumption:env-low-var-and-regular}, and \ref{assumption:skill-horizon}.
  Furthermore, assume that the learner has access to an $\epsilon$-suboptimal hierarchy oracle as guaranteed by \thmref{thm:meta-train-guarantee-informal}, where $\epsilon < \alpha$.
  Then, a learner that applies the procedure in \secref{sec:meta-test-learning-procedure} to $\MDP_\target$ incurs regret
	\begin{align*}
  	\mathrm{Regret}(N) &\lesssim \sqrt{H^2\horizon_\effective WLMN} + N\epsilon_{\mathrm{subopt}} \\
  	\epsilon_{\mathrm{subopt}} &\defas (1 + \horizon_\effective + \beta\sqrt{\horizon_\effective})\epsilon + \left[\gamma\horizon_\effective + \beta(1 + \gamma)\sqrt{\horizon_\effective}\right]W + \zeta H.
	\end{align*}
  with high probability.
\end{theorem}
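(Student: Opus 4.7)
The plan is to reduce the meta-test task to a ``high-level'' MDP whose states are entrances and whose actions are hierarchy-oracle-executed skills, run an optimistic regret minimizer on this reduction, and then translate the guarantees back to $\MDP_\target$ while carefully bookkeeping skill-execution errors and failure events.

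First I would construct a reduced MDP $\MDP_{\hier}$ with state space $\entry{\statespace} \cup \{\terminate\}$, starting state $s_0$, and, from each entrance $x$ in cluster $\cluster$, an action set consisting of the at-most-$M$ exits $(g,a) \in \exit{\cluster}$ together with a ``collect reward inside $\cluster^\ast$'' action when $\cluster = \cluster^\ast$. Transitions are induced by $\transition_\target$ applied to the selected exit; per-meta-step rewards equal what the oracle actually collects during the corresponding skill execution; the high-level horizon is $\horizon_\effective$. By \assumpref{assumption:meta-test-compatibility}, with probability at least $1 - \zeta$ the optimal policy $\pi^\ast$ of $\MDP_\target$ uses at most $\horizon_\effective$ exits and otherwise remains in $\cluster^\ast$. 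Simulating $\pi^\ast$'s decisions through the oracle loses at most $\epsilon$ per skill (oracle suboptimality) and foregoes at most $\gamma T^{\min} \leq \gamma W$ of reward per skill from \assumpref{assumption:env-low-var-and-regular}(b), giving $V^{\MDP_{\hier},\ast}(s_0) \geq V^{\MDP_\target,\ast}(s_0) - (1+\horizon_\effective)\epsilon - \gamma \horizon_\effective W - \zeta H$, which accounts for the ``hierarchical suboptimality'' portion of $\epsilon_{\mathrm{subopt}}$.

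I would then apply a Bernstein-style optimistic algorithm (UCBVI or \Euler{}) to $\MDP_{\hier}$, treating oracle-executed skills as primitive actions. With $L$ states, $M$ actions, horizon $\horizon_\effective$, and values bounded by $H$, a law-of-total-variance argument amortizing the per-meta-step reward budget by $W$ across $\horizon_\effective$ steps yields high-level regret $\tilde O(\sqrt{H^2 \horizon_\effective W L M N})$. To lift this to $\MDP_\target$, I must bound the probability that skill executions overrun the available $H$ real timesteps. Each oracle-returned skill has expected duration $\leq (1+\gamma)W$ with a sub-Gaussian upper tail of proxy $\beta^2 \mathbb{E}[T]^2$ by \assumpref{assumption:env-low-var-and-regular}(a); summing sub-Gaussian deviations over the $\horizon_\effective$ skills in an episode, the realized length exceeds its mean by at most $\beta(1+\gamma)\sqrt{\horizon_\effective}\,W$ with high probability, and the $\epsilon$-error from oracle suboptimality picks up an extra $\beta\sqrt{\horizon_\effective}\,\epsilon$ amortization term. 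These two contributions produce the remaining components of $\epsilon_{\mathrm{subopt}}$, while compatibility failure and concentration failure each contribute at most $\zeta H$.

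The hardest step will be the concentration argument, because skills within an episode are chosen adaptively as a function of the realized trajectory, so the per-skill deviations are not independent. I would handle this via a martingale argument against the sequence of actually-selected skills, combined with a union bound over the $LM$ reachable oracle calls to obtain uniform near-optimality across all skills. A secondary subtlety is that the Bernstein reduced-MDP regret analysis must handle per-meta-step rewards that are random with range up to $W$ but whose episode-sum is bounded by $H$; invoking the law-of-total-variance inside the UCBVI regret decomposition, rather than crude per-meta-step Hoeffding bounds, is what produces the $\sqrt{\horizon_\effective W}$ factor rather than a worse $\sqrt{\horizon_\effective W^2}$ factor.
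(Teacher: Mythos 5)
Your high-level plan (reduce to a skill-level MDP, bound the hierarchical suboptimality of the best policy expressible in it, run an optimistic learner, and sum) is structurally the same as the paper's, but there is a concrete gap in the reduced-MDP construction that the paper has to work around and that your proposal misses.

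Your reduced MDP $\MDP_\hier$ has state space $\entry\statespace \cup \{\terminate\}$. This is not actually an MDP for the meta-test problem: both the transition kernel and the reward of a meta-action depend on how much real time remains in the episode, which is not a function of the current entrance. A skill from entrance $x$ to exit $g$ consumes a random number of base timesteps that depends on the full history of prior skill durations, and the within-$\cluster^\ast$ reward-collection action yields $V_h^{\hier}(s)$-type returns that explicitly depend on the current base timestep $h$. Without tracking $h$, the ``MDP'' you feed to UCBVI/\Euler{} has history-dependent, non-Markov dynamics and rewards, and the regret guarantees you want to invoke do not apply. The paper fixes this by augmenting the meta-state to $\entry\statespace \times \{0,\dots,\bar H+1\}$ for a carefully chosen $\bar H \approx \horizon_\effective W + \beta W\sqrt{\horizon_\effective\log(1/\zeta)}$, truncating trajectories that exceed $\bar H$, and showing via \lemmaref{lemma:surrogate-achieves-same-value} that a surrogate MDP without the truncation has the same value. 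This augmentation is where the $\sqrt{\horizon_\effective W}$ in the leading regret term actually comes from: the paper scales rewards by $1/H$ so the \Euler{} variance parameter $\mathcal{G}\leq 1$, and then the $\sqrt{L\bar H M N}$ factor is a plain state-space count with $\bar H\lesssim\horizon_\effective W$, not a law-of-total-variance amortization over per-skill reward ranges. Your attribution of that factor to a Bernstein/total-variance argument on per-meta-step rewards of ``range $W$'' is therefore both technically off (the within-$\cluster^\ast$ reward has range up to $H$, not $W$) and not where the factor originates in a correct derivation.

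A second, smaller issue: you say you will bound ``the probability that skill executions overrun the available $H$ real timesteps,'' but the paper truncates at $\bar H\ll H$, not at $H$. The sub-Gaussian tail bound (\lemmaref{lemma:ref-policy-execution-time}) is applied to control $\prob{T_N>\bar H}\leq\zeta$ for the comparator policy, which feeds into the $\zeta H$ term; the choice of $\bar H$ is what trades off that failure probability against the state-space blowup in the regret. Your sketch is missing this truncation-level parameter and the resulting two-way trade, which is essential for getting the stated form of $\epsilon_{\mathrm{subopt}}$ together with the $\sqrt{H^2\horizon_\effective WLMN}$ leading term. Once you augment the meta-state with a clipped time index and introduce $\bar H$, your plan lines up with the paper's: the suboptimality bound via imitating $\pi^\ast$'s meta-history distribution, the sub-Gaussian concentration over adaptively chosen skills handled by a martingale/Chernoff argument, and the \Euler{} regret applied to the (now genuine) meta-MDP.
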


Observe that the irreducible hierarchical suboptimality $\epsilon_{\mathrm{subopt}}$ (i.e. when $\epsilon = 0$) tends to zero as $\gamma, \beta, \zeta \to 0$.
In particular, environments with deterministic in-cluster dynamics do not incur hierarchical suboptimality.
We prove this regret bound in \secref{sec:meta-test-proofs}.

\begin{wrapfigure}[18]{r}{0.35\linewidth}
	\centering
	\vspace{-3ex}
	\includegraphics[width=\linewidth]{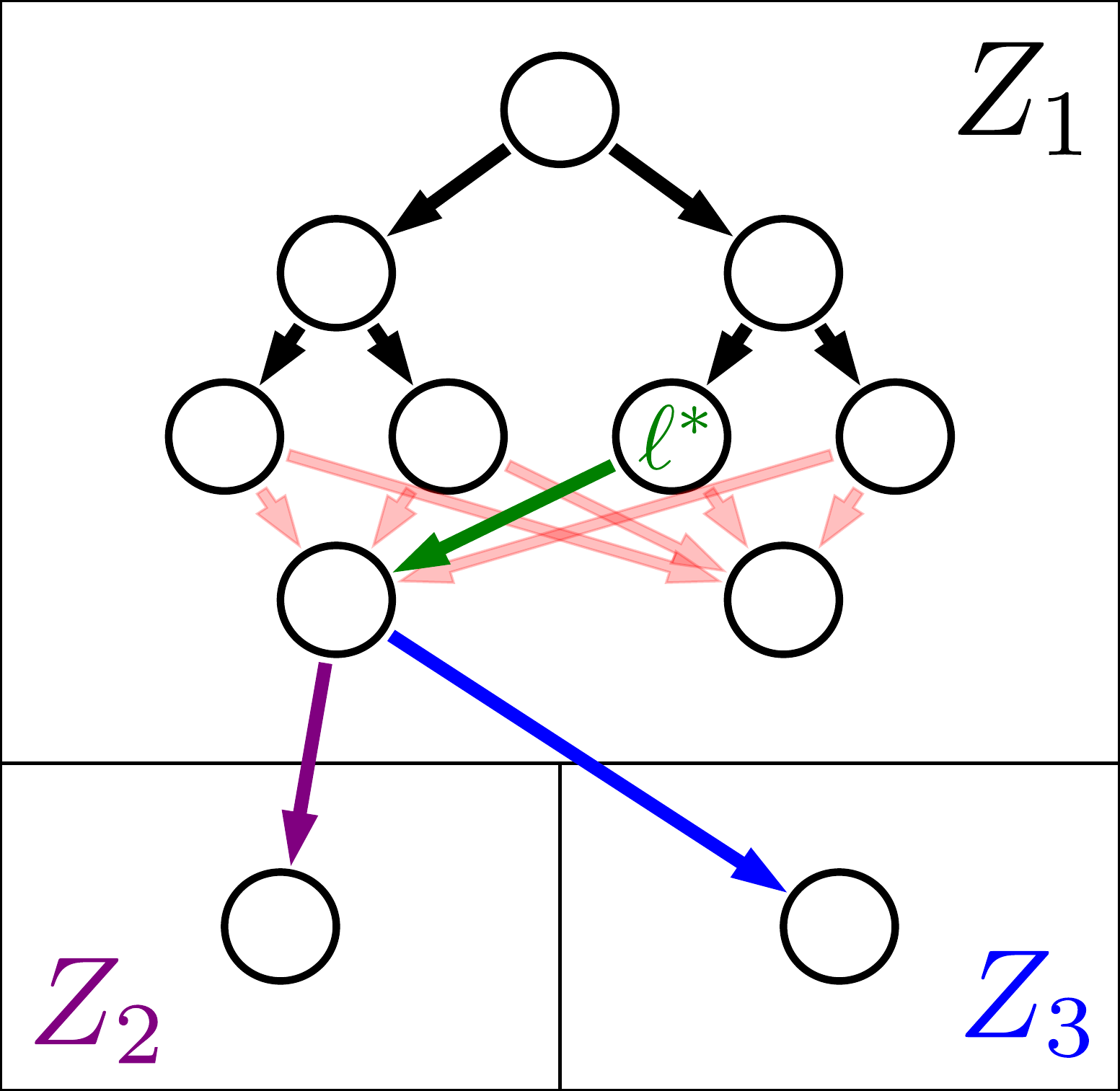}
	\caption{The binary tree environment, which demonstrates an exponential regret separation between hierarchy-based and hierarchy-oblivious learners.}
	\label{fig:binary-tree-env}
\end{wrapfigure}

\paragraph{When does knowing the hierarchy help?}
Consider the binary tree environment in \figref{fig:binary-tree-env}.
All of the leaves take the learner to a state with exits with probability $1/2$, with the exception of a special leaf $\ell^\ast$ that does so with probability $(1/2) + \epsilon$.
Rewards can only be collected upon performing one of the exit actions (blue/purple).
To achieve low regret, a learner has to quickly identity $\ell^\ast$ and the correct exit.

We consider the collection of task distributions indexed by $\ell^\ast$ that randomize the reward-granting exit action.
Knowing the hierarchy amounts to knowing $\ell^\ast$, reducing the exploration problem to determining the correct exit action.
However, a hierarchy-oblivious learner needs to explore the tree, leading to regret that is exponential in the tree depth.
Formally, we have the following result:

\begin{theorem}
	There exists a family of task distributions such that any hierarchy-oblivious learner incurs expected regret lower bounded by $\Omega(2^{W/2}\sqrt{H^2N})$ on at least one task distribution.
	In contrast, a learner with access to a $0$-suboptimal\footnote{We use a $0$-suboptimal hierarchy oracle for the separation result for ease of presentation.} hierarchy oracle incurs regret bounded by $O(\sqrt{H^2N})$ with high probability, over any sampled task from any of the task distributions.
\end{theorem}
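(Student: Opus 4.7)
The proof splits into an upper bound for the hierarchy-based learner and a lower bound for the hierarchy-oblivious learner, the latter being the main technical contribution.

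\emph{Upper bound.} Instantiate the preceding meta-test regret guarantee in the binary tree environment. The hierarchy oracle exposes the tree itself as a single cluster whose exits are the $M = O(1)$ blue/purple actions, with post-exit states forming trivial absorbing clusters. The effective horizon is $\effhorizon = O(1)$ because every near-optimal policy invokes only a single exit per rollout, and in-tree dynamics are deterministic, so \assumpref{assumption:meta-test-compatibility} and \assumpref{assumption:env-low-var-and-regular} hold with $\alpha = \infty$ and $\beta = \gamma = \zeta = 0$. Plugging $\epsilon = 0$ and these parameters into the preceding theorem collapses $\epsilon_{\mathrm{subopt}}$ to $0$, leaving the $\tilde{O}(\sqrt{H^2 N})$ leading term.

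\emph{Lower bound.} Index the family of task distributions by $\ell^\ast \in \{1,\ldots,K\}$ with $K = 2^W$; in distribution $\ell^\ast$ the $\epsilon$-biased leaf sits at $\ell^\ast$ and the rewarding exit action is drawn uniformly each episode. Fix any hierarchy-oblivious learner. By Yao's minimax principle it suffices to lower bound the expected regret of this learner against the uniform mixture over $\ell^\ast$. Let $n_\ell$ denote the number of episodes in which the learner's rollout visits leaf $\ell$. The observation laws under hypotheses $\ell^\ast$ and $\ell^\ast{}'$ differ only on rollouts visiting $\ell^\ast$ or $\ell^\ast{}'$; conditional on the leaf visited, the per-leaf likelihood ratio is bounded by $\exp(O(\epsilon))$, so by the chain rule for KL divergence,
\[
\mathrm{KL}\bigl(\mathbb{P}_{\ell^\ast}\,\big\|\,\mathbb{P}_{\ell^\ast{}'}\bigr) \ \le\ O\bigl(\epsilon^2\,\expt{n_{\ell^\ast} + n_{\ell^\ast{}'}}\bigr).
\]
Since $\sum_\ell n_\ell \le N$ and the prior is uniform, $\expt{n_{\ell^\ast}} \le N/K$ on average, so Pinsker's inequality prevents consistent identification of $\ell^\ast$ unless $N = \Omega(K/\epsilon^2)$. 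During this identification phase, every episode playing the wrong leaf or wrong exit action incurs regret $\Omega(\epsilon H)$ (the $\epsilon$ gap in success probability compounded by the $\Theta(H)$ value of a successful rollout). Balancing $N = \Theta(K/\epsilon^2)$ against per-episode regret $\epsilon H$ yields
\[
\regret{N} \ \gtrsim\ \epsilon H \cdot \min\!\bigl(N,\, K/\epsilon^2\bigr) \ \gtrsim\ H\sqrt{KN} \ =\ 2^{W/2}\sqrt{H^2 N}.
\]

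\textbf{Main obstacle.} The central difficulty is making the KL computation rigorous in the episodic MDP setting, since the learner's policy is adaptive and rollouts can probe the tree in complex ways. The key reduction to a bandit-style calculation is that, conditional on the leaf visited in a rollout, the observation law depends on $\ell^\ast$ only through $\ind{\text{leaf} = \ell^\ast}$, so the per-episode KL contribution is $\Theta\bigl(\epsilon^2 \Pr[\text{rollout visits } \ell^\ast]\bigr)$, matching the information content of a single pull of a Bernoulli bandit arm with gap $\epsilon$. Carefully bounding the per-episode visit probabilities uniformly over adaptive policies, together with standard change-of-measure bookkeeping, converts this bandit analogy into the stated regret lower bound.
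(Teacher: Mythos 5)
Your upper bound takes the same route as the paper: instantiate the meta-test regret theorem in the binary tree, exploit determinism ($\alpha = \infty$, $\beta = \gamma = 0$) and $\horizon_\effective = O(1)$, and drop $\epsilon_{\mathrm{subopt}}$. You would need to say a word more about why the $\sqrt{H^2\horizon_\effective W L M N}$ leading term collapses to $\sqrt{H^2 N}$ rather than $\sqrt{H^2 W N}$; the paper handles this by noting that in this particular instance the meta-state space is much smaller than the generic construction (the learner only plans at timesteps $0$ and $W+1$), which you implicitly assume.

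For the lower bound you take a genuinely different route. You try to rebuild the minimax lower bound from scratch via a change-of-measure/KL/Pinsker argument, which is exactly the machinery used \emph{inside} the proof of the result the paper cites (Domingues et al.\ 2021, Theorem~9). The paper instead takes that theorem as a black box on the hard family $\mathbb{N}_W$ (the pure binary-tree-to-gate MDPs, which it identifies as Domingues' minimax instances) and then proves a \emph{reduction}: any algorithm $\mathcal{A}$ with low regret on $\mathbb{M}_W$ yields an algorithm $\mathcal{P}_\mathcal{A}$ with at most the same regret on $\mathbb{N}_W$, via the simulation argument in \lemmaref{lemma:reduction-can-simulate} and \propref{prop:regret-reduction}. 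The payoff of the paper's route is that it only has to show a clean history-matching lemma (that a learner interacting with $\mathbb{N}_W$ can fabricate the post-gate observations of $\mathbb{M}_W$), and all the delicate information-theoretic bookkeeping — the choice of $\epsilon \sim \sqrt{2^W/N}$, the reference measure / null hypothesis, the uniform averaging of visit counts, the change of measure under an adaptive policy — is absorbed into the cited theorem. Your sketch, by contrast, must make all of that rigorous, and as written it has real gaps: there is no null hypothesis against which to bound the KL and no justification that the visit counts $n_\ell$ concentrate near $N/K$ under the correct measure, and the ``balancing'' step silently tunes $\epsilon$ while the paper's task distribution fixes $\epsilon$ in the MDP construction, so the adversarial choice of $\epsilon$ needs to be made explicit (as Domingues et al.\ do). You acknowledge the gap in your ``Main obstacle'' paragraph; the paper's reduction sidesteps it entirely.

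There is also a concrete misstatement of the task family. The paper's $\mathcal{T}_{(\ell^\ast, a^\ast)}$ draws the correct exit $e^\ast$ \emph{once per sampled task}, not per episode; you write that ``the rewarding exit action is drawn uniformly each episode,'' which would make the exit unlearnable even by the hierarchy-based learner and invalidate the $O(\sqrt{H^2N})$ upper bound. The paper's simulation lemma also makes explicit why the extra post-gate observations in $\mathbb{M}_W$ cannot help identify $\ell^\ast$: they depend only on $e^\ast$, which is independent of $\ell^\ast$. Your argument gets this implicitly by conditioning the KL on the leaf visited, but it is a step worth spelling out if you pursue the from-scratch route.
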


We prove this result in \secref{sec:hardness-result-formal-proof}, using recent results by \citet{domingues2021episodic} which demonstrate that the set of binary tree subproblems above form a set of minimax instances for any RL algorithm.
This separation result suggests that hierarchy-based learners gain in situations where temporally extended exploratory behaviors are needed.
This corroborates the experimental findings of \citet{nachum2019does}, which attributes the benefits of hierarchical RL to improved exploration.

	\section{Conclusion}
		We have demonstrated that certain natural coverage conditions allow for learning useful hierarchies from tasks.
Interesting future directions include analyzing hierarchy-based multi-task RL and extending the ideas in this work to continuous state and/or action spaces.
Another interesting direction would be to provide sample-efficient algorithms for learning additional structures that can be imposed on the learned hierarchy, such as cluster equivalences as in \citet{wen2020efficiency}.

	\section{Acknowledgements}
		KC is supported by a National Science Foundation Graduate Research Fellowship, Grant DGE-2039656.
		QL is supported by NSF \#2030859 and the Computing Research Association for the CIFellows Project.
		JDL acknowledges support of the ARO under MURI Award W911NF-11-1-0304,  the Sloan Research Fellowship, NSF CCF 2002272, NSF IIS 2107304, and an ONR Young Investigator Award.
		Additionally, we thank Aurick Zhou for discussions and feedback.

	\bibliography{references}

\begin{thebibliography}{34}
\providecommand{\natexlab}[1]{#1}
\providecommand{\url}[1]{\texttt{#1}}
\expandafter\ifx\csname urlstyle\endcsname\relax
  \providecommand{\doi}[1]{doi: #1}\else
  \providecommand{\doi}{doi: \begingroup \urlstyle{rm}\Url}\fi

\bibitem[Azar et~al.(2017)Azar, Osband, and Munos]{azar2017minimax}
Mohammad~Gheshlaghi Azar, Ian Osband, and R{\'e}mi Munos.
\newblock Minimax regret bounds for reinforcement learning.
\newblock In \emph{International Conference on Machine Learning}, pp.\
  263--272. PMLR, 2017.

\bibitem[Brunskill \& Li(2014)Brunskill and Li]{brunskill2014pac}
Emma Brunskill and Lihong Li.
\newblock Pac-inspired option discovery in lifelong reinforcement learning.
\newblock In \emph{International conference on machine learning}, pp.\
  316--324. PMLR, 2014.

\bibitem[Co-Reyes et~al.(2018)Co-Reyes, Liu, Gupta, Eysenbach, Abbeel, and
  Levine]{co2018self}
John Co-Reyes, YuXuan Liu, Abhishek Gupta, Benjamin Eysenbach, Pieter Abbeel,
  and Sergey Levine.
\newblock Self-consistent trajectory autoencoder: Hierarchical reinforcement
  learning with trajectory embeddings.
\newblock In \emph{International Conference on Machine Learning}, pp.\
  1009--1018. PMLR, 2018.

\bibitem[Dietterich et~al.(1998)]{dietterich1998maxq}
Thomas~G Dietterich et~al.
\newblock The maxq method for hierarchical reinforcement learning.
\newblock In \emph{ICML}, volume~98, pp.\  118--126. Citeseer, 1998.

\bibitem[Domingues et~al.(2021)Domingues, M{\'e}nard, Kaufmann, and
  Valko]{domingues2021episodic}
Omar~Darwiche Domingues, Pierre M{\'e}nard, Emilie Kaufmann, and Michal Valko.
\newblock Episodic reinforcement learning in finite mdps: Minimax lower bounds
  revisited.
\newblock In \emph{Algorithmic Learning Theory}, pp.\  578--598. PMLR, 2021.

\bibitem[Du et~al.(2020)Du, Hu, Kakade, Lee, and Lei]{du2020few}
Simon~S Du, Wei Hu, Sham~M Kakade, Jason~D Lee, and Qi~Lei.
\newblock Few-shot learning via learning the representation, provably.
\newblock \emph{arXiv preprint arXiv:2002.09434}, 2020.

\bibitem[Ecoffet et~al.(2021)Ecoffet, Huizinga, Lehman, Stanley, and
  Clune]{ecoffet2021first}
Adrien Ecoffet, Joost Huizinga, Joel Lehman, Kenneth~O Stanley, and Jeff Clune.
\newblock First return, then explore.
\newblock \emph{Nature}, 590\penalty0 (7847):\penalty0 580--586, 2021.

\bibitem[Eysenbach et~al.(2018)Eysenbach, Gupta, Ibarz, and
  Levine]{eysenbach2018diversity}
Benjamin Eysenbach, Abhishek Gupta, Julian Ibarz, and Sergey Levine.
\newblock Diversity is all you need: Learning skills without a reward function.
\newblock In \emph{International Conference on Learning Representations}, 2018.

\bibitem[Frans et~al.(2018)Frans, Ho, Chen, Abbeel, and
  Schulman]{frans2018meta}
Kevin Frans, Jonathan Ho, Xi~Chen, Pieter Abbeel, and John Schulman.
\newblock Meta learning shared hierarchies.
\newblock In \emph{International Conference on Learning Representations}, 2018.

\bibitem[Fruit \& Lazaric(2017)Fruit and Lazaric]{fruit2017exploration}
Ronan Fruit and Alessandro Lazaric.
\newblock Exploration-exploitation in mdps with options.
\newblock In \emph{Artificial Intelligence and Statistics}, pp.\  576--584.
  PMLR, 2017.

\bibitem[Jin et~al.(2020)Jin, Krishnamurthy, Simchowitz, and Yu]{jin2020reward}
Chi Jin, Akshay Krishnamurthy, Max Simchowitz, and Tiancheng Yu.
\newblock Reward-free exploration for reinforcement learning.
\newblock In \emph{International Conference on Machine Learning}, pp.\
  4870--4879. PMLR, 2020.

\bibitem[Levy et~al.(2018)Levy, Konidaris, Platt, and Saenko]{levy2018learning}
Andrew Levy, George Konidaris, Robert Platt, and Kate Saenko.
\newblock Learning multi-level hierarchies with hindsight.
\newblock In \emph{International Conference on Learning Representations}, 2018.

\bibitem[Machado et~al.(2017)Machado, Bellemare, and
  Bowling]{machado2017laplacian}
Marlos~C Machado, Marc~G Bellemare, and Michael Bowling.
\newblock A laplacian framework for option discovery in reinforcement learning.
\newblock In \emph{International Conference on Machine Learning}, pp.\
  2295--2304. PMLR, 2017.

\bibitem[Machado et~al.(2018)Machado, Rosenbaum, Guo, Liu, Tesauro, and
  Campbell]{machado2018eigenoption}
Marlos~C Machado, Clemens Rosenbaum, Xiaoxiao Guo, Miao Liu, Gerald Tesauro,
  and Murray Campbell.
\newblock Eigenoption discovery through the deep successor representation.
\newblock In \emph{International Conference on Learning Representations}, 2018.

\bibitem[Mahadevan(2005)]{mahadevan2005proto}
Sridhar Mahadevan.
\newblock Proto-value functions: Developmental reinforcement learning.
\newblock In \emph{Proceedings of the 22nd international conference on Machine
  learning}, pp.\  553--560, 2005.

\bibitem[Mann \& Mannor(2014)Mann and Mannor]{mann2014scaling}
Timothy Mann and Shie Mannor.
\newblock Scaling up approximate value iteration with options: Better policies
  with fewer iterations.
\newblock In \emph{International conference on machine learning}, pp.\
  127--135. PMLR, 2014.

\bibitem[Mann et~al.(2014)Mann, Mankowitz, and Mannor]{mann2014time}
Timothy Mann, Daniel Mankowitz, and Shie Mannor.
\newblock Time-regularized interrupting options (trio).
\newblock In \emph{International Conference on Machine Learning}, pp.\
  1350--1358. PMLR, 2014.

\bibitem[McGovern \& Barto(2001)McGovern and Barto]{mcgovern2001automatic}
Amy McGovern and Andrew~G Barto.
\newblock Automatic discovery of subgoals in reinforcement learning using
  diverse density.
\newblock In \emph{Proceedings of the Eighteenth International Conference on
  Machine Learning}, pp.\  361--368, 2001.

\bibitem[Menache et~al.(2002)Menache, Mannor, and Shimkin]{menache2002q}
Ishai Menache, Shie Mannor, and Nahum Shimkin.
\newblock Q-cut—dynamic discovery of sub-goals in reinforcement learning.
\newblock In \emph{European Conference on Machine Learning}, pp.\  295--306.
  Springer, 2002.

\bibitem[Nachum et~al.(2019{\natexlab{a}})Nachum, Gu, Lee, and
  Levine]{nachum2019data}
O~Nachum, S~Gu, H~Lee, and S~Levine.
\newblock Data-efficient hierarchical reinforcement learning.
\newblock In \emph{32nd Conference on Neural Information Processing Systems
  (NeurIPS 2018)}, pp.\  3303--3313. Curran Associates, Inc.,
  2019{\natexlab{a}}.

\bibitem[Nachum et~al.(2018)Nachum, Gu, Lee, and Levine]{nachum2018near}
Ofir Nachum, Shixiang Gu, Honglak Lee, and Sergey Levine.
\newblock Near-optimal representation learning for hierarchical reinforcement
  learning.
\newblock In \emph{International Conference on Learning Representations}, 2018.

\bibitem[Nachum et~al.(2019{\natexlab{b}})Nachum, Tang, Lu, Gu, Lee, and
  Levine]{nachum2019does}
Ofir Nachum, Haoran Tang, Xingyu Lu, Shixiang Gu, Honglak Lee, and Sergey
  Levine.
\newblock Why does hierarchy (sometimes) work so well in reinforcement
  learning?
\newblock \emph{arXiv preprint arXiv:1909.10618}, 2019{\natexlab{b}}.

\bibitem[Parr \& Russell(1998)Parr and Russell]{parr1998reinforcement}
Ronald Parr and Stuart Russell.
\newblock Reinforcement learning with hierarchies of machines.
\newblock \emph{Advances in neural information processing systems}, pp.\
  1043--1049, 1998.

\bibitem[Schrittwieser et~al.(2020)Schrittwieser, Antonoglou, Hubert, Simonyan,
  Sifre, Schmitt, Guez, Lockhart, Hassabis, Graepel,
  et~al.]{schrittwieser2020mastering}
Julian Schrittwieser, Ioannis Antonoglou, Thomas Hubert, Karen Simonyan,
  Laurent Sifre, Simon Schmitt, Arthur Guez, Edward Lockhart, Demis Hassabis,
  Thore Graepel, et~al.
\newblock Mastering atari, go, chess and shogi by planning with a learned
  model.
\newblock \emph{Nature}, 588\penalty0 (7839):\penalty0 604--609, 2020.

\bibitem[Schulman et~al.(2015)Schulman, Levine, Abbeel, Jordan, and
  Moritz]{schulman2015trust}
John Schulman, Sergey Levine, Pieter Abbeel, Michael Jordan, and Philipp
  Moritz.
\newblock Trust region policy optimization.
\newblock In \emph{International conference on machine learning}, pp.\
  1889--1897. PMLR, 2015.

\bibitem[Sharma et~al.(2019)Sharma, Gu, Levine, Kumar, and
  Hausman]{sharma2019dynamics}
Archit Sharma, Shixiang Gu, Sergey Levine, Vikash Kumar, and Karol Hausman.
\newblock Dynamics-aware unsupervised discovery of skills.
\newblock In \emph{International Conference on Learning Representations}, 2019.

\bibitem[{\c{S}}im{\c{s}}ek \& Barto(2004){\c{S}}im{\c{s}}ek and
  Barto]{csimcsek2004using}
{\"O}zg{\"u}r {\c{S}}im{\c{s}}ek and Andrew~G Barto.
\newblock Using relative novelty to identify useful temporal abstractions in
  reinforcement learning.
\newblock In \emph{Proceedings of the twenty-first international conference on
  Machine learning}, pp.\ ~95, 2004.

\bibitem[Sutton et~al.(1999)Sutton, Precup, and Singh]{sutton1999between}
Richard~S Sutton, Doina Precup, and Satinder Singh.
\newblock Between mdps and semi-mdps: A framework for temporal abstraction in
  reinforcement learning.
\newblock \emph{Artificial intelligence}, 112\penalty0 (1-2):\penalty0
  181--211, 1999.

\bibitem[Tripuraneni et~al.(2020)Tripuraneni, Jordan, and
  Jin]{tripuraneni2020theory}
Nilesh Tripuraneni, Michael Jordan, and Chi Jin.
\newblock On the theory of transfer learning: The importance of task diversity.
\newblock \emph{Advances in Neural Information Processing Systems}, 33, 2020.

\bibitem[Vezhnevets et~al.(2017)Vezhnevets, Osindero, Schaul, Heess, Jaderberg,
  Silver, and Kavukcuoglu]{vezhnevets2017feudal}
Alexander~Sasha Vezhnevets, Simon Osindero, Tom Schaul, Nicolas Heess, Max
  Jaderberg, David Silver, and Koray Kavukcuoglu.
\newblock Feudal networks for hierarchical reinforcement learning.
\newblock In \emph{International Conference on Machine Learning}, pp.\
  3540--3549. PMLR, 2017.

\bibitem[Vinyals et~al.(2019)Vinyals, Babuschkin, Czarnecki, Mathieu, Dudzik,
  Chung, Choi, Powell, Ewalds, Georgiev, et~al.]{vinyals2019grandmaster}
Oriol Vinyals, Igor Babuschkin, Wojciech~M Czarnecki, Micha{\"e}l Mathieu,
  Andrew Dudzik, Junyoung Chung, David~H Choi, Richard Powell, Timo Ewalds,
  Petko Georgiev, et~al.
\newblock Grandmaster level in starcraft ii using multi-agent reinforcement
  learning.
\newblock \emph{Nature}, 575\penalty0 (7782):\penalty0 350--354, 2019.

\bibitem[Wang et~al.(2021)Wang, King, Porcel, Kurth-Nelson, Zhu, Deck, Choy,
  Cassin, Reynolds, Song, et~al.]{wang2021alchemy}
Jane~X Wang, Michael King, Nicolas Porcel, Zeb Kurth-Nelson, Tina Zhu, Charlie
  Deck, Peter Choy, Mary Cassin, Malcolm Reynolds, Francis Song, et~al.
\newblock Alchemy: A structured task distribution for meta-reinforcement
  learning.
\newblock \emph{arXiv preprint arXiv:2102.02926}, 2021.

\bibitem[Wen et~al.(2020)Wen, Precup, Ibrahimi, Barreto, Van~Roy, and
  Singh]{wen2020efficiency}
Zheng Wen, Doina Precup, Morteza Ibrahimi, Andre Barreto, Benjamin Van~Roy, and
  Satinder Singh.
\newblock On efficiency in hierarchical reinforcement learning.
\newblock \emph{Advances in Neural Information Processing Systems}, 33, 2020.

\bibitem[Zanette \& Brunskill(2019)Zanette and Brunskill]{zanette2019tighter}
Andrea Zanette and Emma Brunskill.
\newblock Tighter problem-dependent regret bounds in reinforcement learning
  without domain knowledge using value function bounds.
\newblock In \emph{International Conference on Machine Learning}, pp.\
  7304--7312. PMLR, 2019.

\end{thebibliography}
	\bibliographystyle{iclr2022_conference}

	\clearpage

	\appendix

	\section{Meta-Training Proofs}
		\label{sec:meta-train-proofs}
		\subsection{Algorithm}
  \label{sec:src-alg-formal}

  In this section, we provide the complete algorithm for exit detection with optimistic imagination.
  For readability, we separate the three phases.

  \subsubsection{Phase I: Task-Specific Learning}

    \begin{algorithm}[h]
    	\caption{Exit Detection, Phase I: Task-Specific Learning}
    	\label{alg:phase-i}
    	\begin{algorithmic}[1]
    		\Require{Tasks $\MDP_1, \dots, \MDP_T$, $N_\UCBVI$ \UCBVI{} iterations, $N_\taskspecific$ policy samples, threshold $N_\thresh^\taskspecific$}
    		\ForAll{$t \in [T]$}
    		\State $\dataset^t \gets \emptyset$.
    		\State Obtain policy set $\Phi^{t} \gets \textsc{UCBVI}$($\MDP_t$, $N_\UCBVI$).
    		\ForAll{$n = 1, \dots, N_{\taskspecific}$}
    		\State Sample $\pi \sim \uniform{\Phi^t}$.
    		\State Play $\pi$ in $\MDP_t$, add all $(s, a, s')$ pairs to $\dataset^t$, get sum of rewards $\hat{V}^{(n)}$.
    		\EndFor
    		\State Form estimated dynamics model $\hat\transition_t$ from $\dataset_t$.
    		\State Form optimal value estimate $\displaystyle\hat{V}_t \gets \frac{1}{N_\taskspecific}\sum_{n = 1}^{N_\taskspecific}\hat{V}^{(n)}$
    		\State $N_t(s, a) \gets \card{\set{(x, u, x') \in \dataset_t \suchthat x = s, u = a}}$.
    		\ForAll{$(s, a) \in \statespace \times \actionspace$}
    		\If{$N_t(s, a) < N_{\thresh}^{\taskspecific}$} $\hat\transition_t(\cdot \conditionedon s, a) \gets 0$. \EndIf
    		\EndFor
    		\EndFor
    		\State \Return dynamics estimates $\hat\transition_t$ and value estimates $\hat{V}_t$ for $t \in [T]$.
    	\end{algorithmic}
    \end{algorithm}

  \subsubsection{Phase II: Learning Reference Dynamics}

    \begin{algorithm}[h]
    	\caption{Exit Detection, Phase II: Learning Reference Dynamics}
    	\label{alg:phase-ii}
    	\begin{algorithmic}[1]
    		\Require{MDP $\MDP$, $N_\Euler^\rewardfree$ Euler iterations, $N_\rewardfree$ policy samples}
    		\State Set policy class $\Psi \gets \emptyset$ and dataset $\dataset_\rewardfree \gets \emptyset$
    		\ForAll{$g \in \statespace$}
    		\State Create MDP $\MDP_g$ from $\MDP$ with horizon $2H$ and $\prob{\terminate \conditionedon g, a} = 1$ for any $a$.
    		\State $r_g(s, a) \gets \ind{s = g}$ for any $(s, a) \in (\statespace \union \set{\terminate}) \times \mathcal{A}$.
    		\State $\Phi^g \gets$ \textsc{Euler}($\MDP_g, r_g, N_\rewardfree^\Euler$)
    		\State $\pi_h(\cdot \conditionedon g) \gets \uniform{\actionspace}$ for $\pi \in \Phi^g$, $h \in [H]$.
    		\State Add policies in $\Phi^g$ to $\Psi$.
    		\EndFor
    		\ForAll{$n = 1, \dots, N_\rewardfree$}
    		\State Sample $\pi \sim \uniform{\Psi}$.
    		\State Play $\pi$ in $\MDP$ and obtain trajectory $(s_0, a_0, \dots, s_{2H})$.
    		\State Sample $h \distas \uniform{[2H]}$ and add $(s_h, a_h, s_{h + 1})$ to $\dataset_\rewardfree$.
    		\EndFor
    		\State \Return reference dynamics $\hat\transition_0$ formed from $\dataset_\rewardfree$.
    	\end{algorithmic}
    \end{algorithm}

  \clearpage

  \subsubsection{Phase III: Exit Detection}

    \begin{algorithm}[h!]
    	\caption{Exit Detection, Phase III: Exit Detection}
    	\label{alg:phase-iii}
    	\begin{algorithmic}[1]
    		\Require{$N_\exitdetection, N_\thresh^\exitdetection, N_\Euler^\exitlearning, N_{\mathrm{EL}}$ policy samples}

    		\State Initialize $\textsc{IsExit}[s, a] \gets \textsc{False}$ for $(s, a) \in \statespace \times \actionspace$.
    		\While{True}
    		\ForAll{$t \in [T]$} \label{algostep:phase-iii-iteration}
    		\State $\hat\transition_0(\cdot \conditionedon s, a) \gets \hat\transition_t(\cdot \conditionedon s, a)$ for $(s, a) \in \statespace \times \actionspace$ with $\textsc{IsExit}[s, a]$.
    		\State $\tilde{V}^t, \tilde{Q}^t \gets$ \textsc{OptImgVI}($\hat\transition_0$, $(\hat\transition_1, \dots, \hat\transition_T)$, $r_t$, \textsc{IsExit})
    		\If{$\tilde{V}^t_0(s_0) - \hat{V}_t > (2/3)\zeta$}
    		\State Run greedy policy with respect to $\tilde{Q}$ $N_{\exitdetection}$ times and form estimate $\hat\transition$ for $(s, a)$ pairs visited at least $N_\thresh'$ times. \label{algostep:exit-detect-and-learn-start}
    		\ForAll{$(s, a) \in \statespace \times \actionspace$ with $\hat\transition(\cdot \conditionedon s, a) \not\equiv 0$}
    		\If {$(\exists t \in [T]) \ \hat\transition_t(\cdot \conditionedon s, a) \not\equiv 0$ and $\TV{\hat\transition(\cdot \conditionedon s, a) - \hat\transition_t(\cdot \conditionedon s, a)} > \beta/2$}
    		\State $\textsc{IsExit}[s, a] \gets$ True
    		\State $\hat\transition_t(\cdot \conditionedon s, a) \gets \textsc{Learn-Exit}(\MDP_t, (s, a), N_\Euler^\exitlearning, N_{\exitlearning})$ \textbf{for all} $t \in [T]$ \label{algostep:exit-detect-and-learn-end}
    		\EndIf
    		\EndFor
    		\EndIf
    		\If{no new exits found after passing through $T$ tasks since last found exit}
    		\State \Return \textsc{IsExit}
    		\EndIf
    		\EndFor
    		\EndWhile
    	\end{algorithmic}
    \end{algorithm}

    \begin{algorithm}[h]
    	\caption{\textbf{B}orrowing \textbf{O}ptimistically \textbf{A}cross \textbf{T}asks during \textbf{V}alue \textbf{I}teration (BOAT-VI)}
    	\label{alg:boat-vi}
    	\begin{algorithmic}[1]
    		\Procedure{BOAT-VI}{Reference dynamics $\hat\transition_0$, Estimated dynamics $(\hat\transition_1, \dots, \hat\transition_T)$, \par\hspace{6.9em} Reward function $r$, Table \textsc{IsExit}[$\statespace \times \actionspace$]}
    		\State $\hat V_{H}(s) \gets 0$ for $s \in \statespace$.
    		\ForAll{$h = H-1, \dots, 0$}
    		\ForAll{$(s, a) \in \statespace \times \actionspace$}
    		\If{$\textsc{IsExit}[s, a]$}
    		\State $\hat Q_h(s, a) \gets r(s, a) + \hat\transition_0\hat V_{h + 1}(s, a)$
    		\Else
    		\State $\hat Q_h(s, a) \gets r(s, a) + \max_{t = 0, \dots, T}\hat\transition_t\hat V_{h + 1}(s, a)$
    		\EndIf
    		\EndFor
    		\State $\hat V_h(s) \gets \max_a\hat Q_h(s, a)$ for $s \in \statespace$.
    		\EndFor
    		\State \Return $\hat{V}$, $\hat{Q}$
    		\EndProcedure
    	\end{algorithmic}
    \end{algorithm}

    \begin{algorithm}[h]
    	\caption{Exit-learning subroutine}
    	\label{alg:exit-learning}
    	\begin{algorithmic}[1]
    		\Procedure{Learn-Exit}{MDP $\MDP$, exit $(s, a)$, $N_\Euler^\exitlearning$ \textsc{Euler} iterations, $N_\mathrm{EL}$ policy samples}
    		\State Create MDP $\tilde\MDP$ from $\MDP$ so $\prob{\terminate \conditionedon s, a} = 1$.
    		\State $\tilde r(s', a') \gets \ind{(s', a') = (s, a)}$ for any $(s', a') \in (\statespace \union \set{\terminate}) \times \mathcal{A}$.
    		\State $\Psi \gets$ \textsc{Euler}($\tilde\MDP, \tilde r, N_\Euler^\exitlearning$)
    		\ForAll{$n = 1, \dots, N_{\mathrm{EL}}$}
    		\State Sample $\pi \sim \uniform{\Psi}$.
    		\State Play $\pi$ in $\MDP$ and obtain trajectory $(s_0, a_0, \dots, s_H)$.
    		\EndFor
    		\State \Return reference dynamics $\hat\transition(\cdot \conditionedon s, a)$ formed from all trajectory data.
    		\EndProcedure
    	\end{algorithmic}
    \end{algorithm}

\clearpage

\subsection{Other Assumptions}
  \label{sec:other-src-assumptions}

  The remaining assumptions quantify the reachability of certain states.
  First, we have the following assumption, which in effect ensures that one can reach most states regardless of exit configuration in the meta-training tasks:

  \begin{assumption}[Non-limiting exit configurations]
  	\label{assumption:non-limiting-exits}
  	Let $\MDP = (\statespace, \actionspace, \transition, H)$ be any reward-free environment with time-varying dynamics
  	\[
  	\transition^{(h)}(\cdot \conditionedon s, a) = \transition_{t(h, s, a)}(\cdot \conditionedon s, a) \quad \text{for some $t: [H] \times \statespace \times \actionspace \to [T]$.}
  	\]
  	Then, there exists $C > 1$ such that for any $s$ and $t \in [T]$,
  	\[
  	\max_{\pi}P_\MDP(s \in \tau_\pi) \leq C\max_{\pi}P_{\MDP_t}(s \in \tau_\pi)
  	\]
  \end{assumption}

  Intuitively, the assumption states that the reachability of a state in $\MDP_t$ would not be significantly improved even under an optimal configuration of the exits.
  Therefore, running reward-free RL on one of the meta-training tasks is sufficient for learning all non-exit $(s, a)$ pairs.

  \begin{remark}
  	We note that \assumpref{assumption:non-limiting-exits} is restrictive in that it requires that every state be roughly reachable in any of the meta-training MDPs.
  	This may not hold in practice, e.g., consider a four-room environment where one of the rooms is blocked off for one of the tasks.
  	However, this can be weakened to requiring that $s$ be reachable in at least one of $N$ arbitrarily chosen meta-training tasks.
  	This would require that the algorithm run Phase II over $N$ meta-training tasks, which results in a benign increase in the query complexity of the algorithm, so long as $N$ is a constant much smaller than $T$.
  	We focus on the $N = 1$ case for ease of presentation.
  \end{remark}

  To simplify the presentation of the rest of the assumptions, we recall the following definition of $\delta$-significance in \citet{jin2020reward}:

  \begin{definition}
  	A state $s$ is \term{$\delta$-significant} if $\max_\pi P(s \in \tau_\pi) \geq \delta$.
  	Additionally, we say that $(s, a)$ is $\delta$-significant if $s$ is $\delta$-significant.
  \end{definition}

  Note that we have modified the definition to remove the dependence on the timestep $h \in [H]$.
  This is because the dynamics are stationary, and thus it does not matter when $s$ is visited in a trajectory.

  Having defined $\delta$-significance, we now have the following assumption, which simply quantifies the reachability of every entrance:

  \begin{assumption}[$\rho$-significant entrances]
  	\label{assumption:entrance-reachability}
  	For any $s \in \entry\statespace$, $s$ is $\rho$-significant for all of the tasks.
  \end{assumption}

  This assumption merely quantifies the reachability of all entrances and is nonrestrictive given \assumpref{assumption:non-limiting-exits}, which already ensures that reachability in task implies reachability in the other tasks\footnote{One can weaken this assumption in a way that is compatible with the weakened form of \assumpref{assumption:non-limiting-exits}.}.

  Finally, we also need to assume that every exit in a cluster is reachable from every entrance in that same cluster.
  Otherwise, if an exit is hard to reach, then it can be ignored for learning purposes since the probability of using the exit is very low.

  \begin{assumption}[In-cluster exit reachability]
  	\label{assumption:in-cluster-exit-reachability}
  	Fix any cluster $\cluster$, entry $s \in \entry\cluster$, and exit $(g, a) \in \exit\cluster$.
  	Consider the reward-free environment $\MDP_t|_\cluster = (\cluster, A, \transition_t|_\cluster, H)$, where $\transition_t|_\cluster$ is the restriction of $\transition_t$ to $\cluster \times \actionspace$ and the starting state is $s$.
  	Then, $g$ is $\delta$-significant in $\MDP_t|_\cluster$ for any $t \in [T]$.
  \end{assumption}

  The requirement that the assumption hold for any $t \in [T]$ is without loss of generality since non-exit dynamics do not change.

\subsection{Verifying Exit Detection}

  In this section, we demonstrate that the algorithm in \secref{sec:src-alg-formal} can successfully discover $\exit\statespace$ with high probability.
  Formally, we have the following result:

  \begin{theorem}[Provable exit detection]
  	\label{thm:provable-exit-detection}
  	Assume we run the algorithm in \secref{sec:src-alg-formal} with the parameter choices given in \tableref{table:src-param-choices}.
  	Then, with probability at least $1 - p$, the algorithm returns an array \textsc{IsExit} satisfying:
  	\[
  	\set{(s, a) \suchthat \textsc{IsExit}[s, a]} = \exit\statespace.
  	\]
  \end{theorem}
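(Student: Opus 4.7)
The proof has two directions: soundness (no non-exit is marked) and completeness (every exit in $\exit\statespace$ is eventually marked). The plan is to establish uniform accuracy of the dynamics estimates produced by Phases I and II at the relevant $(s,a)$ pairs, then feed these into an optimistic-imagination analysis that forces the main loop of Phase III to discover a new exit on every pass until none remain.

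The first step is to calibrate Phase I. Using the UCBVI regret bound of \citet{azar2017minimax} on each $\MDP_t$ guarantees that most policies sampled from $\Phi^t$ are near-optimal; combined with \lemmaref{lemma:importance-implies-visitation-mt} this ensures every $\alpha$-important $(s,a)$ is visited $\Omega(N_\taskspecific \alpha / H)$ times. Choosing $N_\taskspecific$ and $N_\thresh^\taskspecific$ large enough (as in \tableref{table:src-param-choices}) then yields, by standard empirical-distribution concentration, the guarantee that for every $t$ and every $\alpha$-important $(s,a)$, $\hat\transition_t(\cdot\mid s,a)$ is within $\beta/8$ (and within $\zeta/(8H)$) in TV of the truth, while $\hat{V}_t$ approximates $V^{\MDP_t,\ast}(s_0)$ to $\zeta/12$. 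For Phase II, the reward-free RL analysis of \citet{jin2020reward} applied under \assumpref{assumption:non-limiting-exits} gives that $\hat\transition_0(\cdot\mid s,a)$ is comparably accurate at every $(s,a)$ whose state is sufficiently reachable in any exit configuration. These bounds are assembled by a union bound into a single high-probability event $\mathcal{E}$ of probability at least $1-p/2$.

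On event $\mathcal{E}$, soundness follows almost immediately. If $(s,a)\not\in\exit\statespace$ then by \defref{definition:latent-hierarchy}(a) all $\transition_t(\cdot\mid s,a)$ coincide. The empirical model $\hat\transition$ formed on line \ref{algostep:exit-detect-and-learn-start} of \algoref{alg:phase-iii} is built from $\geq N_\thresh'$ samples, hence lies within $\beta/8$ of the truth; the existing $\hat\transition_t(\cdot\mid s,a)$ used in the test is similarly close by the Phase~I bound, so the TV test on line~9 cannot fire with threshold $\beta/2$. For completeness, suppose inductively that at some iteration of the outer loop the set $E$ of already-marked exits is a strict subset of $\exit\statespace$. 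Applying \assumpref{assumption:exit-coverage}(b) to the remaining exits $\exit\statespace\setminus E$ yields a task $t$ and the corresponding borrowed-dynamics MDP $\bar\MDP_t$ with $V^{\bar\MDP_t,\ast}(s_0) > V^{\MDP_t,\ast}(s_0) + \zeta$. The key observation is that BOAT-VI performs value iteration with a per-$(s,a)$ maximum over all available models, so its output $\tilde V^t$ upper-bounds the optimal value of any MDP obtainable by a per-$(s,a)$ choice among $\hat\transition_0,\hat\transition_1,\dots,\hat\transition_T$; in particular it upper-bounds the optimal value of the \emph{empirical} analogue of $\bar\MDP_t$. Standard simulation-lemma bookkeeping, using the Phase I/II accuracy on the $(s,a)$ pairs actually traversed by $\pi^{\bar\MDP_t,\ast}$, shows $\tilde V^t_0(s_0) \geq V^{\bar\MDP_t,\ast}(s_0) - \zeta/6$ and $\hat V_t \leq V^{\MDP_t,\ast}(s_0) + \zeta/6$, so the trigger $\tilde V^t_0(s_0)-\hat V_t > (2\zeta/3)$ fires.

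The remaining step is to show that running the greedy policy with respect to $\tilde Q^t$ actually visits some $(s,a)\in\exit\statespace\setminus E$. This is a close analogue of \lemmaref{lemma:importance-implies-visitation-mt}: the $\tilde Q^t$-greedy policy has value $\tilde V^t$ in the imagined (optimistically stitched) MDP, but only the borrowed exits provide the value improvement over $\MDP_t$'s true optimum, so deleting all of $\exit\statespace\setminus E$ from the imagined MDP would collapse its value by more than $\zeta/2$. Hence the policy visits some such $(s,a)$ with probability at least $\Omega(\zeta/H)$, and $N_\exitdetection$ large enough ensures that $(s,a)$ is sampled $\geq N_\thresh'$ times, giving a TV gap $>\beta/2$ against the borrowed $\hat\transition_i$ and so flipping \textsc{IsExit}$[s,a]$ to True. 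Since at most $K=\card{\exit\statespace}$ such updates occur, the outer loop terminates, and at termination every exit is marked. A final union bound over the at most $K\cdot T$ calls to BOAT-VI closes the argument. The main obstacle is the third paragraph: propagating the estimation errors through the $\max$-over-models Bellman backup cleanly, and then turning the $\zeta$ value-gap between $\tilde V^t$ and $\hat V_t$ into a visitation lower bound for an \emph{unmarked} exit rather than just any borrowed one. Both are handled by restricting attention to the event $\mathcal{E}$, where the imagined MDP used by BOAT-VI agrees with $\bar\MDP_t$ on all $(s,a)$ touched by near-optimal behavior up to negligible TV error.
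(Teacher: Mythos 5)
Your proposal follows essentially the same route as the paper's own proof: calibrate Phases I and II via UCBVI plus the importance-implies-visitation lemma and reward-free RL respectively, then run an inductive argument on Phase III using BOAT-VI's optimism, a simulation-lemma error bound, a value-gap-implies-visitation step, and the TV-based soundness test. One technical point you gloss over slightly: because the implicit MDP chosen by BOAT-VI is the argmax over all ``imaginable'' MDPs and not necessarily the empirical analogue of $\bar\MDP_t$ (nor the same across iterations), the paper needs a \emph{uniform} estimation-error bound over the whole imaginable class (\lemmaref{lemma:uniform-imagined-MDP-error-bound}), obtained by decomposing the error over known exits, reference dynamics, task-borrowed dynamics, and $\delta$-insignificant pairs — your closing sentence about ``agreeing with $\bar\MDP_t$ up to negligible TV error'' understates what is needed, though the remedy is exactly the event $\mathcal{E}$ you already condition on.
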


  \begin{table}[h!]
  	\centering
  	\setlength\extrarowheight{3ex}
  	\begin{tabular}{|c|c|}
  		\hline
  		Parameter & Value \\[3ex]\hline
  		$N_\UCBVI$ & $\displaystyle\frac{H^2SA}{\min(\alpha, \zeta)^2}\log^2\frac{HSAT}{p}$ \\[3ex]\hline
  		$N_\thresh^\taskspecific$ & $\displaystyle S\max\left(\frac{H^4}{\zeta^2}, \frac{1}{\beta^2}\right)\log\frac{SAHT}{p\alpha\min(\beta, \zeta)}$ \\[3ex]\hline
  		$N_\taskspecific$ & $\displaystyle S\max\left(\frac{H^5}{\alpha\zeta^2}, \frac{H}{\alpha\beta^2}\right)\log\frac{SAHT}{p\alpha\min(\beta, \zeta)} + \frac{H^2}{\min(\alpha, \zeta)^2}\log\frac{SAT}{p}$ \\[3ex]\hline
  		$N_\Euler^{\rewardfree}$ & $\displaystyle\frac{H^2S^4A}{\min(\rho\min(\epsilon, \epsilon_0), \zeta/C)}\log^3\frac{HSA}{p}$ \\[3ex]\hline
  		$N_\rewardfree$ & $\displaystyle\frac{H^5S^2A}{\min(\rho\min(\epsilon, \epsilon_0)^2, \zeta^2/C)}\log\frac{A}{p}$ \\[3ex]\hline
  		$N^\exitdetection_\thresh$ & $\displaystyle\frac{S}{\beta^2}\log\frac{SAH}{p\zeta\beta}$ \\[3ex]\hline
  		$N_\exitdetection$ & $\displaystyle\frac{HKS}{\zeta\beta^2}\log\frac{SAH}{p\zeta\beta} + \frac{H^2K^2}{\zeta^2}\log\frac{K}{p}$ \\[3ex]\hline
  		$N_\thresh^\exitlearning$ & $\displaystyle L\max\left(\frac{H^4}{\zeta^2}, \frac{1}{\beta^2}\right)\log\frac{CSAHT}{p\alpha\min(\beta, \zeta)}$ \\[3ex]\hline
  		$N_\Euler^\exitlearning$ & $\displaystyle\frac{CH^3S^2A}{\alpha}\log^3\left(\frac{HSAT}{p}\right)$ \\[3ex]\hline
  		$N_\exitlearning$ & $\displaystyle L\max\left(\frac{CH^5}{\alpha\zeta^2}, \frac{CH}{\alpha\beta^2}\right)\log\frac{CSAHT}{p\alpha\min(\beta, \zeta)} + \frac{C^2H^2}{\alpha^2}\log\frac{SAT}{p}$ \\[3ex]\hline
  	\end{tabular}
  	\caption{Table of parameters for the results in \thmref{thm:provable-exit-detection}. Since $K \leq SA$ and $L \leq S$, the agent does not need to know $K$ or $L$ in advance, at the expense of a worse sample complexity bound.}
  	\label{table:src-param-choices}
  \end{table}

  To prove this result, we proceed with a phase-by-phase analysis of the algorithm in \secref{sec:src-alg-formal}, which we then compile into proof of the desired result.

  \subsubsection{Phase I Analysis}

    First, we prove that during Phase I, \algoref{alg:phase-i} sufficiently visits all relevant exits and that all value estimates are sufficiently close.
    Formally, we have the following result:

    \begin{proposition}
    	\label{prop:phase-i-guarantee}
    	Set
    	\[
    	N_{\thresh}^\taskspecific = \Omega\left[S\max\left(\frac{H^4}{\zeta^2}, \frac{1}{\beta^2}\right)\log\frac{SAHTN_\taskspecific}{p}\right],
    	\]
    	and consider the following procedure applied to one of the meta-training tasks $\MDP_t$:
    	\begin{enumerate}[label=\textup{\arabic*.}]
    		\item \UCBVI{} is run for
    		\[
    		N_\UCBVI{} = \Omega\left(\frac{H^2SA}{\min(\alpha, \zeta)^2}\log^2\frac{HSAT}{p}\right)
    		\]
    		iterations, generating policies $\pi_1^{(t)}, \dots, \pi_{N_{\UCBVI{}}}^{(t)}$.

    		\item The learner uniformly samples
    		\[
    		N_\taskspecific = \Omega\left(\frac{H}{\alpha}N_\thresh^\taskspecific + \frac{H^2}{\min(\alpha, \zeta)^2}\log\frac{TK}{p}\right)
    		\]
    		policies from the previous step, runs each policy in $\MDP_t$, and obtains a dataset of transitions $\dataset_t$ and returns $\hat{V}^{(1)}, \dots, \hat{V}^{(N_\taskspecific)}$.
    	\end{enumerate}
    	Then, with probability at least $1 - p/3T$,
    	\begin{enumerate}[label=\textup{(\alph*)}]
    		\item We have the regret bound
    		\[
    		V^\ast_0(s_0) - \frac{1}{N_\UCBVI}\sum_{k = 1}^{N_\UCBVI}V_0^{\pi_k^{(t)}}(s_0) < \frac{\zeta}{6}.
    		\]

    		\item The set of obtained returns satisfy
    		\[
    		\abs{\frac{1}{N_\taskspecific}\sum_{i=1}^{N_\taskspecific}\hat{V}^{(i)} - \frac{1}{N}\sum_{k = 1}^{N_\UCBVI}V_0^{\pi_k^{(t)}}(s_0)} < \frac{\zeta}{6}.
    		\]

    		\item If $(s, a)$ is $\alpha$-important for $\MDP_t$, then $N_t(s, a) \geq N_\thresh$.

    		\item For every $(s, a)$ pair such that $N_t(s, a) \geq N_\thresh$,
    		\[
    		\sup_{f: \statespace \to [0, H]}\abs{\left[(\hat\transition_t - \transition_t)f\right](s, a)} < \min\left(\frac{\zeta}{24H}, \frac{\beta H}{2}\right).
    		\]
    	\end{enumerate}
    \end{proposition}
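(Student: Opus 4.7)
The plan is to establish each of the four claims on a high-probability event of mass at least $1 - p/(12T)$ and union bound to obtain the stated $1 - p/(3T)$ guarantee. The key inputs will be: the \UCBVI{} cumulative regret bound of \citet{azar2017minimax} for (a); Hoeffding's inequality for (b); an expected form of \lemmaref{lemma:importance-implies-visitation-mt} combined with a Chernoff argument for (c); and standard total-variation concentration of empirical transition kernels for (d).

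For (a), I would invoke the \UCBVI{} regret bound of $\tilde{O}(\sqrt{H^2 SA\, N_\UCBVI})$. Dividing by $N_\UCBVI$ and substituting the given $N_\UCBVI = \tilde\Omega(H^2 SA/\min(\alpha,\zeta)^2)$ immediately yields average suboptimality bounded by $\min(\alpha,\zeta)/6$; this is both the $\zeta/6$ bound required by (a) and the slightly stronger fact that the average suboptimality is at most $\alpha/6$ that I will need in (c). For (b), condition on the policy set $\Phi^t$ returned by Phase~I; the sampled returns $\hat{V}^{(i)}$ are then conditionally i.i.d.\ in $[0, H]$ with common mean $\tfrac{1}{N_\UCBVI}\sum_k V_0^{\pi_k^{(t)}}(s_0)$, so Hoeffding gives the desired $\zeta/6$ deviation provided $N_\taskspecific \gtrsim (H^2/\zeta^2)\log(T/p)$, which is precisely the additive term in the stated choice of $N_\taskspecific$.

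The hard part will be (c). First I would lift \lemmaref{lemma:importance-implies-visitation-mt} to a universal inequality: for every $\pi$ and every $\alpha$-important $(s, a)$,
\[
H\cdot\prob{(s, a) \in \tau_\pi} \;\geq\; \alpha - \bigl(V^\ast_0(s_0) - V_0^\pi(s_0)\bigr),
\]
with the inequality trivial when the right-hand side is nonpositive (since the left-hand side is nonnegative) and following from \lemmaref{lemma:importance-implies-visitation-mt} by a limiting argument when it is positive. Averaging over $\pi \sim \uniform{\Phi^t}$ and applying the refined form of (a), the expected visitation probability is at least $5\alpha/(6H)$. Next, conditional on $\Phi^t$, the $N_\taskspecific$ rollouts are independent Bernoulli trials for $\{(s, a) \in \tau\}$ with mean $\geq 5\alpha/(6H)$; a multiplicative Chernoff bound combined with the stated $N_\taskspecific \gtrsim (H/\alpha)\, N_\thresh^\taskspecific$ forces $N_t(s, a) \geq N_\thresh^\taskspecific$ with failure probability $\ll p/(12T\cdot SA)$, which I would then union-bound over the at most $SA$ candidate pairs. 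The subtlety is that one cannot apply \lemmaref{lemma:importance-implies-visitation-mt} directly to any individual $\pi_k^{(t)}$ (some may be far from optimal), so converting a cumulative regret bound into a per-pair empirical count guarantee must proceed through the averaged inequality.

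For (d), I would use standard $\ell_1$/TV concentration for multinomial estimators \citep[e.g.,][]{weissman2003inequalities}: with $n \geq N_\thresh^\taskspecific$ independent samples, $\TV{\hat\transition_t(\cdot \conditionedon s, a) - \transition_t(\cdot \conditionedon s, a)} \leq \tilde{O}(\sqrt{S/n})$ with high probability. Since $|[(\hat\transition_t - \transition_t)f](s, a)| \leq H\,\TV{\hat\transition_t(\cdot \conditionedon s, a) - \transition_t(\cdot \conditionedon s, a)}$ for any $f: \statespace \to [0, H]$, the desired bound $\min(\zeta/(24H), \beta H/2)$ translates into requiring $\TV{\cdot} \leq \min(\zeta/(24H^2), \beta/2)$, which matches $N_\thresh^\taskspecific \asymp S\max(H^4/\zeta^2, 1/\beta^2)\log(\cdot)$ up to the log factor baked into the stated choice. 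A union bound over the $(s, a)$ pairs with $N_t(s, a) \geq N_\thresh^\taskspecific$ completes the argument.
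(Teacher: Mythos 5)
Your proposal matches the paper's proof essentially step for step: (a) is the \UCBVI{} regret bound divided through by $N_\UCBVI$; (b) is Hoeffding conditioned on $\Phi^t$; (c) is exactly \lemmaref{lemma:ucbvi-visit-prob} (the ``lifted'' universal form of \lemmaref{lemma:importance-implies-visitation-mt} with the truncation $(\cdot)_+$, averaged over the uniformly drawn $\pi$) followed by a Bernoulli counting argument (the paper packages this as \lemmaref{lemma:sampling-to-meet-threshold}); and (d) is the multinomial TV concentration bound (the paper's \lemmaref{lemma:dynamics-estimation-error-bound}). The only cosmetic difference is that the paper union-bounds (c) over the at most $K$ $\alpha$-important exits rather than over all $SA$ pairs, which is what lets the $\log$ factor in $N_\taskspecific$ read $\log(TK/p)$ rather than $\log(TSA/p)$.
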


    To prove the above result, we first recall the following regret bound on \UCBVI{}, as proven by \citet{azar2017minimax}:

    \begin{lemma}[\textsc{UCBVI} regret bound]
    	\label{lemma:ucbvi-regret-bound}
    	For sufficiently large $N$, with probability at least $1 - p/6$,
    	\[
    	V_0^\ast(s_0) - \frac{1}{N}\sum_{k = 1}^{N}V_0^{\pi_k}(s_0) \lesssim \sqrt{\frac{H^2SA}{N}}\log\left(\frac{HSAN}{p}\right).
    	\]
    \end{lemma}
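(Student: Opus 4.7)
The plan is to follow the UCBVI analysis template of \citet{azar2017minimax} and specialize it to the stationary finite-horizon setting used here. Recall that UCBVI maintains empirical transition estimates $\hat\transition_k$ from the first $k-1$ episodes, adds Bernstein-type exploration bonuses $b_k(s,a)$ to form optimistic action-value estimates $\hat Q_{k,h}$, and plays $\pi_k$ greedily with respect to $\hat Q_k$. The proof has three standard pieces: (i) a high-probability optimism event, (ii) a per-episode regret decomposition into bonuses plus a martingale, and (iii) a tight bonus-sum bound via the law of total variance.

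First I would establish optimism: on an event of probability at least $1 - p/12$, one has $\hat Q_{k,h}(s,a) \geq Q^\ast_h(s,a)$ for all $(k,h,s,a)$. This reduces to a Bernstein concentration inequality for $[(\hat\transition_k - \transition) V^\ast_{h+1}](s,a)$, together with a union bound over $(k,h,s,a)$ that contributes the $\log(HSAN/p)$ factor. Given optimism, for each $k$ we have
\[
V^\ast_0(s_0) - V^{\pi_k}_0(s_0) \leq \hat V_{k,0}(s_0) - V^{\pi_k}_0(s_0),
\]
which I would unroll along the played trajectory to produce a telescoping expression of the form $\sum_h b_k(s_{k,h}, a_{k,h}) + M_k + (\text{lower order})$, where $M_k$ is a bounded martingale difference coming from $s_{k,h+1} \sim \transition(\cdot \mid s_{k,h}, a_{k,h})$. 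Summing over $k$ and applying Azuma-Hoeffding controls $\sum_k M_k$ by $\tilde O(H\sqrt{N})$.

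The technical heart, and the step I expect to dominate the calculation, is bounding $\sum_{k,h} b_k(s_{k,h},a_{k,h})$. With Bernstein bonuses $b_k(s,a) \asymp \sqrt{\mathrm{Var}_\transition(V^\ast_{h+1})(s,a)/n_k(s,a)}$ plus lower-order terms, Cauchy-Schwarz yields
\[
\sum_{k,h} b_k \;\lesssim\; \sqrt{\Bigl(\sum_{k,h} \tfrac{1}{n_k(s_{k,h},a_{k,h})}\Bigr)\Bigl(\sum_{k,h}\mathrm{Var}_\transition(V^\ast_{h+1})(s_{k,h},a_{k,h})\Bigr)}.
\]
The first factor is $\tilde O(SA\log N)$ by the standard pigeonhole argument over visit counts, and the second factor is $O(H^2 N)$ by invoking the law of total variance along each trajectory (which avoids the looser $O(H^3N)$ bound one would get from a worst-case $\|V^\ast_{h+1}\|_\infty \leq H$ estimate). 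Combining gives $\tilde O(\sqrt{H^2 SA N})$ for the cumulative bonus sum.

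Finally I would assemble the pieces: dividing the total regret by $N$ produces the claimed $\sqrt{H^2 SA/N}\,\log(HSAN/p)$ bound, with the logarithmic factor absorbing both the union bound from optimism and the pigeonhole $\log N$. The ``sufficiently large $N$'' hypothesis is used to absorb constant-in-$N$ lower-order terms (e.g., $O(H^2 SA)$ from the Bernstein lower-order contributions and burn-in bounds) into the leading $\sqrt{H^2 SA/N}$ term. Rather than reproducing the full argument, my proposal is to invoke Theorem~2 of \citet{azar2017minimax} and verify that their assumptions and constants translate to our stationary-dynamics finite-horizon setting with the shared start state $s_0$.
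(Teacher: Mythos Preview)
Your proposal is correct and aligns with the paper: the paper does not prove this lemma at all but simply recalls it as a known result from \citet{azar2017minimax}, and your proposal ultimately lands in the same place by invoking their Theorem~2. The sketch you give of the optimism/decomposition/bonus-sum argument is a faithful summary of that proof, but it is more detail than the paper itself provides.
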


    As we will see later on, with our choice of $N_\UCBVI$, we obtain the desired regret bound in (a).
    Additionally, by Hoeffding's inequality, the average of $N_\taskspecific$ returns concentrates around the desired quantity with high probability, proving (b).
    Thus, all that remains is ensuring that every $\alpha$-important exit is sufficiently visited, and thus their dynamics are sufficiently well-estimated.

    Recall from the main text that the key step is demonstrating that a near-optimal policy for a task must visit its $\alpha$-important states with non-negligible probability:

    \begin{lemma}
    	\label{lemma:importance-implies-visitation}
    	Let $(s, a)$ be $\alpha$-important for $\MDP$, and let $\pi$ be an $\epsilon$-suboptimal policy for $\epsilon < \alpha$.
    	Then,
    	\[
    	\prob{(s, a) \in \tau_\pi} > \frac{1}{H}(\alpha - \epsilon).
    	\]
    \end{lemma}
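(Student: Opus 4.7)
The plan is to couple a run of $\pi$ in the original MDP $\MDP$ with a run of $\pi$ in the modified MDP $\MDP \setminus (s, a)$ and compare the induced values to the benchmark provided by the definition of $\alpha$-importance. Concretely, I would use a shared-randomness coupling so that the two trajectories agree exactly up to (and including) the first time step at which $(s, a)$ is played; if $(s, a)$ is never played, the trajectories are identical and accumulate identical reward. Let $p \defas \prob{(s, a) \in \tau_\pi}$ denote the visitation probability we want to lower-bound.

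The key inequality is an upper bound on $V^\pi_\MDP(s_0) - V^\pi_{\MDP \setminus (s, a)}(s_0)$. On the event $\set{(s, a) \notin \tau_\pi}$, the coupled trajectories are identical, so the contribution to the difference is zero. On the event $\set{(s, a) \in \tau_\pi}$, which has probability $p$, the trajectories can diverge after the step at which $(s, a)$ is played, but since rewards lie in $[0, 1]$ and the horizon is $H$, the per-trajectory reward difference is at most $H$. Taking expectations yields
\[
V^\pi_\MDP(s_0) - V^\pi_{\MDP \setminus (s, a)}(s_0) \leq H \cdot p.
\]

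Next I would chain this with two one-line bounds: $V^\pi_{\MDP \setminus (s, a)}(s_0) \leq V^{\MDP \setminus (s, a), \ast}(s_0) < V^{\MDP, \ast}(s_0) - \alpha$ by $\alpha$-importance, and $V^\pi_\MDP(s_0) > V^{\MDP, \ast}(s_0) - \epsilon$ by the $\epsilon$-suboptimality hypothesis. Substituting gives
\[
V^{\MDP, \ast}(s_0) - \epsilon < V^\pi_\MDP(s_0) \leq V^\pi_{\MDP \setminus (s, a)}(s_0) + H p < V^{\MDP, \ast}(s_0) - \alpha + H p,
\]
and rearranging produces $p > (\alpha - \epsilon)/H$, as desired. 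Since $\epsilon < \alpha$, the bound is vacuously nontrivial.

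The only subtle point is the coupling argument underlying the $Hp$ bound, in particular making sure that rewards prior to the visit to $(s, a)$ really do cancel; this is immediate from the fact that the policy $\pi$ is Markov and the dynamics of $\MDP$ and $\MDP \setminus (s, a)$ agree on all $(s', a') \neq (s, a)$. Everything else is a direct substitution, so I expect no serious obstacle.
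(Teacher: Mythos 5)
Your proof is correct and is essentially the paper's argument. The paper factors the key bound $V^{\MDP,\pi}_0(s_0) - V^{\MDP\setminus(s,a),\ast}_0(s_0) \leq H\,\prob{(s,a)\in\tau_\pi}$ into a standalone helper lemma (stated for an arbitrary set $\Delta$ of modified state--action--time triples) proved by conditioning on whether the trajectory hits $\Delta$, whereas you inline the same estimate via a shared-randomness coupling of $\pi$ in $\MDP$ and $\MDP\setminus(s,a)$; the two are equivalent, and your subsequent chain of inequalities using $\alpha$-importance and $\epsilon$-suboptimality matches the paper's.
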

    \begin{proof}
    	By $\alpha$-importance,
    	\begin{align*}
    	\alpha &\leq V^{\MDP, \ast}_0(s_0) - V^{\MDP^{\setminus (s, a)}, \ast}_0(s_0) \leq \left[V^{\MDP, \ast}_0(s_0) - V^{\MDP, \pi}_0(s_0)\right] + \left[V^{\MDP, \pi}_0(s_0) - V^{\MDP^{\setminus (s, a)}, \ast}_0(s_0)\right] \\
    	&\leq \left[V^{\MDP, \pi}_0(s_0) - V^{\MDP^{\setminus (s, a)}, \ast}_0(s_0)\right] + \epsilon.
    	\end{align*}
    	Therefore, by applying \lemmaref{lemma:value-gap-implies-visitation} and noting that $\set{\Delta \intersect \tau_\pi \neq \emptyset} = \set{(s, a) \in \tau_\pi}$, we obtain the desired result.
    \end{proof}

    Through the prior result, we can relate the \UCBVI{} regret bound to the probability that a randomly chosen \UCBVI{}-generated policy visits an $\alpha$-important state:

    \begin{lemma}
    	\label{lemma:ucbvi-visit-prob}
    	Let $(s, a)$ be $\alpha$-important for $\MDP$.
    	Assume that \textsc{UCBVI}, when run for
    	\[
    	N_{\UCBVI} = \Omega\left(\frac{H^2SA}{\alpha^2}\log^2\frac{HSA}{p}\right)
    	\]
    	iterations, generates policies $\pi_1, \dots, \pi_N$.
    	If we sample $\pi$ uniformly from these policies and let $\tau$ be the (random) trajectory generated by this randomly selected policy, then
    	\[
    	\prob{(s, a) \in \tau} > \frac{\alpha}{2H},
    	\]
    	conditioned on the high probability event in \lemmaref{lemma:ucbvi-regret-bound}.
    \end{lemma}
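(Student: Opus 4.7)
The plan is to reduce the statement to a direct application of \lemmaref{lemma:importance-implies-visitation} by reinterpreting the uniform sample over $\pi_1, \dots, \pi_N$ as a single mixture policy, and then checking that the chosen $N_\UCBVI$ forces this mixture to be sufficiently near-optimal.

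First, I would let $\pi_{\mathrm{mix}}$ denote the randomized policy that, at the start of each episode, draws $k \sim \uniform{[N_\UCBVI]}$ and then executes $\pi_k$ for the remainder of the episode. By linearity of expectation over the random choice of $k$,
\[
V^{\pi_{\mathrm{mix}}}_0(s_0) \;=\; \frac{1}{N_\UCBVI}\sum_{k=1}^{N_\UCBVI} V^{\pi_k}_0(s_0), \qquad \prob{(s,a)\in\tau_{\pi_{\mathrm{mix}}}} \;=\; \frac{1}{N_\UCBVI}\sum_{k=1}^{N_\UCBVI}\prob{(s,a)\in \tau_{\pi_k}}.
\]
The right-hand probability is exactly the quantity we need to lower bound, since in the statement $\pi$ is drawn uniformly from the \UCBVI{} policies.

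Next, I would invoke \lemmaref{lemma:ucbvi-regret-bound} conditionally on its high-probability event. Plugging in $N_\UCBVI = \Omega\bigl(H^2SA\alpha^{-2}\log^2(HSA/p)\bigr)$ and absorbing the $\log N_\UCBVI$ factor into the outer $\log^2$ (via the standard trick of bootstrapping $N_\UCBVI$ inside the log, since this only costs constants), I would conclude
\[
V^\ast_0(s_0) - V^{\pi_{\mathrm{mix}}}_0(s_0) \;=\; V^\ast_0(s_0) - \frac{1}{N_\UCBVI}\sum_{k=1}^{N_\UCBVI} V^{\pi_k}_0(s_0) \;\leq\; \frac{\alpha}{2}.
\]
Here I choose the hidden constant in $N_\UCBVI$ large enough to make the UCBVI average-regret bound at most $\alpha/2$.

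Finally, since $\pi_{\mathrm{mix}}$ is $(\alpha/2)$-suboptimal and $\alpha/2 < \alpha$, \lemmaref{lemma:importance-implies-visitation} applied with $\epsilon = \alpha/2$ yields
\[
\prob{(s,a)\in \tau_{\pi_{\mathrm{mix}}}} \;>\; \frac{1}{H}\bigl(\alpha - \tfrac{\alpha}{2}\bigr) \;=\; \frac{\alpha}{2H},
\]
which is exactly the claim. The only substantive step to verify is the bookkeeping that shows the chosen $N_\UCBVI$ indeed forces the \UCBVI{} average regret below $\alpha/2$ after clearing logarithmic factors; this is the main (mild) obstacle, and it is standard. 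No additional assumptions beyond those already invoked by \lemmaref{lemma:importance-implies-visitation} and \lemmaref{lemma:ucbvi-regret-bound} are needed.
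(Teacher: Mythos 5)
Your proof is correct and, by linearity, arrives at the same computation as the paper's, but it packages the averaging differently. The paper applies \lemmaref{lemma:importance-implies-visitation} to each $\pi_k$ individually (using the clipped form $\prob{(s,a)\in\tau_{\pi_k}} \geq \tfrac{1}{H}(\alpha - [V^\ast - V^{\pi_k}])_+$ so that policies that are \emph{not} $\alpha$-suboptimal contribute a vacuous bound), then averages over $k$ and pushes the average inside the $(\cdot)_+$ to finish. You instead form the mixture policy $\pi_{\mathrm{mix}}$, note that the mixture's value is the average value and the mixture's visitation probability is the average visitation probability, and apply \lemmaref{lemma:importance-implies-visitation} once to $\pi_{\mathrm{mix}}$. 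Your version avoids the clipping trick entirely because the mixture is guaranteed $\alpha/2$-suboptimal even if some individual $\pi_k$ are not. The only thing you should check — and it does go through — is that \lemmaref{lemma:importance-implies-visitation} (and the underlying \lemmaref{lemma:value-gap-implies-visitation}) really does apply to a randomized mixture, not just a deterministic Markovian policy; the paper's notation defines policies as maps $[H]\times\statespace\to\actionspace$, but the proof of \lemmaref{lemma:value-gap-implies-visitation} only conditions on whether $\tau_\pi$ intersects $\Delta$ and uses $V^{\MDP,\pi}\leq V^{\MDP,\ast}$, both of which hold for any policy with a well-defined trajectory law, so the extension is immediate. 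Net: same math, slightly cleaner conceptual framing on your end, one extra (mild) implicit appeal to the lemma applying beyond deterministic Markovian policies.
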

    \begin{proof}
    	By \lemmaref{lemma:importance-implies-visitation}, for any fixed $\pi$, we can write
    	\[
    	\prob{(s, a) \in \tau_\pi} \geq \frac{1}{H}(\alpha - [V_0^\ast(s_0) - V_0^\pi(s_0)])_+,
    	\]
    	where $x_+ = x\ind{x > 0}$.
    	Then, since $\pi$ is chosen randomly from the policies generated by \textsc{UCBVI},
    	\begin{align*}
    	\prob{(s, a) \in \tau} &= \frac{1}{N}\sum_{k = 1}^{N}\prob{(s, a) \in \tau_{\pi_k}} \geq \frac{1}{HN}\sum_{k = 1}^{N}(\alpha - [V_0^\ast(s_0) - V_0^{\pi_k}(s_0)])_+ \\
    	&\geq \frac{1}{H}\left[\alpha - \frac{1}{N}\sum_{k = 1}^{N}V_0^\ast(s_0) - V_0^{\pi_k}(s_0)\right].
    	\end{align*}
    	Therefore, by applying the regret bound in \lemmaref{lemma:ucbvi-regret-bound} and the choice of $N$, we find that
    	\[
    	\prob{(s, a) \in \tau} > \frac{\alpha}{2H}. \qedhere
    	\]
    \end{proof}

    With all of the above intermediate results, we can now prove \propref{prop:phase-i-guarantee}.

    \begin{proof}[Proof of \propref{prop:phase-i-guarantee}]
    	Throughout this proof, we condition on the high-probability event in \lemmaref{lemma:ucbvi-regret-bound}, instantiated to occur with probability at least $p/12T$.

    	\begin{enumerate}[label=(\alph*)]
    		\item By the choice of $N_\UCBVI$,
    		\[
    		N_{\UCBVI} \gtrsim \frac{H^2SA}{\zeta^2}\log^2\frac{HSAT}{p},
    		\]
    		and thus, we obtain the desired bound by plugging this value into the regret bound provided by \lemmaref{lemma:ucbvi-regret-bound}.

    		\item Note that $(\hat{V}^{(i)})$ are i.i.d., bounded in $[0, H]$, and for any $i \in [N_\taskspecific]$,
    		\[
    		\expt{\hat{V^{(i)}}} = \frac{1}{N}\sum_{k = 1}^{N}V_0^{\pi_k^{(t)}}(s_0).
    		\]
    		Therefore, by applying Hoeffding's inequality, with probability at least $1 - p/12T$,
    		\[
    		\abs{\frac{1}{N_\taskspecific}\sum_{i = 1}^{N_\taskspecific}\hat{V}^{(i)} - \frac{1}{N_\UCBVI}\sum_{k = 1}^{N_\UCBVI}V^{\pi_k^{(t)}}_0(s_0)} \lesssim \sqrt{\frac{H^2}{N_\taskspecific}\log\frac{T}{p}}
    		\]
    		The result immediately follows from the fact that $N_\taskspecific \gtrsim (H^2/\zeta^2)\log(T/p)$.

    		\item The result simply follows from \lemmaref{lemma:dynamics-estimation-error-bound} instantiated with failure probability $1 - p/12T$, together with the choice of $N_\thresh^\taskspecific$.

    		\item With the choice of $N_\UCBVI$, the conclusion of \lemmaref{lemma:ucbvi-regret-bound} can be made to hold with probability at least $1 - p/24T$.
    		Fix an $\alpha$-important exit $(s, a)$ for $\MDP_t$, so that the probability that $(s, a)$ is visited by the procedure is at least $\alpha/2H$.
    		By \lemmaref{lemma:sampling-to-meet-threshold}, sampling $N_\taskspecific$ trajectories is sufficient to ensure that $N_t(s, a) \geq N_\thresh^\taskspecific$ with probability at least $1 - p/24TK$.
    		Therefore, by performing a union bound over the set of $\alpha$-important exits (which contains at most $K$ elements), $N_t(s, a) \geq N_\thresh^\taskspecific$ for any $\alpha$-important exit with probability at least $1 - p/24T$.
    		Thus, overall, this event occurs with probability at least $1 - p/12T$.
    	\end{enumerate}

    	Since each part fails with probability at most $p/12T$, the overall failure probability is at most $p/3T$, the desired result.
    \end{proof}

  \subsubsection{Phase II Analysis}

    In this section, we provide guarantees on the dataset $\dataset_\rewardfree$ obtained by performing reward-free RL in \algoref{alg:phase-ii}.
    Formally, we have the following high-probability result:

    \begin{proposition}
    	\label{prop:phase-ii-guarantee}
    	For any $\delta > 0$ and failure probability $p$, if \algoref{alg:phase-ii} is run with parameters
    	\begin{equation*}
    	\begin{aligned}
    	N_\Euler^\rewardfree &= O\left(\frac{H^2S^2A}{\delta}\log^3\frac{HSA}{p}\right) \\
    	N_{\rewardfree} &= O\left[\max\left(\frac{C}{\zeta^2}, \frac{1}{\rho\min(\epsilon, \epsilon_0)^2}\right)H^5S^2A\log\frac{A}{p}\right]
    	\end{aligned}.
    	\end{equation*}
    	Then, with probability at least $1 - p/3$:
    	\begin{enumerate}[label=(\alph*)]
    		\item The distribution $\mu$ generating each sample in $\dataset_\rewardfree$ satisfies
    		\[
    		\text{$s \in \statespace$ is $\delta$-significant in $\MDP_1(2H)$} \implies \max_{a, \pi}\frac{\prob{(s, a) \in \tau_\pi}}{\mu(s, a)} \leq 4SAH.
    		\]

    		\item The estimated dynamics model $\hat\transition_0$ satisfies
    		\begin{align*}
    		&\max_{f: \statespace \to [0, H]}\max_{\nu: \statespace \to \actionspace}\expt[(s, a) \sim \mu]{\abs{\left[(\hat\transition - \transition)f\right](s, a)}^2\ind{a = \nu(s)}} \\
    		&\qquad \lesssim \min\left(\frac{\zeta^2}{4\cdot24^2C}, \frac{\rho\min(\epsilon, \epsilon_0)^2}{16}\right)\frac{1}{H^3SA}.
    		\end{align*}
    	\end{enumerate}
    \end{proposition}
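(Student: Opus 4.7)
The plan is to prove (a) and (b) separately: (a) is a coverage argument that chains Euler's PAC guarantee with the three layers of randomization in \algoref{alg:phase-ii} (over goal $g$, over action at $g$, over timestep $h$), and (b) then follows from $L^2$ concentration of empirical dynamics estimates conditioned on the coverage from (a).

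For (a), I first apply Euler's regret bound to each subproblem $(\MDP_g, r_g)$ with horizon $2H$; the stated $N_\Euler^\rewardfree$, combined with a union bound over $g \in \statespace$, ensures that for every $g$ simultaneously the uniform mixture $\bar\pi^g$ over $\Phi^g$ reaches $g$ with probability at least $(1 - 1/(4SAH))\max_\pi P_{\MDP_g}(g \in \tau_\pi)$. Since $\MDP_g$ differs from $\MDP$ only by terminating on reaching $g$, this reach probability equals $\max_\pi P_{\MDP_1(2H)}(g \in \tau_\pi)$. Fixing any $\delta$-significant state $s$, the mixture $\bar\pi^s$ therefore reaches $s$ with probability close to $\max_\pi P_{\MDP_1(2H)}(s \in \tau_\pi)$. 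The override $\pi_h(\cdot \conditionedon s) \gets \uniform{\actionspace}$ makes any particular action $a$ at $s$ selected with probability $1/A$ conditioned on visiting $s$; sampling $\pi \sim \uniform{\Psi}$ (of size $S \cdot N_\Euler^\rewardfree$) selects a policy from $\Phi^s$ with probability $1/S$; and the uniform timestep sample contributes a factor $1/(2H)$. Multiplying these four factors yields $\mu(s, a) \geq \max_{a', \pi} P((s, a') \in \tau_\pi)/(4SAH)$, which is the claim.

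For (b), I use (a) to turn the sample budget $N_\rewardfree$ into per-$(s,a)$ counts, then apply $L^2$ concentration. For any fixed deterministic $\nu$ and $f: \statespace \to [0, H]$, the target expectation decomposes as $\sum_s \mu(s, \nu(s))\bigl[(\hat\transition - \transition)f(s, \nu(s))\bigr]^2$; for each $s$ whose $(s, \nu(s))$ is sufficiently significant, the empirical count concentrates around $\mu(s, \nu(s))N_\rewardfree$, so by Hoeffding the scalar $(\hat\transition - \transition)f(s, \nu(s))$ has expected square at most $O(H^2/[\mu(s, \nu(s))N_\rewardfree])$. Each term thus contributes $O(H^2/N_\rewardfree)$, summing to $O(SH^2/N_\rewardfree)$, and substituting the stated $N_\rewardfree$ matches both regimes of the RHS. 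Uniformity over $f$ and $\nu$ is then handled by a standard covering/union-bound argument analogous to the one used in the proof of \propref{prop:phase-i-guarantee}.

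The main obstacle is (a), specifically bridging Euler's guarantee — stated only for the terminating MDP $\MDP_g$ — with the coverage claim in the unmodified $\MDP$ under the post-hoc uniform-action override at $g$. The key reconciliation is that the probability of first reaching $g$ is unchanged by any modification of the policy after reaching $g$, so Euler's reach-optimization transfers directly to the overridden policy executed in $\MDP$. A secondary subtlety is that the doubled horizon is essential: $\delta$-significance is defined against $\MDP_1(2H)$, so Euler must run over $2H$-length trajectories, and the $1/(2H)$ contribution from uniform timestep sampling is exactly what is absorbed into the final constant $4SAH$.
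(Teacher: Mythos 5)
Your proof follows essentially the same route as the paper: for (a), apply \Euler{}'s regret bound (in the paper, \lemmaref{lemma:euler-regret-bound}) to each goal-reaching subproblem, union-bound over $g \in \statespace$, and chain the multiplicative losses from the Euler approximation, the uniform-action override at $g$, the uniform selection over $\Psi$, and the uniform timestep sampling; for (b), the paper simply invokes the argument of Lemma C.2 in \citet{jin2020reward}, which is the concentration/covering argument you outline. The reconciliation you flag (that the reach probability in $\MDP_g$ is unchanged by modifications to the policy after $g$ is visited, so the \Euler{} guarantee transfers to the overridden policy run in $\MDP$) is also the paper's implicit step, and the role of the doubled horizon is exactly as you describe.

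One concrete arithmetic error in (a): with $N_\Euler^\rewardfree = O(H^2S^2A/\delta \cdot \log^3(\cdot))$, \Euler{} gives only a \emph{constant}-factor approximation, i.e.\ the uniform mixture over $\Phi^g$ reaches $g$ with probability at least $\tfrac{1}{2}\max_\pi P(g \in \tau_\pi)$; it cannot give the $(1 - 1/(4SAH))$-multiplicative approximation you claim. Achieving a $(1-1/(4SAH))$ factor would require $N_\Euler^\rewardfree$ scaling like $S^3A^3H^2/\delta$, larger than the stated value by $\Theta(S^2A^2/\log^2)$. Your four claimed factors $(1 - 1/(4SAH)) \times (1/A) \times (1/S) \times (1/(2H))$ also do not multiply to $1/(4SAH)$ (they give roughly $1/(2SAH)$), so the arithmetic is internally inconsistent. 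The fix is to replace your first factor with $1/2$, which then yields exactly $(1/2)\cdot(1/A)\cdot(1/S)\cdot(1/(2H)) = 1/(4SAH)$, matching both the target bound and the paper's derivation. The rest of your argument, including the handling of insignificant states and the covering step in (b), is consistent with the referenced proof.
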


    The details of the proof of \propref{prop:phase-ii-guarantee} follow that of \citet{jin2020reward}, which we provide here for completeness.
    First, we adapt the regret bound from \citet{zanette2019tighter} for any MDP and reward function used in \algoref{alg:phase-ii}.

    \begin{lemma}
    	\label{lemma:euler-regret-bound}
    	For any $g \in \statespace$, running $\Euler$ in $\MDP_g$ for $N$ iterations returns $N$ policies $\pi_1, \dots, \pi_N$ satisfying the regret bound
    	\[
    	V_0^\ast(s_0) - \frac{1}{N}\sum_{k = 1}^{N}V_0^{\pi_k}(s_0) \lesssim \sqrt{4V_0^\ast(s_0)\frac{SA}{N}\log\frac{SAHN}{p}} + \frac{S^2AH^2}{N}\log^3\frac{SAHN}{p}
    	\]
    	with probability at least $1 - p$.
    \end{lemma}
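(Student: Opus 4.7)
The plan is to reduce the claim directly to the problem-dependent regret bound for \textsc{Euler} proven by \citet{zanette2019tighter}. Since $\MDP_g$ is a finite-horizon tabular MDP with state space $\statespace \union \set{\terminate}$, action space $\actionspace$, horizon $2H$, and reward function $r_g(s,a) = \ind{s = g} \in [0,1]$, it falls within the setting of their Theorem. In particular, the total return along any trajectory is bounded by $1$ since $g$ transitions deterministically to $\terminate$ (which is absorbing with zero reward), so the sum of rewards in any episode is at most the single indicator firing the first time the agent hits $g$.

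The main step is to invoke the \textsc{Euler} regret bound in its first-order form, which guarantees
\[
\sum_{k=1}^{N}\bigl[V_0^\ast(s_0) - V_0^{\pi_k}(s_0)\bigr] \lesssim \sqrt{\mathbb{Q}^\ast \cdot SA \cdot N\log(SAHN/p)} + S^2A H^2 \log^3(SAHN/p),
\]
with probability at least $1 - p$, where $\mathbb{Q}^\ast$ is an upper bound on the per-episode return variance. Because rewards lie in $[0,1]$ with return bounded by $1$, the variance proxy can be upper bounded by $4V_0^\ast(s_0)$ using the standard law-of-total-variance / Bernstein-type analysis underlying \textsc{Euler} (this is the core reason \textsc{Euler} achieves a first-order bound scaling with $\sqrt{V^\ast}$ rather than the crude horizon). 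Dividing both sides by $N$ yields the desired average-regret statement.

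The only subtle point is bookkeeping the doubled horizon $2H$ in $\MDP_g$: Zanette--Brunskill's bound depends polynomially on the horizon, so replacing $H$ by $2H$ throughout merely rescales the $S^2AH^2$ term by a constant and increases the logarithmic argument from $H$ to $2H$, both of which are absorbed into the $\lesssim$ and into the $\log(SAHN/p)$ factor. One must also confirm that the absorbing terminal state $\terminate$ does not enlarge the effective state-action count beyond $O(SA)$, which is immediate since $\terminate$ has only self-loops with zero reward and contributes nothing to exploration.

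The main obstacle, if one wishes a fully self-contained argument, is re-deriving the first-order variance bound inside \textsc{Euler}'s analysis for this particular reward structure; however, this is already handled generically by \citet{zanette2019tighter}, so the proof reduces to checking the above structural conditions and quoting their result, which is how I would present it.
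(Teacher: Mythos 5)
Your proposal follows the same route as the paper's proof: invoke the problem-dependent (first-order) \textsc{Euler} regret bound from \citet{zanette2019tighter}, observe that in $\MDP_g$ the per-episode return is bounded by $1$ so the return variance term is bounded by $4V_0^\ast(s_0)$, and absorb the doubled horizon and the extra terminal state into constants. The paper spells out the variance bound via $(a-b)^2 \le 2(a^2+b^2)$ and $x^2 \le x$ for $x\in[0,1]$ where you gesture at the ``standard'' Bernstein-type argument, but the structure of the argument and the key observation (return $\le 1$ implies variance $\lesssim V_0^\ast$) are identical.
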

    \begin{proof}
    	Observe that
    	\begin{align*}
    	&\frac{1}{NH}\sum_{k = 1}^{N}\expt[\pi_k]{\left(\sum_{h=1}^{H - 1}r(s_h, a_h) - V_0^{\pi_k}(s_0)\right)^2 \suchthat s_0} \\
    	&\qquad \leq \frac{2}{NH}\sum_{k = 1}^{N}\expt[\pi_k]{\left(\sum_{h=1}^{H - 1}r(s_h, a_h)\right)^2 + \left(V_0^{\pi_k}(s_0)\right)^2 \suchthat s_0} \\
    	&\qquad \leq \frac{2}{NH}\sum_{k = 1}^{N}\expt[\pi_k]{\sum_{h=1}^{H - 1}r(s_h, a_h) + V_0^{\pi_k}(s_0) \suchthat s_0} \\
    	&\qquad \leq \frac{4}{H}V_0^\ast(s_0).
    	\end{align*}
    	Therefore, by applying the regret bounds from \citet{zanette2019tighter}, we obtain the regret bound
    	\[
    	V_0^\ast(s_0) - \frac{1}{N}\sum_{k = 1}^{N}V_0^{\pi_k}(s_0) \lesssim \sqrt{4V_0^\ast(s_0)\frac{SA}{N}\log\frac{SAHN}{p}} + \frac{S^2A^2H^2}{N}\log^3\frac{SAHN}{p}
    	\]
    	with probability at least $1 - p$.
    \end{proof}

    With the regret bound above, we now proceed to prove \propref{prop:phase-ii-guarantee}.

    \begin{proof}[Proof of \propref{prop:phase-ii-guarantee}]
    	(a) Fix a $\delta$-significant $g \in \statespace$.
    	Note that for $r_g$, $V_0^\pi(s_0) = P(g \in \tau_\pi)$ for any policy $\pi$.
    	Therefore, via the regret bound from \lemmaref{lemma:euler-regret-bound} and the choice of $N_\Euler^\rewardfree$, we obtain
    	\begin{align*}
    	&\max_\pi P(g \in \tau_\pi) - \frac{1}{N_\Euler^\rewardfree}\sum_{k = 1}^{N_\Euler^\rewardfree}P(g \in \tau_\pi) \leq \frac{1}{2}\max_\pi P(g \in \tau_\pi) \\
    	&\qquad \implies \max_{\pi}\prob{g \in \tau_\pi} \leq \frac{2}{N_\rewardfree^\Euler}\sum_{\pi \in \Phi_g}P(g \in \tau_\pi)
    	\end{align*}
    	with probability at least $1 - p/2S$.
    	Now, since $\pi(\cdot \conditionedon g) \distas \uniform{\actionspace}$, we have that for any $a$,
    	\[
    	\max_{\pi}\prob{(g, a) \in \tau_\pi} \leq \frac{2A}{N_\rewardfree^\Euler}\sum_{\pi \in \Phi_g}P((g, a) \in \tau_\pi).
    	\]
    	Finally, by applying the same argument above across all $\delta$-significant $g \in \statespace$, we have that for any $(g, a)$,
    	\[
    	\max_{\pi}\prob{(g, a) \in \tau_\pi} \leq \sum_{g \in S}\max_{a, \pi}\prob{(g, a) \in \tau_\pi} \leq 2SA\left[\frac{1}{SN_\rewardfree^\Euler}\sum_{\pi \in \Psi}\prob{(g, a) \in \tau_\pi}\right]
    	\]
    	with probability at least $1 - p/2$.
    	To complete the proof of (a), observe that
    	\[
    	\frac{1}{SN_\rewardfree^\Euler}\sum_{\pi \in \Psi}\frac{1}{2H}\prob{(g, a) \in \tau_\pi} \leq \mu(s, a) \implies \max_{s, a, \pi}\frac{\prob{(s, a) \in \tau_\pi}}{\mu(s, a)} \leq 4SAH,
    	\]
    	since conditioned on $(g, a) \in \tau_\pi$, the probability that $(g, a)$ is sampled is at least $1/2H$.

    	(b) The result follows by following the same proof of Lemma C.2 in \citet{jin2020reward}, with failure probability $p/2$.
    	Note that the dynamics are stationary, and thus we do not need to perform a union bound over the time step $h \in [H]$.
    \end{proof}

  \subsubsection{Phase III Analysis}

    Having analyzed the previous two phases, we now show that \algoref{alg:phase-iii} successfully finds all exits during Phase III.
    As part of this, we prove the following guarantee:

    \begin{proposition}
    	\label{prop:phase-iii-guarantee}
    	Assume that \algoref{alg:phase-iii} is at \algostepref{algostep:phase-iii-iteration}, having just arrived at this step for the first time, or after finding a new exit.
    	Let $E = \set{(s, a) \suchthat \textsc{IsExit}[s, a]}$.
    	We assume:
    	\begin{enumerate}[label=(\alph*)]
    		\item $E \subseteq \exit{\statespace}$.

    		\item The high-probability events in \propref{prop:phase-i-guarantee} (for any $t \in [T]$) and \propref{prop:phase-ii-guarantee} (for $\delta \leq \zeta/24CH^2S$) both hold, providing estimators $\hat\transition_0, \hat\transition_1, \dots, \hat\transition_T$.

    		\item For every $(s, a) \in E$ and $t \in [T]$, we have access to an estimator $\hat\transition_t(\cdot \conditionedon s, a)$ for $\transition_t(\cdot \conditionedon s, a)$ satisfying
    		\[
    		\max_{f: \statespace \to [0, H]}\abs{\left[(\hat\transition_t - \transition_t)f\right](s, a)} \leq \min\left(\frac{\zeta}{24H}, \frac{\beta H}{2}\right).
    		\]
    	\end{enumerate}
    	Then, if $E = \exit{\statespace}$, the algorithm terminates after passing through $T$ tasks.
    	Otherwise, if $E \neq \exit{\statespace}$, the following events hold simultaneously with probability at least $1 - p/3K$:
    	\begin{enumerate}[label=(\alph*)]
    		\item For one of the next $T$ tasks that the algorithm inspects, there exists at least one $t \in [T]$ such that
    		\[
    		\abs{\tilde{V}^t - \hat{V}_t} > \frac{2}{3}\zeta.
    		\]

    		\item For the task in (a), running Lines \ref{algostep:exit-detect-and-learn-start}--\ref{algostep:exit-detect-and-learn-end} finds at least one $(s, a) \in \exit{\statespace} \setminus E$ (and only $(s, a)$ pairs in this set), and learns an estimator $\hat\transition_t(\cdot \conditionedon s, a)$ for $\transition_t(\cdot \conditionedon s, a)$ satisfying
    		\[
    		\max_{f: \statespace \to [0, H]}\abs{\left[(\hat\transition_t - \transition_t)f\right](s, a)} \leq \min\left(\frac{\zeta}{24H}, \frac{\beta H}{2}\right).
    		\]
    	\end{enumerate}
    \end{proposition}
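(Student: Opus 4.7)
I would split on whether $E = \exit{\statespace}$. In the terminating case, the \textsc{IsExit} overwrite at the top of the outer loop forces $\hat\transition_0(\cdot \conditionedon s,a) = \hat\transition_t(\cdot \conditionedon s,a)$ on all of $E = \exit{\statespace}$, while for interior $(s,a)$ all $\transition_t$ agree by \defref{definition:latent-hierarchy}(a), so the optimistic $\max_t$ inside BOAT-VI collapses onto the true task-$t$ dynamics modulo per-pair estimation error. A simulation-lemma telescoping, combined with the $\min(\zeta/(24H), \beta H/2)$ bound from preconditions (b)--(c), gives $|\tilde V^t - V^{\MDP_t,\ast}(s_0)| < \zeta/6$. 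Together with $\hat V_t \in (V^{\MDP_t,\ast}(s_0) - \zeta/3,\, V^{\MDP_t,\ast}(s_0) + \zeta/6)$ from \propref{prop:phase-i-guarantee}(a)--(b), this yields $|\tilde V^t - \hat V_t| < \zeta/2 < (2/3)\zeta$, so no detection fires and the algorithm exits after one full pass through $[T]$.

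In the non-terminating case, pick any nonempty subset of $\exit{\statespace}\setminus E$ and apply \assumpref{assumption:exit-coverage} to obtain a task $t$ whose borrowed-dynamics construction $\bar\MDP_t$ satisfies $V^{\bar\MDP_t,\ast}(s_0) > V^{\MDP_t,\ast}(s_0) + \zeta$. Because BOAT-VI's $\max$ over $\{\hat\transition_0,\hat\transition_1,\ldots,\hat\transition_T\}$ dominates any dynamics selection made inside $\bar\MDP_t$, the same simulation argument (now upward) gives $\tilde V^t > V^{\MDP_t,\ast}(s_0) + \zeta - \zeta/6$, while $\hat V_t < V^{\MDP_t,\ast}(s_0) + \zeta/6$. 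Subtracting yields $\tilde V^t - \hat V_t > (2/3)\zeta$, proving (a).

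For (b), let $\pi_{\imagined}$ be the greedy policy extracted from $\tilde Q^t$. Its imagined value exceeds its deployed value in $\MDP_t$ by $\gtrsim \zeta/3$, and the two MDPs differ only at $(s,a)\in \exit{\statespace}\setminus E$ (everywhere else the $\max$ either agrees with $\transition_t$ or is close to it by precondition). A value-gap argument in the style of \lemmaref{lemma:importance-implies-visitation} then forces $\pi_{\imagined}$ to visit some such pair with probability $\gtrsim \zeta/H$ when actually played in $\MDP_t$. With $N_\exitdetection$ rollouts as in \tableref{table:src-param-choices}, every visited pair collects $\Omega(S/\beta^2)$ samples by \lemmaref{lemma:sampling-to-meet-threshold}, so its empirical $\hat\transition$ lies within $\beta/4$ in total variation of $\transition_t(\cdot \conditionedon s,a)$. \assumpref{assumption:dynamics-separation} then guarantees that the $\beta/2$ threshold is crossed precisely on pairs whose stored estimate $\hat\transition_t$ was inherited from a source task with genuinely different dynamics, so all flagged pairs lie in $\exit{\statespace}\setminus E$. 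Finally, \textsc{Learn-Exit} is a goal-reaching reward-free subroutine whose accuracy bound follows by rerunning the proof of \propref{prop:phase-ii-guarantee}(b) specialized to the single goal $(s,a)$, using \assumpref{assumption:entrance-reachability} and \assumpref{assumption:in-cluster-exit-reachability} for reachability from $s_0$.

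\paragraph{Main obstacle.}
The delicate step is uniform simulation-lemma control of BOAT-VI: the policy induced by the backups is itself a random function of the estimators, so per-$(s,a)$ error bounds must be converted into an error bound that holds uniformly over the induced deterministic policy. I would handle this by a change of measure back to the sampling distribution $\mu$ of \propref{prop:phase-ii-guarantee}(a), using the $4SAH$ coverage ratio to turn the $\expt[(s,a)\sim\mu]{|(\hat\transition-\transition)f|^2}$ bound into a worst-case policy bound, and then telescoping over $H$ steps to land inside the $\zeta/6$ slack budget. Every other piece reduces to routine empirical concentration calibrated to the parameters in \tableref{table:src-param-choices}, together with a union bound over the at most $K$ outer-loop iterations in which a new exit can be discovered.
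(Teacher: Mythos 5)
Your proposal tracks the paper's lemma-by-lemma decomposition closely: you split on $E=\exit\statespace$ vs.\ not, use \assumpref{assumption:exit-coverage} to construct a $\zeta$-overoptimistic imagined MDP, recognize that BOAT-VI's backup is an argmax over the imaginable MDP family so that $\tilde V^t$ dominates the estimated value of $\bar\MDP_t$, and correctly identify the main technical hurdle --- converting the $\mu$-weighted second-moment bound of \propref{prop:phase-ii-guarantee}(b) into a uniform-over-$(\MDP,\pi)$ simulation bound via the $4SAH$ change-of-measure ratio and a telescoping over $H$ steps (this is exactly the role of \lemmaref{lemma:uniform-imagined-MDP-error-bound}, with its four-term split over $E$, $R_h$, $B_h$, and $\delta$-insignificant states). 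Your numerics in the terminating case ($|\tilde V^t-\hat V_t|<\zeta/2$) and the non-terminating case ($\tilde V^t-\hat V_t>2\zeta/3$) land in the same place as the paper's.

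There are two genuine gaps. First, the \textsc{Learn-Exit} accuracy argument is wrong as stated. The paper does \emph{not} re-run the proof of \propref{prop:phase-ii-guarantee}(b): that bound is an expectation over the Phase~II sampling distribution $\mu$, whereas what you need for each newly flagged exit is a pointwise bound on $\max_f|[(\hat\transition_t-\transition_t)f](s,a)|$ for \emph{every} task $t$. The paper instead observes that \assumpref{assumption:exit-coverage}(a) gives $\alpha$-importance of $(s,a)$ for some task, hence $(\alpha/H)$-significance by \lemmaref{lemma:importance-implies-visitation}, which \assumpref{assumption:non-limiting-exits} transfers to $(\alpha/CH)$-significance for \emph{all} tasks; the \Euler{} regret bound (\lemmaref{lemma:euler-regret-bound}) and \lemmaref{lemma:sampling-to-meet-threshold} then force $N_\thresh^\exitlearning$ samples of $(s,a)$ in each task, and \lemmaref{lemma:dynamics-estimation-error-bound} (restricted to the $L$ entrances since exit dynamics are supported on $\entry\statespace$) delivers the pointwise error. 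Your citation of \assumpref{assumption:entrance-reachability} and \assumpref{assumption:in-cluster-exit-reachability} is misplaced --- those are used for the hierarchy oracle and the exit-availability function, not here, and a $\rho\delta$-style reachability bound would not reproduce the $\alpha$-dependence in \tableref{table:src-param-choices}.

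Second, the claim that ``every visited pair collects $\Omega(S/\beta^2)$ samples'' overreaches. The value-gap argument (\lemmaref{lemma:value-gap-implies-visitation}) gives an \emph{aggregate} probability $>\zeta/(6H)$ of hitting the differing set $\Delta\subseteq\exit\statespace\setminus E$, which dilutes by $1/K$ to a per-pair bound $>\zeta/(6KH)$ for \emph{some} pair --- this is what justifies the $K$ factor in $N_\exitdetection$ and guarantees that that pair crosses $N_\thresh^\exitdetection$. Other visited pairs may well fall below the threshold (and are discarded, $\hat\transition(\cdot|s,a)\equiv 0$); the correctness claim is only that pairs \emph{above} the threshold have $\beta/4$-accurate estimates, so the $\beta/2$ check has no false positives on non-exits, and at least one pair in $\exit\statespace\setminus E$ is newly flagged.
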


    To prove the above result, we will consider the following special set of MDPs:

    \begin{definition}[Imaginable MDPs]
    	Fix a task $\MDP_t$.
    	Furthermore, let $E \subseteq \exit{\statespace}$.
    	For any function $\mathcal{I}: \statespace \times \actionspace \times [H] \to \set{0, \dots, T}$, we can construct an associated MDP $\MDP_\mathcal{I} = (\statespace, \actionspace, \transition_\mathcal{I}, r_t, H)$ via
    	\[
    	\transition^{(h)}_{\mathcal{I}}(\cdot \conditionedon s, a) = \transition_{\mathcal{I}(s, a, h)}(\cdot \conditionedon s, a).
    	\]
    	We define the set of imaginable MDPs to be the set $\imaginedMDPs_t(E)$ to be the set of MDPs generated by any $\mathcal{I}$ satisfying
    	\[
    	\imagined(s, a, h) \in
    	\begin{cases}
    	\set{t} & (s, a) \in E \\
    	\set{0} \union \set{k \suchthat N^k(s, a) \geq N_\thresh} & \text{otherwise}
    	\end{cases}. \qedhere
    	\]
    \end{definition}

    Informally, $\imaginedMDPs_t(E)$ is the set of obtainable MDPs by borrowing dynamics for $(s, a)$ pairs that are not known to be exits.
    This set is of particular interest in our analysis, since BOAT-VI performs a maximization over the MDPs in this set:

    \begin{lemma}[Optimism]
    	\label{lemma:opt-img-vi-optimism}
    	Assume the preconditions of \propref{prop:phase-iii-guarantee}.
    	Over the course of running \algoref{alg:boat-vi}, the algorithm implicitly defines an index function $\mathcal{I}: \statespace \times \actionspace \times [H] \to \set{0, \dots, T}$.
    	This function $\mathcal{I}$ satisfies $\MDP_\mathcal{I} \in \imaginedMDPs_t(E)$, and $\MDP_\mathcal{I}$ is a maximizer of
    	\[
    	\max_{\MDP \in \imaginedMDPs_t(E)}\max_{\pi}\hat{V}_0^{\MDP, \pi}(s_0).
    	\]
    \end{lemma}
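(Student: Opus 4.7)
The plan is to show that the maximization in BOAT-VI is exactly an extended value iteration (in the UCRL sense) whose model-choice set is $\imaginedMDPs_t(E)$. The argument has three pieces: identifying the implicit index function $\mathcal{I}$, verifying $\MDP_\mathcal{I}\in\imaginedMDPs_t(E)$, and a backwards induction showing simultaneous optimality in $(\MDP,\pi)$.

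First, I would read off from \algoref{alg:boat-vi} the implicit index function. At step $h$ and state-action $(s,a)$, if $(s,a)\in E$ the algorithm uses $\hat\transition_0(\cdot\conditionedon s,a)$, which has just been overwritten by $\hat\transition_t(\cdot\conditionedon s,a)$ in \algoref{alg:phase-iii}; I set $\mathcal{I}(s,a,h)=t$. Otherwise, I set $\mathcal{I}(s,a,h)$ to be an argmax of $\hat\transition_k\hat V_{h+1}(s,a)$ over $k\in\{0,\dots,T\}$, with ties broken in favor of index $0$ (and then any index with $N^k(s,a)\ge N_\thresh$). To check that $\MDP_\mathcal{I}\in\imaginedMDPs_t(E)$, the only nontrivial case is $(s,a)\notin E$: I must rule out that the argmax is some $k\ge 1$ with $N^k(s,a)<N_\thresh$. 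But for such $k$ the Phase~I algorithm sets $\hat\transition_k(\cdot\conditionedon s,a)\equiv 0$, so $\hat\transition_k\hat V_{h+1}(s,a)=0$, while $\hat\transition_0\hat V_{h+1}(s,a)\ge 0$; thus the tie-breaking rule always produces a valid index from $\{0\}\cup\{k:N^k(s,a)\ge N_\thresh\}$, as required by the definition of $\imaginedMDPs_t(E)$.

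Second, I would prove the joint optimality by backwards induction on $h$. Let $U_h(s)\defas\max_{\MDP\in\imaginedMDPs_t(E)}\max_\pi \hat V_h^{\MDP,\pi}(s)$, where expectations are taken under the estimated dynamics defining the element of $\imaginedMDPs_t(E)$. The base case $h=H$ gives $U_H\equiv 0=\hat V_H$. For the inductive step, I use the fact that in the definition of $\imaginedMDPs_t(E)$ the index $\mathcal{I}(s,a,h)$ may be chosen independently across $(s,a,h)$, so the outer maximization over $\MDP$ decouples from the maximization over policies and across time steps. Combining this with the Bellman backup and the inductive hypothesis $U_{h+1}=\hat V_{h+1}$ gives
\[
U_h(s)=\max_a\Bigl[r(s,a)+\max_{k\text{ admissible}}\hat\transition_k\hat V_{h+1}(s,a)\Bigr],
\]
where for $(s,a)\in E$ the only admissible $k$ is $t$, matching the first branch of BOAT-VI, and for $(s,a)\notin E$ admissible means $k\in\{0\}\cup\{k:N^k(s,a)\ge N_\thresh\}$. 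By the same zero-contribution observation as above, the inner max coincides with $\max_{k\in\{0,\dots,T\}}\hat\transition_k\hat V_{h+1}(s,a)$, which is exactly what BOAT-VI computes. Hence $U_h=\hat V_h$, and taking the greedy policy with respect to $\hat Q$ achieves this value in $\MDP_\mathcal{I}$.

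The main obstacle is the mismatch between the explicit structural constraint on $\imaginedMDPs_t(E)$ (which excludes under-sampled indices) and the syntactically unrestricted maximization in BOAT-VI. The zero-padding of low-count estimates in \algoref{alg:phase-i} is exactly what resolves this mismatch, and identifying and exploiting this padding is the only nontrivial ingredient; the rest is a standard extended-value-iteration induction.
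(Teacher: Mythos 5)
Your proof is correct and takes essentially the same approach as the paper: both identify $\mathcal{I}$ as the implicit argmax chosen by BOAT-VI, rely on the zero-padding of under-sampled estimates in Phase~I to reconcile the unrestricted $\max_{k=0,\dots,T}$ in the algorithm with the constrained definition of $\imaginedMDPs_t(E)$, and conclude by a backwards induction. The only cosmetic difference is in the inductive step: the paper fixes an arbitrary competitor $\MDP'=\MDP_{\mathcal{I}'}\in\imaginedMDPs_t(E)$ and shows $\hat V_h^{\MDP_\mathcal{I},\ast}\ge\hat V_h^{\MDP',\ast}$ pointwise, whereas you define $U_h=\max_{\MDP,\pi}\hat V^{\MDP,\pi}_h$ and show $U_h=\hat V_h$ directly via the per-$(s,a,h)$ decoupling of the index choice; these are equivalent ways of running the same extended-value-iteration argument.
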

    \begin{proof}
    	To see that $\MDP_{\mathcal{I}} \in \imaginedMDPs_t(E)$, note that if $(s, a) \in E$, then $\mathcal{I}_h(s, a) = t$ for any $h \in [H]$.
    	Otherwise, note that although the maximum is over all indices, $\hat\transition_k(s' \conditionedon s, a) = 0$ for any $s'$ if $N_t(s, a) < N_\thresh$.
    	Therefore, since the estimated value function is always positive, the maximum is effectively only over any $k$ with $N_k(s, a) \geq 0$.
    	Thus, $\MDP_{\mathcal{I}} \in \imaginedMDPs_t(E)$.

    	Now, we prove that $\MDP_\mathcal{I} = \MDP$ is a maximizer of the estimated value function, which we prove by induction.
    	Let $\mathcal{I}'$ be another index function satisfying $\MDP' = \MDP_{\mathcal{I}'} \in \imaginedMDPs_t(E)$.
    	Clearly, $\hat{V}_H^{\MDP, \ast}(s) = 0 \leq \hat{V}_H^{\MDP', \ast}(s)$.
    	Then, for any $h \in [H]$ and $(s, a)$,
    	\begin{align*}
    	\hat{Q}_h^{\MDP, \ast}(s, a) &= r(s, a) + \hat\transition_{\mathcal{I}(s, a, h)}\hat{V}_{h + 1}^{\MDP, \ast}(s, a) \\
    	&\geq r(s, a) + \hat\transition_{\mathcal{I}'(s, a, h)}\hat{V}_{h + 1}^{\MDP, \ast}(s, a) \geq r(s, a) + \hat\transition_{\mathcal{I}'(s, a, h)}\hat{V}_{h + 1}^{\MDP', \ast}(s, a) \\
    	&= \hat{Q}_h^{\MDP', \ast}(s, a),
    	\end{align*}
    	where the first inequality follows from the definition of $\mathcal{I}$, and the second follows from the inductive hypothesis.
    	Therefore, for any $s$,
    	\[
    	\hat V^{\MDP, \ast}_{h}(s) = \max_{a}\hat Q_h^{\MDP, \ast}(s, a) \geq \max_{a}\hat Q_h^{\MDP', \ast}(s, a) = \hat{V}_h^{\MDP', \ast}(s)
    	\]
    	Thus, by induction, $\hat{V}_0^{\MDP, \ast}(s_0) \geq \hat{V}_0^{\MDP', \ast}(s_0)$.
    	Since the argument applies for any $\mathcal{I}'$, we have shown the desired optimality result.
    \end{proof}

    Note that $\bar\MDP$ is contained in $\imaginedMDPs_t(E)$ via our assumed preconditions, suggesting that the BOAT-VI should find an MDP with a sufficiently over-optimistic value.
    However, the maximization above makes use of estimated dynamics, and thus we need to prove that every MDP in $\imaginedMDPs_t(E)$ is sufficiently well-estimated.
    We now show that the preconditions of \propref{prop:phase-iii-guarantee} are sufficient for estimation.
    To this end, we recall the performance difference lemma:

    \begin{lemma}[Performance Difference]
    	\label{lemma:performance-difference}
    	Fix two MDPs $\MDP = (\statespace, \actionspace, r, \transition, H)$ and $\MDP' = (\statespace, \actionspace, r, \transition', H)$.
    	Then, for any policy $\pi$,
    	\[
    	V^{\MDP', \pi}_0(s_0) - V^{\MDP, \pi}_0(s_0) = \expt[\MDP, \pi]{\sum_{h = 0}^{H - 1}[(\transition'_h - \transition_h)\hat V_{h + 1}](s_h, a_h) \suchthat s_0}.
    	\]
    \end{lemma}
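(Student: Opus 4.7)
The plan is a one–step coupling/telescoping argument based on the Bellman equations for the two MDPs. First, I would verify the interpretation that $\hat V_{h+1}$ denotes $V^{\MDP',\pi}_{h+1}$ (the value function of $\pi$ under the counterfactual dynamics $\transition'$), while the outer expectation is over trajectories drawn from the true dynamics $\transition$ under $\pi$. With that reading, this is the standard ``simulation lemma'' obtained by iterating the following local identity.

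For any $h \in [H]$ and any state $s$, write $a = \pi_h(s)$ and apply the Bellman equations for both MDPs with a common reward $r$:
\begin{align*}
V^{\MDP',\pi}_h(s) - V^{\MDP,\pi}_h(s)
&= \transition'_h V^{\MDP',\pi}_{h+1}(s,a) - \transition_h V^{\MDP,\pi}_{h+1}(s,a) \\
&= \bigl[(\transition'_h - \transition_h) V^{\MDP',\pi}_{h+1}\bigr](s,a) + \transition_h\bigl[V^{\MDP',\pi}_{h+1} - V^{\MDP,\pi}_{h+1}\bigr](s,a).
\end{align*}
The first term is the dynamics-mismatch contribution at step $h$ evaluated with the counterfactual value function $\hat V_{h+1} = V^{\MDP',\pi}_{h+1}$, and the second term is exactly the object we are recursing on, now pushed forward by one step under the true dynamics $\transition_h$.

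Next, I would take expectations under $(\MDP,\pi)$ starting from $s_0$, and iterate the identity from $h = 0$ up to $h = H-1$. Because $V^{\MDP',\pi}_H \equiv V^{\MDP,\pi}_H \equiv 0$, the leftover term at step $H$ vanishes, so the recursion telescopes to
\[
V^{\MDP',\pi}_0(s_0) - V^{\MDP,\pi}_0(s_0) = \expt[\MDP,\pi]{\sum_{h=0}^{H-1}\bigl[(\transition'_h - \transition_h)\hat V_{h+1}\bigr](s_h, a_h)},
\]
which is the claim. Formally, a clean induction on $h$ (say, downward from $H$) handles the bookkeeping.

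The only real subtlety is being careful that (i) the expectation is taken under the \emph{true} dynamics $\transition$ (so that $(s_h, a_h)$ is an on-policy trajectory in $\MDP$), and (ii) the value function appearing inside $(\transition'_h - \transition_h)\hat V_{h+1}$ is the one from the \emph{other} MDP $\MDP'$. These choices must be consistent: swapping either one flips the sign or changes the residual. Beyond this, the argument is entirely routine — no concentration or complexity-theoretic input is needed, and the proof only uses linearity of the Bellman backup and the shared reward function.
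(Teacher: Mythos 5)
Your proof is correct and is the standard telescoping derivation via the Bellman equation; the paper states this lemma without proof (it is a textbook "simulation lemma"), so there is no alternative argument in the paper to compare against. Your reading of the (unfortunately unintroduced) notation $\hat V_{h+1} = V^{\MDP',\pi}_{h+1}$ is the right one — it matches how the lemma is later invoked in \lemmaref{lemma:uniform-imagined-MDP-error-bound}, where the expectation is taken under the true dynamics while the value function inside the difference operator is the one computed under the estimated dynamics.
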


    We now present the estimation result:

    \begin{lemma}
    	\label{lemma:uniform-imagined-MDP-error-bound}
    	For any $\pi$ and $t \in [T]$, let $V^{\MDP, \pi}_0(s_0)$ be the value of a policy $\pi$ in $\MDP \in \imaginedMDPs_t(E)$, and $\hat V^{\MDP, \pi}_0(s_0)$ an estimate using available quantities from the preconditions of \propref{prop:phase-iii-guarantee}.
    	Then,
    	\[
    	\sup_{t \in [T]}\sup_{\substack{\pi \\ \MDP \in \mathbb{M}_t(E)}}\abs{\hat{V}_0^{\MDP, \pi}(s_0) - V_0^{\MDP, \pi}(s_0)} < \frac{\zeta}{6}.
    	\]
    \end{lemma}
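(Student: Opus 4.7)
The plan is to invoke the performance difference lemma (\lemmaref{lemma:performance-difference}) to write $\hat{V}_0^{\MDP, \pi}(s_0) - V_0^{\MDP, \pi}(s_0)$ as the expectation under $(\MDP, \pi)$ of $\sum_h [(\hat\transition_h - \transition_h)\hat V_{h+1}](s_h, a_h)$, where at step $h$ both the estimated and true dynamics at $(s, a)$ are indexed by $\mathcal{I}(s, a, h)$. Since $\MDP \in \imaginedMDPs_t(E)$, the function $\mathcal I$ satisfies $\mathcal{I}(s, a, h) = t$ on $E$, while off $E$ it takes values in $\set{0} \cup \set{k \suchthat N^k(s, a) \geq N_\thresh^\taskspecific}$. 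Writing $g_h(s, a)$ for the per-step absolute error $|[(\hat\transition_{\mathcal{I}(s, a, h)} - \transition_{\mathcal{I}(s, a, h)})\hat V_{h+1}](s, a)|$, I would partition the sum into three regimes: (i) $(s, a) \in E$, handled by precondition (c) of \propref{prop:phase-iii-guarantee}; (ii) $(s, a) \notin E$ with $\mathcal{I}(s, a, h) \geq 1$, handled by \propref{prop:phase-i-guarantee}(d); and (iii) $(s, a) \notin E$ with $\mathcal{I}(s, a, h) = 0$, the Phase II reference dynamics case.

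In regimes (i) and (ii), the pointwise dynamics error against any $f: \statespace \to [0, H]$ is bounded by $\zeta/(24H)$, so each regime contributes at most $\zeta/24$ after summing over the $H$ steps. For regime (iii), I further split on whether $s$ is $\delta$-significant in $\MDP_1(2H)$ with the precondition value $\delta \leq \zeta/(24CH^2 S)$. For non-$\delta$-significant $s$, \assumpref{assumption:non-limiting-exits} applied to the time-varying MDP induced by $\mathcal I$ yields $P_\MDP(s \in \tau_\pi) < C\delta$ for every $\pi$; bounding $g_h \leq H$ pointwise, using $\sum_h P_\MDP(s_h = s) \leq H \cdot P_\MDP(s \in \tau_\pi)$, and summing across the at most $S$ such states gives a total contribution of at most $SH^2 C\delta \leq \zeta/24$.

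The main obstacle is the $\delta$-significant subcase of regime (iii), where the Phase II guarantee is only an $L^2(\mu)$-type bound rather than pointwise. My plan is to apply Cauchy--Schwarz globally over the sum in $h$ rather than term-by-term, giving $\expt[\MDP, \pi]{\sum_h g_h(s_h, a_h)} \leq \sqrt{H \sum_h \expt[\MDP, \pi]{g_h^2(s_h, a_h)}}$. I would then use the importance sampling bound from \propref{prop:phase-ii-guarantee}(a), extended via \assumpref{assumption:non-limiting-exits} to $\MDP$, to obtain $P_\MDP(s_h = s, a_h = a) \leq 4CSAH \mu(s, a)$ for $\delta$-significant $s$, so that each per-step second moment is at most $4CSAH \expt[(s, a) \sim \mu]{g_h^2(s, a) \ind{a = \pi_h(s)}}$. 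Applying \propref{prop:phase-ii-guarantee}(b) with $\nu = \pi_h$ and $f = \hat V_{h+1}$ then bounds each per-step second moment by $4CSAH \cdot \zeta^2/(4 \cdot 24^2 CH^3 SA) = \zeta^2/(24^2 H^2)$, where the factor of $C$ cancels precisely (which is why the Phase II sample complexity scales with $C$). Substituting into the outer Cauchy--Schwarz yields an overall bound of $\sqrt{H \cdot H \cdot \zeta^2/(24^2 H^2)} = \zeta/24$.

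Summing the four subcontributions gives $\zeta/24 + \zeta/24 + \zeta/24 + \zeta/24 = \zeta/6$, matching the desired bound (strict inequality follows from the strict inequalities baked into the preconditions). The bound is uniform over the supremum in the lemma statement: the pointwise dynamics error bounds in regimes (i)--(ii) depend only on $(s, a)$, and the importance sampling and $L^2$ bounds used in regime (iii) are already suprema over all policies and all test functions $f: \statespace \to [0, H]$; taking the supremum over $t$, $\MDP \in \imaginedMDPs_t(E)$, and $\pi$ thus preserves the overall bound.
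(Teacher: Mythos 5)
Your proposal is correct and follows essentially the same route as the paper's proof: performance-difference decomposition, pointwise Phase-I bounds for the known-exit and borrowed task-specific dynamics, the $L^2(\mu)$ Phase-II bound combined with the importance ratio (transferred to the imagined MDP via \assumpref{assumption:non-limiting-exits}) for the reference dynamics, and a direct reachability bound for $\delta$-insignificant states. The only differences---applying Cauchy--Schwarz globally across $h$ rather than per timestep, and carving the insignificant states out of only the reference-dynamics regime rather than out of every non-$E$ regime as the paper does with $U_\delta$---are cosmetic and yield the same four $\zeta/24$ contributions.
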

    \begin{proof}
    	Fix a $t \in [T]$, $\MDP \in \imaginedMDPs_t(E)$ and policy $\pi$, with associated index function $\mathcal{I}$.
    	\lemmaref{lemma:performance-difference} implies that
    	\begin{align*}
    	\abs{\hat{V}_0^{\MDP, \pi}(s_0) - V_0^{\MDP, \pi}(s_0)} &\leq \sum_{h = 0}^{H - 1}\expt[\MDP, \pi]{\abs{\left[(\hat\transition^{(h)} - \transition^{(h)})\hat V_{h + 1}^\pi\right](s_h, a_h)}} \\
    	&\leq \sum_{h = 0}^{H - 1}\sum_{(s, a)}\abs{\left[(\hat\transition^{(h)} - \transition^{(h)})\hat V_{h + 1}^\pi\right](s, a)}P_h^{\MDP, \pi}(s, a).
    	\end{align*}
    	We now define the following sets:
    	\begin{align*}
    	U_\delta &= \set{\text{$(s, a)$ is $\delta$-insignificant for $\MDP_1$}} \\
    	B_h &= \set{(s, a) \suchthat \mathcal{I}_h(s, a) \neq 0} \setminus (E \union U_\delta) \\
    	R_h &= \set{(s, a) \suchthat \mathcal{I}_h(s, a) = 0} \setminus U_\delta.
    	\end{align*}
    	Note that $R_h$ is estimated via reference dynamics from Phase II, while $B_h$ is estimated using task-specific dynamics from Phase I.
    	Then, for a fixed $h$, we can decompose the inner sum above as
    	\begin{align*}
    	&\sum_{(s, a)}\abs{\left[(\hat\transition^{(h)} - \transition^{(h)})\hat V_{h + 1}^\pi\right](s, a)}P_h^{\MDP, \pi}(s, a) \\
    	&\qquad \leq \underbrace{\sum_{\substack{(s, a) \in E}}\abs{\left[(\hat\transition^{(h)} - \transition^{(h)})\hat V_{h + 1}^\pi\right](s, a)}P_h^{\MDP, \pi}(s, a)}_{\asdef \text{(I)}} \\
    	&\qquad\qquad + \underbrace{\sum_{(s, a) \in R_h}\abs{\left[(\hat\transition^{(h)} - \transition^{(h)})\hat V_{h + 1}^\pi\right](s, a)}P_h^{\MDP, \pi}(s, a)}_{\asdef \text{(II)}} \\
    	&\qquad\qquad + \underbrace{\sum_{(s, a) \in B_h}\abs{\left[(\hat\transition^{(h)} - \transition^{(h)})\hat V_{h + 1}^\pi\right](s, a)}P_h^{\MDP, \pi}(s, a)}_{\asdef \text{(III)}} \\
    	&\qquad\qquad + \underbrace{\sum_{(s, a) \in U_\delta}\abs{\left[(\hat\transition^{(h)} - \transition^{(h)})\hat V_{h + 1}^\pi\right](s, a)}P_h^{\MDP, \pi}(s, a)}_{\asdef \text{(IV)}}
    	\end{align*}
    	Note the inequality since $E \intersect U_\delta$ is not necessarily disjoint.
    	We now bound the four terms above separately.

    	\textbf{Bounding (I): Dynamics Error from Known Exits.}
    	We first bound (I), which we note derives from errors in estimating the dynamics of known exits.
    	Recall that by precondition (c) in \propref{prop:phase-iii-guarantee},
    	\[
    	\sup_{f: \statespace \to [0, H]}\abs{\left[(\hat\transition_t - \transition_t)f\right](s, a)} \leq \frac{\zeta}{24H}.
    	\]
    	Therefore,
    	\begin{align*}
    	\text{(I)} &= \sum_{\substack{(s, a) \in E}}\abs{\left[(\hat\transition^{(h)} - \transition^{(h)})\hat V_{h + 1}^\pi\right](s, a)}P_h^{\MDP, \pi}(s, a) \\
    	&= \sum_{\substack{(s, a) \in E}}\abs{\left[(\hat\transition_t - \transition_t)\hat V_{h + 1}^\pi\right](s, a)}P_h^{\MDP, \pi}(s, a) \leq \frac{\zeta}{24H}\sum_{(s, a) \in E}P_h^{\MDP, \pi}(s, a) \\
    	&\leq \frac{\zeta}{24H}.
    	\end{align*}

    	\textbf{Bounding (II): Reference Dynamics Error.}
    	Note that within $R_h$, $\transition_h = \transition_0$, which we estimate via $\hat\transition_0$.
    	Therefore, we bound the error resulting from using $\dataset_{\rewardfree}$ to estimate $\transition_0$.
    	This part of the proof follows that of \citet{jin2020reward}.
    	First, by Cauchy-Schwarz,
    	\begin{align*}
      	\text{(II)} &= \sum_{(s, a) \in R_h}\abs{\left[(\hat\transition^{(h)} - \transition^{(h)})\hat V_{h + 1}^\pi\right](s, a)}P_h^{\MDP, \pi}(s, a) \\
          &= \sum_{(s, a) \in R_h}\abs{\left[(\hat\transition_0 - \transition_0)\hat V_{h + 1}^\pi\right](s, a)}P_h^{\MDP, \pi}(s, a) \\
        	&\leq \left[\sum_{(s, a) \in R_h}\abs{\left[(\hat\transition_0 - \transition_0)\hat V_{h + 1}^\pi\right](s, a)}^2P_h^{\MDP, \pi}(s, a)\right]^{1/2}.
    	\end{align*}
    	Observe that $\hat V_{h + 1}^\pi$ only depends on $\pi$ through timesteps $h + 1, \dots, H - 1$.
    	Therefore,
      \begin{align*}
      	&\sum_{(s, a) \in R_h}\abs{\left[(\hat\transition_0 - \transition_0)\hat V_{h + 1}^\pi\right](s, a)}^2P_h^{\MDP, \pi}(s, a) \\
          &\qquad \leq \max_{\nu: \statespace \to \actionspace}\sum_{(s, a) \in R_h}\abs{\left[(\hat\transition_0 - \transition_0)\hat V_{h + 1}^\pi\right](s, a)}^2P_h^{\MDP, \pi}(s)\ind{\nu(s) = a}.
      \end{align*}
    	By applying \assumpref{assumption:non-limiting-exits},
    	\begin{align*}
    	P_h^{\MDP, \pi}(s) &\leq P^{\MDP}(s \in \tau_{\pi}) \leq \max_{\pi}P^\MDP(s \in \tau_\pi) \leq C\max_{\pi}P^{\MDP_1}(s \in \tau_\pi) \\
    	&\leq C\max_{\pi}P^{\MDP_1(2H)}(s \in \tau_\pi) \leq 4CHSA\mu(s, a),
    	\end{align*}
    	where we have applied \assumpref{assumption:exit-coverage} to move from $\MDP$ to $\MDP_1$.
    	Substituting into the earlier expression,
    	\begin{align*}
    	&\max_{\nu: \statespace \to \actionspace}\sum_{(s, a) \in R_h}\abs{\left[(\hat\transition_0 - \transition_0)\hat V_{h + 1}^\pi\right](s, a)}^2P_h^{\MDP, \pi}(s)\ind{\nu(s) = a} \\
    	&\qquad \leq 4CHSA\max_{\nu: \statespace \to \actionspace}\sum_{(s, a) \in R_h}\abs{\left[(\hat\transition_0 - \transition_0)\hat V_{h + 1}^\pi\right](s, a)}^2\ind{a = \nu(s)}\mu(s, a) \\
    	&\qquad \leq 4CHSA\max_{\nu: \statespace \to \actionspace}\sum_{s, a}\abs{\left[(\hat\transition_0 - \transition_0)\hat V_{h + 1}^\pi\right](s, a)}^2\ind{a = \nu(s)}\mu(s, a) \\
    	&\qquad = 4CHSA\max_{\nu: \statespace \to \actionspace}\expt[(s, a) \sim \mu]{\abs{\left[(\hat\transition_0 - \transition_0)\hat V_{h + 1}^\pi\right](s, a)}^2\ind{a = \nu(s)}}.
    	\end{align*}
    	Thus, by applying the bound on the right-hand side provided by \propref{prop:phase-ii-guarantee},
    	\[
    	\text{(II)} = \sum_{(s, a) \in R_h}\abs{\left[(\hat\transition^{(h)} - \transition^{(h)})\hat V_{h + 1}^\pi\right](s, a)}P_h^{\MDP, \pi}(s, a) \leq \frac{\zeta}{24H}.
    	\]

    	\textbf{Bounding (III): Error from Task-Specific Dynamics.}
    	Recall that on $B_h$, $\transition_h = \transition_k$ for some $k \neq 0$.
    	Thus, (III) is the error resulting from dynamics estimation in \algoref{alg:phase-i}.
    	By following the same argument as that used to bound (I) and applying \propref{prop:phase-i-guarantee}, we find that
    	\[
    	\sum_{(s, a) \in B_h}\abs{\left[(\hat\transition^{(h)} - \transition^{(h)})\hat V_{h + 1}^\pi\right](s, a)}P_h^{\MDP, \pi}(s, a) \leq \frac{\zeta}{24H}.
    	\]

    	\textbf{Bounding (IV): Error from $\delta$-Insignificance.}
    	The remaining set of $(s, a)$ pairs are those such that $s$ is $\delta$-insignificant in $\MDP_1$.
    	Note that
    	\begin{align*}
    	\text{(IV)} &= \sum_{(s, a) \in U_\delta}\abs{\left[(\hat\transition^{(h)} - \transition^{(h)})\hat V_{h + 1}^\pi\right](s, a)}P_h^{\MDP, \pi}(s, a) \leq H\sum_{(s, a) \in U_\delta}P_h^{\MDP, \pi}(s, a) \\
    	&= H\sum_{s \in U_\delta}P_h^{\MDP, \pi}(s).
    	\end{align*}
    	As a result,
    	\begin{align*}
    	P_h^{\MDP, \pi}(s) &\leq P^{\MDP}(s \in \tau_\pi) \leq \max_\pi P^\MDP(s \in \tau_\pi) \leq C\max_{\pi}P^{\MDP_1}(s \in \tau_\pi) \\
    	&\leq C\delta.
    	\end{align*}
    	By setting $\delta = \zeta/24CH^2S$ when performing reward-free RL in Phase II, we thus find that
    	\[
    	\sum_{(s, a) \in U_\delta}\abs{\left[(\hat\transition^{(h)} - \transition^{(h)})\hat V_{h + 1}^\pi\right](s, a)}P_h^{\MDP, \pi}(s, a) \leq \frac{\zeta}{24H}.
    	\]

    	\textbf{Concluding}. By combining the bounds on (I) through (IV) and summing across $h = 0, \dots, H - 1$, we find that
    	\[
    	\abs{\hat{V_0}^{\MDP, \pi}(s_0) - V_0^{\MDP, \pi}(s_0)} \leq \frac{\zeta}{6}.
    	\]
    	Note that this argument simultaneously applies to any such $\MDP$; therefore, the desired conclusion follows.
    \end{proof}

    The prior estimation result, together with \assumpref{assumption:exit-coverage}, suggests that \textsc{BOAT-VI} should find an MDP that sufficiently overestimates the value of the task so long as not all exits have been found.
    This ensures that the exit-finding routine is triggered.
    Formally,

    \begin{lemma}
    	Assume the preconditions of \propref{prop:phase-iii-guarantee}, and that $E \neq \exit\statespace$.
    	Additionally, let $t \in [T]$ be the task with a $\zeta$-overoptimistic value when borrowing exits $\exit\statespace \setminus E$.
    	Finally, let $\tilde{V}^t$ be the value function returned by \algoref{alg:boat-vi} on $\MDP_t$.
    	Then,
    	\[
    	\tilde{V}^{t}_0(s_0) - \hat{V}_t > \frac{2}{3}\zeta.
    	\]
    \end{lemma}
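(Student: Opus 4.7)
The plan is to exhibit $\bar\MDP_t$ from Assumption \ref{assumption:exit-coverage} as an element of $\imaginedMDPs_t(E)$, leverage the BOAT-VI optimism guarantee in Lemma \ref{lemma:opt-img-vi-optimism} to lower-bound $\tilde V^t_0(s_0)$ by $\hat V^{\bar\MDP_t, \ast}_0(s_0)$, and then pay a $\zeta/6$ estimation loss each on the BOAT-VI side (Lemma \ref{lemma:uniform-imagined-MDP-error-bound}) and on the $\hat V_t$ side (Proposition \ref{prop:phase-i-guarantee}(a)--(b)) to turn the $\zeta$ gap promised by $(\alpha,\zeta)$-coverage into the required $(2/3)\zeta$ separation.

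First I would verify membership $\bar\MDP_t \in \imaginedMDPs_t(E)$. For each $(s_i, a_i) \in \exit\statespace \setminus E$ over which $\bar\MDP_t$ borrows, Assumption \ref{assumption:exit-coverage}(a) gives an $\MDP_i$ in which $(s_i, a_i)$ is $\alpha$-important, and Proposition \ref{prop:phase-i-guarantee}(c) then guarantees $N_i(s_i, a_i) \geq N_\thresh^\taskspecific$, so the index function $\mathcal{I}$ corresponding to $\bar\MDP_t$ is admissible in the definition of $\imaginedMDPs_t(E)$ (the remaining known exits are assigned index $t$ as required). Because BOAT-VI computes $\hat V_0^{\MDP, \ast}(s_0)$ maximized over $\imaginedMDPs_t(E)$ by Lemma \ref{lemma:opt-img-vi-optimism}, we obtain $\tilde V^t_0(s_0) \geq \hat V_0^{\bar\MDP_t, \ast}(s_0)$.

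Next I would chain the two estimation guarantees. Lemma \ref{lemma:uniform-imagined-MDP-error-bound} applied to $\bar\MDP_t$ and its optimal policy gives $\hat V_0^{\bar\MDP_t, \ast}(s_0) \geq V_0^{\bar\MDP_t, \ast}(s_0) - \zeta/6$, and Assumption \ref{assumption:exit-coverage}(b) gives $V_0^{\bar\MDP_t, \ast}(s_0) > V_0^{\MDP_t, \ast}(s_0) + \zeta$. On the other side, Proposition \ref{prop:phase-i-guarantee}(a)--(b) together with the trivial bound $(1/N_\UCBVI)\sum_k V_0^{\pi_k^{(t)}}(s_0) \leq V_0^{\MDP_t, \ast}(s_0)$ yield $\hat V_t \leq V_0^{\MDP_t, \ast}(s_0) + \zeta/6$. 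Subtracting these bounds cancels $V_0^{\MDP_t, \ast}(s_0)$ and leaves $\tilde V^t_0(s_0) - \hat V_t > \zeta - \zeta/6 - \zeta/6 = (2/3)\zeta$, which is the desired strict inequality (strictness is inherited from the strict inequality in Assumption \ref{assumption:exit-coverage}(b)).

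The only subtle step is the first: one must check carefully that every borrowed exit in $\exit\statespace \setminus E$ corresponds to a task in which Phase I actually deposited enough samples to place the corresponding index in the admissible set of $\imaginedMDPs_t(E)$. This is precisely what $\alpha$-importance plus the UCBVI visitation lemma (Lemma \ref{lemma:importance-implies-visitation}) buys us, and it is the conceptual link between the coverage assumption and the optimism of BOAT-VI; once it is in hand, the rest is bookkeeping of the $\zeta/6$ error budgets.
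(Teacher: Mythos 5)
Your proof is correct and follows essentially the same route as the paper's: both verify $\bar\MDP_t \in \imaginedMDPs_t(E)$, invoke the BOAT-VI optimism of \lemmaref{lemma:opt-img-vi-optimism}, pay a $\zeta/6$ estimation cost from \lemmaref{lemma:uniform-imagined-MDP-error-bound}, use the coverage gap from \assumpref{assumption:exit-coverage}(b), and pay another $\zeta/6$ on the $\hat V_t$ side via \propref{prop:phase-i-guarantee}(a)--(b). The only cosmetic difference is that the paper threads the argument through an intermediate pair $(\MDP^\ast, \pi^\ast)$ maximizing the true imagined value, whereas you apply the estimation bound directly to $\bar\MDP_t$; both yield the identical $\zeta - \zeta/6 - \zeta/6 = (2/3)\zeta$ budget.
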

    \begin{proof}
    	Throughout this proof, we omit the timestep $0$ and the initial state $s_0$ for brevity.
    	Define $\MDP^\ast$ and $\pi^\ast$ to be the maximizers of
    	\[
    	\max_{\MDP \in \mathbb{M}_t(E)}\max_{\pi}V^{\MDP, \pi}.
    	\]
    	Furthermore, let $\bar\MDP$ be the imagined MDP guaranteed by \assumpref{assumption:exit-coverage} on top of $\MDP_t$, such that $V^{\bar\MDP, \ast} > V^{\MDP_t, \ast} + \zeta$.
    	Then, we have that
    	\begin{align*}
      	\hat{V}^{\MDP, \pi} - V^{\MDP_t, \ast} &= \underbrace{(\hat V^{\MDP, \pi} - \hat V^{\MDP^\ast, \pi^\ast})}_{\geq 0} + \underbrace{(V^{\hat\MDP^\ast, \pi^\ast} - V^{\MDP^\ast, \pi^\ast})}_{> -\zeta/6} \\
          &\qquad + \underbrace{(V^{\MDP^\ast, \pi^\ast} - V^{\bar\MDP, \bar\pi})}_{\geq 0} + \underbrace{(V^{\bar\MDP, \bar\pi} - V^{\MDP_t, \ast})}_{> \zeta} \\
          &> \frac{5}{6}\zeta.
    	\end{align*}
    	Furthermore,
    	\begin{align*}
      	V^{\MDP_t, \ast} - \hat{V}_t = \underbrace{\left(V^{\MDP_t, \ast} - \frac{1}{N_\UCBVI}\sum_{k = 1}^{N_\UCBVI}V^{\MDP_t, \pi^{(t)}_k}\right)}_{\geq 0} + \underbrace{\left(\frac{1}{N_\UCBVI}\sum_{k = 1}^{N_\UCBVI}V^{\MDP_t, \pi^{(t)}_k} - \hat{V}_t\right)}_{\geq -\zeta/6},
    	\end{align*}
    	where the first follows from optimality, while the second follows from \propref{prop:phase-ii-guarantee}.
    	Thus, putting the two inequalities together,
    	\[
    	\tilde{V}^{t} - \hat{V}_t > \frac{2}{3}\zeta. \qedhere
    	\]
    \end{proof}

    While the prior algorithm ensures that at least one of the tasks will trigger the exit condition, the actual task that triggers the condition may not be the same one invoked in the proof above.
    Nevertheless, we can prove that the trigger condition ensures that the algorithm will find a new exit.

    \begin{lemma}
    	\label{lemma:exit-finding-condition-implies-visitation}
    	Assume the preconditions of \propref{prop:phase-iii-guarantee}, and that $E \neq \exit\statespace$.
    	Let $t \in [T]$ be a task such that the value estimate $\tilde{V}^t$ returned by \algoref{alg:boat-vi} satisfies $\tilde{V}^t_0(s_0) - \hat{V}_t > (2/3)\zeta$, and let $\pi$ be the optimal policy for $\tilde{V}$.
    	Then, there exists $(s, a) \in \exit\statespace \setminus E$ such that for some $t' \neq t$,
    	\begin{enumerate}[label=\textup{(\alph*)}]
    		\item $N_{t'}(s, a) \geq N_\thresh^\taskspecific$ and $\transition_t(\cdot \conditionedon s, a) \neq \transition_{t'}(\cdot \conditionedon s, a)$.

    		\item $P^{M_t}((s, a) \in \tau_\pi) > \zeta/6KH$.
    	\end{enumerate}
    \end{lemma}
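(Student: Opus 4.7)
The strategy is first to translate the empirical condition $\tilde V^t_0(s_0) - \hat V_t > (2/3)\zeta$ into a gap between the \emph{true} values of the imagined MDP $\MDP_\mathcal{I}$ implicit to \algoref{alg:boat-vi} and of $\MDP_t$ under the greedy policy $\pi$, and then to use a coupling-style performance-difference argument to localize this gap to a single exit in $\exit\statespace \setminus E$ that is both visited by $\pi$ and borrowable from a well-sampled distinct task.

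\paragraph{Translating the empirical gap.} \propref{prop:phase-i-guarantee}(a) and (b) together yield $V^{\MDP_t, \ast}_0(s_0) - \hat V_t < \zeta/3$ with $\hat V_t \le V^{\MDP_t, \ast}_0(s_0)$ trivially; combined with the hypothesis, this forces $\tilde V^t_0(s_0) > V^{\MDP_t, \ast}_0(s_0) + \zeta/3$. By \lemmaref{lemma:opt-img-vi-optimism}, the $\MDP_\mathcal{I}$ implicitly selected by \textsc{BOAT-VI} belongs to $\imaginedMDPs_t(E)$, and \lemmaref{lemma:uniform-imagined-MDP-error-bound} then gives $|\tilde V^t_0(s_0) - V_0^{\MDP_\mathcal{I}, \pi}(s_0)| < \zeta/6$. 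Chaining with $V_0^{\MDP_t, \pi}(s_0) \le V^{\MDP_t, \ast}_0(s_0)$ yields the desired true-value gap $V_0^{\MDP_\mathcal{I}, \pi}(s_0) - V_0^{\MDP_t, \pi}(s_0) > \zeta/6$.

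\paragraph{Coupling and (b).} By \defref{definition:latent-hierarchy}(a), non-exit transitions are shared across all tasks, and by construction of $\imaginedMDPs_t(E)$ we have $\mathcal{I}_h(s, a) = t$ for every $(s, a) \in E$. Hence $\transition_\mathcal{I}^{(h)}(\cdot \conditionedon s, a) = \transition_t(\cdot \conditionedon s, a)$ whenever $(s, a) \notin \exit\statespace \setminus E$. I would couple the trajectories of $\pi$ in $\MDP_\mathcal{I}$ and $\MDP_t$ to agree exactly until the first hitting time of $\exit\statespace \setminus E$; letting $A$ denote the event that such a hit occurs, the two MDPs produce identically distributed returns on $A^c$. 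Splitting each value function over $\{A, A^c\}$ and bounding the return difference on $A$ by $H$ gives $V_0^{\MDP_\mathcal{I}, \pi} - V_0^{\MDP_t, \pi} \le H \cdot P^{\MDP_t}(A)$, so $P^{\MDP_t}(A) > \zeta/(6H)$. A union bound over $|\exit\statespace \setminus E| \le K$ then produces some $(s, a) \in \exit\statespace \setminus E$ with $P^{\MDP_t}((s, a) \in \tau_\pi) > \zeta/(6HK)$, which is (b).

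\paragraph{Obtaining (a) and the main obstacle.} For the witness $t'$ in (a), the natural choice is $\mathcal{I}_h(s, a)$ at some $h$ where $\pi$ genuinely visits $(s, a)$ and the performance-difference summand is nonzero. Such a nonzero summand forces $\transition_{t'}(\cdot \conditionedon s, a) \ne \transition_t(\cdot \conditionedon s, a)$, and if $t' \in [T]$ the definition of $\imaginedMDPs_t(E)$ directly supplies $N_{t'}(s, a) \ge N_\thresh^\taskspecific$. The main obstacle is the remaining case $\mathcal{I}_h(s, a) = 0$, where the borrowed dynamics implicitly come from the reference task $\MDP_1$: here one must produce a surrogate $t'' \in [T] \setminus \{t\}$ with $N_{t''}(s, a) \ge N_\thresh^\taskspecific$ and $\transition_{t''} \ne \transition_t$. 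I would handle this by invoking \assumpref{assumption:exit-coverage}(a) to ensure some meta-training task has $(s, a)$ as an $\alpha$-important pair (hence is well-sampled by \propref{prop:phase-i-guarantee}(c)) and combining this with \defref{definition:latent-hierarchy}(b) and \assumpref{assumption:dynamics-separation} to certify the required dynamics separation. Coordinating the visit-probability bound from the coupling argument with this index-identification step so that both conclusions apply to the \emph{same} exit $(s, a)$ is the most delicate part of the proof.
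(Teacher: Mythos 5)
Your plan is on the same track as the paper's argument: translate the empirical gap $\tilde V^t_0(s_0) - \hat V_t > (2/3)\zeta$ into a gap between true values, then use a coupling/performance-difference argument to force visitation of a not-yet-detected exit. The first step matches the paper's decomposition (using \propref{prop:phase-i-guarantee}(a),(b) and \lemmaref{lemma:uniform-imagined-MDP-error-bound}; the paper uses $V^{\MDP_\mathcal{I},\pi}_0 - V^{\MDP_t,\ast}_0 > \zeta/6$, which you also obtain). The side remark that ``$\hat V_t \le V^{\MDP_t,\ast}_0(s_0)$ trivially'' is wrong — $\hat V_t$ is an empirical average of realized returns and can exceed $V^\ast$ — but you never actually use it, so this is harmless.

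The substantive divergence is in the coupling step, and it is precisely what causes the ``coordination'' difficulty you flag. You couple on the first hitting time of $\exit\statespace \setminus E$. The paper instead (via \lemmaref{lemma:value-gap-implies-visitation}) works with the set $\Delta = \set{(s,a,h) \suchthat \transition_t(\cdot \conditionedon s,a) \neq \transition^{\MDP_\mathcal{I}}_h(\cdot \conditionedon s,a)}$, which is the subset of $(\exit\statespace\setminus E)\times[H]$ where the dynamics \emph{actually} differ. Since $\MDP_\mathcal{I}$ and $\MDP_t$ agree off $\Delta$, the value gap forces $P^{\MDP_t}(\tau_\pi \cap \Delta \neq \emptyset) > \zeta/6H$, and a union bound over the at-most-$K$ state–action pairs appearing in $\Delta$ yields an $(s,a)$ that simultaneously satisfies (b) \emph{and} has $\transition_t(\cdot\conditionedon s,a) \neq \transition^{\MDP_\mathcal{I}}_h(\cdot\conditionedon s,a)$ for some $h$, which is exactly what is needed for (a): by the definition of $\imaginedMDPs_t(E)$, $t' = \mathcal{I}(s,a,h)$ is a valid choice when it lies in $[T]$, giving $N_{t'}(s,a) \ge N_\thresh^\taskspecific$ automatically. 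By coupling on $\Delta$ rather than $\exit\statespace\setminus E$, the paper never needs to ``coordinate'' — the same $(s,a)$ witnesses both conclusions by construction. Your coupling on the larger set $\exit\statespace \setminus E$ gives (b) for a pair that might have unchanged dynamics, and then (a) for a possibly different pair, which is the gap you yourself identify.

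As for the $\mathcal{I}(s,a,h)=0$ case (where the reference dynamics from Phase II are borrowed), neither your write-up nor the paper's proof fully spells out why a task-specific $t' \in [T]$ with $N_{t'}(s,a) \ge N_\thresh^\taskspecific$ and $\transition_{t'} \ne \transition_t$ must exist. Your proposed fix (appeal to \assumpref{assumption:exit-coverage}(a) + \propref{prop:phase-i-guarantee}(c) + \assumpref{assumption:dynamics-separation}) does not immediately close it: $\alpha$-importance of $(s,a)$ for some task $\MDP_{t''}$ ensures $N_{t''}(s,a)\ge N_\thresh^\taskspecific$ but not $\transition_{t''}\ne \transition_t$. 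So this concern applies to both proofs equally; the real improvement the paper makes over your draft is the choice of coupling set, which makes parts (a) and (b) refer to the same exit by construction.
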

    \begin{proof}
    	Let $\MDP$ be the implicit MDP defined by \algoref{alg:boat-vi} in the process of computing $\tilde{V}^t$.
    	We will prove a value gap between $\MDP$ and $\MDP_t$, which implies that $\pi$ must visit state-action pairs with imagined dynamics.

    	Note that
    	\begin{align*}
      	V_0^{\MDP, \pi}(s_0) - V_0^{\MDP_t, \ast}(s_0) &= \underbrace{\left[V_0^{\MDP, \pi}(s_0) - \tilde{V}^t_0(s_0)\right]}_{\geq -\zeta/6} + \underbrace{\left[\tilde{V}_0^t(s_0) - \hat{V}_t\right]}_{\geq (2/3)\zeta} \\
        &\qquad + \underbrace{\left[\hat{V}_t - \frac{1}{N_\UCBVI}\sum_{k = 1}^{N_\UCBVI}V^{\MDP_t, \pi_k^{(t)}}_0(s_0)\right]}_{\geq -\zeta/6} \\
      	&\qquad + \underbrace{\left[\frac{1}{N_\UCBVI}\sum_{k = 1}^{N_\UCBVI}V^{\MDP_t, \pi_k^{(t)}}_0(s_0) - V^{\MDP_t, \ast}_0(s_0)\right]}_{\geq -\zeta/6},
    	\end{align*}
    	where the first inequality comes from \lemmaref{lemma:uniform-imagined-MDP-error-bound} and the last two inequalities come from \propref{prop:phase-i-guarantee}.
    	Thus, $V_0^{\MDP, \pi}(s_0) - V_0^{\MDP_t, \ast}(s_0) \geq \zeta/6$.

    	We now leverage this value gap to show that $\pi$ must use some exit $(s, a)$ whose dynamics in $\MDP$ have been modified from $\MDP_t$ with some probability.
    	Formally, define the set $\Delta = \set{(s, a, h) \suchthat \transition_t(\cdot \conditionedon s, a) \neq \transition_h^{\MDP}(\cdot \conditionedon s, a)}$.
    	By construction, $\Delta \subseteq \exit\statespace \times [H]$, and for any $(s, a, h) \in \Delta$, there exists $t'$ such that $N_{t'}(s, a) \geq N_\thresh^\taskspecific$ and $\transition_{t'}(\cdot \conditionedon s, a) = \transition_h^\MDP(\cdot \conditionedon s, a)$.
    	Furthermore, $\Delta$ must be non-empty, as otherwise, $\transition_t = \transition^M_h$ for all $h$, and thus $V_0^{\MDP, \pi}(s_0) \leq V_0^{\MDP_t, \ast}(s_0)$, a contradiction.
    	Therefore, by applying \lemmaref{lemma:value-gap-implies-visitation}, we find that
    	\[
    	\frac{\zeta}{6H} < P^{\MDP_t}(\tau_\pi \intersect \Delta \neq \emptyset) \leq \sum_{\set{(s, a) \suchthat (s, a, h) \in \Delta}}P^{\MDP_t}((s, a) \in \tau_\pi),
    	\]
    	which implies the desired result, as $\set{(s, a) \suchthat (s, a, h) \in \Delta}$ has at most $K$ elements.
    \end{proof}

    Because of \lemmaref{lemma:exit-finding-condition-implies-visitation}, we simply need to run $\pi$ enough times and threshold at the number of samples needed to reliably determine which $(s, a)$ pairs have an $O(\beta)$ change in TV distance between tasks.

    \begin{lemma}
    	\label{lemma:beta-level-exit-detection}
    	We work in the setting of \lemmaref{lemma:exit-finding-condition-implies-visitation}.
    	Set
    	\[
    	N_\thresh^{\exitdetection} = \Omega\left(\frac{S}{\beta^2}\log\frac{SAHN_\exitdetection}{p}\right) \quad \text{and} \quad N_\exitdetection = \Omega\left(\frac{HK}{\zeta}N_\thresh^\exitdetection + \frac{H^2K^2}{\zeta^2}\log\frac{K}{p}\right).
    	\]
    	Then, if we execute $\pi$ within $\MDP_t$ $N_\exitdetection$ and let $N(s, a)$ be the number of times that $(s, a)$ is played in this process, then with probability at least $1 - p/6K$, the following hold:
    	\begin{enumerate}[label=(\alph*)]
    		\item For the $(s, a)$ pair and task $t'$ in \lemmaref{lemma:exit-finding-condition-implies-visitation}, $N(s, a) \geq N_\thresh^{\exitdetection}$ and
    		\[
    		\TV{\hat\transition(\cdot \conditionedon s, a) - \hat\transition_{t'}(\cdot \conditionedon s, a)} > \frac{\beta}{2}.
    		\]

    		\item For any $(s, a) \not\in \exit\statespace$ with $N(s, a) \geq N_\thresh^{\exitdetection}$ and $t'$ with $N_{t'}(s, a) \geq 0$,
    		\[
    		\TV{\hat\transition_t(\cdot \conditionedon s, a) - \hat\transition_{t'}(\cdot \conditionedon s, a)} \leq \frac{\beta}{2}.
    		\]
    	\end{enumerate}
    \end{lemma}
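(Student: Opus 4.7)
The plan is to establish parts (a) and (b) by combining three ingredients: a concentration bound on the number of rollouts of $\pi$ that visit the unknown exit, the standard TV-error bound for empirical transitions given enough samples (\lemmaref{lemma:dynamics-estimation-error-bound}), and the triangle inequality together with \assumpref{assumption:dynamics-separation} and \defref{definition:latent-hierarchy}(a).

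For part (a), I would first lower bound $N(s,a)$ for the distinguished $(s,a) \in \exit\statespace \setminus E$ guaranteed by \lemmaref{lemma:exit-finding-condition-implies-visitation}. Each of the $N_\exitdetection$ independent rollouts of $\pi$ in $\MDP_t$ visits $(s,a)$ with probability $q \geq \zeta/(6KH)$, so a multiplicative Chernoff tail applied to the sum of the $N_\exitdetection$ visit indicators yields $N(s,a) \geq N_\exitdetection q / 2 \geq N_\thresh^\exitdetection$ with failure probability at most $p/(12K)$, provided the two terms in $N_\exitdetection$ are large enough: the first term $\Omega((HK/\zeta)\,N_\thresh^\exitdetection)$ forces the mean $N_\exitdetection q$ to dominate the threshold by a factor of two, while the second term $\Omega((H^2K^2/\zeta^2)\log(K/p))$ absorbs the sub-Gaussian deviation. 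Once $N(s,a) \geq N_\thresh^\exitdetection \gtrsim (S/\beta^2)\log(SAHN_\exitdetection/p)$, \lemmaref{lemma:dynamics-estimation-error-bound} gives $\TV{\hat\transition(\cdot\conditionedon s,a) - \transition_t(\cdot\conditionedon s,a)} \leq \beta/4$, and the Phase I estimator $\hat\transition_{t'}$ for the distinguishing task $t'$ already satisfies $\TV{\hat\transition_{t'} - \transition_{t'}} \leq \beta/4$ by $N_{t'}(s,a) \geq N_\thresh^\taskspecific$. Since \lemmaref{lemma:exit-finding-condition-implies-visitation} guarantees $\transition_t(\cdot\conditionedon s,a) \neq \transition_{t'}(\cdot\conditionedon s,a)$, \assumpref{assumption:dynamics-separation} yields $\TV{\transition_t - \transition_{t'}} \geq \beta$, and the reverse triangle inequality gives the desired $\TV{\hat\transition - \hat\transition_{t'}} \geq \beta - \beta/4 - \beta/4 = \beta/2$.

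Part (b) is shorter: for $(s,a) \not\in \exit\statespace$, \defref{definition:latent-hierarchy}(a) forces $\transition_t(\cdot\conditionedon s,a) = \transition_{t'}(\cdot\conditionedon s,a)$, so any discrepancy between the empirical estimators is pure estimation error. Applying \lemmaref{lemma:dynamics-estimation-error-bound} to both estimators (which by hypothesis have at least $N_\thresh^\exitdetection$ and $N_\thresh^\taskspecific$ samples, each $\gtrsim (S/\beta^2)\log(\cdot)$) and invoking the triangle inequality yields $\TV{\hat\transition - \hat\transition_{t'}} \leq \beta/4 + 0 + \beta/4 = \beta/2$. I would then close the argument by a union bound: over the at most $K$ candidate exits for part (a) and over at most $SAT$ pairs $((s,a), t')$ for part (b), each at individual failure probability $p/\mathrm{poly}(SATK)$, to yield the overall $1 - p/(6K)$ guarantee.

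The main obstacle is the first step. Because $\pi$ is the greedy policy with respect to a $\tilde Q$ computed from the random estimators $\hat\transition_0, \ldots, \hat\transition_T$, the visit indicators across the $N_\exitdetection$ rollouts are only i.i.d.\ Bernoulli once we condition on the high-probability events from \propref{prop:phase-i-guarantee} and \propref{prop:phase-ii-guarantee} listed in the preconditions of \propref{prop:phase-iii-guarantee}; on that conditioning event $\pi$ is fixed and the lower bound $q \geq \zeta/(6KH)$ applies deterministically, so the Chernoff argument goes through. After conditioning, the remaining steps reduce to invocations of the TV estimation lemma and a straightforward triangle-inequality computation.
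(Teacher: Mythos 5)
Your proposal matches the paper's proof essentially step for step: the same concentration argument lower-bounding $N(s,a)$ via the $\zeta/(6KH)$ visitation probability from \lemmaref{lemma:exit-finding-condition-implies-visitation} (the paper packages this as \lemmaref{lemma:sampling-to-meet-threshold}), the same TV estimation bound from \lemmaref{lemma:dynamics-estimation-error-bound}, and the same triangle-inequality computations using $\beta$-separation for part (a) and the latent-hierarchy non-exit constancy for part (b), followed by a union bound over the two events to reach the $1-p/6K$ guarantee. Your remark that $\pi$ is data-dependent and one must condition on the preconditions of \propref{prop:phase-iii-guarantee} to treat the rollouts as i.i.d.\ is correct and makes explicit a point the paper handles implicitly.
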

    \begin{proof}
    	By the choice of $N_\exitdetection$ and the lower bound $P^{\MDP_t}((s, a) \in \tau_\pi) > \zeta/6KH$ from \lemmaref{lemma:exit-finding-condition-implies-visitation}, we guarantee that $N(s, a) \geq N_\thresh^\exitdetection$ with probability at least $1 - p/12K$.
    	Furthermore, due to the choice of $N_\thresh^\exitdetection$, with probability at least $1 - p/12K$, we have that for any $(s, a)$ with $N(s, a) \geq N_\thresh^\exitdetection$,
    	\[
    	\TV{\hat\transition(\cdot \conditionedon s, a) - \transition_t(\cdot \conditionedon s, a)} < \frac{\beta}{4},
    	\]
    	by applying \lemmaref{lemma:dynamics-estimation-error-bound}.
    	We condition on these two events simultaneously for the rest of the proof, which occurs with probability at least $1 - p/6K$.

    	We now prove each part separately.
    	For brevity, we omit $(s, a)$ wherever it is understood.

    	\begin{enumerate}[label=(\alph*)]
    		\item By applying the triangle inequality,
    		\begin{align*}
    		\TV{\transition_t - \transition_{t'}} &\leq \TV{\transition_t - \hat\transition} + \TV{\hat\transition - \hat\transition_{t'}} + \TV{\hat\transition_{t'} - \transition_{t'}} \leq \TV{\hat\transition - \hat\transition_{t'}} + \frac{\beta}{2}.
    		\end{align*}
    		Therefore, by lower bounding the left-hand side using $\beta$-dynamics separation in \assumpref{assumption:dynamics-separation},
    		we find that
    		\[
    		\TV{\hat\transition(\cdot \conditionedon s, a) - \hat\transition_{t'}(\cdot \conditionedon s, a)} > \frac{\beta}{2}.
    		\]

    		\item The triangle inequality implies that
    		\begin{align*}
    		\TV{\hat\transition - \hat\transition_{t'}} \leq \underbrace{\TV{\hat\transition - \transition_t}}_{\leq \beta/4} + \underbrace{\TV{\transition_t - \transition_{t'}}}_{= 0} + \underbrace{\TV{\transition_{t'} - \hat\transition_{t'}}}_{\leq \beta/4} \leq \frac{\beta}{2},
    		\end{align*}
    		where the bound on the first term is provided by \propref{prop:phase-i-guarantee}. \qedhere
    	\end{enumerate}
    \end{proof}

    The prior result demonstrates that if the exit-finding condition is detected at any point by the algorithm, then the algorithm finds a previously undiscovered exit in $\exit{\statespace}$.
    At this point, all that remains is to ensure that the algorithm sufficiently learns the dynamics of the newly-found exit in all of the meta-training tasks.

    \begin{lemma}
    	Fix an $(s, a) \in \exit\statespace$, which was found via exit detection, and let
    	\[
    	N_\thresh^\exitlearning = \Omega\left[L\max\left(\frac{H^4}{\zeta^2}, \frac{1}{\beta^2}\right)\log\frac{SAHN_\exitlearning T}{p}\right]
    	\]
    	Assume we run the exit-learning subroutine with
    	\[
    	N_{\Euler}^{\exitlearning} = \Omega\left[\frac{CS^2AH^3}{\alpha}\log^3\left(\frac{SAHT}{p}\right)\right] \quad \text{and} \quad N_{\mathrm{EL}} = \Omega\left(\frac{CH}{\alpha}N_\thresh^\exitlearning + \frac{C^2H^2}{\alpha^2}\log\frac{SAT}{p}\right).
    	\]
    	in each of the tasks.
    	Then, with probability at least $1 - p/6K$,
    	\[
    	\max_{f: \statespace \to [0, H]}\abs{\left[(\hat\transition_t - \transition_t)f\right](s, a)} \leq \min\left(\frac{\zeta}{24H}, \frac{\beta H}{2}\right)
    	\]
    	for every $t \in [T]$.
    \end{lemma}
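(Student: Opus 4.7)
The plan is to mimic the structure of the proof of \propref{prop:phase-i-guarantee}, now specialized to the single exit $(s,a)$ being learned and to each meta-training task $\MDP_t$ in turn. Concretely, for fixed $t$, the subroutine runs \textsc{Euler} on the shaped MDP $\tilde\MDP$ in which $(s,a)$ terminates and $\tilde r(s',a') = \ind{(s',a') = (s,a)}$, so that for any policy $\pi$ we have $V_0^{\tilde\MDP,\pi}(s_0) = P_{\MDP_t}((s,a) \in \tau_\pi)$ (the dynamics of $\tilde\MDP$ and $\MDP_t$ agree on all trajectories that have not yet hit $(s,a)$). The goal is therefore to ensure that uniformly sampled Euler policies visit $(s,a)$ in $\MDP_t$ often enough that the empirical estimator $\hat\transition_t(\cdot\mid s,a)$ concentrates to within the stated error.

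First I would lower bound the optimal value $V_0^{\tilde\MDP,\ast}(s_0)$. Since $(s,a)$ was identified as a genuine exit by \lemmaref{lemma:beta-level-exit-detection}, precondition (a) of \assumpref{assumption:exit-coverage} guarantees that $(s,a)$ is $\alpha$-important for some meta-training task $\MDP_{t_0}$. Applying \lemmaref{lemma:importance-implies-visitation} with $\epsilon = 0$ yields $\max_\pi P_{\MDP_{t_0}}((s,a)\in\tau_\pi) > \alpha/H$, and invoking \assumpref{assumption:non-limiting-exits} (with the trivial time-varying dynamics that agree with $\MDP_{t_0}$) transfers this to $\max_\pi P_{\MDP_t}((s,a)\in\tau_\pi) \geq \alpha/(CH)$ for the task $\MDP_t$ at hand. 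Hence $V_0^{\tilde\MDP,\ast}(s_0) \geq \alpha/(CH)$.

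Second, I would invoke \lemmaref{lemma:euler-regret-bound} for $\tilde\MDP$ with the chosen $N_\Euler^\exitlearning$; the first-order term $\sqrt{V^\ast SA/N}$ and the second-order term $S^2AH^2/N$ both fall below a constant fraction of $V^\ast \geq \alpha/(CH)$ under the stated scaling, so an Euler policy drawn uniformly from $\Psi$ visits $(s,a)$ in $\MDP_t$ with probability at least $\alpha/(2CH)$, exactly mirroring \lemmaref{lemma:ucbvi-visit-prob}. Next, by \lemmaref{lemma:sampling-to-meet-threshold} and the choice of $N_\exitlearning$ (whose leading $CH/\alpha \cdot N_\thresh^\exitlearning$ term is exactly the budget required to exceed $N_\thresh^\exitlearning$ samples at $(s,a)$, and whose additive Hoeffding term controls the Bernoulli concentration), the counter at $(s,a)$ exceeds $N_\thresh^\exitlearning$ with probability at least $1 - p/(12KT)$. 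Finally, I would apply \lemmaref{lemma:dynamics-estimation-error-bound} with the threshold $N_\thresh^\exitlearning$: since \assumpref{definition:latent-hierarchy}(b) guarantees $\transition_t(\cdot\mid s,a)$ is supported on $\entry\statespace$ of cardinality $L$, the $L$ (rather than $S$) in $N_\thresh^\exitlearning$ suffices to certify $\sup_{f:\statespace\to[0,H]}\abs{[(\hat\transition_t - \transition_t)f](s,a)} \leq \min(\zeta/24H,\beta H/2)$ with probability at least $1 - p/(12KT)$. A union bound over the two events for this $t$, followed by a union bound over $t \in [T]$, yields the stated $1 - p/6K$ failure probability.

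The main obstacle is bookkeeping at the lower bound step: one must carefully argue that the exit-importance guarantee of \assumpref{assumption:exit-coverage}(a), which lives in a single meta-training task, can be transferred to every $t \in [T]$ via \assumpref{assumption:non-limiting-exits}, and that the $\alpha/(CH)$ bound is exactly what the parameter choices $N_\Euler^\exitlearning$ and $N_\exitlearning$ are calibrated against. Everything else is an almost mechanical rerun of the Phase I argument with $S$ replaced by $L$ in the support size and an extra $C$ factor from the reachability transfer.
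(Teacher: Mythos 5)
Your proposal is correct and follows essentially the same approach as the paper's proof: derive $(\alpha/H)$-significance of $(s,a)$ in some task from \assumpref{assumption:exit-coverage}(a) via \lemmaref{lemma:importance-implies-visitation}, transfer it to $(\alpha/CH)$-significance in every task via \assumpref{assumption:non-limiting-exits}, invoke \lemmaref{lemma:euler-regret-bound} with the stated $N_\Euler^\exitlearning$ to get visitation probability $\alpha/(2CH)$ for a uniformly drawn policy, apply \lemmaref{lemma:sampling-to-meet-threshold} to exceed $N_\thresh^\exitlearning$ counts, apply \lemmaref{lemma:dynamics-estimation-error-bound} specialized to the support $\entry\statespace$ of size $L$, and union-bound over events and over $t \in [T]$. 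The only cosmetic difference is the per-event failure-probability bookkeeping (the paper uses $p/18TK$ per event with three events; you fold the Euler regret event in differently), which does not affect correctness.
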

    \begin{proof}
    	Fix a task $t \in [T]$.
    	Note that \assumpref{assumption:exit-coverage} implies that $(s, a)$ is $(\alpha/H)$-significant for some task.
    	Then, $(s, a)$ must be $(\alpha/CH)$-significant for all of the other tasks by \assumpref{assumption:non-limiting-exits}.

    	By applying \lemmaref{lemma:euler-regret-bound}, the set of policies found by the exit-learning subroutine for every task $t \in [T]$ satisfies
    	\begin{align*}
    	&\max_{\pi}\prob{(s, a) \in \tau_\pi} - \frac{1}{N_\Euler^\exitlearning}\sum_{k = 1}^{N_\Euler^\exitlearning}P((s, a) \in \tau_{\pi_k}) \\
    	&\qquad \lesssim \sqrt{\max_{\pi}\prob{(s, a) \in \tau_\pi}\frac{SA}{N_\Euler^\exitlearning}\log\frac{SAHTN_\Euler^\exitlearning}{p}} + \frac{S^2AH^2}{N_\Euler^\exitlearning}\log^3\frac{SAHTN_\Euler^\exitlearning}{p}
    	\end{align*}
    	with probability at least $1 - p/18TK$.
    	By setting
    	\[
    	N_\Euler^\exitlearning \gtrsim \frac{CS^2AH^3}{\alpha}\log^3\left(\frac{HSAT}{p}\right),
    	\]
    	we thus find that for any $t \in [T]$,
    	\[
    	\frac{\alpha}{2CH} \leq \frac{1}{2}\max_{\pi}P((s, a) \in \tau_\pi) \leq \frac{1}{N_\Euler^\exitlearning}\sum_{\pi \in \Phi_t(s, a)}\prob{(s, a) \in \tau_\pi}.
    	\]
    	Note that the right-hand side is exactly the probability that the trajectory of a randomly chosen policy in $\Phi_t(s, a)$ contains $(s, a)$ in the trajectory.
    	Therefore, by applying \lemmaref{lemma:sampling-to-meet-threshold}, playing
    	\[
    	N_{\mathrm{EL}} = \Omega\left(\frac{CH^2N_\thresh^\exitlearning}{\alpha} + \frac{C^2H^4}{\alpha^2}\log\frac{SAT}{p}\right)
    	\]
    	is sufficient to guarantee that we obtain at least $N_\thresh^\exitlearning$ samples from $\transition_t(\cdot \conditionedon s, a)$ with probability at least $1 - p/18TK$.
    	Since $(s, a) \in \exit\statespace$, $\transition_t(\cdot \conditionedon s, a)$ (and by extension, $\hat\transition_t(\cdot \conditionedon s, a)$) is supported on $\entry\statespace$.
    	Therefore, we can modify the proof in \lemmaref{lemma:dynamics-estimation-error-bound} so that with probability at least $1 - p/18TK$ we get the bound
    	\begin{align*}
    	\max_{f: \statespace \to [0, H]}\abs{\left[(\hat\transition_t - \transition_t)f\right](s, a)} &\leq \max_{f: \entry\statespace \to [0, H]}\abs{\left[(\hat\transition_t - \transition_t)f\right](s, a)} \\
    	&\leq \min\left(\frac{\zeta}{24H}, \frac{\beta H}{2}\right)
    	\end{align*}
    	with $N_\thresh^\exitlearning$ depending linearly on $L$ instead of $S$.

    	Note that by performing a union-bound, all events occur with probability at least $1 - p/6TK$.
    	Performing a second union-bound over all of the available tasks results in the desired failure probability.
    \end{proof}

    At this point, we have effectively proven the second half of our Phase III guarantee.
    All that remains is to prove that if $\set{(s, a) \suchthat \textsc{IsExit}[s, a]} = \exit\statespace$, then the algorithm terminates without triggering the exit-finding condition.

    \begin{lemma}
    	Assume that $E = \set{(s, a) \suchthat \textsc{IsExit}[s, a]} = \exit\statespace$.
    	Then, under the preconditions of \propref{prop:phase-iii-guarantee}, every task satisfies
    	\[
    	\abs{\tilde{V}^t(s_0) - \hat{V}_t} \leq \frac{2}{3}\zeta.
    	\]
    \end{lemma}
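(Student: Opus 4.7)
The plan is to observe that, when $E = \exit\statespace$, the set $\imaginedMDPs_t(E)$ collapses in the sense of \emph{true} dynamics: every $\MDP \in \imaginedMDPs_t(E)$ has the same transition kernel $\transition_t$ at every $(s, a)$. For $(s, a) \in E = \exit\statespace$, the index is forced to be $t$ by the definition of $\imaginedMDPs_t(E)$, so the dynamics are exactly $\transition_t(\cdot \conditionedon s, a)$. For $(s, a) \notin E$, part~(a) of \defref{definition:latent-hierarchy} asserts that $\transition_k(\cdot \conditionedon s, a)$ is identical across $k \in [T]$, and since the reference ``task~$0$'' is $\MDP_1$ in disguise, its true dynamics coincide with $\transition_1 = \transition_t$ on non-exits. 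Consequently $V_0^{\MDP, \pi}(s_0) = V_0^{\MDP_t, \pi}(s_0)$ for every $\MDP \in \imaginedMDPs_t(E)$ and every policy $\pi$, so $\sup_{\MDP, \pi} V_0^{\MDP, \pi}(s_0) = V_0^{\MDP_t, \ast}(s_0)$.

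With this collapse, I would bound $\tilde{V}^t$ using the uniform estimation lemma. By \lemmaref{lemma:opt-img-vi-optimism}, \textsc{BOAT-VI}'s value is $\hat{V}_0^{\MDP^\ast, \pi^\ast}(s_0)$ for the estimated maximizing pair $(\MDP^\ast, \pi^\ast)$ in $\imaginedMDPs_t(E)$, and \lemmaref{lemma:uniform-imagined-MDP-error-bound} controls $\abs{\hat{V}_0^{\MDP, \pi}(s_0) - V_0^{\MDP, \pi}(s_0)}$ by $\zeta/6$ uniformly in $(\MDP, \pi)$. Combining these, the upper direction follows from $V_0^{\MDP^\ast, \pi^\ast}(s_0) \leq V_0^{\MDP_t, \ast}(s_0)$, and the lower direction from plugging $\pi_t^\ast$ paired with any element of $\imaginedMDPs_t(E)$ into the estimated value. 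Hence $\abs{\tilde{V}^t - V_0^{\MDP_t, \ast}(s_0)} < \zeta/6$.

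The remaining piece compares $\hat{V}_t$ to $V_0^{\MDP_t, \ast}(s_0)$. Parts~(a) and~(b) of \propref{prop:phase-i-guarantee} together give $\abs{\hat{V}_t - V_0^{\MDP_t, \ast}(s_0)} < \zeta/3$: the \UCBVI{} regret bound controls $V_0^\ast(s_0) - \frac{1}{N_\UCBVI}\sum_k V_0^{\pi_k^{(t)}}(s_0)$ by $\zeta/6$, and the Hoeffding-style bound on sampled returns controls the deviation of $\hat{V}_t$ from $\frac{1}{N_\UCBVI}\sum_k V_0^{\pi_k^{(t)}}(s_0)$ by $\zeta/6$. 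A triangle inequality then yields $\abs{\tilde{V}^t - \hat{V}_t} < \zeta/6 + \zeta/3 = \zeta/2 < (2/3)\zeta$, as required.

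The only real subtlety, and what I would flag as the main obstacle, is to avoid conflating the estimated reference dynamics $\hat\transition_0$ (which approximates $\transition_1$) with the abstract ``index~$0$'' appearing in the definition of $\imaginedMDPs_t(E)$. The collapse argument operates on true dynamics, where $\transition_0 = \transition_1 = \transition_t$ on non-exits via the latent hierarchy; the estimation gap between $\hat\transition_0$ and $\transition_0$ has already been absorbed into the uniform $\zeta/6$ slack of \lemmaref{lemma:uniform-imagined-MDP-error-bound}. Once this separation between ``true'' and ``estimated'' dynamics is kept straight, the remainder is routine triangle-inequality bookkeeping.
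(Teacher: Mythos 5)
Your argument is correct and follows the paper's proof essentially verbatim: identify that $\imaginedMDPs_t(E)$ collapses to $\set{\MDP_t}$ when $E = \exit\statespace$ (since non-exit dynamics are task-invariant by \defref{definition:latent-hierarchy}(a)), invoke \lemmaref{lemma:opt-img-vi-optimism} to identify $\tilde{V}^t$ with the estimated optimal value of $\MDP_t$, then triangle-inequality via \lemmaref{lemma:uniform-imagined-MDP-error-bound} and \propref{prop:phase-i-guarantee}(a)--(b). Your final constant $\zeta/2$ is the more careful tally (the paper's displayed $\zeta/3$ appears to drop one $\zeta/6$ term), but both sit below the required $(2/3)\zeta$ threshold, so the conclusion is unaffected.
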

    \begin{proof}
    	Fix a task $t \in [T]$.
    	Once $E = \exit\statespace$, then $\imaginedMDPs_t(E) = \set{\MDP_t}$, since the only $(s, a)$-dynamics that can be substituted from other tasks are those of non-exits, which do not change between tasks.
    	Therefore, by \lemmaref{lemma:opt-img-vi-optimism}, $\tilde{V}^t(s_0) = \hat{V}_0^{\MDP, \ast}(s_0)$.
    	Finally, by applying \lemmaref{lemma:uniform-imagined-MDP-error-bound}, we thus find that
    	\[
    	\tilde{V}^t(s_0) - \hat{V}_t = \left[\hat{V}_0^{\MDP, \ast}(s_0) - V_0^{\MDP, \ast}(s_0)\right] + \left[V_0^{\MDP, \ast}(s_0) - \hat{V}_t\right] < \frac{\zeta}{3}.
    	\]
    	The desired result follows since the argument holds for any task $t$.
    \end{proof}

  \subsubsection{Proof of \thmref{thm:provable-exit-detection}}

    In this section, we compile the guarantees provided by each of the three phases into a proof of \thmref{thm:provable-exit-detection}.

    \begin{proof}[Proof of \thmref{thm:provable-exit-detection}]
    	We condition on the following high-probability events:
    	\begin{enumerate}[label=(\alph*)]
    		\item \propref{prop:phase-i-guarantee} guarantees for all $\MDP_t$ with $t \in [T]$.

    		\item \propref{prop:phase-ii-guarantee}.
    	\end{enumerate}
    	Via a union-bound, this holds with probability at least $1 - (2/3)p$.

    	To prove the theorem, we provide an induction-based analysis of Phase III.
    	In particular, we will show that while $E = \set{(s, a) \suchthat \textsc{IsExit}[s, a]} \subsetneq \exit\statespace$, Phase III will add at least one state-action pair to $E$ that belongs to $\exit\statespace \setminus E$.

    	Formally, let $F_k$ denote the internal state of the algorithm after it has added $k$ state-action pairs.
    	Note that with $k = 0$, $\set{(s, a) \suchthat \textsc{IsExit}[s, a]}$ in $F_k$ is empty.
    	Thus, $F_k$ satisfies the preconditions of \propref{prop:phase-iii-guarantee}, which in turn implies that the algorithm adds a new state-action pair in $\exit\statespace$ and sufficiently learns its dynamics for all tasks with probability at least $1 - p/3K$.
    	In short, the internal state of the algorithm at time $F_1$ also satisfies the preconditions of \propref{prop:phase-iii-guarantee} with probability at least $1 - p/3K$.
    	More generally, \propref{prop:phase-iii-guarantee} ensures that if $F_k$ satisfies the preconditions of \propref{prop:phase-iii-guarantee}, then so does $F_{k + 1}$.
    	Therefore, with probability at least $1 - p/3$, $F_K$ satisfies the preconditions of \propref{prop:phase-iii-guarantee}, which necessarily implies that $\set{(s, a) \suchthat \textsc{IsExit}[s, a]} = \exit\statespace$ in $F_K$, and thus the algorithm exits as desired.
    	By performing a union bound, all this occurs with probability at least $1 - p$.
    \end{proof}

\subsection{Proving the Meta-Training Guarantee}

  Having demonstrated that $\exit\statespace$ can be successfully recovered by interacting with the environment, we now show that the data can also be used to determine exit reachability and implement the hierarchy oracle.

  We formally state our main result here:

  \begin{theorem}
  	\label{thm:meta-train-guarantee-formal}
  	Assume that $\MDP_1, \dots, \MDP_T$ have a latent hierarchy with respect to $(\set{Z_c}, \entry{\cdot}, \exit{\cdot})$, and assume that these tasks satisfy the $(\alpha, \zeta)$-coverage condition in \assumpref{assumption:exit-coverage}.
  	Furthermore, we assume the additional assumptions in \secref{sec:other-src-assumptions}.
  	Then, by running the algorithm in \secref{sec:src-alg-formal} with the parameters in \tableref{table:src-param-choices}, with probability at least $1 - p$, the collected data can be used to implement the following:
  	\begin{enumerate}[label=(\alph*)]
  		\item An $\epsilon$-suboptimal hierarchy oracle.

  		\item A function $\avexit{s}: \entry\statespace \to \mathcal{P}(\exit\statespace)$ such that, given $s \in \entry{\cluster_s}$, returns $\exit{\cluster_s}$.
  	\end{enumerate}
  	The algorithm achieves both of these with query complexity
  	\begin{align*}
  	\tilde{O}&\left[\frac{S^4A}{\min(\rho\min(\epsilon, \epsilon_0), \zeta/C)} + \frac{S^2A}{\min(\rho\min(\epsilon, \epsilon_0)^2, \zeta^2/C)}\right. \\
  	&\qquad + \left.T\left(\frac{SA}{\min(\alpha, \zeta)^2} + \frac{KS}{\zeta\beta^2} + \frac{K^2}{\zeta^2} + \frac{CKS^2A}{\alpha} + \frac{CKL}{\alpha\max(\zeta, \beta)^2}\right)\right]\mathrm{poly}(H).
  	\end{align*}
  \end{theorem}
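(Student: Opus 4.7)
The plan is to take Theorem \ref{thm:provable-exit-detection} as the backbone, which already establishes that, on a high-probability event $\mathcal{E}_1$ of measure at least $1 - 2p/3$, the algorithm terminates with $\textsc{IsExit}[\cdot,\cdot]$ exactly indicating $\exit\statespace$, the task-specific estimators $\hat\transition_t$ satisfy the conclusion of Proposition \ref{prop:phase-i-guarantee} on the visited exits, and the reference dynamics $\hat\transition_0$ satisfies the $\mu$-weighted error bound of Proposition \ref{prop:phase-ii-guarantee} with significance threshold $\delta$ chosen as $\min(\rho\min(\epsilon,\epsilon_0),\zeta/C)/\mathrm{poly}(H,S)$. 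Conditioning on $\mathcal{E}_1$, the remaining work is to implement (a) and (b) with no extra queries and to sum up the costs accumulated across the three phases.

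For (a), given a query $(x, f, r, \tilde H)$ I would plug in the surrogate MDP $\hat\MDP(x, f, r, \tilde H)$ which uses $\hat\transition_0$ at every $(s, a) \notin \exit\statespace$ and the deterministic kernel $\delta(f(s, a))$ at every $(s, a) \in \exit\statespace$ (both decisions are made possible by the exact knowledge of $\exit\statespace$), and return the greedy policy $\hat\pi$ from value iteration on $\hat\MDP$. Bounding $V^{\MDP(x,f,r,\tilde H),\ast}(x) - V^{\MDP(x,f,r,\tilde H),\hat\pi}(x)$ then reduces to the same four-term telescoping decomposition used in the proof of Lemma \ref{lemma:uniform-imagined-MDP-error-bound}, but with $\exit\statespace$ contributing zero error (the transitions are exactly correct) and the non-exit term handled by the Phase II Cauchy--Schwarz / importance-weighting argument against $\mu$. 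The transfer from $\mu$, which is anchored at $s_0$ in $\MDP_1$, to reachability in $\MDP(x, f, r, \tilde H)$ starting at $x$ chains Assumption \ref{assumption:entrance-reachability} ($x$ is $\rho$-significant in $\MDP_1$), Assumption \ref{assumption:non-limiting-exits} (reachability is preserved across exit configurations up to the factor $C$), and Assumption \ref{assumption:in-cluster-exit-reachability} (in-cluster exits are $\delta$-reachable), so that any $\delta$-significant in-cluster state seen from $x$ in the surrogate MDP is $\Omega(\rho\delta/C)$-significant in $\MDP_1$. The $\delta$-insignificant contribution is bounded by $H^2 S \delta \le \epsilon/2$ as in term (IV) of Lemma \ref{lemma:uniform-imagined-MDP-error-bound}, and the $\delta$-significant one by $\epsilon/2$ via the Phase II bound, closing the $\epsilon$-suboptimality claim.

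For (b), I would implement $\avexit{s}$ by invoking the oracle from (a) once per candidate pair $(s, (g, a)) \in \entry\statespace \times \exit\statespace$: use starting state $s$, horizon $\tilde H = H$, reward $r(s', a') = \ind{(s', a') = (g, a)}$, and termination map $f((g,a)) = \terminate_S$ while $f((g', a')) = \terminate_F$ for every other exit. Under Assumption \ref{assumption:in-cluster-exit-reachability}, if $(g, a) \in \exit{Z_s}$ the optimal value is at least $\delta$, whereas if $(g, a) \notin \exit{Z_s}$ any reward-collecting trajectory must pass through a different exit and thus terminates in $\terminate_F$ with zero return. Provided $\epsilon < \delta/2$, which is already enforced by the Phase II parameter choices through $\epsilon_0$, a single threshold test on the oracle's returned value distinguishes the two cases and recovers $\exit{Z_s}$ exactly. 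The query complexity is then obtained by summing $T \cdot H \cdot (N_\UCBVI + N_\taskspecific)$ for Phase I, $2H \cdot (S\, N_\Euler^\rewardfree + N_\rewardfree)$ for Phase II, and, across at most $K$ outer iterations, $T \cdot H \cdot N_\exitdetection$ trajectories of exit detection plus at most $K T$ invocations of \textsc{Learn-Exit} of cost $H \cdot (N_\Euler^\exitlearning + N_\exitlearning)$ each, substituting from Table \ref{table:src-param-choices} and using $K \le SA$, $L \le S$ to match the claimed expression up to $\mathrm{poly}(H)$ and logarithmic factors.

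The main obstacle I expect is the hierarchy oracle argument in (a): Phase II only collects data in $\MDP_1$ from $s_0$, whereas the oracle must plan $\epsilon$-suboptimally in the combinatorially large family of MDPs $\MDP(x, f, r, \tilde H)$ indexed by arbitrary entrances $x$, termination maps $f$, reward functions $r$, and horizons $\tilde H \le H$. Making this transfer rigorous requires chaining the three reachability assumptions uniformly in $(x, f, r, \tilde H)$, verifying that $\delta$ in Phase II was chosen small enough after multiplication by the $H^2 S/\rho$ blow-up from the non-exit term and the $C$ blow-up from Assumption \ref{assumption:non-limiting-exits}, and handling the stationary dynamics so that no time-dependent union bound is needed. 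The remaining pieces---the correctness of $\avexit{\cdot}$ and the complexity accounting---are then routine.
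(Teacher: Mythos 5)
Your approach is structurally the same as the paper's — use $\hat\transition_0$ on non-exits and the exactly known terminal kernel on exits, bound the oracle's error via the performance-difference / Cauchy--Schwarz decomposition, implement $\avexit{\cdot}$ by calling the oracle with an indicator reward and thresholding, and tally trajectory counts — but there is a genuine gap in the reachability transfer for part (a), which you yourself flag as the main obstacle.

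The gap is that \assumpref{assumption:non-limiting-exits} cannot be the mechanism for moving from $\mu$ (anchored at $s_0$ in $\MDP_1$) to reachability in $\MDP(x,f,r,\tilde H)$ from $x$. That assumption compares $\max_\pi P_\MDP(s\in\tau_\pi)$ to $\max_\pi P_{\MDP_t}(s\in\tau_\pi)$ from a \emph{common} starting state, for $\MDP$ whose dynamics are time-varying mixtures of $\transition_1,\dots,\transition_T$. The oracle's MDP has a \emph{different} initial state $x$, and its exit dynamics are $\delta(f(s,a))$ with $f$ valued in $\set{\terminate_S,\terminate_F}$, which is not of the form $\transition_{t(h,s,a)}$; so the assumption is not applicable and, even if it were, it would not change the starting state. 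The correct bridge, which the paper isolates in \lemmaref{lemma:Z-significance-to-2H-significance}, is a \emph{policy concatenation} argument: take the $x$-reaching policy in $\MDP_1$, then switch to the $s^\ast$-reaching policy (which stays inside the cluster) after reaching $x$. Because the cluster's interior dynamics agree between $\MDP(x,f,r,\tilde H)$ and $\MDP_1$, this concatenation reaches $s^\ast$ with probability at least $\rho\delta$ --- but it takes up to $2H$ time steps, which is precisely why Phase II runs reward-free RL with horizon $2H$ (the algorithm constructs $\MDP_g$ from $\MDP(2H)$ and the Phase II guarantee is phrased in terms of significance in $\MDP_1(2H)$). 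Your proposal never mentions the doubled horizon or the concatenation step, and without them the transfer does not close. Fixing this removes the spurious $C$ factor from your $\Omega(\rho\delta/C)$ claim (for the oracle, $C$ only enters via the exit-detection use of $\mu$, not via the hierarchy oracle's significance transfer). Parts (b) and the query-complexity tally are essentially the paper's argument.
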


  \subsubsection{Implementing the Hierarchy Oracle}

    We first show that we can implement the hierarchy oracle in this section.
    In particular, we have the following result:

    \begin{proposition}
    	\label{prop:solving-disconnected-exits}
    	Let $\MDP$ be the MDP corresponding to the index $(x, f, r, \tilde{H})$ as described in \defref{definition:hierarchy-oracle}.
    	Then, given $(x, f, r, \tilde{H})$, we can form the following estimator for $\transition_f$:
    	\[
    	\hat\transition_f(\cdot \conditionedon s, a) =
    	\begin{cases}
    	\delta(f(s, a)) & (s, a) \in \exit\statespace \\
    	\delta(s) & \text{$s = \terminate_S$ or $s = \terminate_F$} \\
    	\hat\transition_0(\cdot \conditionedon s, a) & \text{otherwise}
    	\end{cases},
    	\]
    	where $\hat\transition_0$ is the estimator obtained from Phase II in \secref{sec:src-alg-formal}.
    	Assuming that the high-probability event in \propref{prop:phase-ii-guarantee} holds for $\delta \leq \rho\epsilon/2SH^2$, value iteration using $\hat\transition_f$ returns a policy $\pi$ such that $V^{\MDP, \ast}_0(x) - V^{\MDP, \pi}_0(x_0) \leq \epsilon$.
    \end{proposition}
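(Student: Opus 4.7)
The plan is to follow the same template as the proof of \lemmaref{lemma:uniform-imagined-MDP-error-bound}: bound the value-function estimation error uniformly over policies and conclude by optimality of the value-iteration policy $\pi$. Since $\pi$ is greedy with respect to $\hat{V}$, writing $\pi^\ast$ for the true optimal policy gives $V^{\MDP, \ast}_0(x) - V^{\MDP, \pi}_0(x) \leq 2\sup_{\pi'}|\hat{V}^{\MDP, \pi'}_0(x) - V^{\MDP, \pi'}_0(x)|$, so it suffices to bound the right-hand side by $\epsilon/2$.

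Next, I would invoke the performance difference lemma (\lemmaref{lemma:performance-difference}). By construction $\hat\transition_f$ and $\transition_f$ agree exactly on $\exit\statespace$ and on the terminal states, so only in-cluster non-exit pairs $(s, a)$ contribute. On such pairs both reduce to the shared dynamics $\transition(\cdot \conditionedon s, a)$ estimated by $\hat\transition_0$ in Phase II. I would then split visitations into $\delta$-significant and $\delta$-insignificant states (relative to $\MDP_1(2H)$), mirroring the decomposition used in \lemmaref{lemma:uniform-imagined-MDP-error-bound}.

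The technical heart is bridging visitation probabilities from $\MDP(x, f, r, \tilde{H})$ to $\MDP_1(2H)$, since Phase II's sampling distribution $\mu$ is tied to the latter. Given any policy $\pi'$ in $\MDP(x, f, r, \tilde{H})$ starting at $x$, I would construct a lifted policy in $\MDP_1(2H)$ that first executes an optimal $x$-reaching policy (succeeding with probability at least $\rho$ by \assumpref{assumption:entrance-reachability}), then mimics $\pi'$. Because non-exit dynamics are shared and trajectories in $\MDP(x, f, r, \tilde{H})$ remain within the cluster until an exit/terminal is hit, states visited before termination are matched; the doubled horizon in $\MDP_1(2H)$ absorbs the extra $H$ steps used by the $x$-reaching prefix. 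This yields $P^{\MDP, \pi'}_h(s) \leq (1/\rho)\max_{\pi''}P^{\MDP_1(2H), \pi''}(s \in \tau_{\pi''})$, which is in turn at most $4SAH/\rho \cdot \mu(s, a)$ for significant states by \propref{prop:phase-ii-guarantee}(a), and at most $\delta/\rho$ for $\delta$-insignificant states. Applying Cauchy--Schwarz on the significant contribution and then \propref{prop:phase-ii-guarantee}(b) controls that part by $\epsilon/4$ under the assumed $\delta$; the $\delta$-insignificant contribution totals $O(SH^2\delta/\rho) \leq \epsilon/4$ by the hypothesis $\delta \leq \rho\epsilon/2SH^2$.

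The main obstacle I anticipate is justifying the lifting cleanly: the oracle MDP starts at $x \in \entry\statespace$ rather than $s_0$, its exit dynamics are deterministic (terminating), and its horizon is $\tilde{H}$ rather than $H$, so the transfer to reachability in $\MDP_1(2H)$ is not syntactic. Making this precise requires combining \assumpref{assumption:entrance-reachability} with \assumpref{assumption:non-limiting-exits}, using that only pre-exit in-cluster visitations matter. Once this lifting is established, the remainder is a mechanical replication of the ``Bounding (II)'' and ``Bounding (IV)'' steps in \lemmaref{lemma:uniform-imagined-MDP-error-bound} with an extra $1/\rho$ factor, and the stated choice $\delta \leq \rho\epsilon/2SH^2$ falls out of balancing the two contributions.
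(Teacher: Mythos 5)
Your proposal matches the paper's argument almost step for step: the concatenation of an optimal $x$-reaching policy with the given policy $\pi'$, which transfers visitation probabilities in $\MDP(x, f, r, \tilde{H})$ to $\MDP_1(2H)$ at the cost of a $\rho$ factor, is precisely the content of \lemmaref{lemma:Z-significance-to-2H-significance}, and the subsequent performance-difference decomposition, Cauchy--Schwarz on significant states, and $\delta$-insignificance bound mirror \lemmaref{lemma:cluster-mdp-value-estimation-error}. The only cosmetic difference is that the paper takes $\delta$-significance relative to $\MDP$ from $x$ and then lifts, whereas you split directly relative to $\MDP_1(2H)$; both work, and the final three-term optimality decomposition you use is the same as the paper's.
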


    Throughout the rest of this section, we fix the tuple $(x, f, r, \tilde{H})$ and the corresponding MDP $\MDP$.
    Furthermore, we write $\cluster$ for the cluster containing $x$.

    To prove \propref{prop:solving-disconnected-exits}, we will show that $\MDP$ can be sufficiently simulated so that the value of any policy can be reasonably estimated.
    Given this simulation result, we can then show that value iteration finds the desired policy.
    This simulation result depends on the following intermediate result, which provides insight as to why Phase II data is sufficient:

    \begin{lemma}
    	\label{lemma:Z-significance-to-2H-significance}
    	For any $s^\ast \in \cluster$,
    	\[
    	\left[\max_{\pi}P^{\MDP_1}(x \in \tau_\pi)\right]\left[\max_\pi P^{\MDP}(s^\ast \in \tau_\pi)\right] \leq \max_\pi P^{\MDP_1(2H)}(s^\ast \in \tau_\pi)
    	\]
    \end{lemma}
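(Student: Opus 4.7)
The plan is to prove this inequality by a policy-concatenation argument: an optimal policy for visiting $s^\ast$ in $\MDP_1(2H)$ may in particular first steer to the entrance $x$ (using up to $H$ steps) and then navigate within the cluster $\cluster$ to reach $s^\ast$ (using up to another $H$ steps). Thus the right-hand side is at least the product of the optimal reaching probabilities of these two subproblems, and that product is exactly the left-hand side.

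First I fix policies $\pi_1$ in $\MDP_1$ and $\pi^\MDP$ in $\MDP$ achieving $p_1 \defas \max_\pi P^{\MDP_1}(x \in \tau_\pi)$ and $p_2 \defas \max_\pi P^\MDP(s^\ast \in \tau_\pi)$, noting that $\MDP$ starts at $x$ by its definition. I then construct a (history-dependent) composite policy $\bar\pi$ on $\MDP_1(2H)$ that plays $\pi_1$ until $x$ is first visited at some time $\tau_1 \leq H-1$, and then switches to $\pi^\MDP$ re-indexed so that its step $0$ is executed at time $\tau_1$ (playing arbitrarily if $x$ is never reached). Because $\tilde H \leq H$ the composite policy always has enough remaining horizon to run $\pi^\MDP$ to completion, and the first-phase event $\{\tau_1 \leq H-1\}$ has probability exactly $p_1$, since the first $H$ steps of the $\MDP_1(2H)$ rollout under $\pi_1$ are distributed identically to a full $\MDP_1$ rollout under $\pi_1$.

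The crux is showing that, conditional on reaching $x$ at time $\tau_1$, the re-indexed $\pi^\MDP$ visits $s^\ast$ with probability at least $p_2$ in $\MDP_1(2H)$, even though $\pi^\MDP$ was optimized against $\transition_f$ rather than $\transition^{\MDP_1}$. The key observation is that in $\MDP$ the dynamics $\transition_f$ send every $(s,a) \in \exit\statespace$ deterministically to the absorbing terminal states $\terminate_S$ or $\terminate_F$, both of which are distinct from $s^\ast \in \cluster$. Hence every trajectory of $\pi^\MDP$ that visits $s^\ast$ must do so strictly before using any exit, so the prefix up to the first visit of $s^\ast$ consists entirely of state-action pairs in $\interior{\cluster}$. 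By part (a) of \defref{definition:latent-hierarchy} the interior transition kernel is identical across all tasks and in particular matches $\transition^{\MDP_1}$, so the law of such prefixes coincides under $\transition_f$ and under $\transition^{\MDP_1}$ starting from $x$; this yields the conditional probability bound of $p_2$. Multiplying gives $P^{\MDP_1(2H)}(s^\ast \in \tau_{\bar\pi}) \geq p_1 p_2$, and taking the maximum on the right-hand side completes the proof. The main obstacle is cleanly articulating this reach-before-exit coupling; once it is in place the rest of the argument is immediate, with the only minor technicality being that $\bar\pi$ is history-dependent, which is harmless either by permitting such policies in the max or by augmenting the state with a visited-flag for $x$.
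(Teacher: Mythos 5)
Your proof is correct and follows the same concatenation strategy as the paper: run $\pi_1$ (a maximizer of $P^{\MDP_1}(x \in \tau_\pi)$) until $x$ is reached, then switch to $\pi^\MDP$ (a maximizer of $P^{\MDP}(s^\ast \in \tau_\pi)$), exploiting the doubled horizon to fit both phases, and closing the history-dependence gap by observing that the optimum over Markov policies is no smaller. The paper even uses the same final device (optimality of history-independent policies in a suitable absorbing-state MDP). You go further than the paper in one welcome respect: the paper asserts the concatenated policy reaches $s^\ast$ with probability at least $p_1 p_2$ without addressing that $\pi^\MDP$ is optimized against the modified kernel $\transition_f$ of $\MDP$ rather than $\transition^{\MDP_1}$; your reach-before-exit coupling---every $s^\ast$-visiting trajectory in $\MDP$ stays in $\interior{\cluster}$ up to the first visit, where $\transition_f$ and $\transition^{\MDP_1}$ agree by \defref{definition:latent-hierarchy}(a)---cleanly justifies that this mismatch is harmless and in fact gives $\geq p_2$ in $\MDP_1$ since exits there may even return.
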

    \begin{proof}
    	First, we note that there exists an MDP such that $P^{\MDP_1(2H)}(s^\ast \in \tau_\pi)$ is the corresponding value function.
    	In particular, modifying $\MDP_1(2H)$ so that any action from $s^\ast$ leads to a terminal state and defining $r(s, a) = \ind{s = s^\ast}$ results in such an MDP.

    	Now, let $\pi_x$ and $\pi_{s^\ast}$ be the policies achieving
    	\[
    	\max_\pi P^{\MDP_1}(x \in \tau_\pi) \quad \text{and} \quad \max_{\pi}P^{\MDP}(s^\ast \in \tau_\pi),
    	\]
    	respectively.
    	Consider the concatenation of $\pi_x$ and $\pi_{s^\ast}$ into a history-dependent policy that runs $\pi_x$ until the agent reaches $s$, and switches to $\pi_{s^\ast}$ thereafter.
    	This policy reaches $s$ with probability at least
    	\[
    	\left[\max_\pi P^{\MDP_1}(x \in \tau_\pi)\right]\left[\max_{\pi}P^{\MDP}(s^\ast \in \tau_\pi)\right].
    	\]
    	within the modified MDP described above.
    	Since the optimal value among all policies is achieved by a history-independent policy, we obtain the desired inequality.
    \end{proof}

    Informally, the prior result states that if $x$ is reachable within horizon $H$, then any state reachable from $x$ within $\cluster$ is also reachable in $\MDP_1$ within a $2H$ horizon.
    Therefore, performing reward-free RL with horizon $2H$ during Phase II provides coverage over all clusters.
    Now, we prove the simulation result.

    \begin{lemma}
    	\label{lemma:cluster-mdp-value-estimation-error}
    	Assume that the Phase II guarantee in \propref{prop:phase-ii-guarantee} is instantiated for $\delta \leq \rho\epsilon/4SH^2$.
    	Then, if $V^\pi$ is the value of $\pi$ under $\MDP$, and $\hat{V}^\pi$ is its corresponding estimate under $\hat\transition_f$, then
    	\[
    	\abs{\hat{V}^\pi_0(x) - V^\pi_0(x)} \leq \frac{\epsilon}{2}.
    	\]
    \end{lemma}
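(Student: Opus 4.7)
The plan is to reduce the value gap to an expected Bellman error via the performance difference lemma (\lemmaref{lemma:performance-difference}) and then control that error using the Phase II guarantees, in close analogy with the proof of \lemmaref{lemma:uniform-imagined-MDP-error-bound}. Since $\transition_f$ and $\hat\transition_f$ agree exactly on $\exit\statespace$ and on the absorbing terminal states $\terminate_S, \terminate_F$, the only $(s,a)$ pairs contributing to the gap are interior non-terminal pairs, where the error is $(\hat\transition_0 - \transition_0)\hat V^\pi_{h+1}(s,a)$. Thus
\[
\abs{\hat V_0^\pi(x) - V_0^\pi(x)} \;\le\; \sum_{h=0}^{\tilde H - 1}\sum_{(s,a)\notin \exit\statespace}\abs{[(\hat\transition_0 - \transition_0)\hat V^\pi_{h+1}](s,a)}\,P^{\MDP,\pi}_h(s,a).
\]

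The next step is to split the inner sum according to whether $s$ is $\delta$-significant in $\MDP_1(2H)$ for the threshold $\delta = \rho\epsilon/(4SH^2)$ inherited from \propref{prop:phase-ii-guarantee}. For $\delta$-insignificant $s$, I would use the crude bound $|\cdot| \le H$ and control the visitation via \lemmaref{lemma:Z-significance-to-2H-significance} together with \assumpref{assumption:entrance-reachability}: since $x \in \entry\statespace$ is $\rho$-significant in $\MDP_1$, the lemma yields $\max_\pi P^{\MDP}(s\in\tau_\pi) \le \rho^{-1}\max_\pi P^{\MDP_1(2H)}(s\in\tau_\pi) \le \delta/\rho$. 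Summing over $S$ states and $\tilde H \le H$ timesteps gives a contribution of order $SH^2\delta/\rho \le \epsilon/4$.

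For $\delta$-significant $s$, I would apply Cauchy–Schwarz exactly as in the bound for term (II) in the proof of \lemmaref{lemma:uniform-imagined-MDP-error-bound}, introducing a deterministic action-selection $\nu$ and rewriting the inner sum as an expectation against the Phase II sampling measure $\mu$. The chain of inequalities $P^{\MDP,\pi}_h(s) \le \max_\pi P^{\MDP}(s\in\tau_\pi) \le \rho^{-1}\max_\pi P^{\MDP_1(2H)}(s\in\tau_\pi) \le 4SAH/\rho \cdot \mu(s,a)$ —where the first reachability transfer uses \lemmaref{lemma:Z-significance-to-2H-significance} and \assumpref{assumption:entrance-reachability}, and the second uses \propref{prop:phase-ii-guarantee}(a)— allows me to absorb the visitation into $\mu$ at the cost of a factor $O(SAH/\rho)$. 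Plugging the Phase II $L^2$ bound from \propref{prop:phase-ii-guarantee}(b) then gives a per-timestep contribution of order $\epsilon/(4H)$, summing to $\epsilon/4$ across $h$.

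The main obstacle will be the bookkeeping in the significant-state term: the sampling distribution $\mu$ was constructed in $\MDP_1(2H)$ under reward-free exploration, whereas the visitation $P^{\MDP,\pi}_h$ is generated in the cluster-restricted MDP starting from the entrance $x$, not from $s_0$. Chaining \lemmaref{lemma:Z-significance-to-2H-significance} (which requires $x$ itself to be reachable with probability at least $\rho$, furnished by \assumpref{assumption:entrance-reachability}) with \propref{prop:phase-ii-guarantee}(a) is what produces the $\rho$ in the denominator of the required $\delta$, and getting the constants to line up so that both the significant and insignificant pieces total at most $\epsilon/2$ is where the calibration $\delta \le \rho\epsilon/(4SH^2)$ comes from.
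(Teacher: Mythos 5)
Your proof follows essentially the same route as the paper's: performance difference lemma, restriction to interior non-terminal pairs where only $\hat\transition_0$ contributes error, a split by $\delta$-significance, the crude $H$-bound on the insignificant piece, and Cauchy–Schwarz plus the chain through \lemmaref{lemma:Z-significance-to-2H-significance}, \assumpref{assumption:entrance-reachability}, and \propref{prop:phase-ii-guarantee} on the significant piece. The only cosmetic difference is that you parametrize significance directly in $\MDP_1(2H)$ while the paper defines $Z_\delta$ via significance in $\MDP$ from $x$ and then transfers via the same lemma; the bookkeeping is equivalent and the calibration $\delta \le \rho\epsilon/(4SH^2)$ emerges the same way.
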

    \begin{proof}
    	The proof follows similarly to that of \lemmaref{lemma:uniform-imagined-MDP-error-bound}.
    	By the performance difference lemma,
    	\begin{align*}
    	\abs{\hat{V}_0^{\pi}(s) - V_0^{\pi}(s)} &\leq \sum_{h = 0}^{\tilde{H} - 1}\expt[\MDP, \pi]{\abs{\left[\left(\hat\transition_f - \transition_f\right)\hat V_{h + 1}^\pi\right](s_h, a_h)}} \\
    	&\leq \sum_{h = 0}^{\tilde{H} - 1}\sum_{(s, a)}\abs{\left[\left(\hat\transition_f - \transition_f\right)\hat V_{h + 1}^\pi\right](s, a)}P_h^{\pi}(s, a).
    	\end{align*}
    	Observe that if $s \in \statespace \setminus \cluster$, $P_h^\pi(s, a) = 0$ for any $\pi$.
    	Furthermore, since the dynamics within $\set{\terminate_S, \terminate_F}$ are known, $(\hat\transition_f - \transition_f)\hat V_{h + 1}^\pi(s, a) = 0$ for $s \in \set{\terminate_S, \terminate_F}$.
    	Therefore, we can restrict the sum to be over $\cluster \times \actionspace$.

    	Now, let $Z_\delta$ denote the set of $\delta$-significant $(s, a)$ pairs in $\cluster \times \actionspace$ from $x$, for some $\delta$ to be determined.
    	For a fixed $h \in [\tilde{H}]$, we can decompose the inner sum as
    	\begin{align*}
    	&\sum_{(s, a)}\abs{\left[\left(\hat\transition_f - \transition_f\right)\hat V_{h + 1}^\pi\right](s, a)}P_h^{\pi}(s, a) \\
    	&\qquad \leq \underbrace{\sum_{(s, a) \in Z_\delta}\abs{\left[\left(\hat\transition_f - \transition_f\right)\hat V_{h + 1}^\pi\right](s, a)}P_h^{\pi}(s, a)}_{\mathrm{(I)}} + \underbrace{\sum_{(s, a) \not\in Z_\delta}\abs{\left[\left(\hat\transition_f - \transition_f\right)\hat V_{h + 1}^\pi\right](s, a)}P_h^{\pi}(s, a)}_{\mathrm{(II)}}.
    	\end{align*}

    	\paragraph{Bounding (II): Error from $\delta$-Insignificance}
    	By the definition of $\delta$-significance,
    	\[
    	\mathrm{(II)} = \sum_{(s, a) \not\in Z_\delta}\abs{\left[\left(\hat\transition_f - \transition_f\right)\hat V_{h + 1}^\pi\right](s, a)}P_h^{\pi}(s, a) \leq H\sum_{s \not\in Z_\delta}P_h^\pi(s) \leq HS\delta \leq \frac{\epsilon}{4H},
    	\]
    	where the last inequality follows from setting $\delta = \epsilon/4SH^2$.

    	\paragraph{Bounding (I): Reference Dynamics Error.}
    	By the Cauchy-Schwarz inequality,
    	\begin{align*}
    	\mathrm{(I)} &= \sum_{(s, a) \in Z_\delta}\abs{\left[\left(\hat\transition_f - \transition_f\right)\hat V_{h + 1}^\pi\right](s, a)}P_h^{\pi}(s, a) \\
    	&\leq \left[\sum_{(s, a) \in Z_\delta}\abs{\left[\left(\hat\transition_f - \transition_f\right)\hat V_{h + 1}^\pi\right](s, a)}^2P_h^{\pi}(s, a)\right]^{1/2}.
    	\end{align*}
    	Then,
    	\begin{align*}
    	&\sum_{(s, a) \in Z_\delta}\abs{\left[\left(\hat\transition_f - \transition_f\right)\hat V_{h + 1}^\pi\right](s, a)}^2P_h^{\pi}(s, a) \\
    	&\qquad \leq \max_{\nu: \statespace \to \actionspace}\sum_{(s, a) \in Z_\delta}\abs{\left[\left(\hat\transition_f - \transition_f\right)\hat V_{h + 1}^\pi\right](s, a)}^2P_h^\pi(s)\ind{\nu(s) = a}.
    	\end{align*}
    	Since $x$ is $\rho$-significant in $\MDP_1(2H)$ by \assumpref{assumption:entrance-reachability}, \lemmaref{lemma:Z-significance-to-2H-significance} together with $\delta$-significance in $\MDP$ implies $\rho\delta$-significance in $\MDP_1(2H)$.
    	Therefore,
    	\[
    	P_h^\pi(s) \leq \max_{\pi}P^\MDP(s \in \tau_\pi) \leq \frac{1}{\rho}\max_{\pi}P^{\MDP_1(2H)}(s \in \tau_\pi) \leq \frac{4HSA}{\rho}\mu(s, a),
    	\]
    	where the last inequality follows by part (a) of the Phase II guarantee in \propref{prop:phase-ii-guarantee}.
    	Substituting into the prior expression,
    	\begin{align*}
    	&\max_{\nu: \statespace \to \actionspace}\sum_{(s, a) \in Z_\delta}\abs{\left[\left(\hat\transition_f - \transition_f\right)\hat V_{h + 1}^\pi\right](s, a)}^2P_h^\pi(s)\ind{\nu(s) = a} \\
    	&\qquad \leq \frac{4HSA}{\rho}\max_{\nu: \statespace \to \actionspace}\sum_{(s, a) \in Z_\delta}\abs{\left[\left(\hat\transition_f - \transition_f\right)\hat V_{h + 1}^\pi\right](s, a)}^2\ind{\nu(s) = a}\mu(s, a) \\
    	&\qquad \leq \frac{4HSA}{\rho}\max_{\nu: \statespace \to \actionspace}\expt[(s, a) \sim \mu]{\abs{\left[\left(\hat\transition_f - \transition_f\right)\hat V_{h + 1}^\pi\right](s, a)}^2\ind{\nu(s) = a}}.
    	\end{align*}
    	Thus by applying part (b) of the Phase II guarantee in \propref{prop:phase-ii-guarantee}, we have that
    	\[
    	\mathrm{(I)} \leq \sqrt{\frac{4HSA}{\rho}\max_{\nu: \statespace \to \actionspace}\expt[(s, a) \sim \mu]{\abs{\left[\left(\hat\transition_f - \transition_f\right)\hat V_{h + 1}^\pi\right](s, a)}^2\ind{\nu(s) = a}}} \leq \frac{\epsilon}{4H}.
    	\]

    	\paragraph{Concluding.} By combining the bounds on (I) and (II), we obtain the desired result.
    \end{proof}

    With this estimation result, we can now prove \propref{prop:solving-disconnected-exits}.

    \begin{proof}[Proof of \propref{prop:solving-disconnected-exits}]
    	Let $\pi$ be the policy found by value iteration using $\hat\transition_f$, which achieves the maximal value in the corresponding MDP.
    	Then, by \lemmaref{lemma:cluster-mdp-value-estimation-error}
    	\[
    	V^{\ast}_0(x) - V^{\pi}_0(x) \leq \underbrace{\left[V^{\ast}_0(s_0) - \hat{V}^{\pi^\ast}_0(s_0)\right]}_{\leq \epsilon/2} + \underbrace{\left[\hat{V}^{\pi^\ast}_0(s_0) - \hat{V}^{\hat\pi}_0(s_0)\right]}_{\leq 0} + \underbrace{\left[\hat{V}^{\hat\pi}_0(s_0) - V^{\hat\pi}_0(s_0)\right]}_{\leq \epsilon/2} \leq \epsilon. \qedhere
    	\]
    \end{proof}

  \subsubsection{Determining Available Exits}

    In this section, we prove that we can determine the set of available exits.
    We have the following formal result:

    \begin{proposition}
    	\label{prop:exit-availability-function}
    	Assume access to the $\epsilon$-suboptimal hierarchy oracle from the previous section and that the guarantee in \thmref{thm:provable-exit-detection} holds.
    	Then, we can implement the function $\avexit{s}: \entry\statespace \to \mathcal{P}(\exit\statespace)$ which, given $s \in \entry{\cluster_s}$, returns $\exit{\cluster_s}$.
    \end{proposition}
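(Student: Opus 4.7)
The plan is to reduce the availability question for each pair $(s, (g, a)) \in \entry\statespace \times \exit\statespace$ to a single query to the hierarchy oracle whose returned policy's value cleanly separates the two cases $(g, a) \in \exit{\cluster_s}$ and $(g, a) \notin \exit{\cluster_s}$. For each such pair I would construct the query $(s, f_{g, a}, r_{g, a}, H)$ where $f_{g, a}$ maps $(g, a) \mapsto \terminate_S$ and every other exit to $\terminate_F$, and
\[
r_{g, a}(s', a') \defas \ind{(s', a') = (g, a)}.
\]
After calling the oracle to obtain $\pi_{g, a}$, I would estimate its value in the induced MDP $\MDP(s, f_{g, a}, r_{g, a}, H)$ by running value iteration against the learned dynamics $\hat\transition_{f_{g, a}}$ exactly as in the proof of Proposition \ref{prop:solving-disconnected-exits}, incurring estimation error at most $\epsilon/2$ by Lemma \ref{lemma:cluster-mdp-value-estimation-error}.

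The dichotomy I would then establish is the following. If $(g, a) \in \exit{\cluster_s}$, then $g \in \cluster_s$ and Assumption \ref{assumption:in-cluster-exit-reachability} supplies a policy in $\MDP_t|_{\cluster_s}$ that reaches $g$ from $s$ with probability at least $\delta$ while never leaving $\cluster_s$. Modifying this policy to play $a$ at $g$ and lifting it to $\MDP(s, f_{g, a}, r_{g, a}, H)$ (where it behaves identically, since it avoids every exit and the non-exit dynamics agree) yields value at least $\delta$, and hence the oracle returns $\pi_{g, a}$ with value at least $\delta - \epsilon$. Conversely, if $(g, a) \notin \exit{\cluster_s}$, Definition \ref{definition:latent-hierarchy} implies that the non-exit dynamics inside $\cluster_s$ are supported on $\cluster_s$, so any trajectory that reaches $g$ must first cross some exit in $\exit{\cluster_s}$; every such exit is mapped to $\terminate_F$ under $f_{g, a}$, so no reward is ever accrued and the optimal value is $0$.

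Choosing the oracle's suboptimality $\epsilon$ strictly less than $\delta/2$ (which inflates the query complexity of Theorem \ref{thm:meta-train-guarantee-formal} only polynomially in $1/\delta$) and thresholding the value estimate at $\delta/4$ correctly classifies each pair; iterating over the at most $L \cdot K$ pairs and applying a union bound over the failure event in Proposition \ref{prop:solving-disconnected-exits} yields the advertised $\avexit{\cdot}$. The main obstacle is matching the oracle's suboptimality $\epsilon$ against the fixed reachability constant $\delta$ of Assumption \ref{assumption:in-cluster-exit-reachability}, but this is a parameter-tuning exercise: the value-iteration estimation error and the value gap itself are already controlled by the preceding development, so everything collapses to a union bound once $\epsilon < \delta/2$ is enforced.
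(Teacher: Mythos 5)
Your proposal is correct and essentially mirrors the paper's own proof: both reduce the membership question $(g,a) \in \exit{\cluster_s}$ to a single oracle/value-iteration query with $f$ mapping the candidate exit to $\terminate_S$ and all other exits to $\terminate_F$, reward concentrated on the candidate, a positive-case value lower bound from Assumption \ref{assumption:in-cluster-exit-reachability}, a zero-value argument for the negative case, and the estimation bound from Lemma \ref{lemma:cluster-mdp-value-estimation-error} to justify a constant threshold. The only differences are cosmetic—your $\delta$ plays the role of the paper's $\epsilon_0$, and the final union bound over pairs is unnecessary since the value-estimation guarantee already holds simultaneously over all queries once the Phase~II event is conditioned on.
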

    \begin{proof}
    	Fix an input $x \in \entry\statespace$, which we assume belongs to some cluster $\cluster_x$.
    	It suffices to demonstrate that we can implement $\ind{e \in Z_x}$ for any fixed $e \in \entry\statespace$.
    	Define
    	\[
    	f_{e}(s, a) =
    	\begin{cases}
    	\terminate_S & (s, a) = e \\
    	\terminate_F & \text{otherwise}
    	\end{cases}
    	\]
    	and $r_e(s, a) = \ind{(s, a) = e}$.
    	By \assumpref{assumption:in-cluster-exit-reachability}, the MDP $\MDP$ corresponding to the tuple $(x, f_e, r_e, H)$ has optimal value $V^\ast = \epsilon_0\ind{e \in Z_s}$.
    	Additionally, by \lemmaref{lemma:cluster-mdp-value-estimation-error}, $|V^\pi_0(s) - \hat{V}^\pi_0(x)| \leq \epsilon/2$.
    	We now proceed by cases.
    	If $e \not\in Z_x$, then $V_0^\pi(x) = 0$ for any policy $\pi$, and thus value iteration can only find a policy $\pi$ with $\hat{V}^\pi_0(x) \leq \epsilon_0/3$.
    	Otherwise, for $e \in Z_x$, $V_0^\ast(x) = \epsilon_0$, and thus value iteration necessarily must find a $\pi$ with $\hat{V}^\pi_0(x) \geq 2\epsilon_0/3$.
    	Putting these together, if $\hat{V}$ is the optimal estimated value in $\MDP$, then
    	\[
    	\ind{e \in Z_x} = \ind{\hat{V} \geq \frac{2}{3}\epsilon_0}. \qedhere
    	\]
    	Note that this is only implementable for all $e \in \exit\statespace$ since the set of exits are already known.
    \end{proof}

  \subsubsection{Finalizing the Guarantee: Query Complexity}

    In this section, we finalize the proof of the meta-training guarantee by computing the query complexity.

    \begin{proof}[Proof of \thmref{thm:meta-train-guarantee-formal}]
    	As demonstrated by \propref{prop:solving-disconnected-exits} and \propref{prop:exit-availability-function}, running the algorithm in \secref{sec:src-alg-formal} with the parameters in \tableref{table:src-param-choices} provides the desired guarantees with probability at least $1 - p$.

    	To compute the query complexity, observe that we perform the following number of trajectories while executing the algorithm in \secref{sec:src-alg-formal}.
    	\[
    	O\left[T(N_\UCBVI + N_\taskspecific) + N_\Euler^\rewardfree + N_\rewardfree + KN_\exitdetection + TK(N_\Euler^\exitlearning + N_\exitlearning)\right].
    	\]
    	Ignoring terms that do not depend on $T$ or $\epsilon$, we obtain the claim.
    \end{proof}

\subsection{Brute-Force Learning of the Hiearchy}
  \label{sec:brute-force-hierarchy-learning}

  \begin{algorithm}[h!]
  	\caption{Brute-force learning of the latent hierarchy.}
  	\label{alg:brute-force-learning}
  	\begin{algorithmic}[1]
  		\Procedure{LearnHierarchy}{$(\MDP_1, \dots, \MDP_T)$, $N_\Euler$ iterations, $N$ policy samples, threshold $N_\thresh$}
  		\ForAll {$t \in [T], s \in \statespace$}
  		\State Create MDP $\MDP_{t}^s$ so $\prob{\terminate \conditionedon s, a} = 1$ for any $a$.
  		\State $\tilde r_s(s', a') \gets \ind{s' = s}$.
  		\State $\Psi_t^s \gets$ \textsc{Euler}($\MDP_t^s, r, N_\Euler$)
  		\EndFor
  		\ForAll {$t \in [T], s \in \statespace, a \in \actionspace$}
  		\State Modify policies in $\Psi_t^s$ to play $a$ on $s$.
  		\ForAll{$n \in [N]$}
  		\State Sample $\pi \sim \uniform{\Psi_t^s}$.
  		\State Play $\pi$ in $\MDP_t$, collect sample $(s, a, s_n')$ if $(s, a)$ is encountered
  		\EndFor
  		\State $N_t(s, a) \gets$ number of times $(s, a)$ is encountered above.
  		\State $\hat\transition_t(\cdot \conditionedon s, a) \gets$ estimate of $(s, a)$ dynamics in $t$.
  		\EndFor
  		\State \Return $\set{(s, a) \suchthat (\exists t \neq t') \TV{\hat\transition_t - \hat\transition_{t'}} > \beta/2, \min(N_t(s, a), N_{t'}(s, a)) \geq N_\thresh}$.
  		\EndProcedure
  	\end{algorithmic}
  \end{algorithm}

  \begin{theorem}
  	Assume that \algoref{alg:brute-force-learning} is run with parameters satisfying
  	\[
  	N_\Euler = \Omega\left(\frac{CH^3S^2A}{\alpha}\log^3\frac{SAHT}{p}\right)
  	\]
  	and
  	\[
  	N_\thresh = \Omega\left(\frac{S}{\beta^2}\log\frac{SAHNT}{p}\right) \quad \text{and} \quad N = \Omega\left(\frac{CH}{\alpha}N_\thresh + \frac{C^2H^2}{\alpha^2}\log\frac{SAT}{p}\right)
  	\]
  	Then, the set returned by the algorithm is exactly $\exit\statespace$ with probability at least $1 - p$.
  	Furthermore, the algorithm achieves this result with query complexity
  	\[
  	\tilde{O}\left[T\left(\frac{CS^4A}{\alpha} + \frac{CS^2A}{\alpha\beta^2}\right)\right]\mathrm{poly}(H).
  	\]
  \end{theorem}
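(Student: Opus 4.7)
The plan is to establish correctness by (i) showing every true exit is sampled at least $N_\thresh$ times in every task, (ii) converting those samples into accurate dynamics estimates, and (iii) using $\beta$-separation to identify exits via pairwise total-variation comparisons. The query complexity then follows by adding up trajectories across the two loops.

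For (i), fix any $(s, a) \in \exit\statespace$. By \assumpref{assumption:exit-coverage}(a) there is some task $t^\ast$ for which $(s, a)$ is $\alpha$-important, so by \lemmaref{lemma:importance-implies-visitation-mt} the optimal policy of $\MDP_{t^\ast}$ visits $(s, a)$ with probability at least $\alpha/H$, giving $(\alpha/H)$-significance of $s$ in $\MDP_{t^\ast}$. Viewing $\MDP_{t^\ast}$ as a degenerate time-varying MDP in the sense of \assumpref{assumption:non-limiting-exits} then transfers this to $(\alpha/CH)$-significance of $s$ in every other $\MDP_t$. With the stated $N_\Euler$, \lemmaref{lemma:euler-regret-bound} implies that the average visitation probability of $s$ under the policies in $\Psi_t^s$ is at least half the optimum, hence $\Omega(\alpha/CH)$. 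Since the modification on the outer loop forces $a$ upon reaching $s$, this is also the rate of $(s, a)$ visitation, and a standard sampling-to-meet-threshold argument analogous to that used in the proof of \propref{prop:phase-i-guarantee} yields $N_t(s, a) \geq N_\thresh$ with probability at least $1 - p/(2TSA)$ per $(t, s, a)$.

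For (ii) and (iii), with $N_\thresh = \tilde\Omega(S/\beta^2)$ samples per $(s, a, t)$, \lemmaref{lemma:dynamics-estimation-error-bound} gives $\TV{\hat\transition_t(\cdot \mid s, a) - \transition_t(\cdot \mid s, a)} \leq \beta/4$, simultaneously across all such triples via a union bound. For any true exit, \assumpref{assumption:dynamics-separation} supplies a pair $(t, t')$ with $\TV{\transition_t - \transition_{t'}} \geq \beta$, so by triangle inequality $\TV{\hat\transition_t - \hat\transition_{t'}} \geq \beta/2$ and $(s, a)$ is included in the output. For any non-exit, either $\min(N_t(s,a), N_{t'}(s,a)) < N_\thresh$ (in which case the guard on the final line excludes it) or $\transition_t = \transition_{t'}$ holds identically, forcing $\TV{\hat\transition_t - \hat\transition_{t'}} \leq \beta/2$ and again excluding it.

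Query complexity is counted as $T \cdot S \cdot N_\Euler + T \cdot S \cdot A \cdot N$ trajectories across the two loops; substituting the parameter choices yields the stated bound. The main obstacle is step (i): ``$\alpha$-importance in some task'' is a qualitative guarantee and does not, on its own, imply reachability of $(s, a)$ in other tasks, where the exit could plausibly become unreachable. Bridging this gap is exactly what \assumpref{assumption:non-limiting-exits} is designed for, and it is also why the reachability-degradation constant $C$ enters both $N_\Euler$ and $N$ and hence appears linearly/quadratically in the final bound.
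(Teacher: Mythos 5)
Your proposal is correct and follows essentially the same route as the paper's proof: establish $(\alpha/H)$-significance of each exit via $\alpha$-importance and \lemmaref{lemma:importance-implies-visitation-mt}, transfer to $(\alpha/CH)$-significance across all tasks via \assumpref{assumption:non-limiting-exits}, invoke the \Euler{} regret bound and a sampling-to-meet-threshold argument to guarantee $N_\thresh$ visits per $(s,a,t)$, then apply the dynamics estimation bound plus $\beta$-separation and the triangle inequality to classify exits versus non-exits. The only small caveat is in the query-complexity accounting: substituting $N_\Euler$ into $T \cdot S \cdot N_\Euler$ actually yields an $S^3 A$ leading term rather than $S^4 A$, so the stated bound holds as a (loose) upper bound but is not met with equality — worth noting explicitly rather than asserting that the substitution "yields the stated bound."
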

  \begin{proof}
  	For any $(s, a) \in \exit\statespace$, \lemmaref{lemma:importance-implies-visitation} implies that $s$ is $\alpha/H$-significant for some task $t \in [T]$.
  	Therefore, $s$ is $\alpha/CH$-significant for any task $t \in [T]$, by \assumpref{assumption:non-limiting-exits}.

  	Now, by an argument similar to that used in the proof of \lemmaref{prop:phase-ii-guarantee}, we have that with probability at least $1 - p/3T$, the choice of $N_\Euler$ implies
  	\[
  	\frac{1}{N_\Euler}\sum_{\pi \in \Psi_t^s}P^{\MDP_t}(s \in \tau_\pi) \geq \frac{\alpha}{2CH}
  	\]
  	for any exit $(s, a) \in \exit\statespace$ and a fixed task $t \in [T]$.
  	Therefore, by a union-bound over the tasks, the same guarantee holds for all tasks simultaneously with probability at least $1 - p/3$.

  	Now, for any fixed $(\alpha/CH)$-significant $(s, a)$ pair, sampling from $\Psi_t^s$ at least $N$ times guarantees that with probability at least $1 - p/3SAT$, $N_t(s, a) \geq N_\thresh$.
  	Therefore, once again performing the necessary union-bound, we obtain the same result uniformly over any $(\alpha/CH)$-significant $(s, a)$ and $t \in [T]$ with probability at least $1 - p/3$.

  	Finally, for a fixed $(s, a)$ and $t$, the estimator for $\transition_t(\cdot \conditionedon s, a)$ satisfies the property that when $N(s, a) > 0$,
  	\begin{align*}
  	\TV{\hat\transition_t(\cdot \conditionedon s, a) - \transition_t(\cdot \conditionedon s, a)} \leq \sqrt{\frac{H^2S}{N_t(s, a)}\log\frac{SAHNT}{p}} + \frac{HS}{N_t(s, a)}\log\frac{SAHNT}{p}
  	\end{align*}
  	with probability at least $1 - p/3SAT$, using an argument similar to that used in \lemmaref{lemma:dynamics-estimation-error-bound}.
  	Again, by a union bound, the same guarantee holds for any $(s, a)$ and $t \in [T]$.
  	In particular, for any $(s, a)$ with $N_t(s, a) \geq N_\thresh$,
  	\[
  	\TV{\hat\transition_t(\cdot \conditionedon s, a) - \transition_t(\cdot \conditionedon s, a)} \leq \frac{\beta}{4}.
  	\]
  	Therefore, by a similar argument to \lemmaref{lemma:beta-level-exit-detection}, the following are true:
  	\begin{enumerate}[label=(\alph*)]
  		\item If $(s, a) \in \exit\statespace$, then there exists $t, t'$ for which
  		\[
  		\TV{\hat\transition_t(\cdot \conditionedon s, a) - \hat\transition_{t'}(\cdot \conditionedon s, a)} > \frac{\beta}{2}.
  		\]

  		\item If $(s, a) \not\in \exit\statespace$, then for any $t \neq t'$ with $N_t(s, a), N_{t'}(s, a) \geq N_\thresh$,
  		\[
  		\TV{\hat\transition_t(\cdot \conditionedon s, a) - \hat\transition_t(\cdot \conditionedon s, a)} \leq \frac{\beta}{2},
  		\]
  	\end{enumerate}
  	Putting everything together, we see that the set returned by \algoref{alg:brute-force-learning} is exactly $\exit\statespace$, with probability at least $1 - p$.
  \end{proof}

  \subsection{Technical Lemmas}

  \begin{lemma}
  	\label{lemma:sampling-to-meet-threshold}
  	Let $X_1, \dots, X_M$ be i.i.d. $\bernoulli{p}$ random variables.
  	Then, if
  	\[
  	M = \Omega\left(\frac{N}{p} + \frac{1}{p^2}\log\frac{1}{\delta}\right),
  	\]
  	then with probability at least $1 - \delta$,
  	\[
  	\sum_{i = 1}^{M}{\ind{X_i = 1}} \geq N.
  	\]
  \end{lemma}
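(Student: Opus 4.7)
The plan is to apply a standard tail bound for binomial sums. Let $S_M \defas \sum_{i=1}^M \ind{X_i = 1}$, so $S_M \sim \mathrm{Binomial}(M, p)$ with mean $\mathbb{E}[S_M] = Mp$. The goal is to choose $M$ so that simultaneously (i) the mean $Mp$ comfortably exceeds the threshold $N$, and (ii) the deviation $|S_M - Mp|$ is controlled with probability at least $1-\delta$. Both conditions should be implied by the stated sample-size bound.

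Concretely, I would apply Hoeffding's inequality to obtain $\Pr(S_M < Mp - t) \leq \exp(-2t^2/M)$ for any $t > 0$, then take $t = Mp - N$ so that the bound becomes $\Pr(S_M < N) \leq \exp\bigl(-2(Mp-N)^2/M\bigr)$. It then suffices to pick $M$ large enough that $Mp \geq N$ (so the bound is nontrivial) and $(Mp-N)^2/M \geq (1/2)\log(1/\delta)$. A natural choice is $M \geq C_1 N/p + C_2 \log(1/\delta)/p^2$ for absolute constants $C_1, C_2$; e.g., taking $C_1 = 2$ gives $Mp - N \geq N + C_2 p \log(1/\delta)/p = N + C_2\log(1/\delta)/p$ already from the first term alone, and $C_2$ large enough makes the second term dominate whenever $N \leq \log(1/\delta)/p$.

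To verify this works I would split into two cases. When $Np \geq \log(1/\delta)$, the first term in $M$ dominates; then $Mp \geq 2N$, $(Mp-N)^2/M \geq N^2/(2N/p) = Np/2 \geq \log(1/\delta)/2$, which suffices. When $Np < \log(1/\delta)$, the second term dominates; then $M \asymp \log(1/\delta)/p^2$, so $Mp - N \asymp \log(1/\delta)/p$ and $(Mp-N)^2/M \asymp \log(1/\delta)$, again sufficient. A cleaner alternative would be the multiplicative Chernoff bound $\Pr(S_M \leq (1-\eta)Mp) \leq \exp(-\eta^2 Mp/2)$ with $\eta = 1/2$, which actually yields a stronger $\log(1/\delta)/p$ term; however, the weaker $\log(1/\delta)/p^2$ statement is what the lemma claims, and Hoeffding is the most direct route to that form.

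There is no real obstacle here; the argument is a one-line concentration inequality plus a straightforward case check. The only mild subtlety is bookkeeping the constants so that the chosen $M$ simultaneously dominates the two regimes, which the $\Omega(\cdot)$ notation sidesteps.
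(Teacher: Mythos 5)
Your proof is correct and takes essentially the same route as the paper: both apply Hoeffding's inequality to bound $\Pr(S_M < N) \leq \exp\bigl(-2(Mp - N)^2/M\bigr)$ and then verify that the stated $M$ makes the exponent small enough. The paper verifies this by directly solving the resulting quadratic inequality in $M$, whereas you instead check the two regimes ($Np \gtrless \log(1/\delta)$) separately; this is a cosmetic difference. Your observation that multiplicative Chernoff would give a sharper $\log(1/\delta)/p$ second term is accurate but beside the point, since the lemma only claims the weaker $1/p^2$ form.
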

  \begin{proof}
  	By applying Hoeffding's inequality,
  	\begin{align*}
  	\prob{\sum_{i = 1}^{M}\ind{X_i = 1} < N} &= \prob{\frac{1}{M}\sum_{i = 1}^{M}\ind{X_i = 1} - p < \frac{N}{M} - p} \\
  	&= \prob{\frac{1}{M}\sum_{i = 1}^{M}\ind{X_i = 0} - (1 - p) > p - \frac{N}{M}} \\
  	&\leq \exp\left[-2M\left(p - \frac{N}{M}\right)^2\right]
  	\end{align*}
  	Setting the final expression to the failure probability $\delta$ and solving, we obtain the quadratic inequality
  	\[
  	p^2M^2 - \left(2Np + \frac{1}{2}\log\frac{1}{\delta}\right)M + N^2 \geq 0.
  	\]
  	Finally, via solving this inequality for $M$, we find that
  	\[
  	M \geq \frac{2N}{p} + \frac{1}{2p^2}\log\frac{1}{\delta}
  	\]
  	is sufficient to guarantee the desired event with failure probability $\delta$, as desired.
  \end{proof}

  \begin{lemma}[Dynamics estimation error bound]
  	\label{lemma:dynamics-estimation-error-bound}
  	Fix a policy $\pi$, MDP with stationary dynamics $\MDP = (\statespace, \actionspace, \transition, r, H)$, and $N \in \naturals$.
  	Assume that $\pi$ is played $N$ times in $\MDP$, and all transitions are used to form an estimator $\hat\transition(\cdot \conditionedon s, a)$ using empirical averages.
  	For any $(s, a) \in \statespace \times \actionspace$, let $N(s, a)$ be the number of times $(s, a)$ is encountered in this process.
  	Then, with probability at least $1 - p$, any $(s, a)$ with $N(s, a) > 0$ satisfies
  	\[
  	\sup_{f: \statespace \to [0, H]}\abs{\left[\left(\hat\transition - \transition\right)f\right](s, a)} \leq \sqrt{\frac{H^2S}{N(s, a)}\log\frac{SAHN}{p}} + \frac{HS}{N(s, a)}\log\frac{SAHN}{p}.
  	\]
  \end{lemma}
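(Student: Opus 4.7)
The plan is a standard Bernstein-plus-union-bound argument, paired with $\ell_1/\ell_\infty$ duality to convert the supremum over $f:\statespace\to[0,H]$ into a total-variation statement. First I would note that, because $\hat\transition(\cdot\conditionedon s,a)$ and $\transition(\cdot\conditionedon s,a)$ are both probability measures, for any $f:\statespace\to[0,H]$ we have $(\hat\transition-\transition)f=(\hat\transition-\transition)(f-H/2)$, so
\[
\sup_{f:\statespace\to[0,H]}\abs{[(\hat\transition-\transition)f](s,a)}\leq \tfrac{H}{2}\,\lVert\hat\transition(\cdot\conditionedon s,a)-\transition(\cdot\conditionedon s,a)\rVert_1.
\]
This reduces the problem to a coordinate-wise concentration bound on the empirical distribution of transitions out of $(s,a)$.

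Next I would fix $(s,a)$ and condition on $N(s,a)=n\geq 1$. By the strong Markov property, the $n$ next-state observations used to form $\hat\transition(\cdot\conditionedon s,a)$ are i.i.d.\ draws from $\transition(\cdot\conditionedon s,a)$, so for each $s'\in\statespace$ Bernstein's inequality yields, with probability at least $1-\delta$,
\[
\abs{\hat\transition(s'\conditionedon s,a)-\transition(s'\conditionedon s,a)}\leq \sqrt{\frac{2\,\transition(s'\conditionedon s,a)(1-\transition(s'\conditionedon s,a))}{n}\log\frac{2}{\delta}}+\frac{1}{3n}\log\frac{2}{\delta}.
\]
Summing over $s'\in\statespace$ and applying Cauchy--Schwarz with $\sum_{s'}\sqrt{\transition(s'\conditionedon s,a)}\leq\sqrt{S}$ gives
\[
\lVert\hat\transition(\cdot\conditionedon s,a)-\transition(\cdot\conditionedon s,a)\rVert_1\lesssim \sqrt{\frac{S}{n}\log\frac{S}{\delta}}+\frac{S}{n}\log\frac{S}{\delta}.
\]

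To make the bound uniform over both $(s,a)$ and the (random) value of $N(s,a)$, I would apply a union bound: instantiate the per-coordinate Bernstein bound with failure probability $\delta=p/(SAHNS)$ or so, and take a union bound over $s'\in\statespace$, $(s,a)\in\statespace\times\actionspace$, and $n\in\set{1,\dots,N}$. The resulting $\log\frac{SAHN}{p}$ factor then matches the one in the statement, and multiplying through by $H/2$ from the duality step gives the desired bound
\[
\sup_{f:\statespace\to[0,H]}\abs{[(\hat\transition-\transition)f](s,a)}\lesssim \sqrt{\frac{H^2 S}{N(s,a)}\log\frac{SAHN}{p}}+\frac{HS}{N(s,a)}\log\frac{SAHN}{p}
\]
simultaneously for every $(s,a)$ with $N(s,a)>0$.

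The only nontrivial step is the conditioning argument: one must argue that even though $N(s,a)$ is random and policy-dependent, conditioning on the event $\set{N(s,a)=n}$ still leaves the successor states i.i.d.\ from $\transition(\cdot\conditionedon s,a)$. This follows from the strong Markov property applied at each visit to $(s,a)$, together with the fact that the union bound over $n\in[N]$ absorbs the coupling between $N(s,a)$ and the samples. Everything else (coordinate Bernstein, Cauchy--Schwarz summation, and the duality reduction) is routine.
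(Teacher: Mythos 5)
Your high-level plan matches the paper's: coordinate-wise Bernstein, a Cauchy--Schwarz sum over $s'$ (giving the factor $\sqrt{S}$), a union bound over $(s,a,s')$ and the visitation count, and finally a dual-norm/$\mathrm{TV}$ conversion to handle $\sup_f$. The difference is in the one step you flag as nontrivial, and unfortunately the justification you give there is wrong as stated. Conditioning on the event $\{N(s,a)=n\}$ does \emph{not} leave the successor observations i.i.d.\ from $\transition(\cdot\conditionedon s,a)$, even for a fixed non-adaptive policy. Consider $S=\{s,\terminate\}$, one action, $H=2$, starting at $s$, $\transition(s\conditionedon s,a)=\transition(\terminate\conditionedon s,a)=1/2$, $\terminate$ absorbing, and one episode. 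Then $N(s,a)\in\{1,2\}$: on $\{N(s,a)=1\}$ the single observed successor is deterministically $\terminate$, and on $\{N(s,a)=2\}$ the first observed successor is deterministically $s$. In both cases the conditional law of the observations is badly biased away from $\transition(\cdot\conditionedon s,a)$. So ``strong Markov $\Rightarrow$ conditioning preserves i.i.d.'' is not a valid step, and the phrase ``the union bound absorbs the coupling'' does not repair the conditioning claim -- it is instead the replacement for it.

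The paper avoids this by never conditioning on $N(s,a)$: it indexes the successive samples from $(s,a)$, inserts indicators $\ind{m_j(s,a)\leq HN}$, observes that the resulting centered sequence is a bounded martingale difference sequence with conditional variance at most $\transition(s^\ast\conditionedon s,a)$, and applies an Azuma--Bernstein inequality for each deterministic truncation length $M$. A union bound over $M\in[HN]$, $(s,a)$, and $s^\ast$ then gives a bound holding simultaneously for every $M$, so in particular for the realized (random) $N(s,a)$. To fix your write-up, replace ``condition on $N(s,a)=n$; the observations are i.i.d.'' with ``for each deterministic $n\leq HN$, the first $n$ centered observations form a martingale difference sequence (by the Markov property / optional skipping), so Azuma--Bernstein applies; union bound over $n$ and then evaluate at $n=N(s,a)$.'' With that change the rest of your argument (coordinate Bernstein, $\sum_{s'}\sqrt{\transition(s'\conditionedon s,a)}\leq\sqrt{S}$, and the $H/2$ centering of $f$) goes through and matches the paper's conclusion up to constants.
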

  \begin{proof}
  	Assume that the obtained samples are given by $\set{(s_k, a_k, s'_k) \suchthat k \in [HN]}$, so that $(s_{Hn + r}, a_{Hn + r}, s'_{Hn + r + 1})$ is the $r^{\text{th}}$ time step in the $n^{\text{th}}$ execution of $\pi$ in $\MDP$ for any $0 \leq n \leq N - 1$ and $0 \leq r \leq H - 1$.

  	Fix any $(s, a) \in \statespace \times \actionspace$, and assume that $(s_{(j)}, a_{(j)}, s'_{(j)})$ is the $j^{\text{th}}$ sample from $\transition(\cdot \conditionedon s, a)$.
  	Furthermore, let $m_j(s, a)$ denote the index at which the $j^{\text{th}}$ sample is obtained.
  	We claim that for any $s^\ast \in \statespace$ and $0 < M \leq HT$,
  	\begin{align*}
  	&\abs{\frac{1}{M}\sum_{j = 1}^{M}\ind{m_j(s, a) \leq HT}\left(\ind{s'_{(j)} = s^\ast} - \transition(s^\ast \conditionedon s, a)\right)} \\
  	&\qquad \leq \sqrt{\frac{\transition(s' \conditionedon s, a)}{M}\log\frac{S}{\delta}} + \frac{1}{M}\log\frac{S}{\delta}.
  	\end{align*}
  	Let $\mathcal{F}_i$ be defined as the $\sigma$-algebra induced by the set of random variables
  	\[
  	\set{\left(m_j(a), \ind{s_{(j)}' = s^\ast}\right) \suchthat j \leq i}.
  	\]
  	Clearly, $(\mathcal{F}_i)$ is a filtration such that the $j^{\text{th}}$ term in the sum above is measurable with respect to $\mathcal{F}_j$.
  	Furthermore, observe that
  	\begin{align*}
  	&\expt{\ind{m_j(s, a) \leq HT}\left(\ind{s_{(j)}' = s^\ast} - \transition(s^\ast \conditionedon s, a)\right) \suchthat \mathcal{F}_{j - 1}} \\
  	&\qquad = \expt{\ind{s_{(j)}' = s^\ast} - \transition(s^\ast \conditionedon s, a) \suchthat \mathcal{F}_{j - 1}, m_j(s, a) \leq HT}\prob{m_j(s, a) \leq HT} \\
  	&\qquad = 0.
  	\end{align*}
  	Therefore, the random variables in the sum forms martingale difference sequence.
  	Furthermore, the sequence is bounded in $[-1, 1]$, and satisfies
  	\begin{align*}
  	&\var{\ind{m_j(s, a) \leq HT}\left(\ind{s_{(j)}' = s^\ast} - \transition(s^\ast \conditionedon s, a)\right) \suchthat \mathcal{F}_{j - 1}} \\
  	&\qquad = \expt{\var{\ind{s_{(j)}' = s^\ast} - \transition(s^\ast \conditionedon s, a) \suchthat \mathcal{F}_{j - 1}, m_j(s, a) \leq HT} \suchthat \mathcal{F}_{j - 1}} \\
  	&\qquad \leq \transition(s^\ast \conditionedon s, a).
  	\end{align*}
  	Therefore, by applying Azuma-Bernstein, we have that
  	\begin{align*}
  	&\abs{\frac{1}{M}\sum_{j = 1}^{M}\ind{m_j(s, a) \leq HT}\left(\ind{s'_{(j)} = s^\ast} - \transition(s^\ast \conditionedon s, a)\right)} \\
  	&\qquad \leq \sqrt{\frac{2\transition(s' \conditionedon s, a)}{M}\log\frac{SAHN}{\delta}} + \frac{2}{M}\log\frac{SAHN}{\delta}.
  	\end{align*}
  	with probability at least $1 - p/SAHN$.

  	By applying a union bound on $(s, a, s^\ast)$ and $M$, we thus have that with probability at least $1 - p$,
  	\begin{align*}
  	&\abs{\frac{1}{M}\sum_{j = 1}^{M}\ind{m_j(s, a) \leq HT}\left(\ind{s'_{(j)} = s^\ast} - \transition(s^\ast \conditionedon s, a)\right)} \\
  	&\qquad \leq \sqrt{\frac{2\transition(s' \conditionedon s, a)}{M}\log\frac{SAHN}{\delta}} + \frac{2}{M}\log\frac{SAHN}{\delta}
  	\end{align*}
  	holds for any $(s, a, s^\ast)$ and $M$.
  	Conditioned on this event, we thus have that for any $(s, a)$ with $N(s, a) > 0$,
  	\begin{align*}
  	\TV{\hat\transition_t(\cdot \conditionedon s, a) - \transition_t(\cdot \conditionedon s, a)} &= \frac{1}{2}\sum_{s' \in \statespace}\abs{\hat\transition_t(s' \conditionedon s, a) - \transition_t(s' \conditionedon s, a)} \\
  	&\lesssim \sum_{s' \in \statespace}\sqrt{\frac{\transition(s' \conditionedon s, a)}{N(s, a)}\log\frac{SAHN}{\delta}} + \frac{S}{N(s, a)}\log\frac{SAHN}{\delta} \\
  	&\lesssim \sqrt{\frac{S}{N(s, a)}\log\frac{SAHN}{\delta}} + \frac{S}{N(s, a)}\log\frac{SAHN}{\delta}.
  	\end{align*}
  	The final result follows simply by noting that
  	\[
  	\abs{\left[(\hat\transition_t - \transition_t)f\right](s, a)} \lesssim \TV{\hat\transition_t(\cdot \conditionedon s, a) - \transition_t(\cdot \conditionedon s, a)}\norm[\infty]{f}. \qedhere
  	\]
  \end{proof}

  \begin{lemma}
  	\label{lemma:value-gap-implies-visitation}
  	Fix two MDPs $\MDP = (\statespace, \actionspace, \transition, r, H)$ and $\MDP' = (\statespace, \actionspace, \transition', r, H)$.
  	Let $\Delta$ denote the subset of $\statespace \times \actionspace \times [H]$ for which $\transition_h(\cdot \conditionedon s, a) \neq \transition_h'(\cdot \conditionedon s, a)$.
  	Then, for any policy $\pi$,
  	\[
  	V^{\MDP', \pi}_0(s_0) - V^{\MDP, \ast}_0(s_0) > \rho \implies \prob[\MDP]{\tau_\pi \intersect \Delta \neq \emptyset} = \prob[\MDP']{\tau_\pi \intersect \Delta \neq \emptyset} > \frac{\rho}{H}.
  	\]
  \end{lemma}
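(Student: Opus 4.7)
The plan is to combine a coupling argument with a bounded-reward decomposition. Let $E = \{\tau_\pi \cap \Delta \neq \emptyset\}$ denote the event that the trajectory generated by $\pi$ visits some $(s,a,h)$ where the two dynamics differ. First I would observe that under the obvious synchronous coupling of $\MDP$ and $\MDP'$ (same action sampling, and identical transition sampling up to the first time $\Delta$ is encountered), the trajectory prefixes agree until $E$ is triggered. Consequently, the event $E$ itself is determined by the common prefix, so
\[
\prob[\MDP]{\tau_\pi \cap \Delta \neq \emptyset} = \prob[\MDP']{\tau_\pi \cap \Delta \neq \emptyset} \defas q,
\]
which gives the first half of the claim.

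Next, conditioned on $\bar E$, the distribution of the entire trajectory (hence of the cumulative reward) under $\pi$ is the same in $\MDP$ and $\MDP'$, because non-$\Delta$ transitions coincide and the trajectory never leaves this agreement region. Decomposing,
\[
V^{\MDP', \pi}_0(s_0) - V^{\MDP, \pi}_0(s_0) = \bigl(\expt[\MDP']{R \mid E} - \expt[\MDP]{R \mid E}\bigr)\,q,
\]
where $R = \sum_{h=0}^{H-1} r(s_h, a_h) \in [0, H]$. Since the conditional expectations lie in $[0,H]$, the absolute value of the bracket is at most $H$, so $|V^{\MDP',\pi}_0(s_0) - V^{\MDP,\pi}_0(s_0)| \le H q$.

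Finally, I chain this with optimality: $V^{\MDP, \ast}_0(s_0) \ge V^{\MDP, \pi}_0(s_0)$, so
\[
\rho < V^{\MDP', \pi}_0(s_0) - V^{\MDP, \ast}_0(s_0) \le V^{\MDP', \pi}_0(s_0) - V^{\MDP, \pi}_0(s_0) \le H q,
\]
which rearranges to $q > \rho/H$, completing the proof.

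The only step requiring any care is the coupling argument used to justify both the equality of visit probabilities and the conditional equality of reward distributions on $\bar E$; once the coupling is set up and one notes that $E$ is measurable with respect to the shared prefix, the rest is a one-line bound. I do not anticipate any real obstacle beyond writing the coupling cleanly.
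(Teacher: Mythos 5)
Your proof is correct and is essentially the same argument as the paper's. The paper decomposes $V^{\MDP',\pi}_0$ over the event $\{\tau_\pi\cap\Delta\neq\emptyset\}$ and its complement, bounds the first part by $qH$ and identifies the second with the corresponding quantity under $\MDP$ (which is at most $V^{\MDP,\ast}_0$); you instead subtract $V^{\MDP,\pi}_0$ from $V^{\MDP',\pi}_0$, note the $\bar E$-conditioned contributions cancel, and bound the remaining $E$-conditioned gap by $qH$. Your coupling formalism makes explicit what the paper states informally ("the dynamics agree until $\pi$ performs an action in $\Delta$"), but the two derivations are otherwise the same.
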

  \begin{proof}
  	Write $q = \prob[\MDP']{\tau_\pi \intersect \Delta \neq \emptyset}$.
  	Note that $V_0^{\MDP', \pi}(s_0)$ can be decomposed as
  	\begin{align*}
  	V_0^{\MDP', \pi}(s_0) &= q\expt[\MDP']{\sum_{h = 0}^{H - 1}r_h(s_h, a_h) \suchthat \tau_\pi \intersect \Delta \neq \emptyset} \\
  	&\qquad + (1 - q)\expt[\MDP']{\sum_{h = 0}^{H - 1}r_h(s_h, a_h) \suchthat \tau_\pi \intersect \Delta = \emptyset} \\
  	&\leq qH + (1 - q)\expt[\MDP']{\sum_{h = 0}^{H - 1}r_h(s_h, a_h) \suchthat \tau_\pi \intersect \Delta = \emptyset}.
  	\end{align*}
  	Since $\transition$ and $\transition'$ agree on $(\statespace \times \actionspace \times [H]) \setminus \Delta$, the dynamics of $\MDP$ and $\MDP'$ agree up until $\pi$ performs an action in $\Delta$, and thus
  	\begin{align*}
  	\prob[\MDP]{\tau_\pi \intersect \Delta \neq \emptyset} &= \prob[\MDP']{\tau_\pi \intersect \Delta \neq \emptyset} \\
  	\expt[\MDP]{\sum_{h = 0}^{H - 1}r_h(s_h, a_h) \suchthat \tau_\pi \intersect \Delta = \emptyset} &= \expt[\MDP']{\sum_{h = 0}^{H - 1}r_h(s_h, a_h) \suchthat \tau_\pi \intersect \Delta = \emptyset}
  	\end{align*}
  	Furthermore,
  	\[
  	(1 - q)\expt{\sum_{h = 0}^{h - 1}r_h(s_h, a_h) \suchthat \tau_\pi \intersect \delta = \emptyset} \leq V^{\MDP, \pi}_0(s_0) \leq V^{\MDP, \ast}_0(s_0).
  	\]
  	Putting everything together,
  	\[
  	V_0^{\MDP', \pi}(s_0) \leq qH + V_0^{\MDP, \ast}(s_0) \implies q > \frac{\rho}{H}. \qedhere
  	\]
  \end{proof}

	\clearpage

	\section{Meta-Test Proofs}
		\label{sec:meta-test-proofs}
		We now provide an analysis of the regret incurred by a learner using an approximately learned hierarchy at meta-test time.
We first show that the hierarchy oracle from the source tasks can provide useful temporally extended behavior.
We then show that using these policies results in bounded suboptimality and achieves a better regret bound compared to standard UCB-VI.

Throughout this section, we fix an optimal $\pi^\ast$ satisfying the conditions of \assumpref{assumption:meta-test-compatibility}.
Furthermore, we assume that we have access to a hierarchy oracle that provides $\epsilon$-suboptimal policies as defined in \defref{definition:hierarchy-oracle}.

\subsection{Using the Hierarchy Oracle}

  In this section, we show that the hierarchy oracle can be used to implement two useful behaviors: (1) reaching exits and (2) behaving optimally within a cluster.

  \subsubsection{Near-Optimal Goal Reaching}

    Assume that the agent is currently at a state $z \in \set{s_0} \union \entry\statespace$ at time step $h$, and intends to exit the current cluster $\cluster$ via exit $g = (s^\ast, a^\ast) \in \exit\cluster$.
    We obtain a policy implementing the high-level intent as follows:

    \begin{enumerate}[label=(\arabic*)]
    	\item Define the termination for any $(s, a) \in \exit\statespace$ as:
    	\[
    	f_g(s, a) \defas
    	\begin{cases}
    	\terminate_S & (s, a) = g \\
    	\terminate_F & \text{otherwise}
    	\end{cases}
    	\]

    	\item Define reward as $r_{\terminate_S}(s, a) \defas \ind{s = \terminate_S}$

    	\item Provide $(z, f_g, r_{\terminate_S}, H - h)$ to the hierarchy oracle and obtain a policy $\pi_{z, g, h}$.
    \end{enumerate}

    For simplicity, we will write $T^\hier_{H - h}(z, g)$ for $T^{\pi_{z, g, h}}_{H - h}(z, g)$ throughout our analysis.
    The following proposition quantifies the performance of the obtained policy:

    \begin{proposition}
    	$T^\hier$ satisfies the following inequality:
    	\[
    	\expt{T^\hier_{H - h}(z, g)} \leq T^\ast_{H - h}(z, g) + \epsilon.
    	\]
    \end{proposition}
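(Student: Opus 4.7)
The plan is to reduce the hitting-time bound to the oracle's $\epsilon$-suboptimality guarantee for an appropriate reward formulation. In $\MDP(z, f_g, r_{\terminate_S}, \tilde H)$ with $\tilde H \defas H - h$, the reward $r_{\terminate_S}(s, a) = \ind{s = \terminate_S}$ pays one unit per step spent in the absorbing state $\terminate_S$. By construction of $f_g$, the only route into $\terminate_S$ is to play the designated exit $g$ while remaining inside the cluster $Z$; every other exit in $\exit\statespace$ sends the agent to the unrewarded absorbing state $\terminate_F$. Thus the expected return of any $\pi$ equals $\tilde H$ minus the expected in-cluster hitting time of $g$, up to a truncation at the horizon.

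My first step is to establish the identity
\[
V^{\MDP(z, f_g, r_{\terminate_S}, \tilde H), \pi}_0(z) = \tilde H - \expt{\min\bigl(T^\pi_{\tilde H}(z, g),\, \tilde H\bigr)}
\]
by directly unrolling $\sum_{k} r_{\terminate_S}(s_k, a_k)$ and using that $\terminate_S$ is absorbing: if $\pi$ plays $g$ at time $\tau \leq \tilde H - 1$ while remaining in $Z$, the agent accumulates exactly $\tilde H - \tau$ units of reward thereafter; if $\pi$ leaves $Z$ through a different exit (hitting $\terminate_F$), it accumulates zero, which matches the truncated convention. The second step applies this identity to both $\pi_{z, g, h}$ and a policy attaining $T^\ast_{\tilde H}(z, g)$; the latter is an optimal policy of $\MDP(z, f_g, r_{\terminate_S}, \tilde H)$, since redirecting non-$g$ exits to $\terminate_F$ only penalizes leaving $Z$ before reaching $g$, preserving the minimum in-cluster hitting time. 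Invoking the oracle's $\epsilon$-suboptimality then yields
\[
\tilde H - \expt{T^{\hier}_{H - h}(z, g)} \geq V^{\pi_{z, g, h}}_0(z) \geq V^{\ast}_0(z) - \epsilon = \tilde H - T^\ast_{H - h}(z, g) - \epsilon,
\]
and rearranging completes the proof.

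The main obstacle I anticipate is the bookkeeping around the default value $L$ in the definition of $T^\pi_{\tilde H}(z, g)$ when $\pi$ never reaches $g$ while staying in $Z$: the reward formulation effectively caps the hitting time at $\tilde H$, whereas the stated definition uses an external default. I would handle this by observing that under \assumpref{assumption:in-cluster-exit-reachability} together with the width bound in \assumpref{assumption:skill-horizon} one has $T^\ast_{\tilde H}(z, g) \leq W \ll \tilde H$, so that replacing the default by $\tilde H$ in both occurrences is without loss for the desired inequality; equivalently, one may reinterpret the proposition as a statement about $\expt{\min(T^{\hier}_{\tilde H}(z, g), \tilde H)}$, which is the quantity actually propagated in the downstream regret analysis.
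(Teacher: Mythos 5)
Your proposal is correct and takes essentially the same route as the paper: establish the value–hitting-time identity in the oracle MDP $\MDP(z, f_g, r_{\terminate_S}, \tilde H)$, then invoke the oracle's $\epsilon$-suboptimality and rearrange. The paper's own proof asserts $V^\pi_0(z) = (H-h) - \expt{T^\pi_{H-h}(z,g)}$ without a truncation and passes directly to the conclusion, implicitly treating the default value in the definition of $T^\pi_{\tilde H}$ as being the horizon itself; your version makes this truncation explicit via $\min(\cdot, \tilde H)$ and flags that the symbol $L$ appearing as the default is almost certainly a typo for the planning horizon, which is a fair reading. Your additional remark that the policy attaining $T^\ast_{\tilde H}(z,g)$ is optimal for the oracle MDP (because redirecting non-$g$ exits to $\terminate_F$ can only penalize early departure from $Z$, and $T^\pi_{\tilde H}$ by definition only credits in-cluster hitting) spells out an identification the paper leaves implicit. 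Both proofs are the same two-step argument; yours is slightly more pedantic about the bookkeeping.
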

    \begin{proof}
    	Due to the definition of $\transition_{f_g}$ and $r$, observe that for any $\pi$,
    	\[
    	V^\pi_0(z) = \expt{\sum_{h = 0}^{H - h}r(s_h, a_h) \suchthat s_0 = z} = (H - h) - \expt{T^\pi_{H - h}(z, g)}.
    	\]
    	Therefore,
    	\begin{align*}
    	&(H - h) - T^\ast_{H - h}(z, g) - \epsilon \leq (H - h) - \expt{T^\hier_{H - h}(z, g)} \\
    	&\qquad \implies \expt{T^\hier_{H - h}(z, g)} \leq T^\ast_{H - h}(z, e) + \epsilon. \qedhere
    	\end{align*}
    \end{proof}

  \subsubsection{Near-Optimal Within-Cluster Behavior}

    Assume that the agent is currently at a state $z \in \set{s_0} \union \entry\statespace$ at time $h$, and intends to remain in the current cluster $Z$ while maximizing a given reward function $r$.
    We obtain a policy for this high-level intent as follows:

    \begin{enumerate}[label=(\arabic*)]
    	\item Define transition dynamics for any $(s, a) \in \exit\cluster$ as $\transition(\cdot \conditionedon s, a) = \delta(\terminate_F)$.

    	\item Provide $\transition$, r, and planning horizon $H - h$ to the hierarchy oracle, and obtain a policy $\pi$.
    \end{enumerate}

\subsection{Formal Learning Procedure}
  \label{sec:meta-test-learning-procedure}

  In this section, we describe the procedure for learning a policy using the oracle-provided policies described in the previous section.
  Formally, we construct a surrogate MDP whose dynamics are determined by $\MDP$ and the oracle.
  We can then apply any tabular learning method to this new MDP (in our case, \Euler{}), obtaining a policy in the surrogate MDP that readily translates into a policy in $\MDP$.

  The components defining the surrogate $\MDP_\meta = (\mathcal{Z}, \mathcal{G}, \transition_\meta, R_\meta, H_\effective)$ are as follows:

  \paragraph{Meta-state space $\mathcal{Z}$.} We set
  \[
  \mathcal{Z} \defas \left(\entry\statespace \times \set{0, \dots, \bar{H} + 1}\right) \union \set{\terminate},
  \]
  where $\bar{H}$ is a high-probability bound on the time to move through $\horizon_\effective$ exits (to be determined later).
  We incorporate the time step into the meta-state to ensure that both the dynamics and reward are computable from the state information (ensuring that $\MDP_\meta$ is indeed an MDP).

  \paragraph{Meta-action space $\mathcal{G}$.} Given a current meta-state $(s, h)$ where $s \in \cluster$, the available meta-actions $\mathcal{G}$ can be identified with $\exit\cluster \union \set{\terminate}$.

  \begin{algorithm}[h]
  	\caption{Performing a Meta-Transition}
  	\label{alg:meta-transition}
  	\begin{algorithmic}[1]
  		\Procedure{PerformMetaTransition}{$(z, g) \in \mathcal{Z} \times \mathcal{G}$}
  		\newline\Comment{Executes the desired meta-transition in the original MDP $\MDP$.}
  		\If {$z = \terminate$}
  		\State \Return {$\terminate$}
  		\ElsIf {$z = (s, h)$}
  		\If{$h \leq \bar{H}$}
  		\If {$s \in Z^\ast$ or $g = \terminate$}
  		\State Execute within-cluster policy from oracle until termination.
  		\State \Return $\terminate$
  		\Else
  		\State Execute $\pi_{z, g, h}$ obtained from oracle until $g$ is performed or $h = \bar{H}$.
  		\State $s', h' \gets$ current state and time step
  		\If {$g$ was performed}
  		\State \Return $(s', h')$
  		\Else
  		\State \Return{$(s, \bar{H} + 1)$}
  		\EndIf
  		\EndIf
  		\Else
  		\If {$s \in Z^\ast$ or $g = \terminate$}
  		\State \Return{$\terminate$}
  		\Else
  		\State \Return{$(s, h)$}
  		\EndIf
  		\EndIf
  		\EndIf
  		\EndProcedure
  	\end{algorithmic}
  \end{algorithm}

  \paragraph{Meta-dynamics $\transition_\meta$.} Fix $(z, g) \in \mathcal{Z} \times \mathcal{G}$ for some $z \neq \terminate$, so that $z = (s, h)$.
  We consider the procedure in \algoref{alg:meta-transition} for generating the meta-dynamics.
  Intuitively, we execute a meta-action $g \neq \terminate$ by running the oracle-provided policy until the learner encounters $g$, or has acted for $\bar{H}$ timesteps in the current episode.
  On the other hand, if $g = \terminate$, the agent executes the oracle-provided $\epsilon$-suboptimal policy that remains within the current cluster and acts for $H - h$ timesteps.

  Formally, the next state $z'$ is given by
  \[
  z' =
  \begin{cases}
  \terminate & \text{$s \in \cluster^\ast$ or $g = \terminate$} \\
  (s', h') & h \leq \bar{H} \\
  (s, h) & \text{otherwise} \\
  \end{cases},
  \]
  where $s'$ and $h'$ are generated given $T^\hier_{H - h}(s, g)$ as
  \begin{align*}
  h' \conditionedon T^\hier_{H - h}(s, g) &= \min(h + T^\hier_{H - h}(s, g), \bar{H} + 1) \\
  s' \conditionedon h' &\sim
  \begin{cases}
  \transition(\cdot \conditionedon g) & h' \leq \bar{H} \\
  \delta(s) & \text{otherwise}
  \end{cases}.
  \end{align*}
  Note that the learner can only execute meta-actions while $h \leq \bar{H}$.
  Furthermore, given access to $\MDP$, one can easily simulate the dynamics of $\MDP_\meta$.

  \paragraph{Meta-reward $R_\meta$.} Fix $((s, h), g) \in \mathcal{Z} \times \mathcal{G}$.
  Recall that the reward function of $\MDP$ is supported on $\exit\statespace \union \interior{(Z^\ast)}$.
  Thus, this reward function can be lifted onto $\MDP_\meta$.
  Formally, we define the following reward function:
  \[
  R_\meta(z, g) =
  \begin{cases}
  W_h(s) & \text{$z = (s, h)$, $s \in \cluster^\ast$ and $h \leq \bar{H}$} \\
  r(g) & \text{$z = (s, h)$, $s \not\in \cluster^\ast$ and $h' \leq \bar{H}$} \\
  0 & \text{otherwise}
  \end{cases},
  \]
  where $W_h(s)$ is the random sum of rewards obtained by playing a within-cluster policy starting from $s'$ for the rest of the episode.
  Note that $R_\meta$ depends on $\transition_\meta$ and is thus random.
  Furthermore, this reward function is consistent with how meta-transitions are performed in \algoref{alg:meta-transition}.

  \paragraph{Meta-horizon $\horizon_\effective$.} Recall that there exists an optimal policy that encounters at most $\horizon_\effective$ exits with high probability.
  Accordingly, we limit the learner to being able to choose $\horizon_\effective$ high-level actions, which recall can be choices of exits.

  \paragraph{Solving $\MDP_\meta$.} To obtain the desired policy, we apply \Euler{} to $\MDP_\meta$.
  By the construction in \algoref{alg:meta-transition}, the policy set returned by $\Euler{}$ easily translates into policies on $\MDP$.
  Furthermore, the value of this policy is the same on both MDPs.

\subsection{Proving the Regret Bound}

  Having defined the procedure for learning a policy using the hierarchy, we now proceed with the regret analysis.
  Our analysis proceeds by constructing a policy expressible in $\MDP_\meta$ that achieves near-optimal returns by imitating the high-level decisions made by $\pi^\ast$.
  We then use this policy as a comparator policy when applying \Euler{} regret bounds to $\MDP_\meta$.

  To formally construct the desired comparator policy, we need to first define the notion of a meta-history, which contains the set of high-level decisions made by any policy:

  \begin{definition}
  	Fix a policy $\pi$, which given some horizon $L$, generates a (random) trajectory $(s_0, a_0, \dots, s_L)$.
  	Let $\exit{\pi}$ be the number of exits performed in the trajectory, i.e.
  	\[
  	\exit{\pi} = \sum_{h = 0}^{L - 1}\ind{(s_h, a_h) \in \exit\statespace}.
  	\]
  	The meta-history $\history_\meta(\pi)$ corresponding to this trajectory is the sequence
  	\[
  	(z_0, g_0, z_1, g_1, \dots, z_{\exit\pi}) = (s_{i_0}, (s_{j_0}, a_{j_0}), s_{i_1}, (s_{j_1}, a_{j_1}) \dots, s_{i_{\exit{\pi}}}),
  	\]
  	where
  	\begin{equation*}
  	\begin{aligned}
  	i_n &\defas
  	\begin{cases}
  	0 & n = 0 \\
  	j_{n - 1} + 1 & \text{otherwise} \\
  	\end{cases} \\
  	j_n &\defas \min_{h = i_n, \dots, L - 1}\ind{(s_h, a_h) \in \exit\statespace}.
  	\end{aligned}
  	\end{equation*}
  	Note that $z_i \in \entry\statespace$ and $g_i \in \exit\statespace$ for all $i = 0, \dots, \exit\pi$.
  	We omit $\pi$ in writing $\history_\meta$ when the underlying policy $\pi$ is understood.
  \end{definition}

  Informally, $\history_\meta$ tracks all entrances and exits contained in a trajectory generated by $\pi$.
  We define the length of a meta-history $\history_\meta$, denoted as $\card{\history_\meta}$, as the number of exits contained in $\history_\meta$.

  \subsubsection{Policy Construction}

    We now proceed with constructing the desired policy.
    Intuitively, the comparator imitates the distribution over $\history_\meta(\pi^\ast)$, conditioned on $\card{\history_\meta(\pi^\ast)} \leq \horizon_\effective$.
    To see why this is sufficient for near-optimality, recall that the reward on $\MDP_\target$ is supported on $\exit{\cluster^\ast} \union \interior{(\cluster^\ast)}$.
    Consequently, by imitating the distribution over meta-histories, the policy is expected to obtain roughly the same sum of rewards in expectation from the exits.
    Therefore, all that remains is to ensure that the learner collects roughly the same sum of rewards from $\cluster^\ast$, which is the same as ensuring that this policy does not take too long to reach $\cluster^\ast$.

    \textbf{Construction.} Let $\history$ be the running meta-history, containing $k \leq \horizon_\effective$ actions.
    The optimal policy induces a distribution $q(\cdot \conditionedon \history)$ over $\actionspace_\meta$ representing the next exit it takes\footnote{The distribution $q$ can return $\terminate$ if the learner stays in the cluster until episode termination.}.
    We then define $\pi$ as
    \[
    \pi(\cdot \conditionedon z, \history) =
    \begin{cases}
    q(\cdot \conditionedon \history) & z = (s, h), h < \bar{H} \\
    \terminate & \text{otherwise}.
    \end{cases}
    \]
    Observe that $\pi$ terminates the episode upon reaching $\bar{H}$.
    Furthermore, this policy is dependent on the meta-history.
    However, since $\MDP_\meta$ is an MDP, there exists a stationary policy that achieves at least the same value.

  \subsubsection{Suboptimality Analysis}

    In this section, we prove that $\pi$ achieves bounded suboptimality.
    Rather than analyzing $\pi$ directly in $\MDP_\meta$, we construct a new $\tilde\MDP_\meta$ and $\tilde\pi$ to better track the meta-history.
    In particular, conditioned on the event that $\pi$ requires more than $\bar{H}$ time steps to execute, then the agent would not be able to imitate the full meta-history generated by $\pi^\ast$, even after having performed less than $\horizon_\effective$ exits.

    \paragraph{Constructing a surrogate for analysis.}
    We now formalize the construction of the surrogate MDP $\tilde\MDP_\meta$ and the policy $\tilde\pi$ corresponding to $\pi$ in this MDP.
    To obtain $\tilde\MDP_\meta$, we redefine the dynamics from $\MDP_\meta$ so that $s' \conditionedon h' \sim \transition(\cdot \conditionedon g)$ in $\tilde\MDP_\meta$.
    In effect, we allow the policy to continue performing transitions beyond $\bar{H}$, although without any reward.
    Accordingly, we define $\tilde\pi$ as $\tilde\pi(\cdot \conditionedon z, \history) = q(\cdot \conditionedon \history)$.
    The following lemma formalizes how $\tilde\pi$ and $\tilde\MDP_\meta$ have desirable properties for the analysis:

    \begin{lemma}[Surrogate Policy Characterization]
    	\label{lemma:surrogate-policy-characterization}
    	Let $\mu^\ast$ denote the distribution of $\history_\meta(\pi^\ast) \conditionedon \card{\history_\meta(\pi^\ast)} \leq \horizon_\effective$ in $\MDP_\target$, and $\tilde\mu$ the distribution of $\history(\tilde\pi)$ in $\tilde\MDP_\meta$.
    	Then, $(1 - \zeta)\mu^\ast \leq \tilde\mu$.
    \end{lemma}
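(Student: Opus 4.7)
The plan is to prove $\tilde\mu(\history) \geq (1 - \zeta)\mu^\ast(\history)$ pointwise for every meta-history $\history$ with $k \defas \card\history \leq \horizon_\effective$, by first relating $\tilde\mu$ to the \emph{unconditional} probability $p^\ast(\history)$ that $\pi^\ast$ generates meta-history exactly $\history$ in $\MDP_\target$, and then converting $\mu^\ast$ back to $p^\ast$ via the bound from \assumpref{assumption:meta-test-compatibility}(b). The key structural observation is that by construction, $\tilde\pi(\cdot \conditionedon z, \history) = q(\cdot \conditionedon \history)$ is a copy of $\pi^\ast$'s next meta-action distribution given the current meta-history, and the meta-dynamics of $\tilde\MDP_\meta$ (namely $s' \sim \transition_\target(\cdot \conditionedon g)$ for any performed exit $g$) exactly match the post-exit transitions of $\MDP_\target$. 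Since the surrogate removes the artificial clamping at $\bar H$ present in $\MDP_\meta$, the only deviation from $\pi^\ast$'s meta-level behavior under $\tilde\pi$ is the hard truncation at $\horizon_\effective$ meta-actions.

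Making this precise, a chain-rule factorization over meta-actions gives
\[
p^\ast(\history) = \left[\prod_{i = 0}^{k - 1}q(g_i \conditionedon \history_{:i})\,\transition_\target(z_{i + 1} \conditionedon g_i)\right] q(\terminate \conditionedon \history),
\]
where the final factor collects both mechanisms by which $\pi^\ast$ stops taking exits after reaching $z_k$: choosing to stay in its current cluster, or having entered $\cluster^\ast$ and thereby deterministically staying there by \assumpref{assumption:meta-test-compatibility}(a). The analogous factorization of $\tilde\mu(\history)$ under $\tilde\pi$ shares the first bracketed product and differs only in the tail. When $k < \horizon_\effective$, $\tilde\pi$ generates exactly $\history$ iff it subsequently samples $\terminate$ from $q(\cdot \conditionedon \history)$, so $\tilde\mu(\history) = p^\ast(\history)$. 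When $k = \horizon_\effective$, the meta-horizon is exhausted and no tail factor appears in $\tilde\mu$, giving $p^\ast(\history) = q(\terminate \conditionedon \history)\,\tilde\mu(\history) \leq \tilde\mu(\history)$. In both cases, $\tilde\mu(\history) \geq p^\ast(\history)$.

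Finally, by \assumpref{assumption:meta-test-compatibility}(b), $\Pr[\card{\history_\meta(\pi^\ast)} \leq \horizon_\effective] \geq 1 - \zeta$. Hence for any $\history$ with $\card\history \leq \horizon_\effective$,
\[
\mu^\ast(\history) = \frac{p^\ast(\history)}{\Pr[\card{\history_\meta(\pi^\ast)} \leq \horizon_\effective]} \leq \frac{p^\ast(\history)}{1 - \zeta} \leq \frac{\tilde\mu(\history)}{1 - \zeta},
\]
which rearranges to the desired inequality $(1 - \zeta)\mu^\ast \leq \tilde\mu$.

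The main obstacle will be the careful bookkeeping around the meta-action $\terminate$ and the special cluster $\cluster^\ast$: the definition of $q$ folds together multiple mechanisms by which $\pi^\ast$ can stop taking exits, and the surrogate $\tilde\MDP_\meta$ was designed precisely so that these collapse into a single $q(\terminate \conditionedon \cdot)$ factor on both sides of the factorization. Once this accounting is verified, the case split $k < \horizon_\effective$ vs.\ $k = \horizon_\effective$ and the final application of \assumpref{assumption:meta-test-compatibility}(b) are routine.
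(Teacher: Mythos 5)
Your proof is correct and follows essentially the same approach as the paper: the paper directly asserts the mixture identity $\tilde\mu = (1-\zeta)\mu^\ast + \zeta\nu^\ast$ with $\nu^\ast$ the distribution of truncated long meta-histories and reads off the inequality, while you make the same observation pointwise via a chain-rule factorization and a case split on whether $\card{\history}$ hits $\horizon_\effective$. Your version simply expands the bookkeeping that the paper's one-line ``It is easy to see from the definition of $\tilde\pi$ that \dots'' leaves implicit.
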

    \begin{proof}
    	Let $\nu^\ast$ be the distribution induced by the following procedure:
    	\begin{enumerate}[label=(\arabic*)]
    		\item Sample a meta-history from the distribution $\history_\meta(\pi^\ast) \conditionedon \card{\history_\meta(\pi^\ast)} > \horizon_\effective$.

    		\item Truncate the obtained meta-history to length $\horizon_\effective$.
    	\end{enumerate}
    	It is easy to see from the definition of $\tilde\pi$ that $\tilde\mu = (1 - \zeta)\mu^\ast + \zeta\nu^\ast$.
    	The desired result follows.
    \end{proof}

    Thus, we have indeed shown the desired property that $\tilde\pi$ properly tracks the (truncated) meta-history generated by $\pi^\ast$.
    To justify performing our analysis on $(\tilde\MDP_\meta, \tilde\pi)$, we have the following result, which shows that any result on the value of the pair above applies to the value of $\pi$ in $\MDP_\meta$.

    \begin{lemma}
    	\label{lemma:surrogate-achieves-same-value}
    	As constructed above, $V^{\tilde{\MDP}_\meta, \tilde{\pi}}_0(s_0) = V^{\MDP_\meta, \pi}_0(s_0)$.
    \end{lemma}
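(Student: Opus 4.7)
The plan is to construct an explicit coupling of the processes $(\MDP_\meta, \pi)$ and $(\tilde\MDP_\meta, \tilde\pi)$ that forces the two trajectories to coincide on every time step that can contribute a nonzero reward. Concretely, I would drive both chains with the same source of randomness: the same exit-arrival times $T^\hier_{H-h}(s,g)$, the same next-state samples from $\transition(\cdot\mid g)$, and the same policy randomization. This reduces the claim to a pathwise equality of cumulative rewards.

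First I would point out that by inspection of $R_\meta$, every one of its three cases vanishes outside $\{h \leq \bar{H}\}$ (or $\{h' \leq \bar{H}\}$ for the exit case), so only meta-transitions taken from states $(s,h)$ with $h \leq \bar{H}$ contribute to the return in either MDP. Next I would verify that, in the regime $h \leq \bar{H}$, the dynamics of $\MDP_\meta$ and $\tilde\MDP_\meta$ are literally the same (both produce $h' = \min(h + T^\hier_{H-h}(s,g), \bar{H}+1)$ and $s' \sim \transition(\cdot\mid g)$), and the policies agree as well: $\pi(\cdot \mid z, \history) = q(\cdot\mid\history) = \tilde\pi(\cdot\mid z,\history)$ on this event. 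An easy induction on the number of meta-steps shows that, under the coupling, the two chains produce identical state–action–history tuples up to and including the first meta-step after which $h$ exceeds $\bar{H}$.

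Once $h > \bar{H}$, the two processes diverge, but neither can accrue further reward: in $\MDP_\meta$ the policy $\pi$ forces the meta-action $\terminate$, sending the state to the absorbing $\terminate$ with zero reward; in $\tilde\MDP_\meta$ the policy $\tilde\pi$ continues to sample exits, but $R_\meta \equiv 0$ for $h > \bar{H}$ by the earlier observation. Summing along the coupled paths therefore yields the same cumulative reward realization-by-realization, and taking expectations gives $V^{\MDP_\meta,\pi}_0(s_0) = V^{\tilde\MDP_\meta,\tilde\pi}_0(s_0)$.

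The only subtlety I anticipate is confirming that the \emph{meta-histories} on which $\pi$ and $\tilde\pi$ condition are truly equal through the coupling window; this is immediate once one checks that $\history$ is a deterministic function of the trajectory so far and that the trajectories agree during that window. Everything else is an elementary pathwise argument, so I do not expect any genuinely hard step.
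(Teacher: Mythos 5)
Your coupling argument arrives at the same conclusion as the paper but via a genuinely different route. The paper factors the equality into two sub-steps, $V^{\tilde\MDP_\meta,\tilde\pi} = V^{\tilde\MDP_\meta,\pi}$ (change of policy, handled by the performance-difference lemma after observing that $\pi$ and $\tilde\pi$ disagree only on states $z = (s,h)$ with $s \notin \cluster^\ast$, $h \geq \bar H$, where all value functions vanish) and $V^{\tilde\MDP_\meta,\pi} = V^{\MDP_\meta,\pi}$ (change of dynamics, handled by the simulation lemma after observing that the MDPs agree on the reward-bearing region). Your pathwise coupling collapses both into one direct argument; it is more elementary and avoids invoking the two lemmas, at the cost of having to track trajectory agreement by hand. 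One small inaccuracy to fix: you claim that for $h \leq \bar H$ the dynamics are ``literally the same,'' but they are not. When the sampled exit time pushes $h' = \bar H + 1$, $\MDP_\meta$ sets $s' = s$ (the $\delta(s)$ case) whereas $\tilde\MDP_\meta$ still draws $s' \sim \transition(\cdot \mid g)$; so the last coupled transition can land the two chains in different meta-states. This does not break the argument---the reward paid on that transition is $r(g)\ind{h' \leq \bar H} = 0$ in both MDPs, and all subsequent rewards are zero in both as you already argue---but the correct statement is that the coupling makes the \emph{reward sequences} agree pathwise, not the full state-action-history tuples through that final step. With that wording fixed, the proof is sound.
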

    \begin{proof}
    	We write $\MDP \defas \MDP_\meta$ and $\MDP' \defas \tilde\MDP_\meta$.
    	Similarly, we write $\pi' \defas \tilde\pi$.
    	We proceed by proving a chain of equalities.

    	($V^{\pi', \MDP'}(s_0) = V^{\pi, \MDP'}(s_0)$). We omit $\MDP'$ in this part of the argument for clarity.
    	By the performance difference lemma, we have that for any $k \in [\horizon_\effective]$ and $z \in \statespace_\meta$,
    	\[
    	V^{\pi}_{0}(s_0) - V_0^{\pi'}(s_0) = \sum_{j = 0}^{\horizon_\effective - 1}\expt[z \sim d_{j}^{\pi}]{A_j^{\pi'}(z, \pi)}.
    	\]
    	Let $\Delta \defas \set{z \in \statespace_\meta \suchthat z = (s, h), s \not\in Z^\ast, h \geq \bar{H}}$, which is the set on which $\pi$ and $\pi'$ disagree.
    	Observe that for any $\pi$ and $k$, $V_k^\pi(z) = 0$ for any $z \in \Delta$, and thus $A_k^{\pi'}(z, \pi) = 0$ for all such states.
    	For any other $z$, $A_k^{\pi'}(z, \pi)$ is clearly $0$.
    	Thus, we obtain the desired result.

    	($V^{\pi, \MDP'}(s_0) = V^{\pi, \MDP}(s_0)$) We omit $\pi$ in this part of the argument for clarity.
    	Using the simulation lemma,
    	\[
    	V_0^{\MDP}(s_0) - V_0^{\MDP'}(s_0) = \sum_{j = 0}^{\horizon_\effective - 1}\expt[(z, g) \sim d_j^{\MDP'}]{[(\transition_\MDP - \transition_{\MDP'})V_{j + 1}^{\MDP}](z, g)}.
    	\]
    	Observe that the behavior of the two MDPs are identical conditioned on $h' \leq \bar{H}$.
    	On the other hand, conditioned on $h' > \bar{H}$, $\pi$ can no longer receive rewards from either MDP.
    	Therefore, $[(\transition_{\MDP} - \transition_{\MDP'})V_{j}^{\MDP}](z, g) = 0$ for any $j, z, g$ by decomposing the relevant expectations along the two events.
    	We thus obtain the desired result.
    \end{proof}

    \paragraph{Analyzing the surrogate.}
    With the results above, we now proceed to analyze the difference in values
    \[
    V^{\MDP_\target, \ast}_0(s_0) - V^{\tilde\MDP_\meta, \tilde\pi}_0(s_0),
    \]
    which then implies the desired suboptimality result.
    First, we have the following lemma characterizing the time $\tilde{\pi}$ requires to fully execute a given meta-history in the base MDP $\MDP$:

    \begin{lemma}
    	\label{lemma:ref-policy-execution-time}
    	Fix any $\history_\meta = (z_0, g_0, \dots)$ such that $\card{\history_\meta} \leq \horizon_\effective$.
    	Furthermore, define the sequence of reaching times
    	\[
    	T_0 \defas 0 \quad \text{and} \quad T_k \defas T_{k - 1} + T^{\hier}_{H - T_{k - 1}}(z_{k - 1}, u_{k - 1}).
    	\]
    	We define $T^\hier(\history_\meta)$ to be the time required by the hierarchy to execute $\history_\meta$, which is formally given by $T_{\card{\history_\meta}}$ in the sequence above.
    	Then,
    	\begin{enumerate}[label=\textup{(\alph*)}]
    		\item $\expt{T^\hier(\history_\meta)} \leq [1 + (1 + \gamma)W + \epsilon]\horizon_\effective$.

    		\item Let $\sigma^2 \defas \beta^2[(1 + \gamma)W + \epsilon]^2\horizon_\effective$.
    		Then, for any $t > 0$,
    		\[
    		\prob{T^\hier(\history_\meta) \geq [1 + (1 + \gamma)W + \epsilon]\horizon_\effective + t} \leq e^{-t^2/2\sigma^2}.
    		\]
    	\end{enumerate}
    \end{lemma}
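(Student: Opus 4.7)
The plan is to analyze the sequence $(T_k - T_{k-1})$ as a sum of conditionally sub-Gaussian random variables and then apply bounds on each step that come from combining the hierarchy-oracle guarantee with Assumption \ref{assumption:env-low-var-and-regular}. Let $\mathcal{F}_{k-1}$ denote the $\sigma$-algebra generated by $T_0,\dots,T_{k-1}$ together with the realized states/actions visited during the first $k-1$ reaching episodes. Conditioned on $\mathcal{F}_{k-1}$, the policy $\pi_{z_{k-1},g_{k-1},T_{k-1}}$ returned by the oracle is a deterministic near-optimal goal-reaching policy for $(z_{k-1},g_{k-1})$ with horizon $H-T_{k-1}$, since it is $\epsilon$-suboptimal and $\epsilon<\alpha$. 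Hence $(\alpha,\beta)$-unreliability in Assumption \ref{assumption:env-low-var-and-regular}(a) applies to $T^{\hier}_{H-T_{k-1}}(z_{k-1},g_{k-1})$, giving a sub-Gaussian upper tail with variance proxy $\beta^2\,\mathbb{E}[T^{\hier}_{H-T_{k-1}}(z_{k-1},g_{k-1})\mid\mathcal{F}_{k-1}]^2$.

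Next I would control the conditional mean of each increment. Using the guarantee on the oracle (established immediately before the lemma) together with $\gamma$-goal-reaching suboptimality and the width bound $T^{\min}(z_{k-1},g_{k-1})\le W$ from Assumption \ref{assumption:skill-horizon},
\begin{equation*}
\mathbb{E}[T^{\hier}_{H-T_{k-1}}(z_{k-1},g_{k-1})\mid\mathcal{F}_{k-1}] \le T^\ast_{H-T_{k-1}}(z_{k-1},g_{k-1})+\epsilon \le (1+\gamma)W+\epsilon.
\end{equation*}
Accounting for the additional step in which $g_{k-1}$ is performed after reaching the exit state contributes the extra $1$. Summing this bound over $k=1,\dots,\card{\history_\meta}\le\horizon_\effective$ and applying the tower property yields part (a) directly.

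For part (b) I would set $X_k \defas T_k-T_{k-1}-\mathbb{E}[T_k-T_{k-1}\mid\mathcal{F}_{k-1}]$, so that $(X_k)$ is a martingale difference sequence with respect to $(\mathcal{F}_k)$. By the conditional sub-Gaussianity argument above, each $X_k$ has a conditional sub-Gaussian upper tail with variance proxy at most $\beta^2[(1+\gamma)W+\epsilon]^2$, independently of the realized $\mathcal{F}_{k-1}$. A standard sub-Gaussian martingale concentration bound (Azuma–Hoeffding for conditionally sub-Gaussian increments) then yields
\begin{equation*}
\prob{\sum_{k=1}^{\card{\history_\meta}}X_k \ge t} \le \exp\!\left(-\frac{t^2}{2\horizon_\effective\beta^2[(1+\gamma)W+\epsilon]^2}\right)=\exp(-t^2/2\sigma^2).
\end{equation*}
Combining this with the mean bound from part (a) gives the stated tail bound on $T^\hier(\history_\meta)$.

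The main technical subtlety is ensuring that $(\alpha,\beta)$-unreliability really applies at each step: it is stated only for deterministic policies with expected reaching time within $\alpha$ of optimal, so I need to verify both that the oracle's output is deterministic and that its suboptimality $\epsilon<\alpha$ transfers to suboptimality in expected reaching time (rather than just in cumulative reward). This is exactly what the proposition immediately preceding the lemma establishes via the identity $V^\pi_0(z)=(H-h)-\mathbb{E}[T^\pi_{H-h}(z,g)]$, so the hook is available; I just need to apply it carefully. A secondary nuisance is the truncation at horizon $H-T_{k-1}$ inside $T^\hier_{H-T_{k-1}}$, but since the bound $(1+\gamma)W+\epsilon$ on the conditional expectation is independent of the remaining horizon and since truncating a random variable from above can only shrink its upper tail, the sub-Gaussian upper-tail bound is preserved.
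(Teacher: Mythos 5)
Your proof is correct and takes essentially the same approach as the paper. For part (a) you bound each increment's conditional mean using the oracle's $\epsilon$-suboptimality in reaching time, $\gamma$-goal-reaching suboptimality, and the width bound $W$, then sum via the tower property—exactly as in the paper. For part (b) you set up the martingale-difference decomposition $X_k = T_k - T_{k-1} - \mathbb{E}[T_k - T_{k-1}\mid\mathcal{F}_{k-1}]$ and invoke Azuma–Hoeffding for conditionally sub-Gaussian increments; the paper carries out the same MGF induction $\mathbb{E}[e^{\lambda(T_k - B_k)}]\le \mathbb{E}[e^{\lambda(T_{k-1}-B_{k-1})}]e^{\lambda^2 C^2/2}$ by hand rather than citing a black-box lemma, but centers on the deterministic running bound $B_k$ instead of the random sum of conditional means—these are interchangeable once one observes $\sum_k\mathbb{E}[T_k - T_{k-1}\mid\mathcal{F}_{k-1}]\le B_N\le[1+(1+\gamma)W+\epsilon]\horizon_\effective$, which you correctly use. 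Your flagging of the two subtleties (that $(\alpha,\beta)$-unreliability must apply to the oracle-returned policy, and the horizon truncation) is accurate and matches what the paper implicitly relies on via the preceding proposition and the $\epsilon<\alpha$ precondition.
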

    \begin{proof}
    	We prove the two parts separately:
    	\begin{enumerate}[label=(\alph*)]
    		\item We will prove via induction that $\expt{T_k} \leq k\left[1 + (1 + \gamma)W + \epsilon\right]$.
    		For any $k$ and $T_{k - 1}$,
    		\begin{align*}
    		\expt{T_{H - T_{k - 1}}^\hier(z_{k - 1}, g_{k - 1}) \suchthat T_{k - 1}} &\leq \expt{T_{H - T_{k - 1}}^\ast(z_{k - 1}, g_{k - 1}) \suchthat T_{k - 1}} + \epsilon \\
    		&= 1 + \expt{T_{H - T_{k - 1}}^\ast(z_{k - 1}, s(g_{k - 1})) \suchthat T_{k - 1}} + \epsilon \\
    		&\leq 1 + (1 + \gamma)W + \epsilon,
    		\end{align*}
    		where the first inequality uses properties of the hierarchy oracle, while the final inequality follows by combining \assumpref{assumption:env-low-var-and-regular}(b) and \assumpref{assumption:skill-horizon}.
    		Therefore, by linearity and the tower property of expectation,
    		\begin{align*}
    		\expt{T_k} &= \expt{T_{k - 1}} + \expt{T_{H - T_{k - 1}}^\hier(z_{k - 1}, g_{k - 1})} \\
    		&= \expt{T_{k - 1}} + \expt{\expt{T_{H - T_{k - 1}}^\hier(z_{k - 1}, g_{k - 1}) \suchthat T_{k - 1}}} \\
    		&\leq \expt{T_{k - 1}} + 1 + (1 + \gamma)W + \epsilon.
    		\end{align*}
    		The desired result then follows by induction.

    		\item Let $B_k \defas k\left[1 + (1 + \gamma)W + \epsilon\right]$ and $f_k(t) \defas \expt{T_{H - t}^\hier(z_k, g_k)}$.
    		Note that for any $k$ and $t$, $B_{k - 1} + f_{k - 1}(t) \leq B_k$, by following the argument in (a).
    		Therefore, for any $\lambda > 0$,
    		\begin{align*}
    		&\expt{\exp\left\{\lambda\left(T_k - B_k\right)\right\}} \\
    		&\quad = \expt{\expt{\exp\left\{\lambda\left(T_k - B_k\right)\right\} \suchthat T_{k - 1}}} \\
    		&\quad \leq \expt{\expt{\exp\left\{\lambda\left(T_{k - 1} + T_{H - T_{k - 1}}^\hier(z_{k - 1}, g_{k - 1}) - B_{k - 1} - f_{k - 1}(T_{k - 1})\right)\right\} \suchthat T_{k - 1}}},
    		\end{align*}
    		where the last inequality uses the monotonicity of the exponential function.
    		Therefore, by applying the sub-Gaussian condition given in \assumpref{assumption:env-low-var-and-regular},
    		\begin{align*}
    		&\expt{\exp\left\{\lambda\left(T_k - B_k\right)\right\}} \\
    		&\quad \leq \mathbb{E}\left[\exp\left\{\lambda\left(T_{k - 1} - B_{k - 1} \right)\right\}\vphantom{\expt{T_{H - T_{k - 1}}^\hier(z_{k - 1}, g_{k - 1}) \suchthat T_{k - 1}}}\right. \\
    		&\qquad\qquad\qquad \left.\expt{\exp\left\{\lambda\left(T_{H - T_{k - 1}}^\hier(z_{k - 1}, g_{k - 1}) - f_{k - 1}(T_{k - 1})\right)\right\} \suchthat T_{k - 1}}\right] \\
    		&\quad \leq \expt{\exp\left\{\lambda(T_{k - 1} - B_{k - 1})\right\}}\exp\left[\lambda^2C^2/2\right],
    		\end{align*}
    		where we have used the fact that $T_{H - T_{k - 1}}^\hier(z_{k - 1}, s(g_{k - 1}))$ has a sub-Gaussian upper tail with variance proxy
    		\begin{align*}
    		C^2 &= \beta^2\expt{T_{H - T_{k - 1}}^\pi(z_{k - 1}, s(g_{k - 1})) \suchthat T_{k - 1}}^2 \\
    		&\leq \beta^2\left[(1 + \gamma)W + \epsilon\right]^2.
    		\end{align*}
    		Note that we have once again used the properties of the hierarchy oracle, and Assumptions \ref{assumption:env-low-var-and-regular} and \ref{assumption:skill-horizon}.
    		Therefore, by induction, $\expt{\exp\left\{\lambda\left(T_k - B_k\right)\right\}} \leq \expt{\lambda^2(\sqrt{k}C)^2/2}$, from which the desired tail bound follows by making use of Chernoff's inequality. \qedhere
    	\end{enumerate}
    \end{proof}

    As we have shown that $\tilde\pi$ closely tracks the meta-history of $\pi^\ast$ and have analyzed the distribution of time it takes to execute a given meta-history, we can now analyze its suboptimality:

    \begin{lemma}
    	There exists a policy $\pi$ expressible in $\MDP_\meta$ such that
    	\[
    	V^{\MDP_\target, \ast}_0(s_0) - V^{\MDP_\target, \pi}_0(s_0) \lesssim (1 + \horizon_\effective + \beta\sqrt{\horizon_\effective})\epsilon + \left[\gamma\horizon_\effective + \beta(1 + \gamma)\sqrt{\horizon_\effective}\right]W + \zeta H.
    	\]
    \end{lemma}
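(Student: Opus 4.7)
The plan is to analyze $\pi$ via the surrogate pair $(\tilde{\MDP}_\meta, \tilde\pi)$, which by \lemmaref{lemma:surrogate-achieves-same-value} has the same value as $(\MDP_\meta, \pi)$. Since $r_\target$ is supported on $\interior{(\cluster^\ast)} \union \exit\statespace$, I decompose the value gap $V^{\MDP_\target,\ast}_0(s_0) - V^{\MDP_\target,\pi}_0(s_0)$ into three pieces: (i) the loss from restricting $\pi^\ast$ to the event $\mathcal{E} = \{\card{\history_\meta(\pi^\ast)} \leq \horizon_\effective\}$; (ii) the loss on the sum of exit rewards; and (iii) the loss on the sum of rewards collected while inside $\cluster^\ast$.

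For (i), \assumpref{assumption:meta-test-compatibility}(b) directly bounds this term by $\zeta H$. For (ii), \lemmaref{lemma:surrogate-policy-characterization} gives $\tilde\mu \geq (1-\zeta)\mu^\ast$; since each exit reward depends only on the meta-history and not on its wall-clock execution time, the expected exit reward of $\tilde\pi$ matches that of $\pi^\ast \mid \mathcal{E}$ up to the multiplicative $(1-\zeta)$ slack, which absorbs into another $\zeta H$.

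The main obstacle is (iii). I would set the effective horizon $\bar{H} \defas \expt{T^\hier(\history_\meta)} + \Theta\bigl(\sigma\sqrt{\log(1/\zeta)}\bigr)$, where $\sigma = \beta[(1+\gamma)W + \epsilon]\sqrt{\horizon_\effective}$, so that by the sub-Gaussian tail in \lemmaref{lemma:ref-policy-execution-time} $\tilde\pi$ completes its meta-history within $\bar{H}$ steps with probability at least $1 - \zeta$ (contributing at most another $\zeta H$ on failure). Conditional on completion, $\pi$ spends at least $H - \bar{H}$ steps inside $\cluster^\ast$, whereas $\pi^\ast$ could spend up to $H - W\horizon_\effective$ steps there using the minimum-time paths permitted by \assumpref{assumption:skill-horizon}. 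The within-cluster time gap is therefore at most $\bar{H} - W\horizon_\effective \leq \gamma\horizon_\effective W + \horizon_\effective\epsilon + O\bigl(\beta(1+\gamma)\sqrt{\horizon_\effective}\,W + \beta\sqrt{\horizon_\effective}\,\epsilon\bigr)$, each lost step costs at most one unit of reward, and an additional $\epsilon$ is incurred from the $\epsilon$-suboptimality of the within-cluster oracle call that $\pi$ invokes upon entering $\cluster^\ast$.

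Combining the three pieces and absorbing constants into $\lesssim$, each summand in the stated bound traces cleanly to one source: the $\zeta H$ collects the conditioning, dominance-slack, and tail-failure losses; the $(1+\horizon_\effective+\beta\sqrt{\horizon_\effective})\epsilon$ combines the within-cluster oracle error with the appearances of $\epsilon$ in $\expt{T^\hier}$ and in its sub-Gaussian variance proxy; and the $[\gamma\horizon_\effective + \beta(1+\gamma)\sqrt{\horizon_\effective}]W$ arises from $\gamma$-suboptimality of goal-reaching relative to $T^{\min}$ together with the sub-Gaussian deviation of $T^\hier$. The chief subtlety is verifying that the high-probability failure event is only charged once, and that the $(1-\zeta)\mu^\ast$ domination used for exit rewards composes correctly with the within-cluster reward contribution without double-counting.
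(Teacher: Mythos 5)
Your high-level plan matches the paper's: analyze the surrogate $(\tilde\MDP_\meta,\tilde\pi)$ via \lemmaref{lemma:surrogate-achieves-same-value}, pay $\zeta H$ for conditioning on $E = \set{\card{\history_\meta(\pi^\ast)} \leq \horizon_\effective}$, use \lemmaref{lemma:surrogate-policy-characterization} for the exit-reward part, and control the within-cluster loss via the sub-Gaussian tail of $T^\hier$ from \lemmaref{lemma:ref-policy-execution-time}. However, the within-cluster comparison as you have written it contains a real gap.

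You bound the within-cluster time gap as $\bar{H} - W\horizon_\effective$ and claim this is at most $\gamma\horizon_\effective W + \horizon_\effective\epsilon + O(\beta(1+\gamma)\sqrt{\horizon_\effective}W + \beta\sqrt{\horizon_\effective}\epsilon)$. This inequality is off by an additive $\horizon_\effective$: with $\bar{H} \approx [1 + (1+\gamma)W + \epsilon]\horizon_\effective + O(\beta[(1+\gamma)W+\epsilon]\sqrt{\horizon_\effective\log(1/\zeta)})$ we get $\bar{H} - W\horizon_\effective = (1+\gamma W + \epsilon)\horizon_\effective + O(\cdots)$, and a bare $\horizon_\effective$ term does not appear in the target bound. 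The error stems from comparing $\bar{H}$ to the \emph{fixed} quantity $W\horizon_\effective$. Two problems: first, $W\horizon_\effective$ does not account for the one timestep per exit that appears in both $T^\hier$ and the optimal traversal time; second, \assumpref{assumption:skill-horizon} only gives an \emph{upper} bound $T^{\min}(s,g) \leq W$, so your claim that $\pi^\ast$ ``could spend up to $H - W\horizon_\effective$ steps'' in $\cluster^\ast$ presupposes a lower bound $T^\ast \geq W\horizon_\effective$ that need not hold. The paper sidesteps both issues by comparing $T_N$ to the history-dependent quantity $T_{\min}(\history_\meta) = \sum_k T_{\min}(z_k, u_k)$ inside the conditional expectation: the per-exit ``$+1$'' terms and the $W$-leading term cancel pairwise, so $\expt{T_N - T_{\min} \mid \history_\meta}$ has order $\horizon_\effective(\gamma W + \epsilon)$, not $(1 + \gamma W + \epsilon)\horizon_\effective$, and the paper then integrates the tail of $T_N - T_{\min}$ rather than committing to a single high-probability threshold. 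Your fixed-threshold variant can be repaired by replacing $W\horizon_\effective$ with $T_{\min}(\history_\meta)$ in the comparison, but as written the argument loses the cancellation and overcounts by $\horizon_\effective$. One smaller point: $\bar{H}$ must be a constant of the construction in \secref{sec:meta-test-learning-procedure}, so you should define it via the worst-case bound on $\expt{T^\hier}$ rather than the (random, history-dependent) quantity $\expt{T^\hier(\history_\meta)}$.
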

    \begin{proof}
    	Assume that $\pi^\ast$ generates a (random) meta-history of length $N$ given by $\history_\meta = (z_0, g_0, z_1, g_1, \dots, z_N)$.
    	Furthermore, let $T^\ast$ denote the (random) time $\pi^\ast$ takes to reach $z_N$.
    	Then, given $\history_\meta$ and $T^\ast$, observe that we can write
    	\[
    	V_0^\ast(s_0) = \expt{R_{T^\ast}^\ast(\history_\meta)}, \quad \text{where } R_{T}^\ast(\history_\meta) \defas V_{T}^\ast(z_N)\ind{z_N \in \cluster^\ast} + \sum_{k = 0}^{N - 1}r(g_k),
    	\]
    	using the assumptions on the reward function and condition (a) in \assumpref{assumption:meta-test-compatibility}.
    	Subsequently, letting $E$ be the event $\set{N \leq \horizon_\effective}$, we can bound the right-hand side as
    	\[
    	V_0^\ast(s_0) = \expt{R_{T^\ast}^\ast(\history_\meta)} \leq (1 - \zeta)\expt{R_{T^\ast}^\ast(\history_\meta) \suchthat E} + \zeta H,
    	\]
    	where we have used \assumpref{assumption:meta-test-compatibility} to bound the probability that $N > \horizon_\effective$.

    	Our goal for the rest of this proof is to transform the expectation on the right-hand side into a form that lower bounds $V^{\pi}_0(s_0)$.
    	To this end, we define
    	\[
    	R^\hier_{T}(\history_\meta) \defas V_T^\hier(z_N)\ind{z_N \in Z^\ast} + \sum_{k = 0}^{N - 1}r(g_k),
    	\]
    	and the sequence of times
    	\[
    	T_0 = 0 \quad \text{and} \quad T_k \defas T_{k - 1} + T^\hier_{H - T_{k - 1}}(z_k, g_k).
    	\]
    	Note that $R^\hier$ and $T_N$ are analogous to $R^\ast$ and $T^\ast$, respectively.
    	Then, letting $F$ be the event $\set{T_N \leq \bar{H}}$, note that
    	\begin{align*}
    	V_0^\ast(s_0) &\leq (1 - \zeta)\expt{R^\ast_{T^\ast} \suchthat E} + \zeta H \\
    	&= (1 - \zeta)\expt{R^\ast_{T^\ast} - R^\hier_{T_N} + R^\hier_{T_N} \suchthat E} + \zeta H \\
    	&\leq \underbrace{\expt{\left(R^\ast_{T^\ast} - R^\hier_{T_N}\right)\ind{F} \suchthat E}}_{\mathrm{(I)}} + \underbrace{(1 - \zeta)\expt{R^\hier_{T_N}\ind{F} \suchthat E}}_{\mathrm{(II)}} + \left[\zeta + \prob{F\complement \suchthat E}\right]H.
    	\end{align*}
    	We bound $\mathrm{(I)}$ and $\mathrm{(II)}$ separately.

    	\textit{Bounding} $\mathrm{(I)}$.
    	Let $G$ be the event $E \intersect \set{z_N \in \cluster^\ast}$.
    	Then, if we define
    	\[
    	T_{\min} = \sum_{k = 0}^{N - 1}T_{\min}(z_k, u_k) \leq N(W + 1)
    	\]
    	as the minimum time needed to execute $\history_\meta$, we then have that
    	\begin{align*}
    	\expt{\left(R^\ast_{T^\ast} - R^\hier_{T_N}\right)\ind{F} \suchthat E} &\leq \expt{R^\ast_{T^\ast} - R^\hier_{T_N} \suchthat E} \\
    	&\leq \expt{V_{T^\ast}^\ast(z_N) - V^\hier_{T_N}(z_n) \suchthat G} \\
    	&\leq \expt{V^\ast_{T_{\min}}(z_N) - V_{T_N}^\hier(z_N) \suchthat G} \\
    	&\leq \int_{0}^{H}\prob{V_{T_{\min}}^\ast(z_N) - V_{T_N}^\hier(z_N) > \alpha \suchthat G} \ \diff{\alpha}.
    	\end{align*}
    	Note that the bound on $T_{\min}$ follows from \assumpref{assumption:skill-horizon}.
    	To convert the different in values into a difference of times, observe that if $T_N - T_{\min} \leq \alpha - \epsilon$, then
    	\begin{align*}
    	V_{T_{\min}}^\ast(z_N) - V_{T_N}^\hier(z_N) &= V_{T_{\min}}^\ast(z_N) - V_{T_N}^\ast(z_N) + V_{T_N}^\ast(z_N) - V_{T_N}^\hier(z_N) \\
    	&\leq (\bar{T} - T_{\min}) + \epsilon \\
    	&\leq \alpha.
    	\end{align*}
    	Therefore,
    	\begin{align*}
    	&\int_{0}^{H}\prob{V_{T_{\min}}^\ast(z_N) - V_{T_N}^\hier(z_N) > \alpha \suchthat G} \ \diff{\alpha} \\
    	&\quad \leq \int_{0}^{H}\prob{T_N - T_{\min} > \alpha - \epsilon \suchthat G} \ \diff{\alpha} \\
    	&\quad \leq \expt{\int_{0}^{H}\prob{T_N - T_{\min} > \alpha - \epsilon \suchthat \history_\meta} \ \diff{\alpha} \suchthat G} \\
    	&\quad \leq \expt{\int_{0}^{H}\prob{T_N - [1 + (1 + \gamma)W + \epsilon]\horizon_\effective > \alpha - \epsilon - \horizon_\effective(\gamma W + \epsilon) \suchthat \history_\meta} \ \diff{\alpha} \suchthat G} \\
    	&\quad \leq \epsilon + \horizon_\effective(\gamma W + \epsilon) + \expt{\int_{0}^{\infty}\prob{T_N - [1 + (1 + \gamma)W + \epsilon]\horizon_\effective > \alpha \suchthat \history_\meta} \ \diff{\alpha} \suchthat G} \\
    	&\quad \lesssim \epsilon + \horizon_\effective(\gamma W + \epsilon) + \beta[(1 + \gamma)W + \epsilon]\sqrt{\horizon_\effective},
    	\end{align*}
    	where the final inequality integrates the tail bound provided in \lemmaref{lemma:ref-policy-execution-time}.
    	Overall, we have that by rearranging,
    	\[
    	\mathrm{(I)} \lesssim (1 + \horizon_\effective + \beta\sqrt{\horizon_\effective})\epsilon + \left[\gamma\horizon_\effective + \beta(1 + \gamma)\sqrt{\horizon_\effective}\right]W.
    	\]

    	\textit{Bounding} $\mathrm{(II)}$. By the characterization of $\tilde\pi$ in \lemmaref{lemma:surrogate-policy-characterization},
    	\[
    	(1 - \zeta)\expt{R^\hier_{T_N}(\history_\meta(\pi^\ast))\ind{F} \suchthat E} \leq \expt{R^\hier_{T_N}(\history_\meta(\tilde{\pi}))\ind{F}} \leq V_0^{\tilde\pi}(s_0),
    	\]
    	where the final inequality uses the fact that $\bar{R}_{\bar{T}}(\history_\meta(\tilde\pi))$ is the return of $\tilde\pi$ in $\tilde{\MDP}_\meta$, given $F$.

    	\textit{Concluding.} Putting all of the previous bounds together, we find that
    	\begin{align*}
    	V_0^\ast(s_0) &\leq V_0^{\bar{\pi}}(s_0) + (1 + \horizon_\effective + \beta\sqrt{\horizon_\effective})\epsilon + \left[\gamma\horizon_\effective + \beta(1 + \gamma)\sqrt{\horizon_\effective}\right]W \\
    	&\qquad + \left[\zeta + \prob{F\complement \suchthat E}\right]H.
    	\end{align*}
    	By setting $\bar{H}$ to
    	\[
    	\bar{H} = \horizon_\effective[1 + (1 + \gamma)W + \epsilon] + \beta[(1 + \gamma)W + \epsilon]\sqrt{2\horizon_\effective\log\frac{1}{\zeta}} \ll H,
    	\]
    	sub-Gaussian tail bounds on $T_N$ implies that $\prob{F\complement \suchthat E} \leq \zeta$.
    	Finally, by \lemmaref{lemma:surrogate-achieves-same-value},
    	\[
    	V_0^{\tilde\MDP_\meta, \tilde\pi}(s_0) = V_0^{\MDP_\meta, \pi}(s_0) = V_0^{\MDP_\target, \pi},
    	\]
    	where the last equality follows by the construction of $\MDP_\meta$.
    	We thus obtain the desired suboptimality bound.
    \end{proof}

  \subsubsection{Regret Analysis}

    As earlier suggested, we now make use of $\tilde\pi$ as a comparator policy in order to prove a regret bound on a learner making use of the procedure outlined in \secref{sec:meta-test-learning-procedure}.

    \begin{theorem}
    	Assume that \Euler{} generates policies $\pi_1, \dots, \pi_N$ on $\MDP_\meta$, as constructed in \secref{sec:meta-test-learning-procedure}.
    	Then, we have the following regret bound:
    	\[
    	\sum_{k = 1}^{N}V^{\ast}_0(s_0) - V^{\pi_k}_0(s_0) \lesssim \sqrt{H^2\bar{H}LMN} + N\epsilon_{\mathrm{subopt}},
    	\]
    	where
    	\[
    	\epsilon_{\mathrm{subopt}} \defas (1 + \horizon_\effective + \beta\sqrt{\horizon_\effective})\epsilon + \left[\gamma\horizon_\effective + \beta(1 + \gamma)\sqrt{\horizon_\effective}\right]W + \zeta H.
    	\]
    \end{theorem}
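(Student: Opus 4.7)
The plan is to split the regret into a suboptimality term and an estimation term by routing through the comparator policy $\pi$ built in the previous section. Let $\pi_\meta^\ast$ denote an optimal policy for $\MDP_\meta$ (so $V_0^{\MDP_\meta, \pi_\meta^\ast}(s_0) \geq V_0^{\MDP_\meta, \pi}(s_0)$). For each $k$, write
\[
V_0^{\MDP_\target, \ast}(s_0) - V_0^{\MDP_\target, \pi_k}(s_0) = \underbrace{\left[V_0^{\MDP_\target, \ast}(s_0) - V_0^{\MDP_\meta, \pi}(s_0)\right]}_{\text{hierarchical suboptimality}} + \underbrace{\left[V_0^{\MDP_\meta, \pi}(s_0) - V_0^{\MDP_\meta, \pi_k}(s_0)\right]}_{\text{meta-MDP regret}},
\]
using the fact that meta-policies have the same value in $\MDP_\meta$ as their execution in $\MDP_\target$ (which follows directly from the construction of $\transition_\meta$ and $R_\meta$ via \algoref{alg:meta-transition}). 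The first bracket is bounded by $\epsilon_{\mathrm{subopt}}$ uniformly in $k$ by the preceding suboptimality lemma, contributing $N\epsilon_{\mathrm{subopt}}$ to the total regret.

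For the second bracket, the plan is to invoke \Euler{}'s regret guarantee (as in \lemmaref{lemma:euler-regret-bound}) on $\MDP_\meta$, upper bounded by the regret against $\pi_\meta^\ast$. The relevant problem parameters are the meta-state space size $|\mathcal{Z}| = O(L\bar{H})$ (entrances paired with time indices), the meta-action space size $|\mathcal{G}| \leq M + 1$ (the per-cluster exit count plus $\terminate$), the meta-horizon $\horizon_\effective$, and the value bound $V_0^{\MDP_\meta, \ast}(s_0) \leq H$ inherited from $\MDP_\target$. Substituting these into the \Euler{} bound of the form $\sqrt{V_0^\ast\, H\, S A\, N}$ plus lower-order terms yields a leading term of $\sqrt{H \cdot H \cdot L\bar{H} \cdot M \cdot N} = \sqrt{H^2\bar{H}LMN}$, as claimed. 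Summing the two bounds gives the stated regret.

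The main obstacle I anticipate is verifying that \Euler{}'s regret bound indeed applies cleanly to $\MDP_\meta$. There are two subtleties: (i) the meta-reward $R_\meta$ is random and per-meta-step rewards can be as large as $H$ (due to the within-cluster return term $W_h(s)$), so one must argue that the relevant concentration still goes through because the \emph{total} return per episode is bounded by $H$; and (ii) the meta-transition is stochastic in a non-standard way because its execution involves random stopping times $T^\hier_{H-h}(s,g)$, so one should check that $\MDP_\meta$ is a bona fide finite MDP in which \Euler{} can be implemented by simulating transitions via $\MDP_\target$ plus the oracle. Both concerns should be handled by noting that $R_\meta$ is bounded almost surely by $H$ on any trajectory and that \algoref{alg:meta-transition} provides a well-defined sampling oracle for $\transition_\meta$. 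Once these are checked, the regret bound composes with the suboptimality bound through the triangle-style decomposition above to yield the theorem.
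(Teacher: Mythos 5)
Your proposal follows essentially the same route as the paper's proof: decompose the meta-test regret into a hierarchical-suboptimality term (controlled uniformly by the preceding lemma, giving $N\epsilon_{\mathrm{subopt}}$) plus the regret of \Euler{} on $\MDP_\meta$, instantiated with meta-state space of size $O(L\bar{H})$, meta-action space of size $O(M)$, and horizon $\horizon_\effective$. The one device the paper deploys explicitly that you flag only as an anticipated obstacle is handling the fact that per-step meta-rewards can be as large as $H$: the paper rescales rewards by $1/H$ so that the total return is in $[0,1]$, which yields $\mathcal{G}\le 1$ in the variance-aware \Euler{} bound from \citet{zanette2019tighter} and produces the $\sqrt{H^2\bar{H}LMN}$ leading term after undoing the scaling; your intermediate substitution into a generic $\sqrt{V_0^\ast H S A N}$ form is a bit loose about which horizon appears but arrives at the same leading term.
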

    \begin{proof}
    	Throughout the proof, we consider applying \Euler{} to $\MDP_\meta$ where the rewards are scaled by $1/H$ to ensure that rewards are bounded in $[0, 1]$.
    	As a result, we can bound $\mathcal{G} \leq 1$ in the \Euler{} regret bound in \citet{zanette2019tighter}, since the sum of rewards in $\MDP_\meta$ is also the sum of rewards in $\MDP$, and scaling by $1/H$ gives the desired bound on $\mathcal{G}$.
    	Therefore,
    	\[
    	\sum_{k = 1}^{N}V_0^{\ast, \MDP_\meta}(s_0) - V_0^{\pi_k}(s_0) \lesssim H\sqrt{\frac{1}{\horizon_\effective}\bar{H}LM\horizon_\effective N} = \sqrt{H^2\bar{H}LMN}.
    	\]
    	Furthermore,
    	\[
    	V^\ast_0(s_0) - V_0^{\ast, \MDP_\meta}(s_0) \leq V^\ast_0(s_0) - V^{\pi}_0(s_0) + V_0^{\pi, \MDP_\meta}(s_0) - V_0^{\ast, \MDP_\meta}(s_0) \leq \epsilon_{\mathrm{subopt}}.
    	\]
    	We thus obtain the desired result.
    \end{proof}

\subsection{An Exponential Regret Separation for a Hierarchy-Oblivious Learner}

  In this section, we provide proof of the exponential regret separation between a hierarchical learner and a learner oblivious to the hierarchy.
  The overall idea behind our proof is the reduction of solving the family of minimax instances described in \citet{domingues2021episodic} to a particular family of task distributions.

  \subsubsection{The Hard Task Distribution Family}

    In this section, we describe the family of task distributions that forces any meta-training-oblivious learner to incur exponential regret.
    For any string $s$, we write $\card{s}$ for its length.

    We now define the family of binary tree room MDPs $\mathbb{M}_{W}$ of depth $W$.
    We index a member of this family by a tuple $(\ell^\ast, a^\ast, e^\ast)$, where $\ell^\ast$ is a binary string of length $W - 1$, and $a^\ast, e^\ast \in \set{0, 1}$.
    The MDP $\MDP_{(\ell^\ast, a^\ast, e^\ast)} = (\statespace, \actionspace, \transition_{(\ell^\ast, a^\ast, e^\ast)}, r, H)$ corresponding to this tuple is constructed as follows:

    \paragraph{State Space $\statespace$.}
    We create a root state $s_{\troot}$, $2^{W} - 1$ states indexed by binary strings of length at most $W - 1$ collected into a set $T = \set{s_0, s_1, s_{00}, s_{01}, \dots}$, a gate state $s_{\gate}$, and terminal states $\terminate_\trap, \terminate_S, \terminate_F$.

    \paragraph{Action Space $\actionspace$.}
    The set of available actions at every state is the set $\set{0, 1}$.

    \paragraph{Transition Dynamics $\transition_{(\ell^\ast, a^\ast, e^\ast)}$.}
    We define the dynamics as follows:
    \[
    \transition_{(\ell^\ast, a^\ast, e^\ast)}(\cdot \conditionedon s, a) =
    \begin{cases}
    \delta(s_a) & s = s_\troot \\
    \delta(s_{ta}) & s = s_t \in T, \card{t} < W - 1 \\
    b\delta(s_\gate) + (1 - b)\delta(\terminate_\trap) & s = s_t \in T, \card{t} = W - 1, \\
    {} & s \neq s_{\ell^\ast}, b \sim \bernoulli{1/2} \\
    b\delta(s_\gate) + (1 - b)\delta(\terminate_\trap) & s = s_{\ell^\ast}, b \sim \bernoulli{1/2 + \epsilon\ind{a = a^\ast}} \\
    \delta(\terminate_S) & s = s_\gate, a = e^\ast \\
    \delta(\terminate_F) & s = s_\gate, a \neq e^\ast.
    \end{cases}
    \]

    \paragraph{Reward Function $r$.} The reward function is $r(s, a) = \ind{s = \terminate_S} + \ind{s = s_\gate, a = a^\ast}$.

    Having described all the components of every member of $\mathbb{M}_W$, all that remains is to construct the family of task distributions $\mathbb{T}_W$.
    Each member of this family will be indexed by $(\ell^\ast, a^\ast)$, where $\ell^\ast$ and $a^\ast$ are as described above.
    Then, the task distribution $\mathcal{T}_{(\ell^\ast, a^\ast)} \in \mathbb{T}_W$ chooses uniformly within the set $\set{\MDP_{(\ell^\ast, a^\ast, 0)}, \MDP_{(\ell^\ast, a^\ast, 1)}}$.
    Note that this implicitly defines the latent hierarchy so that the clusters are $\set{s_\troot, s_\gate, \terminate_\trap} \union T$, $\set{\terminate_S}$, and $\set{\terminate_F}$.
    Furthermore, the set of exits for the first cluster is $\set{(s_\gate, 0), (s_\gate, 1)}$.

  \subsubsection{A Family of Hard Instances}

    \begin{algorithm}[b!]
    	\caption{The reduction of learning $\mathbb{N}_W$ to learning $\mathbb{M}_W$ in \secref{sec:hardness-result-formal-proof}.}
    	\label{alg:hard-case-reduction}
    	\begin{algorithmic}[1]
    		\Procedure{$\mathcal{P}_{\mathcal{A}}$}{$\MDP \in \mathbb{N}_W$}
    		\State Initialize $\history_0 = \emptyset$
    		\ForAll{$n \in [N]$}
    		\State Obtain $\pi_n = \mathcal{A}(\history_0, \dots, \history_{n - 1})$.
    		\State Play $\pi_n$ in $\MDP$, get history $\mathcal{G}_n = ((s_0, a_0, r_0, s_1), \dots, (s_{H - 1}, a_{H - 1}, r_{H - 1}, s_H))$.
    		\If{$s_{W + 1} = s_\gate$}
    		\State $s_{W + 1}' \gets s_{W + 1}$
    		\ForAll{$h = W + 1, \dots, H - 1$}
    		\If{$h = W$}
    		\State $s_{h + 1}' \gets \terminate_S$ if $a_h = 1$ else $\terminate_F$.
    		\State $r_h' \gets \ind{a_h = 1}$
    		\Else
    		\State $s_{h + 1}' \gets s_h'$, $r_h' \gets r_h$
    		\EndIf
    		\State Replace $(s_h, a_h, r_h, s_{h + 1})$ with $(s_h', a_h, r_h', s_{h + 1}')$ in $\mathcal{G}_n$
    		\EndFor
    		\EndIf
    		\State $\mathcal{H}_n \gets \mathcal{G}_n$.
    		\EndFor
    		\EndProcedure
    	\end{algorithmic}
    \end{algorithm}

    In this section, we describe the family of hard instances which we reduce to solving the task distribution above.
    Intuitively, if an algorithm incurs low regret throughout $\mathbb{M}_W$, then it must be able to quickly find a policy to reliable reach the gate state $s_\gate$ for any MDP in the family.

    \paragraph{Constructing the hard instances.} Accordingly, we define a new MDP family $\mathbb{N}_W$, which now is only indexed by $(\ell^\ast, a^\ast)$, and is constructed similarly as any member of $\mathbb{M}_W$, but ignoring states outside $\set{s_\troot, s_\gate, \terminate_\trap} \union T$.
    Additionally, we redefine the reward function $r$ for any member to be $r(s, a) \defas \ind{s = s_\gate}$.
    We note that this is exactly the set of hard tasks used to prove a minimax regret bound in \citet{domingues2021episodic}.

    \paragraph{The lower bound.}
    We state the lower bound result from \citet{domingues2021episodic}, in a slightly more restricted form for ease of proof and presentation.
    In particular, we consider the following more restricted definition of an algorithm:

    \begin{definition}
    	Let $\history_n$ be the trajectory data generated by playing a policy $\pi_n$ in an MDP $\MDP$.
    	That is, $\history_n = ((s_0, a_0, r_0, s_1), (s_1, a_1, r_1, s_2), \dots, (s_{H - 1}, a_{H - 1}, r_{H - 1}, s_H))$,
    	where $s_0$ and $a_0$ are fixed, $r_h = r(s_h, a_h)$, and $s_{h + 1} \sim \transition_\MDP(\cdot \conditionedon s_h, a_h)$.
    	Additionally, we set $\history_0 = \emptyset$.

    	Then, a \term{valid algorithm} $\mathcal{A}$ for our purposes is one which, for the $n^{\text{th}}$ episode, outputs a deterministic, non-stationary policy $\pi$ that is solely a function of the current state and action and $\Union_{i = 1}^{n - 1}\history_i$.
    	That is, $\mathcal{A}$ does not output policies that adapt to the current running episode.
    \end{definition}

    We again emphasize that this restriction is not necessary but that many algorithms nevertheless satisfy this condition (including \UCBVI{} and \Euler{}).
    We then have the following hardness result:

    \begin{theorem}[\citet{domingues2021episodic}, Theorem 9, restated]
    	\label{thm:binary-tree-minimax}
    	Assume that $W \geq 2$ and $H \geq 3W$.
    	Then, for every algorithm $\mathcal{A}$, there exists an MDP $\MDP \in \mathbb{N}_W$ such that
    	\[
    	\expt[\MDP, \mathcal{A}]{\sum_{n = 1}^{N}V^\ast_0(s_\troot) - V^{\pi_n}_0(s_\troot)} \gtrsim 2^{W/2}\sqrt{H^2N}.
    	\]
    \end{theorem}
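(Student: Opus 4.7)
I would prove the averaged regret lower bound over a uniform prior on $\mathbb{N}_W$ via a change-of-measure argument in the style of the classical bandit lower bounds of Auer--Cesa-Bianchi--Freund--Schapire and their episodic-RL extensions. Fix any valid algorithm $\mathcal{A}$; its worst-case regret over $\mathbb{N}_W$ is at least the average under the uniform prior on $(\ell^\ast, a^\ast) \in \set{0,1}^{W - 1} \times \set{0, 1}$, a family of $2^W$ instances. Introduce a reference null MDP $\MDP_0$ which agrees with every instance except that \emph{every} leaf-action pair transitions to $s_\gate$ with probability exactly $1/2$; then $\MDP_{(\ell^\ast, a^\ast)}$ differs from $\MDP_0$ only at the single pair $(s_{\ell^\ast}, a^\ast)$, whose success probability is boosted from $1/2$ to $1/2 + \epsilon$ for a perturbation $\epsilon$ optimized at the end.

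\textbf{Key steps.} Because the tree is acyclic and both $s_\gate$ and $\terminate_\trap$ are absorbing, each episode visits exactly one leaf-action pair, giving the clean counting identity $\sum_{(\ell, a)} \mathbb{E}_0[N_{\ell, a}] = N$, where $N_{\ell, a}$ is the total visit count across the $N$ episodes. Since the two measures differ only at $(s_{\ell^\ast}, a^\ast)$ and the validity assumption on $\mathcal{A}$ makes the episode-level KL chain rule clean,
\[
\mathrm{KL}\bigl(\mathbb{P}^{\mathcal{A}}_0 \,\|\, \mathbb{P}^{\mathcal{A}}_{(\ell^\ast, a^\ast)}\bigr) \;\leq\; \mathbb{E}_0[N_{\ell^\ast, a^\ast}] \cdot \mathrm{KL}\bigl(\mathrm{Ber}(1/2) \,\|\, \mathrm{Ber}(1/2 + \epsilon)\bigr) \;\lesssim\; \epsilon^2\, \mathbb{E}_0[N_{\ell^\ast, a^\ast}].
\]
On $\MDP_{(\ell^\ast, a^\ast)}$ any episode that fails to play the exact path $(s_\troot \to \cdots \to s_{\ell^\ast}, a^\ast)$ reaches $s_\gate$ with probability smaller by $\epsilon$; since $s_\gate$ yields unit reward for the remaining $\Theta(H)$ steps (using $H \geq 3W$), the per-episode regret is $\Omega(\epsilon H)$, giving $\mathrm{Regret}_N(\MDP_{(\ell^\ast, a^\ast)}) \gtrsim \epsilon H \bigl(N - \mathbb{E}_{(\ell^\ast, a^\ast)}[N_{\ell^\ast, a^\ast}]\bigr)$. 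Pinsker's inequality combined with the KL bound (and the fact that $N_{\ell^\ast, a^\ast} \in [0, N]$) then yields $\mathbb{E}_{(\ell^\ast, a^\ast)}[N_{\ell^\ast, a^\ast}] \leq \mathbb{E}_0[N_{\ell^\ast, a^\ast}] + N \sqrt{\epsilon^2\, \mathbb{E}_0[N_{\ell^\ast, a^\ast}]}$.

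\textbf{Assembly and main obstacle.} Averaging over the uniform prior on $2^W$ instances gives $\mathbb{E}_{\mathrm{prior}}\mathbb{E}_0[N_{\ell^\ast, a^\ast}] \leq N / 2^W$, and Jensen's inequality pushes the square root outside to bound the averaged regret below by $\epsilon H N - O\bigl(\epsilon^2 H N \sqrt{N / 2^W}\bigr)$ (the term $\epsilon H N/2^W$ is absorbed for $W \geq 2$). Setting $\epsilon = \Theta(\sqrt{2^W/N})$ produces an averaged lower bound of $\Omega(H \sqrt{2^W N}) = \Omega(2^{W/2}\sqrt{H^2 N})$, and the existence claim follows since the maximum over the family is at least the average. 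The main obstacle is making the KL chain rule rigorous while tracking $\mathcal{A}$'s cross-episode adaptivity: one must verify that only the Bernoulli outcome at the distinguished leaf-action pair contributes to the divergence---the rest of the tree dynamics are deterministic and identical across the two measures, and the $s_\gate$-branching is reward-irrelevant in $\mathbb{N}_W$---and that the current-state-and-history restriction on $\mathcal{A}$ stated before \thmref{thm:binary-tree-minimax} is exactly what enables the per-visit-count decomposition, avoiding pathologies where within-episode adaptivity would complicate the bookkeeping.
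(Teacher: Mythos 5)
The paper does not provide its own proof of \thmref{thm:binary-tree-minimax}; it is explicitly restated from \citet{domingues2021episodic} (their Theorem~9) and used as a black box. Your change-of-measure argument is nonetheless essentially the proof that Domingues et al.\ give: a uniform prior over the $2^W$ leaf-action pairs, a null reference MDP where every leaf-action has gate probability exactly $1/2$, a per-episode KL decomposition exploiting that each episode touches exactly one leaf-action pair, Pinsker/Jensen to bound the expected visit count under the shifted measure, a per-episode regret of $\Omega(\epsilon H)$ for non-distinguishing policies, and the optimization $\epsilon = \Theta(\sqrt{2^W/N})$ yielding $\Omega(H\sqrt{2^W N}) = \Omega(2^{W/2}\sqrt{H^2 N})$. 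Your bookkeeping checks out: the KL per visit is $\Theta(\epsilon^2)$, the absorbing structure of $s_\gate$ and $\terminate_\trap$ together with $H \geq 3W$ makes the per-episode gap $\Omega(\epsilon H)$, and the $W \geq 2$ condition absorbs the $N/2^W$ correction. One small point worth making explicit if you wrote this out in full: $\MDP_0$ is not itself in $\mathbb{N}_W$, so the final step should pass through the average over the prior before invoking existence, which you do implicitly. Since the paper defers to the cited reference, there is no alternative route in the paper to contrast against; your sketch reconstructs the standard argument correctly.
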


  \subsubsection{Proving the Hardness Result}
    \label{sec:hardness-result-formal-proof}

    We now use the hardness result in the previous section to demonstrate that no algorithm can incur sub-exponential regret in $W$ on all tasks in $\mathbb{M}_W$.
    We do so by proving that an algorithm solving all tasks in $\mathbb{M}_W$ can be used to construct an algorithm for solving all tasks in $\mathbb{N}_W$.

    Formally, let $\mathcal{A}$ be any algorithm for learning any MDP in $\mathbb{M}_W$.
    We then construct an algorithm $\mathcal{P}_{\mathcal{A}}$ for learning any MDP in $\mathbb{N}_W$ as in \algoref{alg:hard-case-reduction}.

    Given this reduction, we aim to prove the following result:

    \begin{proposition}
    	\label{prop:regret-reduction}
    	For any $\MDP_{(\ell^\ast, a^\ast)} \in \mathcal{N}_W$, we have that
    	\[
    	\expt[\MDP_{(\ell^\ast, a^\ast)}, \mathcal{P}_{\mathcal{A}}]{\sum_{n = 1}^{N}V^\ast_0(s_\troot) - V^{\pi_n}_0(s_\troot)} \leq \expt[\MDP_{(\ell^\ast, a^\ast, 1)}, \mathcal{A}]{\sum_{n = 1}^{N}V^\ast_0(s_\troot) - V^{\pi_n}_0(s_\troot)}.
    	\]
    \end{proposition}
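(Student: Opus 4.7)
The plan is to reduce the regret of $\mathcal{P}_\mathcal{A}$ on $\MDP_{(\ell^\ast, a^\ast)}$ to that of $\mathcal{A}$ on $\MDP_{(\ell^\ast, a^\ast, 1)}$ via a two-step argument: a coupling of the sequences of policies produced by $\mathcal{A}$ under the two processes, followed by a per-episode value comparison.

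First, I would establish that the sequence of policies $(\pi_n)_{n=1}^N$ generated under the two setups have identical joint distributions. By construction, $\MDP_{(\ell^\ast, a^\ast)} \in \mathbb{N}_W$ and $\MDP_{(\ell^\ast, a^\ast, 1)} \in \mathbb{M}_W$ have identical transition dynamics and rewards on every state in $\set{s_\troot} \cup T \cup \set{\terminate_\trap}$; they differ only once the agent reaches $s_\gate$. The post-gate rewrite in \algoref{alg:hard-case-reduction} deterministically substitutes the post-gate portion of each observed trajectory with precisely the transitions and rewards that would have appeared in $\MDP_{(\ell^\ast, a^\ast, 1)}$, which is legal because $e^\ast = 1$ makes the gate transition deterministic, so no fresh randomness needs to be simulated. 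Since $\mathcal{A}$ is a valid algorithm whose output depends only on $(\history_0, \ldots, \history_{n-1})$, a short induction on $n$ yields that the joint law of $(\history_n, \pi_n)_{n=1}^N$ under $(\MDP_{(\ell^\ast, a^\ast)}, \mathcal{P}_\mathcal{A})$ coincides with the joint law under $(\MDP_{(\ell^\ast, a^\ast, 1)}, \mathcal{A})$.

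Second, I would show that for any fixed policy $\pi$, the per-episode regret in $\MDP_{(\ell^\ast, a^\ast)}$ is bounded above by the per-episode regret in $\MDP_{(\ell^\ast, a^\ast, 1)}$. Writing $p^\pi$ for the probability that $\pi$ reaches $s_\gate$ (identical across the two MDPs, since their dynamics coincide up to the gate), the value in $\mathbb{N}_W$ is exactly $V^\pi_0(s_\troot) = (H - W) p^\pi$, using that $s_\gate$ is absorbing and yields unit reward per step. In $\MDP_{(\ell^\ast, a^\ast, 1)}$, reaching the rewarding state $\terminate_S$ additionally requires $\pi$ to play $e^\ast = 1$ at the gate, so the $\mathbb{M}_W$-value is at most $(H - W) p^\pi$ plus the single-step bonus $\ind{s = s_\gate, a = a^\ast}$. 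Meanwhile the optimal values agree up to this same bonus, the optimum in both cases being achieved by always choosing the path through $s_{\ell^\ast}$ and the correct gate action. Combining these bounds gives $V^\ast_{\mathbb{N}_W} - V^\pi_{\mathbb{N}_W} \leq V^\ast_{\mathbb{M}_W} - V^\pi_{\mathbb{M}_W}$.

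Summing this per-episode inequality over $n$ and invoking the distributional coupling of $(\pi_n)_{n=1}^N$ from Step 1 immediately yields the stated inequality of total expected regrets. The main obstacle I anticipate is the value comparison: one must carefully track the single-step reward $\ind{s = s_\gate, a = a^\ast}$ in $\mathbb{M}_W$ for both values of $a^\ast \in \set{0,1}$ to verify the per-episode inequality without a slack term, since the algorithm's rewrite hardcodes $a = 1$ as the rewarded action and this must be reconciled with $a^\ast$ possibly equal to $0$. The coupling step itself is essentially a routine bookkeeping induction.
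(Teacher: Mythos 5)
Your proposal takes essentially the same route as the paper: (i) a coupling/simulation argument showing that the history/policy sequence generated by $\mathcal{P}_{\mathcal{A}}$ on $\MDP_{(\ell^\ast, a^\ast)}$ has the same law as that generated by $\mathcal{A}$ on $\MDP_{(\ell^\ast, a^\ast, 1)}$ (the paper packages this as \lemmaref{lemma:reduction-can-simulate}), followed by (ii) a per-policy, per-episode comparison of regrets between the two MDPs. The only cosmetic difference is that you couple the full joint law of $(\pi_n)_{n=1}^N$ up front and then sum the per-episode comparison, whereas the paper runs an induction on $N$, peeling off the $(N+1)$-th episode and invoking the coupling at each step; the content is the same. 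For step (ii), you propose writing out $V^\pi$ in each MDP explicitly via the gate-reach probability $p^\pi$ and the gate-action bonus, while the paper instead upper-bounds the $\mathbb{N}_W$-regret by $\expt{(H - W - 1)\ind{s_{W+1} \neq s_\gate \text{ or } a_{W+1} \neq 1}}$ (loosening the indicator event, then switching MDPs by the pre-gate dynamics agreement) and identifies this with the $\mathbb{M}_W$-regret. You are right to flag the single-step $\ind{s = s_\gate, a = a^\ast}$ bonus as the delicate point: that is exactly the term that must be absorbed for the comparison to hold without slack, and the indicator-chain form used in the paper is the device that does this absorption cleanly (rather than trying to match the two values term by term as your sketch suggests). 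If you flesh out step (ii) in the paper's indicator form rather than via explicit value expressions, the $a^\ast = 0$ case stops being a worry because the chain only uses that the pre-gate dynamics coincide and that losing the event $\{a_{W+1} = 1\}$ can only increase the indicator.
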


    To prove this result, we first prove that $\mathcal{P}_{\mathcal{A}}$ can simulate $\MDP_{(\ell^\ast, a^\ast, 1)}$:

    \begin{lemma}
    	\label{lemma:reduction-can-simulate}
    	For any $n$, the distribution over $(\mathcal{H}_0, \dots, \mathcal{H}_n)$ induced by running \algoref{alg:hard-case-reduction} over $\MDP_{(\ell^\ast, a^\ast)} \in \mathbb{N}_{W}$ is equal to that induced by running $\mathcal{A}$ over $\MDP_{(\ell^\ast, a^\ast, 1)} \in \mathbb{M}_W$.
    \end{lemma}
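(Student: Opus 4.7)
The plan is to proceed by induction on $n$, showing that under either process the joint law of $(\mathcal{H}_0,\dots,\mathcal{H}_n)$ coincides. The base case $n = 0$ holds by convention since $\mathcal{H}_0 = \emptyset$ in both. For the inductive step, validity of $\mathcal{A}$ implies that $\pi_n$ is a deterministic function of $(\mathcal{H}_0,\dots,\mathcal{H}_{n-1})$, so by the inductive hypothesis $\pi_n$ has the same conditional law in both processes; it therefore suffices to verify that, conditional on $\pi_n$ and the past, the distribution of $\mathcal{H}_n$ is identical.

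To compare the conditional laws of $\mathcal{H}_n$, I would split each episode at the first timestep $h^\ast$ at which the agent leaves the set $\set{s_\troot} \union T$. Over the prefix $h < h^\ast$, the transition kernels of $\MDP_{(\ell^\ast,a^\ast)}$ and $\MDP_{(\ell^\ast,a^\ast,1)}$ coincide exactly: the restriction defining $\mathbb{N}_W$ only removes states downstream of $s_\gate$, and the biased Bernoulli $1/2 + \epsilon\ind{a = a^\ast}$ at the depth-$(W-1)$ leaves agrees across both MDPs. Per-step rewards also vanish in both MDPs on this prefix, so a trivial coupling driven by the shared transition randomness yields identically distributed prefix transcripts.

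On the suffix, I would consider two cases. If $s_{h^\ast} = \terminate_\trap$, both MDPs absorb at $\terminate_\trap$ with zero reward and the post-processing block in \algoref{alg:hard-case-reduction} is not triggered, so the suffix distributions trivially agree. If $s_{h^\ast} = s_\gate$, then $\MDP_{(\ell^\ast,a^\ast,1)}$ deterministically transitions to $\terminate_S$ when $a_{h^\ast} = 1$ and to $\terminate_F$ otherwise, and remains absorbed thereafter; the rewriting loop in \algoref{alg:hard-case-reduction} implements precisely this deterministic map on $\mathcal{G}_n$. Since the rewrite is a deterministic function of $a_{h^\ast}$, whose conditional law matches in both processes by the prefix argument, the resulting suffix transcripts are identically distributed, closing the induction.

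The main obstacle is the reward bookkeeping at and after the gate: $\MDP_{(\ell^\ast,a^\ast,1)}$ assigns reward $\ind{s = \terminate_S} + \ind{s = s_\gate, a = a^\ast}$, while $\mathbb{N}_W$ uses the single indicator $\ind{s = s_\gate}$ and the reduction rewrites on-gate rewards to $\ind{a_h = 1}$. A careful termwise comparison, leveraging the hardcoded choice $e^\ast = 1$ in the post-processing and the fact that $s_\gate$ is reached at a single well-defined step in each episode, is required to confirm that the rewritten reward stream coincides step by step with the $\MDP_{(\ell^\ast,a^\ast,1)}$ reward stream. Once this accounting is pinned down, the rest of the argument is a routine coupling over deterministic post-processing.
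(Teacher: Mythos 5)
Your proposal matches the paper's argument essentially step for step: induction on $n$, conditioning on past histories so that both processes produce the same policy, and then a case split on whether the episode hits $s_\gate$ or $\terminate_\trap$, with the post-processing in Algorithm 8 coupled to the target MDP's suffix dynamics. Your explicit flagging of the reward-rewriting bookkeeping is a reasonable extra caution (the paper's proof simply asserts that Lines 6--14 simulate the target dynamics), but it does not change the structure of the argument.
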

    \begin{proof}
    	We proceed by induction.
    	The result holds trivially for $n = 0$.

    	Now, assume that the result holds for some $n$.
    	We condition on the histories $(\history_0, \dots, \history_n)$
    	Then, note that both algorithms play the same policy $\pi_{n + 1}$, since $\mathcal{P}_{\mathcal{A}}$ uses $\mathcal{A}$ to obtain the next policy.
    	As a result, by the construction of $\MDP_{(\ell^\ast, a^\ast)}$ and $\MDP_{(\ell^\ast, a^\ast, 1)}$, the distribution over $(s_h, a_h, r_h, s_{h + 1})$ are equal for $h \leq W$.
    	Furthermore, Lines $6-14$ simulates the dynamics of $\MDP_{(\ell^\ast, a^\ast, 1)}$ conditioned on $s_{W + 1} = s_\gate$, while conditioned on $s_{W + 1} = \terminate_\trap$, the dynamics of the two MDPs are the same.
    	Therefore, conditioned on any $(\history_0, \dots, \history_n)$, the distribution over $\history_{n + 1}$ induced by the two algorithms are also the same.
    	Thus, the claim holds by induction.
    \end{proof}

    Finally, we can prove \propref{prop:regret-reduction}.

    \begin{proof}[Proof of \propref{prop:regret-reduction}]
    	Throughout this proof, we omit the starting state $s_\troot$ and the timestep $0$ in the value.
    	We prove the result by induction.
    	Clearly, the result holds for $N = 0$.

    	Assume that the bound holds for some $N$.
    	Then, we have that
    	\begin{align*}
    	&\expt[\MDP_{(\ell^\ast, a^\ast)}, \mathcal{P}_{\mathcal{A}}]{\sum_{n = 1}^{N + 1}V^\ast - V^{\pi_n}} \\
    	&\qquad = \expt[\MDP_{(\ell^\ast, a^\ast)}, \mathcal{P}_{\mathcal{A}}]{\sum_{n = 1}^{N}V^\ast - V^{\pi_n}} + \expt[\MDP_{(\ell^\ast, a^\ast)}, \mathcal{P}_{\mathcal{A}}]{V^\ast - V^{\pi_{N + 1}}} \\
    	&\qquad \leq \expt[\MDP_{(\ell^\ast, a^\ast, 1)}, \mathcal{A}]{\sum_{n = 1}^{N}V^\ast - V^{\pi_n}} + \expt[\MDP_{(\ell^\ast, a^\ast)}, \mathcal{P}_{\mathcal{A}}]{\expt{V^\ast - V^{\pi_{N + 1}} \suchthat (\history_0, \dots, \history_N)}},
    	\end{align*}
    	where the final inequality uses the inductive hypothesis and the tower property of expectation.
    	Now, recall from \lemmaref{lemma:reduction-can-simulate} that
    	\begin{align*}
    	&\expt[\MDP_{(\ell^\ast, a^\ast)}, \mathcal{P}_{\mathcal{A}}]{\expt{V^\ast - V^{\pi_{N + 1}} \suchthat (\history_0, \dots, \history_N)}} \\
    	&\qquad = \expt[\MDP_{(\ell^\ast, a^\ast, 1)}, \mathcal{A}]{\expt{V^\ast - V^{\pi_{N + 1}} \suchthat (\history_0, \dots, \history_N)}}.
    	\end{align*}
    	We emphasize that the value functions are still with respect to $\MDP_{(\ell^\ast, a^\ast)}$.
    	However, for any policy $\pi$ output by $\mathcal{A}$,
    	\begin{align*}
    	V^\ast - V^{\pi} &= \expt[\MDP(\ell^\ast, a^\ast), \pi]{(H - W - 1)\ind{s_{W + 1} \neq s_\gate}} \\
    	&\leq \expt[\MDP(\ell^\ast, a^\ast), \pi]{(H - W - 1)\ind{\text{$s_{W + 1} \neq s_\gate$ or $a_{W + 1} \neq 1$}}} \\
    	&\leq \expt[\MDP(\ell^\ast, a^\ast, 1), \pi]{(H - W - 1)\ind{\text{$s_{W + 1} \neq s_\gate$ or $a_{W + 1} \neq 1$}}}.
    	\end{align*}
    	Note that the right-hand side is the regret in $\MDP(\ell^\ast, a^\ast, 1)$ for playing $\pi$.
    	Therefore, since both algorithms play the same policy $\pi_{N + 1}$, we thus obtain the desired result by induction.
    \end{proof}

    With \propref{prop:regret-reduction}, we can now formally state and prove the separation result:

    \begin{theorem}
    	There exists a task distribution $\mathcal{T}_{(\ell^\ast, a^\ast)} \in \mathcal{T}_W$ such that an algorithm $\mathcal{A}$, without access to the meta-training tasks (and thus without access to the hierarchy), incurs expected regret lower bounded as
    	\[
    	\expt[\MDP \sim \mathcal{T}_{(\ell^\ast, a^\ast)}]{\mathrm{Regret}_N(\MDP, \mathcal{A})} \gtrsim 2^{W/2}\sqrt{H^2N}.
    	\]
    	On the other hand, for any task distribution in the family, the hierarchy-based learner $\mathcal{P}$ in \secref{sec:meta-test-learning-procedure}, with access to a $0$-suboptimal hierarchy oracle, achieves regret bounded by $\sqrt{H^2N}$ with high probability on any sampled task.
    \end{theorem}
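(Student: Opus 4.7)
The theorem assembles two largely independent halves: an exponential-in-$W$ minimax lower bound against any hierarchy-oblivious learner, and a polynomial-in-$W$ high-probability upper bound for the hierarchy-based learner of \secref{sec:meta-test-learning-procedure}. I would prove them separately.

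For the lower bound, the plan is to chain the reduction \propref{prop:regret-reduction} with the binary-tree minimax result \thmref{thm:binary-tree-minimax}. Given any hierarchy-oblivious, valid (in the sense of \citet{domingues2021episodic}) learner $\mathcal{A}$, the construction in \algoref{alg:hard-case-reduction} produces a valid learner $\mathcal{P}_{\mathcal{A}}$ for the family $\mathbb{N}_W$. Applying \thmref{thm:binary-tree-minimax} to $\mathcal{P}_{\mathcal{A}}$ yields a pair $(\ell^\ast, a^\ast)$ such that $\mathcal{P}_{\mathcal{A}}$ incurs expected regret at least $\Omega(2^{W/2}\sqrt{H^2 N})$ on $\MDP_{(\ell^\ast, a^\ast)} \in \mathbb{N}_W$; \propref{prop:regret-reduction} then transfers this lower bound to $\mathcal{A}$'s expected regret on $\MDP_{(\ell^\ast, a^\ast, 1)} \in \mathbb{M}_W$. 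Since the distribution $\mathcal{T}_{(\ell^\ast, a^\ast)}$ places mass $1/2$ on $\MDP_{(\ell^\ast, a^\ast, 1)}$, the expected regret of $\mathcal{A}$ over a task sampled from $\mathcal{T}_{(\ell^\ast, a^\ast)}$ is at least half the regret on this single MDP, giving the claim up to a constant.

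For the upper bound, the plan is to specialize the meta-test regret theorem to the binary tree environment under a $0$-suboptimal hierarchy oracle. The assumptions reduce as follows: the reward is supported on $\exit\statespace \union \interior{(\set{\terminate_S})}$, so $\cluster^\ast = \set{\terminate_S}$ satisfies \assumpref{assumption:meta-test-compatibility}; any optimal policy uses at most one exit (the single pass through $(s_\gate, e^\ast)$) almost surely, giving $\horizon_\effective = 1$ and $\zeta = 0$; the within-cluster dynamics are deterministic along the tree path and Bernoulli at the leaves, so \assumpref{assumption:env-low-var-and-regular} and \assumpref{assumption:skill-horizon} hold with cluster width $W$ equal to the tree depth and benign $\alpha, \beta, \gamma$; and $L, M = O(1)$. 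A $0$-suboptimal oracle realizes the exact optimal in-cluster goal-reaching policy, so $\epsilon = 0$ and the irreducible suboptimality $\epsilon_{\mathrm{subopt}}$ reduces to an $O(1)$ constant per episode. Plugging these values into the general bound $\sqrt{H^2 \horizon_\effective W L M N}$ yields $\tilde{O}(\sqrt{H^2 N})$ after absorbing $W$, $L$, $M$, and $\horizon_\effective$ as $O(1)$ relative to $H$ and $N$.

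The main obstacle I anticipate is verifying \assumpref{assumption:env-low-var-and-regular}(a) at the leaf-to-gate Bernoulli transitions, since the time to reach $s_\gate$ from a leaf is essentially a single biased coin flip rather than a concentrated perturbation of a deterministic path (the assumption's natural regime). A cleaner fallback, should this verification prove awkward, is to analyze the meta-MDP of \secref{sec:meta-test-learning-procedure} directly: with the hierarchy known, the learner reduces to a $2$-armed bandit over the exit actions $\set{0, 1}$ at $s_\gate$ with per-episode return bounded in $[0, H]$, and a standard UCB/Hoeffding analysis yields $O(\sqrt{H^2 N})$ regret with high probability without invoking the low-variance dynamics machinery at all.
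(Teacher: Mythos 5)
Your proposal is correct and, at a high level, follows the paper's route: for the lower bound, chain the reduction \propref{prop:regret-reduction} with the minimax result \thmref{thm:binary-tree-minimax} to lower bound $\mathcal{A}$'s regret on a specific member of the family; for the upper bound, observe that a $0$-suboptimal hierarchy oracle collapses the exploration problem to the choice of exit action at $s_\gate$.

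Two points of comparison are worth noting. First, in the lower bound you finish by observing that $\mathcal{T}_{(\ell^\ast, a^\ast)}$ places mass $1/2$ on $\MDP_{(\ell^\ast, a^\ast, 1)}$ and that per-episode regret is nonnegative, so the expectation over the distribution is at least half the regret on that single MDP. The paper instead asserts that the reduction ``can be extended for $\MDP_{(\ell^\ast, a^\ast, 0)}$ with appropriate modifications to $\mathcal{P}_{\mathcal{A}}$.'' Your version is actually cleaner: a modified $\mathcal{P}'_{\mathcal{A}}$ that simulates the $e^\ast = 0$ MDP is a different algorithm, and applying \thmref{thm:binary-tree-minimax} to it a priori yields a potentially different hard $(\ell', a')$, so the paper's claim that ``the same inequality holds'' at the same instance requires an argument the paper does not spell out. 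The nonnegativity shortcut you use sidesteps this entirely.

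Second, your worry about verifying $(\alpha, \beta)$-unreliability is well-founded -- the Bernoulli leaf-to-gate transition, with $\terminate_\trap$ absorbing and inside the same cluster, does not produce a concentrated reaching time -- and the paper implicitly agrees: it does not invoke the general meta-test regret theorem with all of Assumptions \ref{assumption:meta-test-compatibility}--\ref{assumption:skill-horizon} here. Instead it argues directly that, with a $0$-suboptimal oracle, the effective meta-MDP is tiny (``the learner only has to plan at timesteps $0$ and $W + 1$'') and the hierarchical suboptimality is exactly zero for this family, which is precisely your two-armed-bandit fallback with per-episode return in $[0, H]$. You should therefore treat the fallback as the main route, not the backup; the attempt to instantiate the general theorem on this family is the one step in your proposal that would genuinely run into trouble.
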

    \begin{proof}
    	Fix any algorithm $\mathcal{A}$.
    	Using \thmref{thm:binary-tree-minimax}, there exists $\MDP_{(\ell^\ast, a^\ast)}$ such that
    	\[
    	\expt[\MDP_{(\ell^\ast, a^\ast)}, \mathcal{P}_{\mathcal{A}}]{\sum_{n = 1}^{N}V^\ast_0(s_\troot) - V^{\pi_n}_0(s_\troot)} \gtrsim 2^{W/2}\sqrt{H^2N}
    	\]
    	Thus, by \propref{prop:regret-reduction},
    	\[
    	\expt[\MDP_{(\ell^\ast, a^\ast, 1)}, \mathcal{A}]{\sum_{n = 1}^{N}V^\ast_0(s_\troot) - V^{\pi_n}_0(s_\troot)} \gtrsim 2^{W/2}\sqrt{H^2N}.
    	\]
    	Note that the proof in \propref{prop:regret-reduction} can be extended for $\MDP_{(\ell^\ast, a^\ast, 0)}$ with appropriate modifications to $\mathcal{P}_{\mathcal{A}}$, and thus the same inequality holds.
    	Consequently,
    	\[
    	\expt[\MDP \sim \mathcal{T}_{(\ell^\ast, a^\ast)}]{\mathrm{Regret}_N(\mathcal{A})} \gtrsim 2^{W/2}\sqrt{H^2N}.
    	\]

    	On the other hand, with access to the $0$-suboptimal hierarchy oracle, observe that the learner only has to plan at timesteps $0$ and $W + 1$, allowing us to obtain tighter bounds (as $\statespace_\meta$ is smaller than the construction in \secref{sec:meta-test-learning-procedure}).
    	Furthermore, the suboptimality of planning with the hierarchy oracle is $0$ for any task distribution in the family.
    	We thus obtain the desired bound.
    \end{proof}

\subsection{A Discussion of \assumpref{assumption:env-low-var-and-regular}}
  \label{sec:low-var-discussion}

  In this section, we discuss why the conditions in \assumpref{assumption:env-low-var-and-regular} are sufficient for ensuring low hierarchical suboptimality.
  In particular, we provide examples of MDPs that satisfy \assumpref{assumption:meta-test-compatibility}, and are thus in a sense tasks that are ``compatible with the hierarchy'', but nevertheless force a hierarchy-based learner to incur $O(H)$ suboptimality.

  \subsubsection{$(\alpha, \beta)$-unreliability}

    Consider the MDP in \figref{fig:bad-case-high-variance} with horizon $H + 2$ and two actions $a^\ast$ and $a_1$.
    The optimal policy chooses $a^\ast$ at every step, achieving a value of $H - O(1)$, since
    \begin{align*}
      V^\ast_{H + 1}(s_0) &= \frac{1}{2}H + \frac{1}{2}V^\ast_H(s_1) = \frac{1}{2}H + \frac{1}{4}(H - 1) + \frac{1}{4}V^\ast_{H - 1}(s_2) \\
        &= H\sum_{h=1}^{H}\frac{1}{2^h} - \frac{1}{2}\sum_{h=1}^{H}\frac{h}{2^h} = H - O(1).
    \end{align*}

    Now, assume that the MDP has a latent hierarchy so that the set of exits are given by $(t_i, a)$ for any $i \in [H]$ and $a \in \actionspace$.
    Clearly, the optimal hierarchy-based learner would always choose $(t_0, a^\ast)$ or $(t_0, a_1)$ as its high-level action.
    However, if the agent fails to transition to $t_0$ at the first timestep due to stochasticity, it will go to the end of the chain, back to $s_0$ and try $a^\ast$ once more.
    This is because it already has set a meta-action, and \emph{does not replan until an exit is performed}.
    Thus, the optimal agent on the meta-MDP achieves a value of $H/2$, and is therefore $O(H)$-suboptimal, even with a $0$-suboptimal hierarchy oracle.

    Intuitively, hierarchy-based learners as formulated in \secref{sec:meta-test-learning-procedure} fail on the MDP in \figref{fig:bad-case-high-variance} because such learners commit to a skill until completion.
    Thus, when such skills exhibit high variance in completion times, hierarchy-based learners fare worse than other learners which are able to replan based on the current state (e.g., in this case, choose another exit if $a^\ast$ fails to take the agent to the current subgoal).
    Thus, $(\alpha, \beta)$-reliability serves to eliminate such MDPs, ensuring that the skills corresponding to reaching exits are reliable.

    \begin{figure}[h]
      \centering
      \captionsetup{justification=centering, width=0.8\linewidth}
      \begin{tikzpicture}[scale=0.95, transform shape]
        \node[state] (s0) {$s_0$};
        \node[state,right=of s0] (s1) {$s_1$};
        \node[draw=none,right=of s1] (s1-sh-1) {$\cdots$};
        \node[state,right=of s1-sh-1] (sh-1) {$s_{\horizon - 1}$};
        \node[state,right=of sh-1] (sh) {$s_{\horizon}$};

        \node[state, above=of s0] (t0) {$t_0$};
        \node[state,right=of t0] (t1) {$t_1$};
        \node[draw=none,right=of t1] (t1-th-1) {$\cdots$};
        \node[state,right=of t1-th-1] (th-1) {$t_{\horizon - 1}$};
        \node[state,right=of th-1] (th) {$t_{\horizon}$};

        \node[state, above=of t1-th-1,fill=gray!20!white] (sstar) {$s^\ast$};

        \path (sstar) edge[loop above, color=purple] node {$1, r = 1$} (sstar);
        \draw (sh.south) edge[-latex, bend left, color=red, auto=left] node {$1$} (s0.south);

        \draw[every loop, -latex, bend left, below, color=blue]
          (s0) edge node {$0.5$} (s1)
          (s1) edge node {$0.5$} (s1-sh-1)
          (s1-sh-1) edge node {$0.5$} (sh-1)
          (sh-1) edge node {$0.5$} (sh);

        \draw[every loop, -latex, bend right, below, color=red]
          (s0) edge node {$1$} (s1)
          (s1) edge node {$1$} (s1-sh-1)
          (s1-sh-1) edge node {$1$} (sh-1)
          (sh-1) edge node {$1$} (sh);

        \draw[every loop, -latex, color=blue, auto=left]
          (s0) edge node {$0.5$} (t0)
          (s1) edge node {$0.5$} (t1)
          (sh-1) edge node {$0.5$} (th-1)
          (sh) edge node {$1$} (th);

        \draw[every loop, -latex, color=purple]
          (t0) edge[auto=left] node {$1$} (sstar)
          (t1) edge[auto=right] node {$1$} (sstar)
          (th-1) edge[auto=left] node {$1$} (sstar)
          (th) edge[auto=right] node {$1$} (sstar);
      \end{tikzpicture}
      \caption{An MDP that does not satisfy low $(\alpha, \beta)$-unreliability, where $a^\ast$ is in blue, and $a_1$ is in red (and purple for both actions). State shading represents state clusters, and rewards are $0$ unless indicated otherwise.}
      \label{fig:bad-case-high-variance}
    \end{figure}
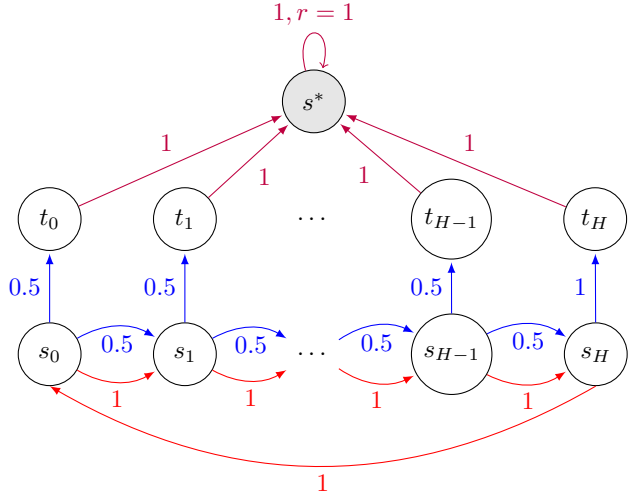

  \subsubsection{$\gamma$-goal-reaching suboptimality}

    In this section, we show that even when a hierarchy-based learner has access to highly reliable skills as in the previous section, the learner may still incur high hierarchical sub-optimality.
    Consider the MDP in \figref{fig:bad-case-high-reliability}, where we focus on a single room for simplicity.
    Furthermore, assume that there are two exits, one from $l_{H/2}$ and one from $r_{H/2}$.
    Note that a $0$-suboptimal hierarchy oracle has highly reliable goal-reaching policies for reaching both of these exit states, requiring exactly $H/2$ timesteps with no stochasticity.

    However, given the values assigned to $l_{H/2}$ and $r_{H/2}$, the optimal policy would opt to take the state $t$, which transitions to either state with probability at least $1/2$ in only two environment steps.
    Therefore, the optimal policy achieves an optimal value of $H - O(1)$.
    However, the optimal policy, in having to commit to exactly one of the exits, will achieve a value of $H/2$, and thus be $O(H)$-suboptimal despite having a perfect hierarchy oracle.

    Hierarchy-based learners fail on the MDP in \figref{fig:bad-case-high-reliability} because an optimal policy for goal-reaching does not necessarily reach a goal as quickly as possible.
    Thus, $\gamma$-goal-reaching suboptimality is a regularity condition that ensures that this is indeed the case.

    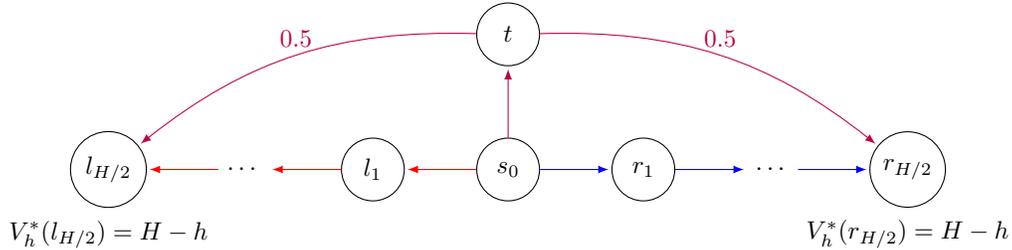
\begin{figure}[h]
      \centering
      \captionsetup{justification=centering, width=0.8\linewidth}
      \begin{tikzpicture}[scale=0.95, transform shape]
        \node[state] (s0) {$s_0$};
        \node[state,right=of s0] (r1) {$r_1$};
        \node[draw=none,right=of r1] (r1-rh) {$\cdots$};
        \node[state,right=of r1-rh] (rh) {$r_{H/2}$};
        \node[state,left=of s0] (l1) {$l_1$};
        \node[draw=none,left=of l1] (l1-lh) {$\cdots$};
        \node[state,left=of l1-lh] (lh) {$l_{H/2}$};
        \node[state,above=of s0] (t) {$t$};
        \node[below=0.5mm of rh] {$V^\ast_h(r_{H/2}) = H - h$};
        \node[below=0.5mm of lh] {$V^\ast_h(l_{H/2}) = H - h$};

        \draw[every loop, -latex, color=blue]
          (s0) edge (r1)
          (r1) edge (r1-rh)
          (r1-rh) edge (rh);

        \draw[every loop, -latex, below, color=red]
          (s0) edge (l1)
          (l1) edge (l1-lh)
          (l1-lh) edge (lh);

        \draw[every loop, -latex, above, color=purple]
          (s0) edge (t)
          (t) edge[bend right=20] node {$0.5$} (lh)
          (t) edge[bend left=20] node {$0.5$} (rh);
      \end{tikzpicture}
      \caption{An MDP that does not satisfy low $\gamma$-goal-reaching suboptimality, with three actions indicated by {\color{red} red}, {\color{blue} blue}, and {\color{purple} purple}, and exits $l_h$ and $r_h$. The MDP satisfies $(\infty, 0)$-unreliability, yet nevertheless exhibits high hierarchical suboptimality.}
      \label{fig:bad-case-high-reliability}
    \end{figure}

\end{document}